\documentclass{article}

\usepackage[most]{tcolorbox}
\usepackage{microtype}
\usepackage{graphicx}
\usepackage{dsfont}
\usepackage{subcaption}
\usepackage[most]{tcolorbox}
\usepackage{pifont}  
\usepackage{booktabs} 
\usepackage{comment}
\usepackage[most]{tcolorbox}
\usepackage{tabularx}

\newtcbtheorem[number within=section]{example}{Example}{
  colback=gray!5,
  colframe=gray!50,
  fonttitle=\bfseries,
  boxrule=0.5pt,
  arc=2pt,
  left=6pt,
  right=6pt,
  top=6pt,
  bottom=6pt
}{ex}
\usepackage{hyperref}

\usepackage[preprint]{icml2026}

\usepackage{amsmath}
\usepackage{amssymb}
\usepackage{mathtools}
\usepackage{amsthm}

\usepackage[capitalize,noabbrev]{cleveref}

\theoremstyle{plain}
\newtheorem{theorem}{Theorem}[section]
\newtheorem{proposition}[theorem]{Proposition}
\newtheorem{lemma}[theorem]{Lemma}
\newtheorem{corollary}[theorem]{Corollary}
\theoremstyle{definition}
\newtheorem{definition}[theorem]{Definition}

\theoremstyle{remark}
\newtheorem{remark}[theorem]{Remark}

\usepackage[disable,textsize=tiny]{todonotes}
\usepackage[textsize=tiny]{todonotes}

\icmltitlerunning{Submission and Formatting Instructions for ICML 2026}

\begin{document}
\twocolumn[
\icmltitle{Don't Always Pick the Highest-Performing Model:
\\An Information Theoretic View of LLM Ensemble Selection}

\begin{icmlauthorlist}
\icmlauthor{Yigit Turkmen}{bilkent}
\icmlauthor{Baturalp Buyukates}{bham}
\icmlauthor{Melih Bastopcu}{bilkent}
\end{icmlauthorlist}

\icmlaffiliation{bilkent}{Department of Electrical and Electronics Engineering, Bilkent University, Ankara, Turkey}
\icmlaffiliation{bham}{School of Computer Science, University of Birmingham, Birmingham, UK}

\icmlcorrespondingauthor{Yigit Turkmen}{yigit.turkmen@ug.bilkent.edu.tr}
\icmlcorrespondingauthor{Melih Bastopcu}{bastopcu@bilkent.edu.tr}
\icmlcorrespondingauthor{Baturalp Buyukates}{b.buyukates@bham.ac.uk}

\icmlkeywords{Large Language Models, Ensemble Methods, Correlated Errors, Copula Models, Submodular Optimization}

\vskip 0.3in
]

\printAffiliationsAndNotice{This work was supported by Tubitak 2232-B program (Project No:124C533).\newline}

\begin{abstract}
Large language models (LLMs) are often ensembled together to improve overall reliability and robustness, but in practice models are strongly correlated. This raises a fundamental question: which models should be selected when forming an LLM ensemble? We formulate budgeted ensemble selection as maximizing the mutual information between the true label and predictions of the selected models. Furthermore, to explain why performance can saturate even with many models, we model the correlated errors of the models using Gaussian-copula and show an information-theoretic error floor for the performance of the ensemble. Motivated by these, we propose a simple greedy mutual-information selection algorithm that estimates the required information terms directly from data and iteratively builds an ensemble under a query budget. We test our approach in two question answering datasets and one binary sentiment classification dataset: MEDMCQA, MMLU, and IMDB movie reviews. Across all datasets, we observe that our method consistently outperforms strong baselines under the same query budget.
\end{abstract}

\section{Introduction}

Large language models (LLMs) are increasingly used as general purpose decision engines in various tasks \citep{minaee2025largelanguagemodelssurvey}, yet their outputs can be unstable: a single model may fail on particular prompts due to reasoning gaps \citep{huang-chang-2023-towards}, ambiguity \citep{lu-etal-2022-fantastically} or distribution shift \citep{yang2023rethinkingbenchmarkcontaminationlanguage}.
A standard approach in machine learning is to ensemble multiple predictors to improve robustness and accuracy of the overall system at the expense of extra latency and monitory costs \citep{bagging, dietterich, chen2025harnessingmultiplelargelanguage}. For modern LLM systems, this cost is often restrictive because real deployments typically cannot query a full pool of models for every request. 

In this paper, we focus on a binary classification problem in which statements are classified as true or false (positive or negative) using an ensemble of LLM responses as shown in Figure~\ref{Fig:system_model}. To this end, we study the budgeted ensemble selection problem: given a pool of $m$ LLMs and a per-query budget of $k < m$ calls, we want to form a subset $S$ of size $k$ that minimizes the final decision error under an optimal (or near-optimal) aggregator. The key difficulty is that LLM errors are strongly \emph{correlated}. Models from the same family tend to share training data, architectures, and failure modes, so adding another high-accuracy model may contribute little new information or may even degrade performance if it reinforces the same mistakes \citep{cemri2025multiagentllmsystemsfail, scalingagents}, a phenomenon illustrated in Figure~\ref{fig:diverse_vs_topk}. This makes the common selecting ``Top-$k$ by accuracy'' heuristic unreliable in practice. 

To address this unreliability and the resulting subset-selection problem, we take an information-theoretic view. We treat each model output as a noisy observation of a binary ground-truth label $Y$ and use mutual information to quantify how informative a selected subset is about $Y$. Our analysis goes from the ideal to the realistic: when model errors are effectively independent, we show that the best use of a budget of $k$ queries is simply picking the $k$ most accurate models, providing a clean rationale for the Top-$k$ baseline. In the correlated regime, however, the value of adding a model depends not only on its accuracy but also on how much \emph{new information} it contributes relative to the current subset; we formalize this trade-off via a decomposition that separates prediction redundancy from structured dependence in error patterns, motivating a \emph{greedy mutual-information selection} strategy. To model such dependencies and study scaling limits, we use a \emph{Gaussian-copula} to represent the joint law of LLM responses (equivalently, correlated error events) and characterize an explicit \emph{saturation floor} under a latent-factor interpretation: beyond a point, adding additional models cannot eliminate the uncertainty induced by shared difficulty. This difficulty-aware perspective helps explain why greedy mutual information selection yields strong gains in practice while still exhibiting diminishing returns in highly correlated ensembles.

\begin{figure}[t]
  \begin{center}
    \centerline{\includegraphics[width=0.9\columnwidth]{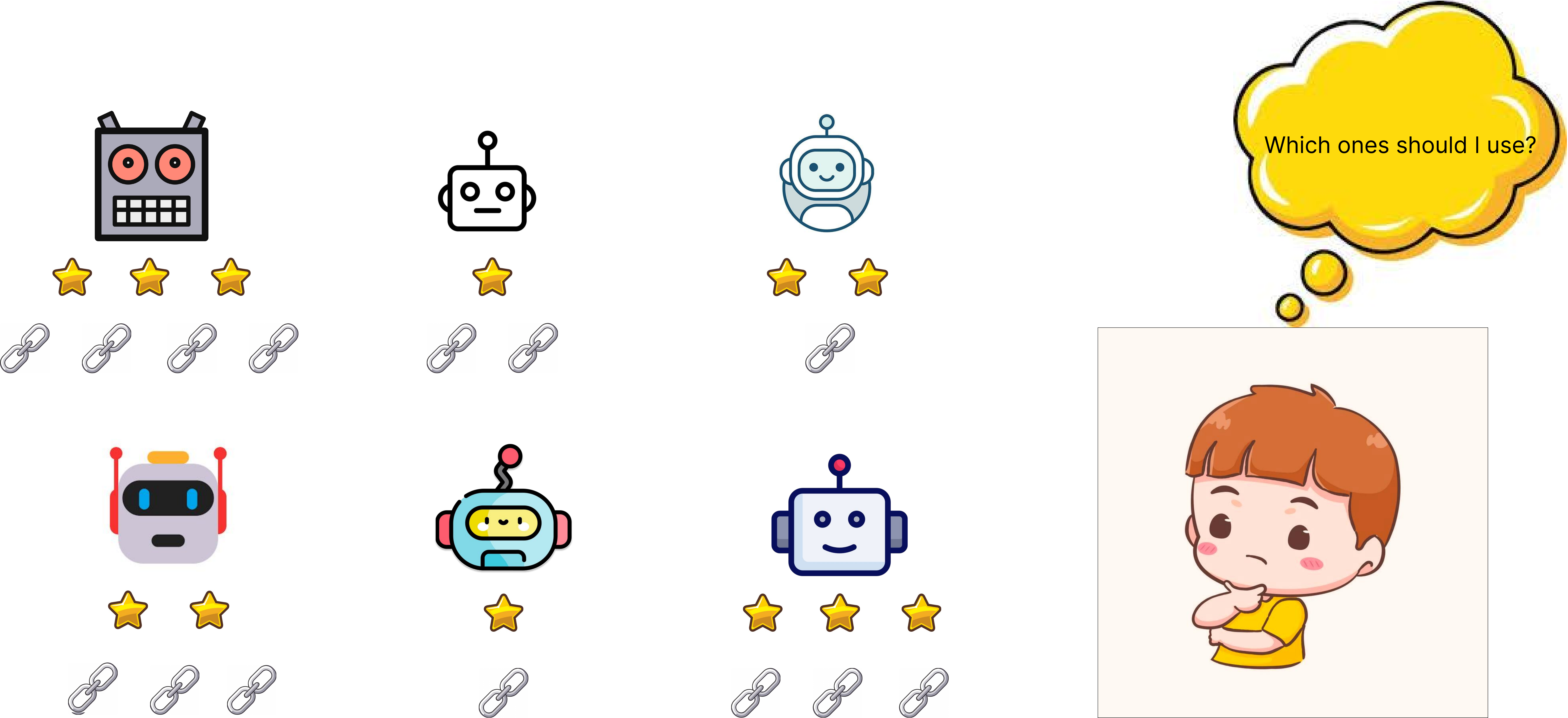}}
    \caption{
      Candidate LLM pool for ensembling: stars indicate each model’s accuracy, while chain links denote pairwise correlation (shared error patterns) with other models.
    }\label{Fig:system_model}
    \vspace{-0.5cm}
  \end{center}
\end{figure}

\begin{figure}[t]
\centering
\begin{minipage}[t]{0.48\columnwidth}
\begin{tcolorbox}[
  colback=green!5,
  colframe=green!60!black,
  coltitle=white,
  title={\small\textbf{(a) Diverse Ensemble}},
  fonttitle=\small,
  boxrule=0.8pt,
  arc=2pt,
  height=4.6cm,
  valign=top
]
\scriptsize
\textbf{Q:} Axonal transport is: Is the answer `Antegrade and retrograde'?

\textbf{Ground Truth:} \texttt{True}

\vspace{0.15cm}
\renewcommand{\arraystretch}{1.1}
\begin{tabular}{@{}l c c@{}}
\textbf{Model} & \textbf{Acc.} & \textbf{Ans.} \\
\midrule
GPT-4o & 80\% & \texttt{T} \textcolor{green!60!black}{\ding{55}} \\
Gemini-2.5-F. & 73\% & \texttt{F} \textcolor{red}{\ding{55}} \\
Claude-3.5-H. & 71\% & \texttt{T} \textcolor{green!60!black}{\ding{55}} \\
Llama-3.1-8B & 66\% & \texttt{T} \textcolor{green!60!black}{\ding{55}} \\
\midrule
\textbf{Avg.} & \textbf{72\%} & \textbf{3/4} \textcolor{green!60!black}{\ding{51}} \\
\end{tabular}

\vspace{0.1cm}

\end{tcolorbox}
\end{minipage}
\hfill
\begin{minipage}[t]{0.48\columnwidth}
\begin{tcolorbox}[
  colback=red!5,
  colframe=red!60!black,
  coltitle=white,
  title={\small\textbf{(b) Top-$k$ Accuracy}},
  fonttitle=\small,
  boxrule=0.8pt,
  arc=2pt,
  height=4.6cm,
  valign=top
]
\scriptsize
\textbf{Q:} Axonal transport is: Is the answer `Antegrade and retrograde'?

\textbf{Ground Truth:} \texttt{True}

\vspace{0.15cm}
\renewcommand{\arraystretch}{1.1}
\begin{tabular}{@{}l c c@{}}
\textbf{Model} & \textbf{Acc.} & \textbf{Ans.} \\
\midrule
GPT-5-Chat & 83\% & \texttt{F} \textcolor{red}{\ding{55}} \\
GPT-4.1 & 82\% & \texttt{F} \textcolor{red}{\ding{55}} \\
GPT-4o & 80\% & \texttt{T} \textcolor{green!60!black}{\ding{55}} \\
GPT-4.1-mini & 78\% & \texttt{F} \textcolor{red}{\ding{55}} \\
\midrule
\textbf{Avg.} & \textbf{81\%} & \textbf{1/4} \textcolor{red}{\ding{55}} \\
\end{tabular}

\vspace{0.1cm}

\end{tcolorbox}
\end{minipage}

\caption{Example from MEDMCQA. (a) A diverse ensemble with \textbf{lower average accuracy} (72\%) answers correctly by combining models from different families. (b) Selecting the ensemble with strongest models with \textbf{higher average accuracy} (81\%) fails because almost all GPT models share the same error pattern for this example. Please note that the original multiple-choice question is converted to binary format: for each candidate answer, we ask ``Is the answer `[candidate]'?" with ground truth $Y=+1$ (correct) or $Y=-1$ (incorrect).}

\label{fig:diverse_vs_topk}
\vspace{-0.5cm}
\end{figure}

\noindent\textbf{Contributions:} We make the following contributions:
\begin{itemize}
    \item We represent LLM decisions using a Gaussian-copula, which allows us to capture both the individual accuracies and the correlations among models. Our numerical results show that the error behavior of real models can be effectively captured by the simulated Gaussian-copula model. 
    \item We first consider independent LLM decisions. For this special case, we show in Theorem~\ref{thm:alignment} that if LLMs were to make independent errors, the optimal model selection would be trivially selecting the most accurate models. The breakdown of Top-$k$ selection in practice therefore arises solely from the underlying correlation structure.
    \item Since identifying the LLM subset that minimizes the probability of error is a combinatorial problem, we instead propose a greedy selection strategy that maximizes mutual information gain. In Theorem~\ref{thm:decomposition}, we characterize the underlying principle of this selection rule and establish its connection to the Maximum Relevance–Minimum Redundancy (mRMR) principle from feature selection~\citep{peng2005}.
    \item Then, we show that under uniform pairwise correlation, increasing the number of models $k$ does not necessarily drive the error probability to zero; instead, it converges to a fundamental limit characterized in Theorem~\ref{thm:saturation}. This result explains why ensembling an increasing number of correlated models may fail to further reduce the error probability.
    \item Finally, we evaluate our approach on two question-answering benchmarks and a binary sentiment classification task that are MEDMCQA, MMLU, and IMDB movie reviews and demonstrate consistent improvements over strong baselines such as the ``Top-k by accuracy" under identical query budgets.
\end{itemize}

\section{Related Work}
In this subsection, we provide related works on LLM ensembles, information theoretic feature selection problem, and Gaussian-copula models.

\noindent\textbf{LLM Ensembles:} The simplest approach to combining LLM outputs is majority voting, popularized by self-consistency~\citep{wang2023selfconsistencyimproveschainthought} for single-model sampling and then extended to multi-model settings through weighted voting~\citep{llm-synergy,llm-blender} and cascaded selection~\citep{chen2024frugalGPT}. Yet, a growing body of evidence suggests that naively adding more models does not guarantee performance improvement: recent scaling studies~\citep{scalingagents} observe diminishing or even negative returns from coordination once single-agent accuracy exceeds $\sim$45\%, while systematic failure  analysis~\citep{cemri2025multiagentllmsystemsfail} identifies inter-agent misalignment as a 
dominant failure mode in multi-agent systems.
These observations point to error correlation as the culprit. To address this, a recent study~\citep{beyond-majority} leverages second-order 
statistics for optimal weight aggregation, LLM-TOPLA~\citep{topla} introduces focal diversity metrics for ensemble pruning, and MUSE~\citep{muse} applies Jensen-Shannon divergence to select well-calibrated LLM subsets. Additionally, ~\citep{turkmen2025balancing} study the accuracy-timeliness tradeoffs in networked LLM systems under adaptive majority voting. In this work, we take a fundamentally different approach: rather than designing better aggregation rules, we ask which models to select in the first place, and provide a principled probabilistic framework based on Gaussian-copula correlation structure.

\noindent\textbf{Information-Theoretic Feature Selection:} The idea of greedily selecting elements to maximize mutual information originates in feature selection, where the seminal mRMR criterion~\citep{peng2005} balances relevance against redundancy. Brown et al.~\citep{brown2012} later unified a decade of variants 
under a common framework: all approximate conditional mutual information (CMI) and all penalize inter-feature correlation as redundancy.
Recent work extends this to dynamic settings via learned CMI 
policies~\citep{covert2023,gadgil2024estimating}, yet the core assumption persists.
We observe that this intuition does not transfer to ensemble selection.

\noindent\textbf{Gaussian-Copula Models:} Gaussian-copulas provide a flexible framework for modeling dependencies 
while preserving arbitrary marginals, and have found widespread use in various applications from 
credit risk modeling~\citep{li2000default} to mixed data 
analysis~\citep{nelsen2007introduction}. While recent work~\citep{pan2025llmsrefusequestionsknow}
applies bivariate Gaussian-copula 
to model refusal behavior within a single LLM, another work, \cite{elumar2025costaware}, uses a Gaussian-copula sampler to model dependent correctness for Monte Carlo confidence estimation. In this work, we model cross-LLM ensemble errors with a Gaussian-copula and leverage it for an information-theoretic saturation theorem and 
mutual information (MI)-driven greedy selection.

\section{System Model and Problem Formulation}
\label{sec:system_model}
In this work, we consider a binary classification problem carried out by a group of LLMs, where each incoming task or query involves evaluating a statement with a true/false (or positive/negative) outcome. We model the correct label for the task with a binary random variable $Y$, where $Y=+1$ denotes a true (or positive) statement and $Y=-1$ denotes a false (or negative) statement. We assume that the distribution of these statements are balanced, that is, we have $\mathbb{P}(Y=1) \!=\! \mathbb{P}(Y\!=\!-1) = 0.5$. In order to perform these tasks, we have access to an ensemble of $m$ LLMs, indexed by the set $[m] \triangleq \{1, \dots, m\}$.
For a given input instance, model $j$ produces a binary prediction (or a response) $X_j\in \{-1,+1\}$. We define the error indicator variable for LLM $j$ as $E_j \triangleq \mathds{1}(X_j \neq Y)$ where $\mathds{1}(x)$ denotes the indicator function defined as $\mathds{1}(x) = 1$ if $x$ is true and $\mathds{1}(x) = 0$, otherwise. In other words, $E_j$ takes value $1$ when model $j$'s prediction is incorrect, i.e., $X_j \neq Y$. The responses of the LLMs are then related to the errors by $X_j \!=\! Y \!\cdot \!(-1)^{E_j}\!$. We denote the vector of responses for a subset $S \!\subseteq \![m]$ as ${\bf X_S}\triangleq\! (X_j)_{j \in S}$. Next, we model the distribution of the responses collected from LLMs in the subset $S$.    

\subsection{Modeling Latent Errors with Gaussian-Copula}
\label{subsec:copula_model}
In general, when responses are collected from multiple LLMs, their decisions may not be independent. Models trained on similar data or architectures tend to exhibit correlated errors. To capture such a complex dependency structure of LLM errors, in this work, we model the joint error distribution using a Gaussian-copula \citep{nelsen2007introduction}.
With this goal, we define $\mathbf{Z} = (Z_1, \dots, Z_m)$ to be a latent random vector following a multivariate normal distribution $\mathbf{Z} \sim \mathcal{N}(\mathbf{0}, \mathbf{\Sigma})$ where $\mathbf{\Sigma} \in \mathbb{R}^{m \times m}$ is a correlation matrix with unit diagonal entries and off-diagonal entries given by $\rho_{ij}$ capturing the pairwise latent correlation between models $i$ and $j$ where $i\neq j$. In the case of equicorrelated ensembles where all the correlations of the models are equal to each other, i.e., $\rho_{ij} = \rho$ for all $i$ and $j$ with $i\neq j$, this dependence structure can be equivalently represented by a linear factor model:
\begin{align}
\label{eq:factor_model}
    Z_j = \sqrt{\rho} \, U + \sqrt{1-\rho} \, \xi_j,
\end{align}
where $U \sim \mathcal{N}(0,1)$ is a common latent factor shared by all models and $\xi_j \sim \mathcal{N}(0,1)$ are independent individual noise terms. Then, the binary error indicator $E_j$ can be generated by thresholding the latent variable $Z_j$ as
\begin{equation}
    E_j = \mathds{1}(Z_j < \tau_j),
\end{equation}
where the threshold $\tau_j = \Phi^{-1}(\epsilon_j)$ is determined by the marginal error probability $\epsilon_j = \mathbb{P}(E_j=1)$ of model $j$, and $\Phi(\cdot)$ is the standard normal cumulative distribution function and its inverse is denoted by $\Phi^{-1}(\cdot)$. 

By modeling the error via Gaussian-copula, we are able to: $i)$ choose the variables $\{\!\tau_j\!\}_{\!j=1\!\!}^m$ that uniquely determine the individual error probability $\epsilon_j$ for each model, independent of the others, and $ii)$ model the dependence structure through the covariance matrix $\mathbf{\Sigma}$ that governs the higher-order error correlations. In other words, the coefficient $\rho_{ij}$ captures the tendency of models to exhibit correlated errors on identical inputs. Using a Gaussian-copula, we can explicitly decouple the marginal statistics from the dependence structure. 

\subsection{Ensemble Aggregation and Selection}
When the joint error distribution of the LLMs is known, the optimal decision rule that minimizes the probability of error is the Maximum A Posteriori (MAP) estimator \citep{hajek2020probability}. Thus, for a given selected subset of models $S$ and their observed outputs ${\bf X_S} = {\bf x_S}$, the MAP rule is given by 
\begin{equation}
    \hat{Y}_{\text{MAP}}({\bf x_S}) = \operatorname*{argmax}_{y \in \{-1, +1\}} \mathbb{P}(Y=y \mid {\bf X_S} = {\bf x_S}).
\end{equation}
The performance of any subset $S$ is measured by its expected error probability given by $\mathbb{P}_e(S) \triangleq \mathbb{P}(\hat{Y}_{\text{MAP}}({\bf X_S}) \neq Y)$. We implement \cref{alg:map} for the MAP aggregation. 

\subsection{Problem Formulation}
Due to inference costs and latency constraints, collecting responses from all LLMs may not be feasible for each query. Thus, for a given query, our goal is to select the subset $S^{\star}_{k}$ of size $k$ minimizing the ensemble error which is given by
\begin{equation}
\label{eq:problem_statement}
    S^\star_k \in \operatorname*{argmin}_{S \subseteq [m], |S|=k} \mathbb{P}_e(S).
\end{equation}

Finding the optimal subset $S_k^\star$ requires selecting $k$ LLMs from the $m$ available models, which is combinatorially hard and depends on the joint error distribution $\mathbb{P}(E_1,\dots,E_m)$. In this work,  we explicitly allow \emph{correlated}  error models and model the joint error law via a Gaussian-copula latent structure as we described earlier.

With the Gaussian-copula model, there exists a latent vector $\mathbf{Z} \sim \mathcal{N}(\mathbf{0}, \mathbf{\Sigma})$ and thresholds $\{\tau_j\}$ such that
$E_j=\mathds{1}(Z_j<\tau_j)$, where $\mathbf{\Sigma}$ is a correlation matrix capturing the cross-model error dependence.
In the special equicorrelated case where $\rho_{ij}=\rho$, this is equivalently represented by the one-factor model
$Z_j$ in (\ref{eq:factor_model}), which enables closed-form saturation limits  that will be derived in Theorem ~\ref{thm:saturation}.  We explore the connection of our problem to the well-known \emph{independent binary symmetric channel (BSC) regime} in communications, which is recovered as the special case ${\bf \Sigma }= {\bf I}$, where ${\bf I}$ is the identity matrix (i.e., $\rho = 0$). This case is studied in Theorem~\ref{thm:alignment}.

\section{Information-Theoretic Analysis of Ensemble Selection}
\label{sec:theory}

The central challenge in ensemble selection is that the error-optimal subset $S^\star$ is combinatorially hard to find. Practical systems typically rely on heuristics like selecting top-$k$ accurate models. In this section, we analyze the optimality of this baseline and contrast it with greedy mutual information maximization, which serves as the natural information-theoretic alternative. We first prove that they are equivalent in the independent case, then derive the precise information-theoretic penalty introduced by correlations, and finally establish a fundamental limit on the achievable error. Moreover, we present Theorem~\ref{thm:oracle_price} in Appendix~\ref{sec:proof_saturation} which further explains why greedy approaches yet can saturate under latent difficulty.

\subsection{Optimal Subset Selection  Under Independence}
\label{subsec:alignment}

We begin by analyzing the baseline case where model errors are statistically independent. This serves as a control setting to demonstrate that accuracy-based and information-theoretic objectives coincide in the absence of response correlations. 

Before stating the following theorem, we note that our system resembles a well-known communication channel consisting of $m$ independent binary symmetric channels (BSCs), each of which corresponds to an LLM in our system, while the binary information source corresponds to the underlying classification task. Thus, our goal can be translated into selecting the best subset of $k$ BSCs so as to minimize the probability of error in this communication system.

\begin{theorem}[Alignment of Error and Mutual Information]
\label{thm:alignment}
Consider an ensemble of independent BSCs where channel $j$ has error rate $\epsilon_j < 1/2$. Let the channels be indexed by accuracy such that we have $\epsilon_1 \le \epsilon_2 \le \ldots \le \epsilon_m$. Let $H_k = \{1, \dots, k\}$ be the set of top-$k$ most accurate channels. Let $S^{MI}_k \in \arg\max_{|S|=k} I(Y;{\bf X_S})$ denote the subset of size $k$ that maximizes the mutual information between $Y$ and ${\bf X_S}$. For any subset $S \subseteq [m]$ with $|S|=k$, the top-$k$ set $H_k$ dominates $S$ both in terms of maximizing the mutual information and minimizing the probability of error. Thus, we have
\begin{align}
    I(Y; {\bf X_{H_k}}) &\ge I(Y; {\bf X_S}), \label{eq:alignment_mi}\\
    \mathbb{P}_e(H_k) &\le \mathbb{P}_e(S). \label{eq:alignment_pe}
\end{align}
Consequently, when the channels are independent, the error-optimal subset $S^\star$ and the mutual-information optimal subset $S^{MI}_k$ coincide with $H_k$.
\end{theorem}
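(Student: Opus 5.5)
The approach is a coupling (degradation) argument. Both $I(Y;\mathbf{X}_S)$ and $\mathbb{P}_e(S)$ are monotone under garbling of the observations, so it suffices to show that $\mathbf{X}_S$ is a stochastically degraded version of $\mathbf{X}_{H_k}$.

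First I match the elements. Sort $S=\{s_1<\dots<s_k\}$ and pair $s_i$ with $i\in H_k$. Since the errors are ordered, $i\le s_i$ implies $\epsilon_i\le\epsilon_{s_i}$.

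Next I show that a single BSC with crossover $\epsilon_{s_i}$ is a degraded version of a BSC with crossover $\epsilon_i$. Cascade the latter with a BSC of crossover $\delta_i$ chosen so that
\[
\epsilon_i(1-\delta_i)+(1-\epsilon_i)\delta_i=\epsilon_{s_i}.
\]
This gives $\delta_i=(\epsilon_{s_i}-\epsilon_i)/(1-2\epsilon_i)$, which lies in $[0,1/2)$ precisely because $\epsilon_i\le\epsilon_{s_i}<1/2$.

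Because the channels are independent given $Y$, I apply the $k$ cascades with independent auxiliary noise, one on each coordinate. The resulting vector $\tilde{\mathbf{X}}$ has the same joint law with $Y$ as $\mathbf{X}_S$, and $Y\to\mathbf{X}_{H_k}\to\tilde{\mathbf{X}}$ forms a Markov chain.

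From this chain, the data processing inequality gives
\[
I(Y;\mathbf{X}_{H_k})\ge I(Y;\tilde{\mathbf{X}})=I(Y;\mathbf{X}_S),
\]
which is \eqref{eq:alignment_mi}.

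For \eqref{eq:alignment_pe}, note that any decision rule based on $\mathbf{X}_S$ can be simulated from $\mathbf{X}_{H_k}$: generate $\tilde{\mathbf{X}}$ with the auxiliary randomness, then apply the MAP rule for $S$. This yields a randomized estimator with error $\mathbb{P}_e(S)$. The MAP rule on $\mathbf{X}_{H_k}$ minimizes error over all estimators, including randomized ones, since a randomized rule is a mixture of deterministic rules. Hence $\mathbb{P}_e(H_k)\le\mathbb{P}_e(S)$.

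The final statement follows directly. $H_k$ attains both the maximum of mutual information and the minimum of error. If there are ties among the $\epsilon_j$, the optimal sets may not be unique, so "coincide" should be read as "$H_k$ is a maximizer and a minimizer."

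The main obstacle is the degradation step. One must verify that $\delta_i\in[0,1/2]$, which uses $\epsilon_i<1/2$. One must also make sure the coordinatewise cascades jointly reproduce the law of $\mathbf{X}_S$ given $Y$, which needs the conditional independence of the channels given $Y$.

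An alternative is to compute $I$ and $\mathbb{P}_e$ directly via log-likelihood ratio weights $\log\frac{1-\epsilon_j}{\epsilon_j}$. That route is messy for the error probability, so I would use the coupling argument instead.
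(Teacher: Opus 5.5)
Your proposal is correct and follows essentially the same route as the paper. It pairs $S$ with $H_k$ so that $\epsilon_{\pi(j)}\le\epsilon_j$, degrades each coordinate through an independent BSC with crossover $\delta=(\epsilon_j-\epsilon_{\pi(j)})/(1-2\epsilon_{\pi(j)})$, applies the data processing inequality for the mutual information, and simulates any $S$-based decision rule from $\mathbf{X}_{H_k}$ for the error probability. Your sorted matching $s_i\mapsto i$ is simply a different choice of the bijection the paper builds (identity on $S\cap H_k$, arbitrary on the rest), and your caveat about reading ``coincide'' as ``$H_k$ is an optimizer'' is a sensible sharpening of the final claim.
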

The proof of Theorem~\ref{thm:alignment} is provided in Appendix~\ref{Proof_of_theorem_4_1}. In the following remark, we state how the result of Theorem~\ref{thm:alignment} applies to our problem.

\begin{remark}    
 Theorem \ref{thm:alignment} confirms that if LLMs make independent errors, the optimum selection of LLMs would be trivial: simply picking the most accurate $k$ models is optimal. The failure of top-$k$ selection in practice is therefore purely a consequence of the correlation structure derived in the ensuing analysis.
\end{remark}

\subsection{Decomposition of Correlated Ensembles}
\label{subsec:decomposition}
When models are correlated, the independence assumption in Theorem \ref{thm:alignment} breaks. To quantify exactly how correlation alters the selection objective, we derive a decomposition of the marginal information gain.

\begin{theorem}[Accuracy-Redundancy-Error Decomposition]
\label{thm:decomposition}
Let $E_j = \mathds{1}(X_j \neq Y)$ be the error indicator for model $j$, and we define the marginal information gain of adding model $j$ to subset $S$ as
\begin{equation}
    \Delta(j \mid S) \triangleq I(Y; X_j \mid {\bf X_S}).
\end{equation}
Then, $\Delta(j \mid S)$ admits the following decomposition
\begin{equation}
\label{eq:general_decomposition}
    \!\!\Delta(j \!\mid \!S) \!=\! I(Y; X_j) \!-\! I(X_j; {\bf X_S}) \!+\! I(E_j; E_S) \!+\! \Lambda_j(S),\!
\end{equation}
where $E_S = \{E_j| j\in S\}$ and we can define the last term in (\ref{eq:general_decomposition}) as the label-dependence correction term
\begin{equation}
\label{eq:correction_term}
    \Lambda_j(S) \triangleq I(E_j; Y \mid E_S) - I(E_j; Y).
\end{equation}
Under the label-invariant error assumption, i.e., $(E_1, \ldots, E_m) \perp Y$, we have $\Lambda_j(S) = 0$ for all $j$ and $S$, the decomposition for $\Delta(j\mid S)$ reduces to 
\begin{equation}
\label{eq:decomp}
\Delta(j\mid S) = I(Y; X_j) - I(X_j; {\bf X_S}) + I(E_j; E_S).
\end{equation}
\end{theorem}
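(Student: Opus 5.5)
The identity is exact and should follow from two applications of the chain rule for mutual information, plus one observation: conditional on $Y$, the map between responses and errors is a bijection. Since $X_j = Y\cdot(-1)^{E_j}$, fixing $Y=y$ makes $X_j \mapsto E_j$ one-to-one, and likewise ${\bf X_S}\mapsto E_S$. So any conditional mutual information given $Y$ between response variables equals the same quantity with responses replaced by errors. No Gaussian-copula structure is needed; the argument holds for an arbitrary joint law of $(Y,E_1,\dots,E_m)$.

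First I rewrite the marginal gain by applying the chain rule to $I(X_j; Y, {\bf X_S})$ in two orders:
\begin{equation*}
I(X_j; Y,{\bf X_S}) = I(X_j;{\bf X_S}) + I(X_j;Y\mid {\bf X_S}) = I(X_j;Y) + I(X_j;{\bf X_S}\mid Y).
\end{equation*}
Since $\Delta(j\mid S)=I(Y;X_j\mid {\bf X_S})$, this gives
\begin{equation*}
\Delta(j\mid S) = I(Y;X_j) - I(X_j;{\bf X_S}) + I(X_j;{\bf X_S}\mid Y).
\end{equation*}
The first two terms of (\ref{eq:general_decomposition}) are already in place.

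Next I convert the last term. Conditioning on each value $Y=y$, the bijection shows that $I(X_j;{\bf X_S}\mid Y=y)=I(E_j;E_S\mid Y=y)$. Averaging over $y$ gives $I(X_j;{\bf X_S}\mid Y)=I(E_j;E_S\mid Y)$. I then apply the chain rule once more to $I(E_j; E_S, Y)$ in two orders:
\begin{equation*}
I(E_j;E_S) + I(E_j;Y\mid E_S) = I(E_j;Y) + I(E_j;E_S\mid Y).
\end{equation*}
Hence
\begin{equation*}
I(E_j;E_S\mid Y) = I(E_j;E_S) + \bigl[I(E_j;Y\mid E_S) - I(E_j;Y)\bigr] = I(E_j;E_S)+\Lambda_j(S),
\end{equation*}
with $\Lambda_j(S)$ as in (\ref{eq:correction_term}). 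Substituting this into the previous display yields (\ref{eq:general_decomposition}).

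For the reduced form (\ref{eq:decomp}), I use the assumption $(E_1,\dots,E_m)\perp Y$. It implies $E_j\perp Y$, so $I(E_j;Y)=0$. It also implies $Y\perp (E_j,E_S)$, so $I(E_j;Y\mid E_S)=0$. Therefore $\Lambda_j(S)=0$.

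The only step requiring care is the bijection claim. It holds conditionally on $Y$ but not unconditionally, which is exactly why the correction term $\Lambda_j(S)$ appears. I would state it explicitly as a lemma: for fixed $y$, the conditional joint pmf of $(X_j,{\bf X_S})$ is the conditional joint pmf of $(E_j,E_S)$ composed with an invertible relabeling, and conditional mutual information is invariant under such a relabeling. Beyond this, the remaining steps are routine manipulations.
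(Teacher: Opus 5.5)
Your proposal is correct and follows essentially the paper's argument: the bijection $X_j = Y(-1)^{E_j}$ conditional on $Y$, followed by the two-way chain rule on $I(E_j;E_S,Y)$ to isolate $\Lambda_j(S)$; the label-invariant case is handled identically. The only difference is cosmetic. You reach $\Delta(j\mid S) = I(Y;X_j) - I(X_j;{\bf X_S}) + I(X_j;{\bf X_S}\mid Y)$ from a single chain-rule identity and apply the bijection directly to that conditional mutual information. The paper instead expands into conditional entropies and applies the bijection separately to $H(X_j\mid {\bf X_S},Y)$ and $H(X_j\mid Y)$.
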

\begin{figure*}[t]
  \begin{center}
    \centerline{\includegraphics[width=0.6\textwidth]{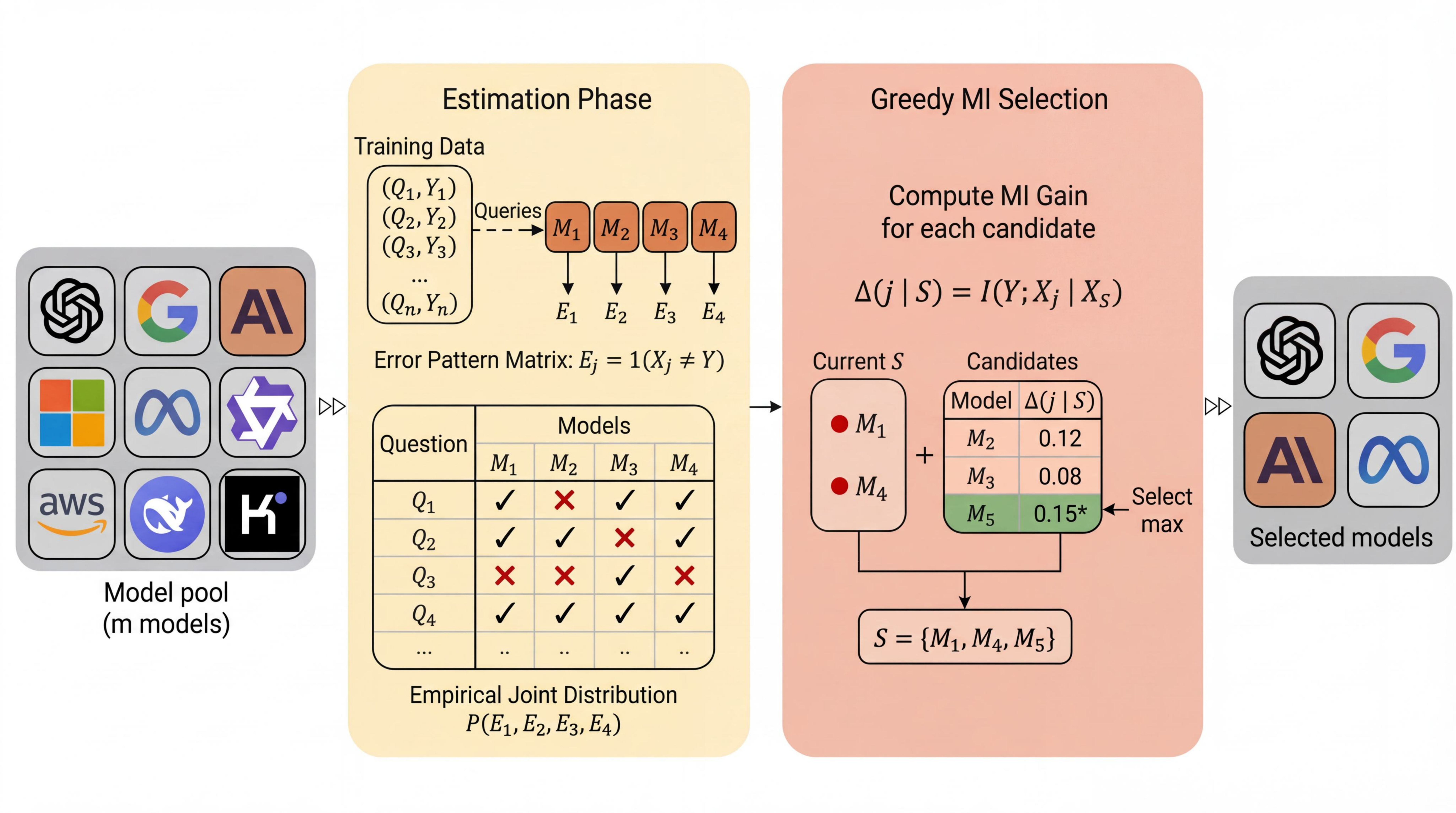}}
    \caption{
     Our Greedy Mutual Information (MI)-based model selection framework.
    }\label{Fig:alg1}
    \vspace{-0.5cm}
  \end{center}
\end{figure*}

The proof of Theorem~\ref{thm:decomposition} is provided in Appendix~\ref{Proof_of_Thm_4_3}. Under the label-invariant error assumption, this decomposition highlights two opposing forces: $i)$  Prediction Redundancy $I(X_j; {\bf X_S})$: Appears in (\ref{eq:decomp}) with a negative sign. This term penalizes models that mimic the ensemble's output. $ii)$ Error Correlation $I(E_j; E_S)$: Appears in (\ref{eq:decomp}) with a positive sign. Counter-intuitively, correlated errors can be beneficial when structured because knowing the error pattern of $S$ helps to decode $Y$.

Standard diversity heuristics often mirror the mRMR principle from feature selection~\citep{peng2005}, favoring high relevance while discouraging redundancy among predictors. 
Theorem~\ref{thm:decomposition} shows that under the label-invariant error assumption $I(E;Y)=0$, and the binary error representation $E_j=\mathds{1}(X_j\neq Y)$, the marginal information gain admits an additional term, $I(E_j;E_S)$, alongside the usual relevance and redundancy terms. 
Consequently, mRMR-style criteria based solely on $(I(Y;X_j),\, I(X_j; {\bf X_S}))$ are not guaranteed to align with mutual information-optimal subset selection in correlated ensembles, since they do not account for structured dependence among model \emph{errors}.

Thus, in Theorem~\ref{thm:decomposition}, we propose an information-theoretic criterion for deciding whether to include a model in a subset when models are correlated. As seen in (\ref{eq:decomp}), this criterion resembles the well-known mRMR-style feature selection problem \citep{peng2005}, augmented with an additional term, $I(E_j; E_S)$, which explicitly accounts for error correlations among models.

\subsection{Asymptotic Saturation Limit}
\label{subsec:saturation}
In Theorem~\ref{thm:alignment}, we show that under independent error assumptions, subset selection reduces to selecting the most accurate models.
The decomposition in Theorem~\ref{thm:decomposition} then explains how this picture changes under dependence, where correlation introduces additional structure beyond simple redundancy.
We now address the central question left open by these results: \emph{Can ensemble error vanish as the ensemble size grows when model errors are correlated?} Under the Gaussian-copula model with uniform pairwise correlation, we show that the answer to this is generally no: as $m$ grows, the error probability converges to a non-zero floor determined by the correlation structure, yielding a fundamental saturation limit.

\begin{theorem}[MAP Information Saturation]
\label{thm:saturation}
Consider an infinite ensemble generated by the Gaussian-copula model with uniform pairwise correlation $\rho > 0$ and the marginal accuracy $\alpha \triangleq 1-\epsilon > 1/2$. Let $\hat{Y}_{\text{MAP}}$ be the optimal MAP estimator given the full ensemble outputs $X_{[m]}$. The error probability is lower-bounded by:
\begin{align}
    \lim_{m \to \infty} \mathbb{P}(\hat{Y}_{\text{MAP}} \neq Y) = \Phi\left( \frac{\Phi^{-1}(1-\alpha)}{\sqrt{\rho}} \right) > 0.
\end{align}
\end{theorem}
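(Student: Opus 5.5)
Under the one-factor representation (\ref{eq:factor_model}), I condition on the common factor $U$. Given $U=u$, the errors $E_j$ are i.i.d. Bernoulli with
\begin{equation*}
p(u) \triangleq \mathbb{P}(E_j=1\mid U=u)=\Phi\!\left(\frac{\tau-\sqrt{\rho}\,u}{\sqrt{1-\rho}}\right),\qquad \tau=\Phi^{-1}(1-\alpha)<0,
\end{equation*}
and $Y$ is independent of $(U,\xi)$. I will show two things. First, as $m\to\infty$ the MAP error converges to the error of a genie that observes $Y$ through the infinite sequence, i.e.\ of deciding $Y$ from the empirical agreement fraction. Second, this limiting error equals $\mathbb{P}(p(U)>1/2)$, the probability that the shared difficulty makes the typical model wrong. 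Evaluating it gives $p(u)>1/2 \iff \tau-\sqrt{\rho}\,u>0 \iff u<\tau/\sqrt{\rho}$, so the limit is $\Phi(\tau/\sqrt{\rho})=\Phi(\Phi^{-1}(1-\alpha)/\sqrt{\rho})$. It is positive because $\rho>0$ makes the argument finite.

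\textbf{Achievability (upper bound).} Take the majority vote $\hat Y=\mathrm{sign}(\sum_j X_j)$, breaking ties arbitrarily. Conditional on $U=u$ and $Y$, the strong law of large numbers gives $\frac1m\sum_j \mathds{1}(X_j\neq Y)\to p(u)$ almost surely. Hence majority vote errs eventually if and only if $p(U)>1/2$, except on the null event $p(U)=1/2$. Dominated convergence then yields $\limsup_m \mathbb{P}_e^{\text{MAP}}\le \lim_m \mathbb{P}_e^{\text{maj}}=\mathbb{P}(p(U)>1/2)$, since MAP is optimal for each $m$.

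\textbf{Converse (lower bound).} This is the main obstacle, because MAP could in principle exploit more than the vote fraction. The key is a symmetry argument. The conditional law of $\mathbf X$ given $Y=y$ is a mixture over $u$ of i.i.d.\ sequences with $\mathbb{P}(X_j=y)=1-p(u)$. By de Finetti and the strong law, any decoder, even one seeing the whole infinite sequence, has access only to the limiting frequency $F=\lim \frac1m\sum_j \mathds{1}(X_j=+1)$ together with exchangeable noise that carries no extra information. So the infinite-ensemble MAP error equals the Bayes error of deciding $Y$ from $F$. Its value is $F=1-p(U)$ if $Y=+1$ and $F=p(U)$ if $Y=-1$.

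Writing $q$ for the density of $1-p(U)$, the value $F=f$ has likelihood $q(f)$ under $Y=+1$ and $q(1-f)$ under $Y=-1$. The Bayes error is therefore $\frac12\int \min(q(f),q(1-f))\,df$. Since $\tau<0$, the density $q$ places more mass above $1/2$; I would check that $q(f)\ge q(1-f)$ exactly when $f>1/2$, for instance through a monotone likelihood-ratio computation for the probit-normal density. Granting that, the integral reduces to $\mathbb{P}(1-p(U)<1/2)=\mathbb{P}(p(U)>1/2)$.

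Finally, I need the monotonicity $\mathbb{P}_e(m)$ nonincreasing and converging to this infinite-observation Bayes error. This follows from martingale convergence of the posterior $\mathbb{P}(Y=1\mid X_{[m]})$ to $\mathbb{P}(Y=1\mid X_{[\infty]})$. Matching the lower bound with the upper bound establishes the limit.
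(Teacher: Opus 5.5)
Your proof is correct, but it takes a different route from the paper's. The paper gets the majority-vote limit from the conditional strong law, exactly as in your achievability step. It then transfers that limit to MAP by asserting that MAP coincides with majority vote for every $m$, citing only the sign symmetry $(Y,X_{[m]})\mapsto(-Y,-X_{[m]})$. You never identify the finite-$m$ MAP rule. Instead, majority vote supplies the upper bound, and the lower bound is the Bayes error of a decoder that sees the whole infinite sequence, which you reduce to the Bayes error of the limiting frequency $F$.

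That reduction is sound and needs no de Finetti, because the mixture is explicit. Given $(Y,U)$, the $X_j$ are i.i.d.\ with a parameter $\Theta$ that is a function of $(Y,U)$. Hence $Y\to\Theta\to X_{[\infty]}$ is Markov, and $\Theta=F$ almost surely by the strong law, so $F$ is sufficient. Once you have the majority-vote upper bound, the lower bound needs only the trivial inequality $\mathbb{P}_e(m)\ge\mathbb{P}_e(\infty)$. The martingale step would instead let you drop achievability altogether.

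The inequality you left as ``I would check'' is where the Gaussian structure actually enters, and it holds. Write $W=p(U)$, so that $\Phi^{-1}(W)\sim\mathcal{N}(\mu,s^2)$ with $\mu=\tau/\sqrt{1-\rho}<0$ and $s^2=\rho/(1-\rho)$. The Jacobian $1/\phi(\Phi^{-1}(w))$ is invariant under $w\mapsto 1-w$, so the density of $W$ satisfies $f_W(w)/f_W(1-w)=\exp(2\mu v/s^2)$ with $v=\Phi^{-1}(w)$. This exceeds $1$ exactly when $w<1/2$, and since $q(f)=f_W(1-f)$, this is your claim.

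The paper's proof silently needs the same fact. Exchangeability and sign symmetry only show that $g(a)=\mathbb{P}(\mathbf{x}\mid Y=+1)-\mathbb{P}(\mathbf{x}\mid Y=-1)$ depends on the number $a$ of $+1$ votes and satisfies $g(m-a)=-g(a)$. Its sign for $a>m/2$ follows from $g(a)=\int_0^{1/2}[h_a(w)-h_a(1-w)][f_W(w)-f_W(1-w)]\,dw$ with $h_a(w)=(1-w)^a w^{m-a}$, since both brackets are positive there. Symmetry alone is genuinely insufficient. A two-point law for $p(U)$ on $\{0.1,0.95\}$ with weights $0.6,0.4$ is sign-symmetric with accuracy above $1/2$. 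Yet there $F$ identifies $Y$ exactly, so the MAP error tends to $0$ while majority vote tends to $0.4$.

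What each route buys: the paper's route, once completed, yields the stronger finite-$m$ statement that MAP \emph{is} majority vote. Yours isolates the single distributional fact required and exhibits the floor as the Bayes error of an explicit sufficient statistic, valid against any decoder.
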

The proof of Theorem~\ref{thm:saturation} is provided in Appendix~\ref{sec:proof_saturation}.

Theorem \ref{thm:saturation} establishes that the error floor observed in correlated ensembles is not negligible. It arises not from the sub-optimality of the aggregation rule (since the MAP is optimal), but from the information-theoretic capacity limit imposed by the latent common noise $U$. This implies that scaling the ensemble size $m$ cannot overcome structural correlations; performance improvements must instead come from reducing the effective $\rho$ via subset selection.

\section{Proposed Selection Algorithm: Greedy Mutual Information (Greedy MI)}
\label{sec:algorithm}

Motivated by the decomposition in Theorem \ref{thm:decomposition}, we propose an efficient greedy strategy to approximate the optimal subset $S^\star$. Since the optimization problem in (\ref{eq:problem_statement}) is NP-hard, we adopt a stepwise maximization approach. This greedy strategy is computationally tractable and is roughly related to relevance--redundancy feature selection. Crucially, in correlated ensembles the marginal gain contains additional dependence terms, shown in (\ref{eq:decomp}), so the classical mRMR intuition does not directly apply to our problem.

\subsection{Selection Criterion}
We construct the subset $S$ iteratively. Let $S_{k\!-\!1}$ be the set of $k\!-\!1\!$ models selected so far. At step $k$, we select the model $j^\star$ that maximizes the marginal information gain:
\begin{equation}
    j^\star = \operatorname*{argmax}_{j \in [m] \setminus S_{k-1}} \Delta(j \mid S_{k
    -1}).
\end{equation}
By using Theorem \ref{thm:decomposition}, we explicitly calculate this score by estimating the three components using empirical mutual information obtained from the training set (introduced in Section~\ref{Sec:num_results}); this split is used only to estimate statistics and to fit the empirical MAP estimator, and does not involve training any LLM parameters. We treat the model outputs as discrete random variables and estimate their joint distributions via frequency counts. Then, we calculate the estimated marginal information gain of model $j$ by 
\begin{equation}\label{eqn:mut_inf_est}
    \!\hat{\Delta}(j \!\mid\! S_{k
    \!-\!1}) \!=\! \hat{I}(Y; X_j) \!-\! \hat{I}(X_j; X_{S_{k\!-\!1}}) \!+\! \hat{I}(E_j; E_{S_{k\!-\!1}}).\!\!
\end{equation}
This update rule balances the individual accuracy against prediction redundancy while explicitly rewarding models that introduce structured, predictable error patterns. Finally, we provide Algorithm~\ref{alg:greedy_mi} to summarize the steps for the Greedy MI selection proposed in this work. Note that, in practice, we compute the full conditional mutual information $\Delta(j \mid S) = I(Y; X_j \mid X_S)$ directly to account for potential label-dependent error structure in the data. The overall, workflow is illustrated in Figure \ref{Fig:alg1}.

\begin{algorithm}[tb]
  \caption{Greedy MI Selection}
  \label{alg:greedy_mi}
  \begin{algorithmic}
    \STATE {\bfseries Input:} Estimate the dist. of set outputs $X_{[m]}$, labels $Y$, target size $k$
    \STATE Initialize $S \leftarrow \emptyset$
    \FOR{$t=1$ {\bfseries to} $k$}
      \STATE Estimate each term in \cref{eq:decomp} using \cref{alg:mi}
      \STATE Compute score $\hat{\Delta}(j|S)$ for all $j \notin S$ using \eqref{eq:decomp}
      \STATE $j^\star \leftarrow \operatorname{argmax}_{j} \hat{\Delta}(j|S)$
      \STATE $S \leftarrow S \cup \{j^\star\}$
    \ENDFOR
    \STATE {\bfseries Output:} Selected subset $S_k^*=S$
  \end{algorithmic}
\end{algorithm}

\section{Experimental Results}\label{Sec:num_results}

\subsection{Experiments on MEDMCQA}
\begin{figure}[ht]
\centering
\begin{subfigure}[b]{0.48\columnwidth}
    \centering
    \includegraphics[width=\textwidth]{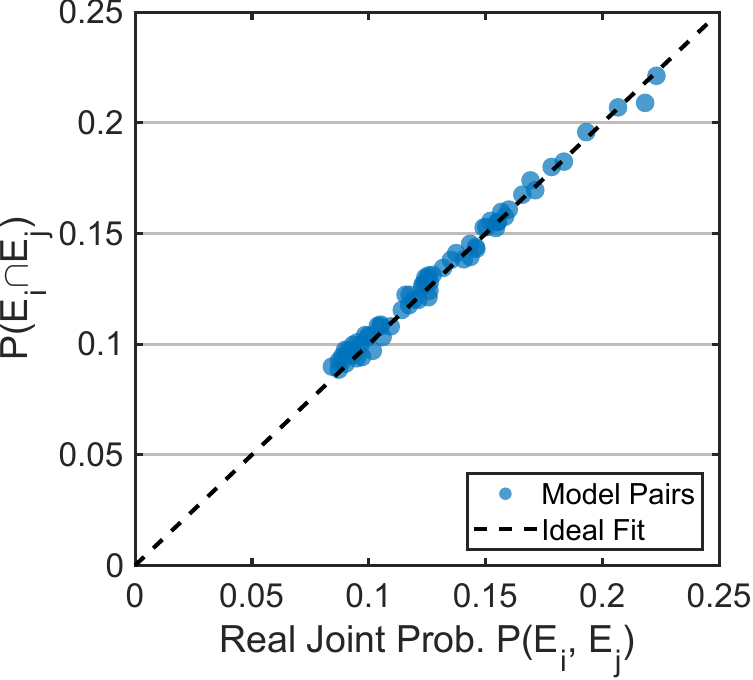}
    \caption{}
    \label{fig:medmcqa_temp07run1_scatter}
\end{subfigure}
\hfill
\begin{subfigure}[b]{0.45\columnwidth}
    \centering
    \includegraphics[width=\textwidth]{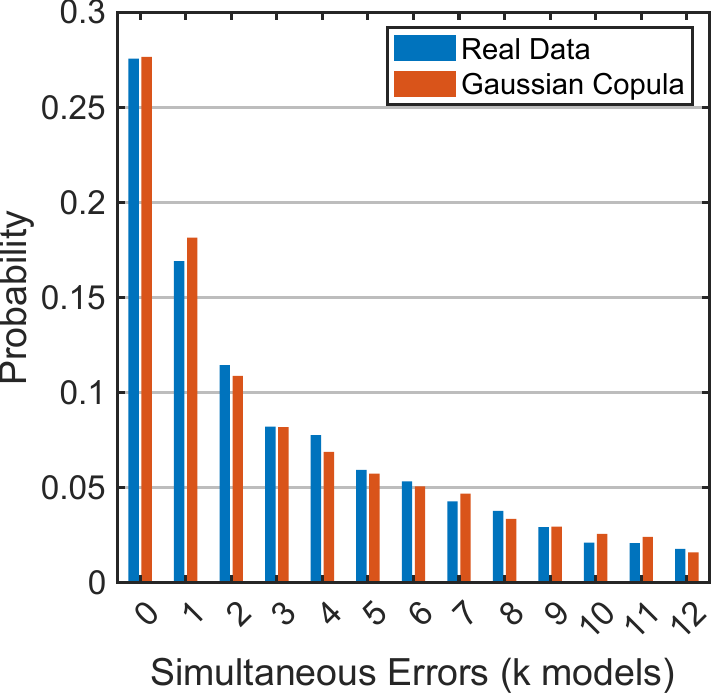}
    \caption{}
    \label{fig:medmcqa_temp07run1_heavy}
\end{subfigure}

\caption{Gaussian-copula validation on MEDMCQA for \(\text{temp}=0.7\), (\emph{run 1}). (a) Empirical versus copula-modeled pairwise joint error probs. $P(E_i\cap E_j)$. $\!$(b) Comparison of higher-order simultaneous error distributions between real data and copula samples.}
\label{fig:medmcqa_temp07run1_copulavalid}
\end{figure}

We first evaluate our algorithm on the MEDMCQA \cite{medmcqa_paper}, a large-scale multiple choice benchmark for medical question answering that spans broad range of clinical topics and requires domain-specific knowledge and reasoning. Therefore, MEDMCQA is well suited to our setting because medical queries often show higher difficulty and stronger shared failure modes across LLMs. To keep evaluation tractable, we run experiments on the validation set only, which contains 4183 samples.

\noindent\textbf{Binary Conversion Procedure:} To cast the multi-class classification task into our binary framework, we convert each sample into two distinct positive/negative queries: \textit{1) Positive Query:} We append the ground-truth answer to the original question, forming: ``Question: [Q]. Is the answer `[Correct Answer]'?'' This query has ground-truth label $Y = +1$. \textit{2) Negative Query:} We pair the same question with a randomly selected incorrect answer from the original choice set, forming: ``Question: [Q]. Is the answer `[Incorrect Answer]'?'' This query has ground-truth label $Y = -1$. Accordingly, at the end we obtain total 8366 samples for MEDMCQA. We conduct our experiments using these newly obtained samples. 

\noindent\textbf{Models and Setup:}
We use the following 12 LLMs: GPT-4.1, GPT-4.1-mini, GPT-4.1-nano, GPT4-o, GPT-5-chat from OpenAI \cite{openai_models}, LLaMA-3.1 \cite{grattafiori2024LLaMA3herdmodels}, Qwen3-235b \cite{yang2025Qwen3technicalreport}, kimi-k2 \cite{kimiteam2025kimik2openagentic}, Claude-3.5-haiku \cite{Anthropic2024Claude35haiku}, Mistral-small-3.1 and Mistral-small-3.2 \cite{Mistral_models} and Gemini-2.5-flash \cite{comanici2025Gemini25pushingfrontier}. We run experiments at three temperature settings (0.01, 0.3, and 0.7), performing two independent runs at each temperature. We provide the individual model accuracies and pairwise correlations in \cref{medmcqa_corr} and in \cref{tab:medmcqa_models} that are placed in Appendix \ref{app:tables}. Correlation parameters are estimated via Algorithm~\ref{alg:copula}; see Appendix~\ref{app:implementation} for implementation details, aggregation rules, and complexity analysis.

\noindent\textbf{Evaluation method:}
For each temperature and run setting, we randomly split the dataset: 80\% for estimating the required information-theoretic quantities, we name this portion as the training data/training split/training set even though there exists no model training and we use 20\% for evaluation and denote this portion as the test set or test data or test split. We repeat this over 5 independent random splits and report the mean test error (and standard deviation when shown) across splits.

\begin{figure}[t]
  \begin{center}
    \centerline{\includegraphics[width=0.7\columnwidth]{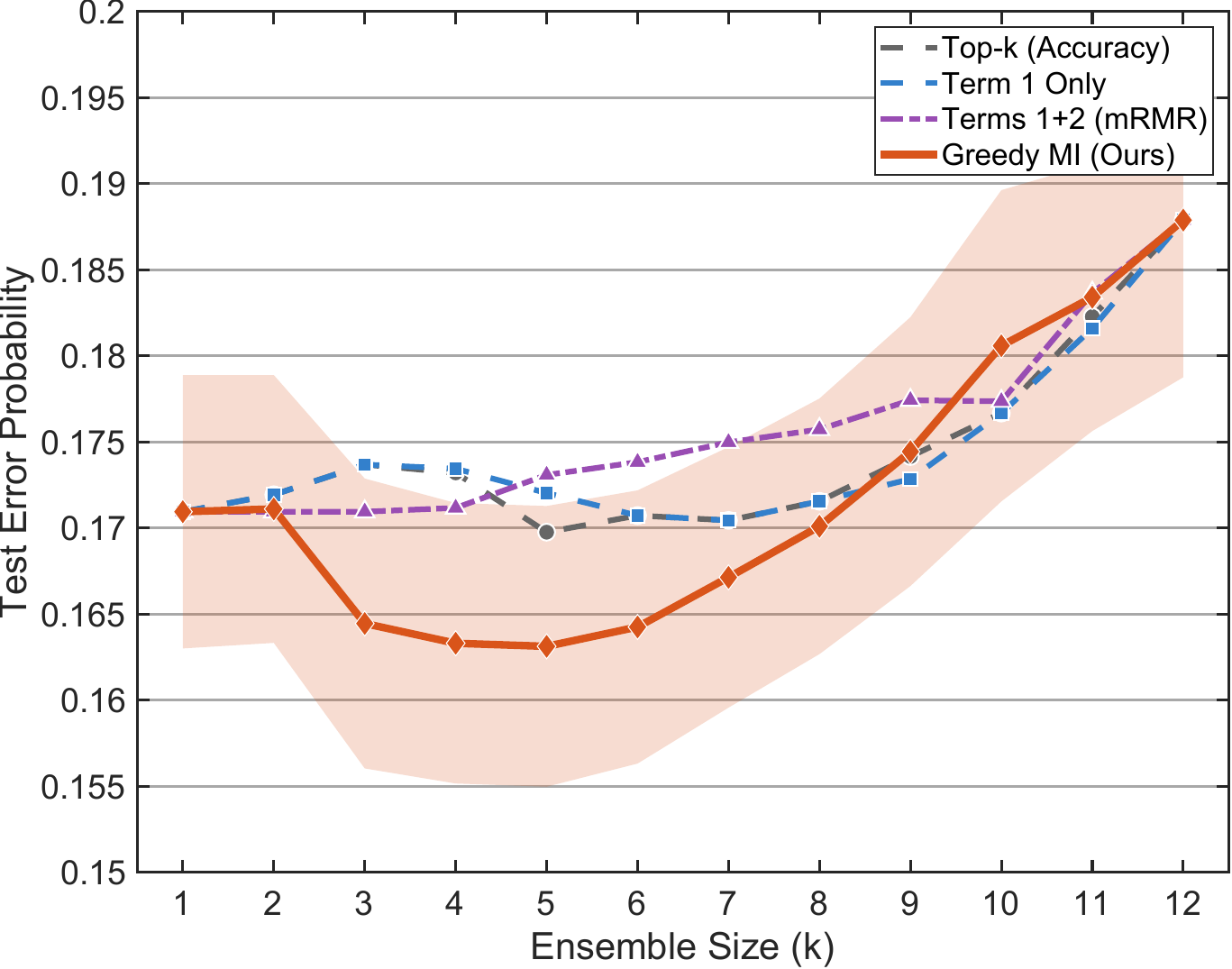}}
    \caption{ MEDMCQA test error vs. ensemble size (mean over temperatures, runs, and random splits). Shaded region represents the standard deviation.
      }
\label{fig:medmcqa_allavg}  
  \end{center}
  \vspace{-0.5cm}
\end{figure}

\begin{table}[t]
\centering
\caption{Selected models for MEDMCQA at representative ensemble sizes (\(\text{temp}=0.7\), run 1). The Greedy MI maintains high accuracy while diversifying across families.}
\label{tab:medmcqa_selections}
\small
\setlength{\tabcolsep}{4pt}
\begin{tabularx}{\columnwidth}{cXX}
\toprule
$k$ & \textbf{Greedy MI (acc.)} & \textbf{Terms 1+2 (mRMR) (acc.)} \\
\midrule
1 & GPT-5-chat (83.1\%) & GPT-5-chat (83.1\%) \\
2 & + GPT-4.1 (82.2\%) & + LLaMA-3.1-8b (66.0\%) \\
3 & + Qwen3-235b (77.4\%) & + GPT-4.1-nano (66.8\%) \\
4 & + kimi-k2 (76.3\%) & + Claude-3.5-haiku (70.7\%) \\
5 & + Gemini-2.5-flash (72.9\%) & + Gemini-2.5-flash (72.9\%) \\
\bottomrule
\end{tabularx}
\end{table}
\noindent\textbf{Results:} Before analyzing selection behavior, we verify that the Gaussian-copula accurately captures the real error structure. Our validation uses the complete dataset: we fit the copula to all available data, generate synthetic errors from the fitted model, and compare the synthetic error patterns against the observed patterns as implemented in \cref{alg:copula}. For copula fitting, we estimate each pairwise correlation $\rho_{ij}$ independently by matching the bivariate marginals. Specifically, for each model pair $(i,j)$, we solve
\begin{equation}
\Phi_2(\tau_i, \tau_j; \rho_{ij}) = \hat{P}(E_i \cap E_j)
\end{equation}
where $\Phi_2(\cdot, \cdot; \rho)$ denotes the bivariate standard normal CDF with correlation $\rho$, the threshold $\tau_i = \Phi^{-1}(1 - \alpha_i)$ corresponds to model $i$'s accuracy $\alpha_i = P(X_i = Y)$, and $\hat{P}(E_i \cap E_j)$ is the empirical joint error rate. As a result, this process yields $12 \times 12$ correlation matrix $\bf\Sigma$ (visualized in Figure~\ref{medmcqa_corr}) with average correlation $\bar{\rho} = 0.55$. Given $\bf\Sigma$ and the marginal accuracies $\{\alpha_j\}$, we generate synthetic error indicators by sampling $Z \sim \mathcal{N}({\bf0}, {\bf\Sigma})$ and thresholding: $E_j = \mathds{1}(Z_j < \tau_j)$. \cref{fig:medmcqa_temp07run1_copulavalid} shows two complementary validation diagnostics. The left panel (scatter plot) compares pairwise error probabilities $P(E_i \cap E_j)$ across all ${12 \choose 2} = 66$ model pairs: each point represents one pair, with the $x$-axis showing the empirical frequency and the $y$-axis showing the copula-generated frequency. The tight clustering around the diagonal (dashed line) indicates strong agreement, with the copula successfully reproducing the second-order correlation structure by construction. The right panel (histogram) examines higher-order structure by plotting the distribution of simultaneous errors, that is, the number of models that make error on the same instance. This is critical for ensemble performance: if all models fail together on hard examples, no aggregation rule can recover. The copula-predicted distribution (orange bars) closely matches the observed frequencies (blue bars), capturing not just pairwise correlations but also the emergent tail behavior. The model correctly predicts that all models being correct ($k=0$) are the most common   and nearly-all-wrong ($k \geq 9$) events are rare. This strong fit gives us confidence to say that Gaussian-copula is an appropriate model for LLM error dependence. Additional tests across all temperature settings appears in Appendix~\ref{app:copula_validation_medmcqa}. Now, we proceed with the results for our algorithm.

Overall, \cref{fig:medmcqa_allavg} shows the test error probability as we increase ensemble size, averaged across 3 temperature settings, 2 runs each and 5 random splits per run. We compare Greedy MI to three baselines: Top-$k$, which selects the most accurate individual models; Term-1, which ranks model using the first term in (\ref{eqn:mut_inf_est}); whereas, Terms 1+2 (mRMR) uses first two terms in (\ref{eqn:mut_inf_est}). In all three baselines, our method consistently outperforms, achieving its best performance of 16.3\% error at $k=5$, compared to 17.0\% for Top-k selection. The improvement is most pronounced when the subset size varies from $k=3$ to $k=7$, which is where practical budget constraints often lie.

\textbf{Understanding the selection behavior.} To see why Greedy MI works, we examine which models each method actually picks. Table~\ref{tab:medmcqa_selections} shows the selected models for a few representative ensemble sizes. At $k=1$, all methods correctly pick GPT-5-Chat, the most accurate model. The methods stay aligned at $k=2$, both selecting GPT-4.1. However, at $k=3$. the mRMR baseline (Terms 1+2) suddenly jumps to GPT-4.1-nano, a model ranked 11th by accuracy (66.8\%), trying hard to minimize redundancy. Meanwhile, our method selects Qwen3-235B (77.4\%, ranked 5th) which is still a strong model, but from a different family than the two OpenAI models already chosen. This reflects that greedy approach allows us to benefit from structured error correlation rather than penalizing all correlation indiscriminately. When $k=5$, Greedy MI has built an ensemble spanning five different model families (OpenAI, Qwen, Moonshot, Google), each with strong individual accuracy. The correlation matrix in Figure~\ref{medmcqa_corr} shows these models have moderate cross-family correlations ($\rho \approx 0.4$--0.5) compared to higher within-family correlations ($\rho \approx 0.7$--0.8). In contrast, mRMR's aggressive diversity-seeking has forced it to include several weak models, degrading overall performance. Moreover, Top-$k$ selection ignores the correlation structure and greedily selects four OpenAI models in succession (GPT-5-Chat, GPT-4.1, GPT-4o, GPT-4.1-mini), missing the opportunity to diversify across model families.

\textbf{Why does mRMR struggle?} The Terms 1+2 baseline's early selection of weak models causes it to plateau around 17.3\% error because standard redundancy penalties like $I(X_j; {\bf X_S})$ do not account for how correlated errors can actually help when they reveal systematic patterns. Thus, our method is superior in terms of capturing this nuance.

\textbf{Performance saturation.} For larger $k$, gains diminish and performance approaches a non-zero floor, consistent with the saturation phenomenon. In practice, the MAP aggregation requires estimating $P(Y\mid \bf{X_S})$ over $2^k$ output patterns; when $k$ is large and the estimation set is limited, many patterns are rare or unseen, making the MAP lookup noisy and leading to degraded performance even when using all $12$ models as shown in Figure~\ref{fig:medmcqa_allavg}.

\textbf{Further comparisons.} To avoid clutter, we report the full temperature/run grid and aggregation ablations in Appendix~ \ref{app:medmcqa}. 

\subsection{Experiments on MMLU}
Next, we evaluate our algorithm on the MMLU \cite{hendrycks2021measuringmassivemultitasklanguage}, a challenging multiple-choice dataset covering 57 diverse subjects such as mathematics, U.S. history, law, and computer science. MMLU is a natural fit for our setting because it requires broad world knowledge and strong reasoning, regimes where aggregating model outputs can be especially effective. To reduce computational overhead, we run our evaluation on the test split only using randomly selected 5000 samples out of 14042 samples. For the binary conversion procedure and evaluation method, we apply the same procedure used in MEDMCQA. Hence, at the end we obtain a total of 10000 samples for MMLU. We use these 10000 samples in the experiments.

\begin{figure}[t]
  \begin{center}
    \centerline{\includegraphics[width=0.70\columnwidth]{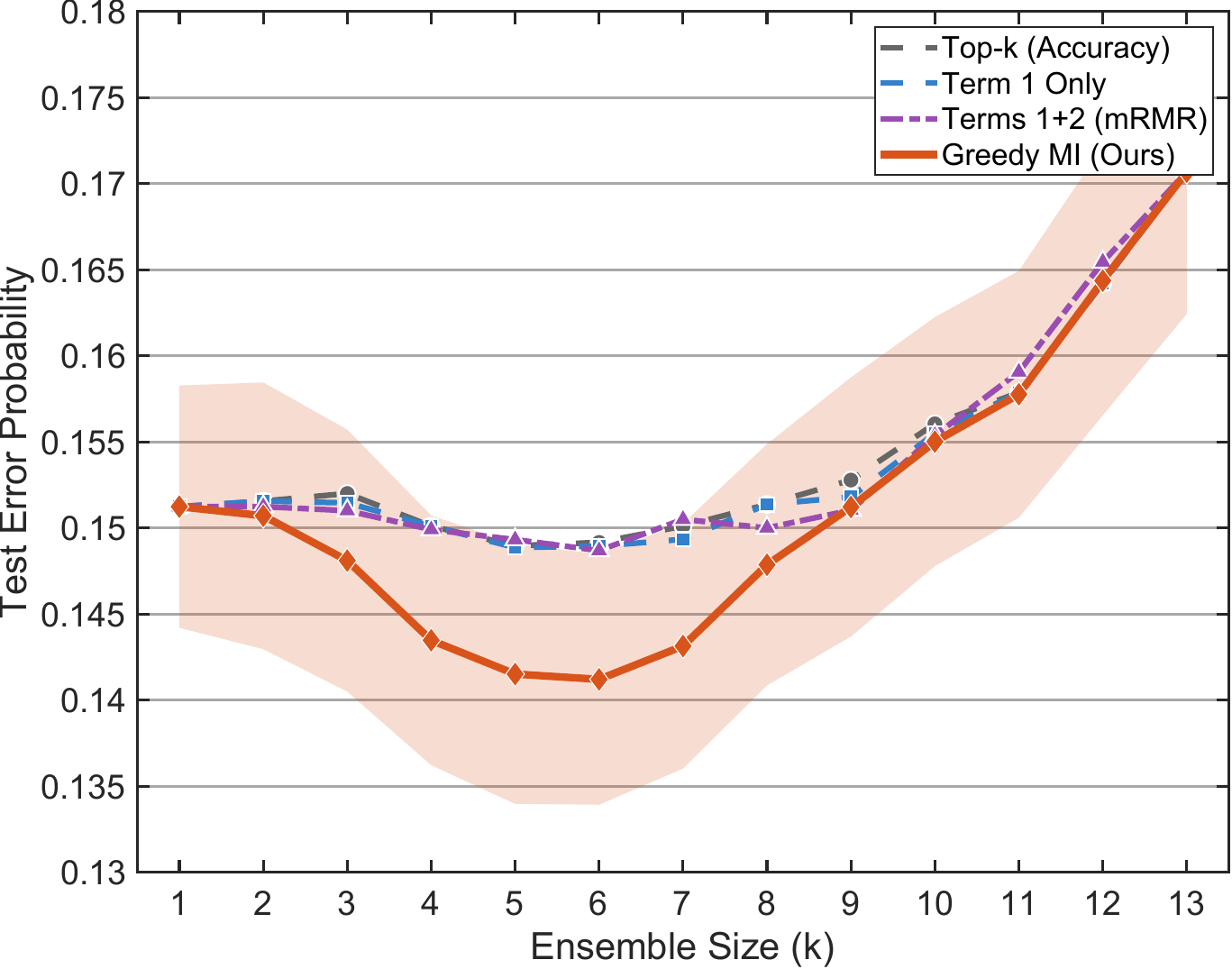}}
    \caption{MMLU test error vs. ensemble size (mean over temperatures, runs, random splits). Shaded region represents the standard deviation.
      }\label{fig:mmlu_all_Avg}
  \end{center}
  \vspace{-1cm}
\end{figure}

\noindent\textbf{Models and Setup:}
We employ the following 13 models: GPT-4.1, GPT-4.1-mini, GPT-4.1-nano, GPT-4o, GPT-4o-mini from OpenAI \cite{openai_models}, LLaMA-3.1-8b and LLaMA-3.3-70b \cite{grattafiori2024LLaMA3herdmodels}, Qwen3-235b \cite{yang2025Qwen3technicalreport}, kimi-k2 \cite{kimiteam2025kimik2openagentic}, Mistral-small-3.1-24b and Mistral-small-3.2-24b \cite{Mistral_models}, Gemini-2.5-flash \cite{comanici2025Gemini25pushingfrontier}, and phi-4 \cite{abdin2024phi4technicalreport}. We run experiments at three temperature settings (0.01, 0.3, and 0.7), performing two independent runs at each temperature. We report the individual model accuracies and pairwise correlations in \cref{mmlu_corr} and in \cref{tab:mmlu_models} that are located in Appendix~\ref{app:tables}.

\noindent\textbf{Results:} Overall, we see in Figure~\ref{fig:mmlu_all_Avg} and Table~\ref{tab:mmlu_results} (provided in Appendix~\ref{app:tables}) that  Greedy MI delivers its clearest gains in the budget-relevant mid-range ($k=3$--$8$): it reaches its best average performance around $k=4$--$6$ (e.g., $0.144$ at $k=4$ and $0.141$ at $k=6$) and consistently improves over accuracy-based Top-$k$ (e.g., $0.150$ at $k=4$ and $0.149$ at $k=6$) and the relevance-only baseline. The mRMR baseline (Terms 1+2) occasionally matches Greedy MI at a few larger $k$ values (e.g., both are about $0.151$ at $k=9$), but it is less reliable overall and does not improve the mid-range where Greedy MI is the strongest. Furthermore, Top-$k$ and Term~1 Only overlap almost everywhere, suggesting that accuracy and relevance rankings largely coincide when correlations are ignored.

For larger $k$, all methods begin to degrade and eventually converge (e.g., by $k=13$ all methods are around $0.171$). This behavior is largely driven by the MAP estimator because the number of possible prediction patterns grows as $2^k$, so with a fixed training split (80\%) many patterns become rare or unseen as $k$ increases, making the estimated $\mathbb{P}(Y\mid X_S)$ high-variance despite Laplace smoothing. Overall, these results emphasize that Greedy MI is most beneficial in the practical regime of small-to-moderate budgets, similar to observations in the MEDMCQA dataset.
\begin{table}[t]
\centering
\caption{Selected models for MMLU at representative ensemble sizes (\(\text{temp}=0.7\), run 2). Greedy MI maintains high accuracy while diversifying across families.}
\label{tab:mmlu_selections}
\small
\setlength{\tabcolsep}{4pt}
\begin{tabularx}{\columnwidth}{cXX}
\toprule
$k$ & \textbf{Greedy MI (acc.)} & \textbf{Terms 1+2 (mRMR) (acc.)} \\
\midrule
1 & GPT-4.1 (85.0\%) & GPT-4.1 (85.0\%) \\
2 & + Qwen3-235b (82.2\%) & + LLaMA-3.3-70b (77.2\%) \\
3 & + kimi-k2 (82.1\%) & + GPT-4.1-nano (69.2\%) \\
4 & + Gemini-2.5-flash (79.8\%) & + Gemini-2.5-flash (79.8\%) \\
5 & + GPT-4.1-mini (81.6\%) & + phi-4 (74.1\%) \\
6 & + phi-4 (74.1\%)  & + kimi-k2 (82.1\%) \\
\bottomrule
\end{tabularx}
\end{table}

In Table~\ref{tab:mmlu_selections}, we see why Greedy MI outperforms compared to mRMR. Both methods start with GPT-4.1 (85.0\%), but diverge at $k=2$: Greedy MI adds Qwen3-235B (82.2\%), while mRMR selects LLaMA-3.3-70B (77.2\%) seeking diversity. At $k=3$, the contrast sharpens--mRMR chooses GPT-4.1-nano (69.2\%, ranked 12th out of 13), whereas Greedy MI picks Kimi-K2 (82.1\%). When $k=5$, every model in Greedy MI's ensemble exceeds 79\% accuracy across four families (OpenAI, Qwen, Moonshot, Google), while mRMR has already included two models below 75\%. Notably, mRMR only adds Kimi-K2 at $k=6$--three steps after Greedy MI selected it. The correlation matrix in \cref{mmlu_corr} explains this behavior: OpenAI models exhibit high within-family correlations ($\rho \approx 0.8$), while cross-family correlations are moderate ($\rho \approx 0.6$--$0.7$). Greedy MI exploits this structure to achieve diversity among strong models, rather than penalizing redundancy indiscriminately at the cost of accuracy. Whereas, Top-$k$ selection, which ignores correlation structure entirely, stacks multiple OpenAI models early (e.g., GPT-4.1, GPT-4o) and thus fails to benefit from cross-family diversity.

Additional Gaussian-copula validation plots and full temperature/run grid and aggregation ablations are given in Appendix~\ref{MMLU_app}.

\subsection{Experiments on IMDB Movie Reviews Dataset}
We further provide our experiments on the IMDB sentiment classification dataset \cite{maas-etal-2011-learning}, where models achieve uniformly high accuracies (91--96\%) and exhibit very strong pairwise correlations ($\bar{\rho} = 0.90$). Due to this high correlation, improvements from ensemble selection are modest, though Greedy MI still consistently outperforms baselines in the mid-budget range; full results are provided in Appendix~\ref{app:imdb}.

\section{Conclusion}
In this work, we studied the problem of ensembling responses from multiple large language models (LLMs) to improve accuracy under a fixed query budget. Focusing on a binary decision setting with true/false (or negative/positive) outputs, we modeled LLM responses using a Gaussian-copula, which effectively captures both individual model accuracies and inter-model correlations. We showed that when LLM decisions are independent, the optimal ensemble is obtained by selecting the most accurate models, corresponding to the classical ``Top-$k$ by accuracy" rule. However, in the presence of correlated models, accuracy alone is insufficient: even with an optimal estimator, correlations induce a non-vanishing ensemble error probability. Motivated by this limitation, we developed an information-theoretic framework, termed \emph{Greedy MI}, that greedily constructs a near-optimal LLM ensemble under budget constraints, and we demonstrated its effectiveness by consistently outperforming strong baselines across multiple datasets.

\section{Limitations and Discussion}

Our study focuses on a binary decision setting, which allows for a clean and interpretable information-theoretic analysis and serves as a foundational step toward more general formulations. While the Gaussian-copula model captures key correlation patterns observed in practice, extending these insights to richer output spaces and alternative dependency structures presents a promising direction for future work. Moreover, the saturation effects revealed by our analysis indicate that further improvements in ensemble performance may be achieved by complementing selection strategies with advances in model diversity and increasing training dataset for  estimating the required information-theoretic quantities.

\section*{Impact Statement}

This work advances the theoretical understanding of ensemble methods for large language models (LLMs) under practical budget constraints by explicitly accounting for correlations among model outputs. Although prior work has studied ensemble selection, existing solutions may not generalize well to all settings without a principled understanding of the problem’s fundamental structure. By characterizing fundamental limits of ensemble performance and proposing principled selection strategies, our results can help practitioners design more reliable and cost-effective LLM systems, potentially reducing unnecessary computational and environmental costs associated with indiscriminate model querying. From an ethical perspective, the proposed framework does not introduce new data collection mechanisms or decision policies that directly affect individuals but rather provides analytical tools for improving existing aggregation methods. Overall, we expect the broader societal impact of this work to align with established implications of improving reliability and efficiency in machine learning systems.
\newpage
\bibliography{references}
\bibliographystyle{icml2026}
\newpage
\appendix
\onecolumn
{\centering{\bf \large Appendix}\par}

\begin{table}[h]
\centering
\caption{Organization of the Appendix.}
\label{tab:appendix_overview}
\renewcommand{\arraystretch}{1.5}
\begin{tabular}{@{}cl@{}}
\toprule
\textbf{Section} & \textbf{Contents} \\
\midrule
\ref{Proof_of_theorem_4_1} & Proof of Theorem \ref{thm:alignment} (independence optimality) \\[0.4em]
\ref{Proof_of_Thm_4_3} & Proof of Theorem \ref{thm:decomposition} (MI decomposition) \\[0.4em]
\ref{sec:proof_saturation} & Proof of Theorem \ref{thm:saturation} (saturation limit) \\[0.4em]
\ref{subsec:difficulty-aware} & Difficulty-Aware Decomposition and the Price of Difficulty \\[0.4em]
\ref{app:implementation} & Implementation Details and Algorithms \\[0.4em]
\ref{app:medmcqa} & Additional Experiments on MEDMCQA \\[0.4em]
\ref{MMLU_app} & Additional Experiments on MMLU \\[0.4em]
\ref{app:imdb} & Additional Experiments on IMDB \\[0.4em]
\ref{app:tables} & Supplementary Tables \\
\bottomrule
\end{tabular}
\end{table}
\vspace*{\fill}
\newpage

\newpage

\section{Proof of \cref{thm:alignment}}\label{Proof_of_theorem_4_1}

We first introduce the necessary definitions and then proceed with the proof.

\begin{definition}[Binary Symmetric Channel]
\label{def:bsc}
A Binary Symmetric Channel (BSC) with crossover (error) probability $\epsilon \in [0,1]$ is a memoryless channel where each input symbol is flipped independently with probability $\epsilon$. For input $Y \in \{+1, -1\}$ and output $X \in \{+1, -1\}$, we have:
\begin{equation}
    X = Y \cdot Z, \quad \text{where } Z = 
    \begin{cases}
        +1 & \text{with probability } 1 - \epsilon, \\
        -1 & \text{with probability } \epsilon.
    \end{cases}
\end{equation}
Equivalently, we express the error probability as $\mathbb{P}(X \neq Y) = \epsilon$.
\end{definition}

\begin{definition}[Stochastic Degradation]
\label{def:degradation}
A (communication) channel $\mathcal{C}'$ is said to be a \emph{stochastically degraded} version of channel $\mathcal{C}$ if there exists an intermediate channel $\mathcal{C}''$ such that the cascade $\mathcal{C} \to \mathcal{C}''$ is statistically equivalent to $\mathcal{C}'$. In other words, the output of $\mathcal{C}'$ can be obtained by further corrupting the output of $\mathcal{C}$ through an independent noise process.
\end{definition}

\begin{lemma}[BSC Degradation]
\label{lem:bsc_degradation}
Let $\mathcal{C}_1$ be a BSC with error rate $\epsilon_1$ and $\mathcal{C}_2$ be a BSC with error rate $\epsilon_2$ where $\epsilon_1 \le \epsilon_2 < 1/2$. Then, $\mathcal{C}_2$ is a stochastically degraded version of $\mathcal{C}_1$. Specifically, $\mathcal{C}_2$ can be constructed by cascading $\mathcal{C}_1$ with an independent BSC having crossover probability:
\begin{equation}
\label{eq:delta}
    \delta = \frac{\epsilon_2 - \epsilon_1}{1 - 2\epsilon_1}.
\end{equation}
\end{lemma}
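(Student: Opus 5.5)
The plan is to verify the cascade construction directly, using the multiplicative representation from Definition~\ref{def:bsc}. Write the output of $\mathcal{C}_1$ as $X_1 = Y Z_1$, where $\mathbb{P}(Z_1=-1)=\epsilon_1$. Then pass $X_1$ through an independent BSC with crossover probability $\delta$, giving $X = X_1 Z' = Y Z_1 Z'$, where $Z'$ is independent of $(Y,Z_1)$ and $\mathbb{P}(Z'=-1)=\delta$. Because $Z_1$ and $Z'$ are independent of $Y$ and of each other, the composite noise $Z_1Z'$ is again a $\pm1$ variable independent of $Y$. So the cascade is a BSC, and it suffices to compute its crossover probability and check that it equals $\epsilon_2$.

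The composite flip occurs exactly when exactly one of $Z_1, Z'$ equals $-1$. This gives
\begin{equation*}
\mathbb{P}(Z_1Z'=-1) = \epsilon_1(1-\delta) + (1-\epsilon_1)\delta = \epsilon_1 + \delta(1-2\epsilon_1).
\end{equation*}
Setting this equal to $\epsilon_2$ and solving for $\delta$ yields exactly the expression in \eqref{eq:delta}. This is legitimate since $1-2\epsilon_1>0$ by the hypothesis $\epsilon_1 \le \epsilon_2 < 1/2$. Equivalently, one can use the identity $\mathbb{E}[Z_1Z'] = (1-2\epsilon_1)(1-2\delta)$ and match it to $1-2\epsilon_2$.

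The only real obstacle is confirming that $\delta$ is a valid probability, so that the intermediate channel $\mathcal{C}''$ exists. Nonnegativity follows from $\epsilon_2 \ge \epsilon_1$ together with $1-2\epsilon_1>0$. For the upper bound, $\delta \le 1/2$ is equivalent to $2(\epsilon_2-\epsilon_1) \le 1-2\epsilon_1$, i.e.\ to $\epsilon_2 \le 1/2$, which holds by assumption. So in fact $\delta\in[0,1/2)$, and this is exactly where $\epsilon_2<1/2$ is used.

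Once these checks are in place, the cascade $\mathcal{C}_1 \to \mathcal{C}''$ has the same conditional law $\mathbb{P}(X\mid Y)$ as $\mathcal{C}_2$. By Definition~\ref{def:degradation}, this means $\mathcal{C}_2$ is a stochastically degraded version of $\mathcal{C}_1$.
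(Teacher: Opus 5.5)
Your proposal is correct and follows the paper's proof essentially step for step: you write the cascade output as $Y N_1 N_2$, compute the composite flip probability $\epsilon_1+\delta(1-2\epsilon_1)$, and solve for $\delta$. Your validity check is slightly sharper than the paper's. The paper only asserts $\delta\in[0,1]$, whereas you show $\delta\in[0,1/2)$ and identify exactly where the hypothesis $\epsilon_2<1/2$ is used.
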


\begin{proof}
Let $Y \in \{+1, -1\}$ be the input, $Z_1$ be the output of $\mathcal{C}_1$, and $Z_2$ be the output after passing $Z_1$ through a BSC with crossover probability $\delta$. 

For the cascade, $Z_1 = Y \cdot N_1$ where $\mathbb{P}(N_1 = -1) = \epsilon_1$, and $Z_2 = Z_1 \cdot N_2$ where $\mathbb{P}(N_2 = -1) = \delta$. Thus, we have:
\begin{equation}
    Z_2 = Y \cdot N_1 \cdot N_2.
\end{equation}
The overall error occurs when $N_1 \cdot N_2 = -1$, i.e., exactly one of $N_1$ or $N_2$ equals $-1$. Then, we write the error probability for the channel $C_2$ as:
\begin{align}
    \mathbb{P}(Z_2 \neq Y) &= \mathbb{P}(N_1 = +1)\mathbb{P}(N_2 = -1) + \mathbb{P}(N_1 = -1)\mathbb{P}(N_2 = +1) \\
    &= (1 - \epsilon_1)\delta + \epsilon_1(1 - \delta) \\
    &= \delta - \epsilon_1 \delta + \epsilon_1 - \epsilon_1 \delta \\
    &= \epsilon_1 + \delta(1 - 2\epsilon_1).
\end{align}
Setting this equal to $\epsilon_2$ and solving for $\delta$ yields (\ref{eq:delta}). Since $\epsilon_1 \le \epsilon_2 < 1/2$, we have $\delta \in [0,1]$, confirming the validity of this construction.
\end{proof}

\begin{definition}[Top-$k$ Ensemble]
\label{def:topk}
Given $m$ channels indexed by increasing error rates $\epsilon_1 \le \epsilon_2 \le \cdots \le \epsilon_m$, the \emph{top-$k$ ensemble} is defined as $H_k = \{1, 2, \ldots, k\}$, containing the $k$ channels with the smallest error rates.
\end{definition}

We now restate and prove the main theorem.

{\bf Theorem~\ref{thm:alignment}~}[Alignment of Error and Mutual Information]
Consider an ensemble of $m$ independent BSCs where channel $j$ has error rate $\epsilon_j < 1/2$, indexed such that $\epsilon_1 \le \epsilon_2 \le \cdots \le \epsilon_m$. For any subset $S \subseteq [m]$ with $|S| = k$, the top-$k$ set $H_k$ satisfies:
\begin{align}
    I(Y; {\bf X_{H_k}}) &\ge I(Y; {\bf X_S}), \label{eq:alignment_mi_2} \\
    \mathbb{P}_e(H_k) &\le \mathbb{P}_e(S). \label{eq:alignment_pe_2}
\end{align}

\begin{proof}
We prove this result by constructing an explicit coupling between ${\bf X_{H_k}}$ and ${\bf X_S}$, demonstrating that ${\bf X_S}$ can be viewed as a degraded version of ${\bf X_{H_k}}$.

We now define a bijection $\pi: S \to H_k$ satisfying $\epsilon_{\pi(j)} \le \epsilon_j$. To see this, first set $\pi(j)=j$ for all $j\in S\cap H_k$.
The remaining elements satisfy $|S\setminus H_k|=|H_k\setminus S|$, so we can bijectively map
$S\setminus H_k$ onto $H_k\setminus S$. For any $j\in S\setminus H_k$ we have $j>k$, hence
$\epsilon_j\ge \epsilon_k\ge \epsilon_i$ for every $i\in H_k$, so in particular
$\epsilon_{\pi(j)}\le \epsilon_j$.

Using the bijection $\pi$ from the step above, we now construct a random vector $\widetilde{X}_S = (\widetilde{X}_j)_{j \in S}$ as follows: for each $j \in S$, let $i=\pi(j)\in H_k$ be the paired index with $\epsilon_i \le \epsilon_j$, compute the degradation parameter $\delta_j=\frac{\epsilon_j-\epsilon_i}{1-2\epsilon_i}$ using Lemma~\ref{lem:bsc_degradation}, and generate $\widetilde{X}_j$ by passing $X_i$ through an independent BSC with crossover probability $\delta_j$. From Lemma~\ref{lem:bsc_degradation}, each $\widetilde{X}_j$ has the same marginal distribution as $X_j$, i.e., it is the output of a BSC with error rate $\epsilon_j$. We now argue that ${\bf \widetilde{X}_S} \stackrel{d}{=} {\bf X_S}$: each $\widetilde{X}_j$ has the correct marginal distribution (error rate $\epsilon_j$), the degradation noises are independent across components, and the original channels are conditionally independent given $Y$. This establishes that the joint distribution of $\widetilde{X}_S$ matches that of ${\bf X}_S$, i.e.,
\begin{equation}
\label{eq:dist_equivalence}
   {\bf  \widetilde{X}_S} \stackrel{d}{=} {\bf X_S}.
\end{equation}
By construction, $\widetilde{X}_S$ is obtained by applying independent noise to $X_{H_k}$. Since $Y$ is the common source and $X_{H_k}$ depends only on $Y$ (through independent BSCs). As a result, we have the Markov chain:
\begin{equation}
\label{eq:markov_chain}
    Y \longrightarrow {\bf X_{H_k}} \longrightarrow {\bf \widetilde{X}_S} \stackrel{d}{=} {\bf X_S}.
\end{equation}

Since we are able to construct the Markov chain above, now we can use the Data Processing Inequality\citep{Cover2006} which states that for any Markov chain $A \to B \to C$, we have:
\begin{equation}
    I(A; C) \le I(A; B).
\end{equation}
Applying this to the Markov chain in \cref{eq:markov_chain}, we obtain:
\begin{equation}
    I(Y; {\bf X_S}) = I(Y; {\bf \widetilde{X}_S}) \le I(Y; {\bf X_{H_k}}),
\end{equation}
which proves (\ref{eq:alignment_mi_2}).

Now, for the probability of error, we observe that any decision rule $\hat{Y}({\bf X_S})$ based on ${\bf X_S}$ can be replicated using ${\bf X_{H_k}}$ by first generating ${\bf \widetilde{X}_S}$ through the degradation construction and then applying the same decision rule. Formally, we have:
\begin{equation}
    \mathbb{P}_e(S) = \min_{\hat{Y}} \mathbb{P}(\hat{Y}({\bf X_S}) \neq Y) = \min_{\hat{Y}} \mathbb{P}(\hat{Y}({\bf \widetilde{X}_S}) \neq Y).
\end{equation}
Since ${\bf \widetilde{X}_S}$ is a (possibly randomized) function of ${\bf X_{H_k}}$, the optimal estimator based on ${\bf X_{H_k}}$ can only achieve lower or equal error:
\begin{equation}
    \mathbb{P}_e(H_k) = \min_{\hat{Y}} \mathbb{P}(\hat{Y}({\bf X_{H_k}}) \neq Y) \le \mathbb{P}_e(S),
\end{equation}
which proves (\ref{eq:alignment_pe_2}).
\end{proof}
Consequently, we can make the following two intuitive remarks regarding this result:

\begin{remark}[Tightness of the Bound]
\label{rem:tightness}
The inequalities in Theorem~\ref{thm:alignment} become equalities whenever $S$ consists of $k$ channels with the smallest error rates (ties allowed).
 For any other subset $S \neq H_k$, at least one component must be strictly degraded (i.e., $\delta_j > 0$ for some $j$), leading to a strict inequality.
\end{remark}

\begin{remark}[Extension to Non-Uniform Priors]
\label{rem:extension}
The proof relies only on the Markov chain structure and the Data Processing Inequality, which hold regardless of the prior distribution on $Y$. Thus, Theorem~\ref{thm:alignment} extends to any prior $\mathbb{P}(Y=1) = p \in (0,1)$.
\end{remark}

To visualize the stochastic degradation argument, we provide the following example.

\begin{example}{Stochastic Degradation Illustration}{ex1}
Consider a pool of $m=3$ independent models with error rates: $\epsilon_1=0.1$ (expert), $\epsilon_2=0.2$ (average), and $\epsilon_3=0.4$ (novice).  

We seek the optimal subset of size $k=2$, which is the top-$k$ subset $H_2=\{1,2\}$. As a suboptimal comparison, consider $S=\{1,3\}$. 

\noindent\textbf{Bijection Construction.} We construct the mapping $\pi: S \to H_2$:
\begin{itemize}
    \item $\pi(1) = 1$: Model 1 maps to itself ($\epsilon_1 \le \epsilon_1$).
    \item $\pi(3) = 2$: Model 3 maps to Model 2 ($\epsilon_2 \le \epsilon_3$).
\end{itemize}

\noindent\textbf{Degradation Parameters.} For each mapping, we compute the crossover probability using (\ref{eq:delta}):
\begin{itemize}
    \item $\delta_1 = 0$: No noise injection needed (identical mapping).
    \item $\delta_3 = \frac{\epsilon_3 - \epsilon_2}{1 - 2\epsilon_2} = \frac{0.4 - 0.2}{1 - 0.4} = \frac{1}{3}$: Multiply $X_2$ by an independent random sign $N$ with $\mathbb{P}(N = -1) = \frac{1}{3}$.
\end{itemize}

\noindent\textbf{Resulting Markov Chain.} This construction establishes:
\begin{equation*}
    Y \longrightarrow \mathbf{X}_{H_2} \xrightarrow{\text{BSC}(\delta=1/3)} \mathbf{X}_S
\end{equation*}

By the Data Processing Inequality, $I(Y; \mathbf{X}_{H_2}) \ge I(Y; \mathbf{X}_S)$. Similarly, $\mathbb{P}_e(H_2) \le \mathbb{P}_e(S)$, confirming that the top-$k$ ensemble is optimal.
\end{example}

\newpage
\section{Proof of \cref{thm:decomposition}} \label{Proof_of_Thm_4_3}

We first establish the key technical tools used in the proof.

\begin{definition}[The Conditional Mutual Information]
\label{def:cond_mi}
The conditional mutual information between random variables $X$ and $Y$ given $Z$ is defined as:
\begin{equation}
    I(X; Y \mid Z) \triangleq H(X \mid Z) - H(X \mid Y, Z) = \mathbb{E}_Z \left[ D_{\mathrm{KL}}(P_{X,Y|Z} \| P_{X|Z} P_{Y|Z}) \right].
\end{equation}
This quantity measures the information that $Y$ provides about $X$ beyond what is already known from $Z$.
\end{definition}

\begin{lemma}[The Chain Rule for Mutual Information]
\label{lem:chain_rule}
For any random variables $X$, $Y$, and $Z$, we have:
\begin{equation}
    I(X; Y, Z) = I(X; Z) + I(X; Y \mid Z) = I(X; Y) + I(X; Z \mid Y).
\end{equation}
\end{lemma}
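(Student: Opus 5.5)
The lemma follows from the entropy form of mutual information together with Definition~\ref{def:cond_mi}. No earlier result of the paper is needed. Every variable to which the lemma is applied in this paper is binary or a finite vector of binary variables, such as $Y$, $X_j$, ${\bf X_S}$, $E_j$ and $E_S$. So I will work with discrete random variables with finite alphabets, where all entropies are finite and differences of entropies are well defined.

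\textbf{Step 1 (add and subtract).} I will write the left-hand side as $I(X;Y,Z) = H(X) - H(X\mid Y,Z)$ and insert $\pm H(X\mid Z)$:
\begin{equation*}
I(X;Y,Z) = \bigl[H(X) - H(X\mid Z)\bigr] + \bigl[H(X\mid Z) - H(X\mid Y,Z)\bigr].
\end{equation*}
The first bracket is $I(X;Z)$ by definition. The second bracket is exactly $I(X;Y\mid Z)$ by Definition~\ref{def:cond_mi}. This gives the first equality.

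\textbf{Step 2 (swap roles).} The second equality comes from the same computation with the roles of $Y$ and $Z$ exchanged. Here I insert $\pm H(X\mid Y)$ instead. I will use that the conditioning set $\{Y,Z\}$ is unordered, so $H(X\mid Y,Z)=H(X\mid Z,Y)$.

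\textbf{Step 3 (pointwise form).} For robustness, and to match the KL form in Definition~\ref{def:cond_mi}, I will also give the pointwise version. On the support of the joint law, the density ratio factorizes as
\begin{equation*}
\frac{p(x\mid y,z)}{p(x)} = \frac{p(x\mid z)}{p(x)}\cdot\frac{p(x\mid y,z)}{p(x\mid z)}.
\end{equation*}
Taking logarithms and expectations under $p(x,y,z)$, the first factor yields $I(X;Z)$. The second factor yields $\mathbb{E}_Z\bigl[D_{\mathrm{KL}}(P_{X,Y\mid Z}\,\|\,P_{X\mid Z}P_{Y\mid Z})\bigr] = I(X;Y\mid Z)$.

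The only delicate point is justifying the splitting of expectations. This requires each term to be finite, which avoids $\infty-\infty$. That is automatic for finite alphabets, which covers every use of the lemma in the proof of Theorem~\ref{thm:decomposition}. I would note this explicitly rather than treat general continuous variables.
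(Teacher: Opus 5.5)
Your proof is correct. The paper gives no proof of this lemma; it is quoted as the standard chain rule. Your add-and-subtract of $H(X\mid Z)$ (respectively $H(X\mid Y)$), combined with Definition~\ref{def:cond_mi}, is the standard textbook derivation, so there is no alternative route to compare against.

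One correction to your scoping remark: the lemma is not applied only to finite-alphabet variables. In the proof of Theorem~\ref{thm:oracle_price} it is applied to the triple $(Y,{\bf X_S},D)$. There the latent difficulty $D$ is never assumed discrete; its natural instance is the Gaussian common factor $U$ behind Theorem~\ref{thm:saturation}.

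Your Steps 1--2 still cover that use, because they only need the \emph{first} argument to be discrete with $H(X)<\infty$. In that case $H(X\mid Z)$, $H(X\mid Y)$ and $H(X\mid Y,Z)$ all lie in $[0,H(X)]$ for arbitrary $Y,Z$, so no $\infty-\infty$ occurs. In that application the first argument is the binary label $Y$. So you should state your hypothesis as ``$X$ discrete with finite entropy, $Y,Z$ arbitrary'' rather than requiring all three alphabets to be finite; that matches every use in the paper. The literal ``any random variables'' in the statement would require the general KL-divergence definition, which your Step 3 gestures at but does not carry out.
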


\begin{lemma}[The Entropy Invariance Under Bijection]
\label{lem:bijection}
Let $X$ be a random variable and $f$ be an arbitrary bijective function. Then, we have:
\begin{equation}
    H(f(X)) = H(X).
\end{equation}
More generally, for jointly distributed $(X, Z)$ where $f(\cdot; z)$ is bijective for each fixed $z$. Similarly, we have:
\begin{equation}
    H(f(X; Z) \mid Z) = H(X \mid Z).
\end{equation}
\end{lemma}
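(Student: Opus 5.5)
We restrict attention to the discrete setting used throughout the paper, where all variables ($Y$, $X_j$, $E_j$) take finitely many values, so that $H(\cdot)$ is Shannon entropy of a probability mass function. The plan is to show that applying a bijection only relabels the atoms of the distribution without changing their probabilities. Since entropy depends only on the multiset of probabilities, it is unchanged. The conditional statement then follows by applying the unconditional one for each fixed value $z$ and averaging.

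\textbf{Unconditional case.} Let $\mathcal{X}$ be the alphabet of $X$ and $f:\mathcal{X}\to f(\mathcal{X})$ a bijection. For every $w=f(x)$ in the range, injectivity gives $\{f(X)=w\}=\{X=x\}$, so $\mathbb{P}(f(X)=f(x))=\mathbb{P}(X=x)$. Therefore
\begin{equation*}
H(f(X)) = -\sum_{w} \mathbb{P}(f(X)=w)\log \mathbb{P}(f(X)=w) = -\sum_{x} \mathbb{P}(X=x)\log \mathbb{P}(X=x) = H(X),
\end{equation*}
where the middle equality reindexes the sum via $w=f(x)$. This reindexing is legitimate because $f$ is onto its range, so every $w$ is hit exactly once.

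\textbf{Conditional case.} Fix $z$ with $\mathbb{P}(Z=z)>0$. Conditioned on $Z=z$, the variable $f(X;Z)$ equals $f(X;z)$. Its conditional law is therefore the pushforward of $P_{X\mid Z=z}$ under the bijection $f(\cdot;z)$. Applying the unconditional case to the distribution $P_{X\mid Z=z}$ gives
\begin{equation*}
H\bigl(f(X;Z)\mid Z=z\bigr)=H(X\mid Z=z).
\end{equation*}
Multiplying by $\mathbb{P}(Z=z)$ and summing over $z$ yields $H(f(X;Z)\mid Z)=H(X\mid Z)$.

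The step needing the most care is the conditional pushforward. One must note that conditioning on $Z=z$ freezes the second argument, so only bijectivity in $x$ for each fixed $z$ is required; no joint bijectivity in $(x,z)$ is needed. It is also worth remarking that the discreteness assumption is essential, since differential entropy is not invariant under bijections. In the intended application, $X_j = Y\cdot(-1)^{E_j}$ is for each fixed $Y$ a bijection of $E_j$ on $\{0,1\}\to\{-1,+1\}$, which is exactly the form covered by the conditional statement.
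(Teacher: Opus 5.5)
Your proof is correct and uses the same idea as the paper: a bijection only relabels atoms, so $\mathbb{P}(f(X)=f(x))=\mathbb{P}(X=x)$, and entropy depends only on the probability mass function. You are more careful than the paper in two places: you state the discreteness assumption explicitly (it is needed, since differential entropy is not invariant under bijections), and you derive the conditional version by fixing each $Z=z$ and averaging, a step the paper leaves implicit.
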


\begin{proof}
The bijection establishes a one-to-one correspondence between outcomes, preserving probability distributions, i.e., $\mathbb{P}(f(X) = y) = \mathbb{P}(X = f^{-1}(y))$. Since entropy depends only on the probability distribution, the result follows.
\end{proof}

For convenience, we first restate the decomposition theorem (Theorem~\ref{thm:decomposition}) and then provide its proof.

{\bf Theorem~\ref{thm:decomposition}~}[Accuracy-Redundancy-Error Decomposition]
Assume $Y\in\{\pm1\}$ and $X_j\in\{\pm1\}$ for all $j$.
Let $E_j = \mathds{1}(X_j \neq Y)$ be the error indicator for model $j$, and define the marginal information gain of adding model $j$ to subset $S$ as:
\begin{equation}
    \Delta(j \mid S) \triangleq I(Y; X_j \mid {\bf X_S}).
\end{equation}
Then, $\Delta(j \mid S)$ admits the decomposition:
\begin{equation}
\label{eq:general_decomposition_restate}
    \Delta(j \mid S) = \underbrace{I(Y; X_j)}_{\text{Accuracy}} - \underbrace{I(X_j; {\bf X_S})}_{\text{Redundancy}} + \underbrace{I(E_j; E_S)}_{\text{Error Correlation}} + \underbrace{\Lambda_j(S)}_{\text{Correction}},
\end{equation}
where the label-dependent correction term is given by:
\begin{equation}
\label{eq:correction_term_restate}
    \Lambda_j(S) \triangleq I(E_j; Y \mid E_S) - I(E_j; Y).
\end{equation}

\begin{proof}
We proceed through a series of algebraic manipulations using the chain rule and the entropy invariance properties.

By using the definition of the conditional mutual information, we have:
\begin{equation}
\label{eq:step1}
    \Delta(j \mid S) = I(Y; X_j \mid {\bf X_S}) = H(X_j \mid {\bf X_S}) - H(X_j \mid {\bf X_S}, Y).
\end{equation}

Applying the identity $H(X \mid Z) = H(X) - I(X; Z)$, we have:
\begin{equation}
\label{eq:step2}
    H(X_j \mid {\bf X_S}) = H(X_j) - I(X_j; {\bf X_S}).
\end{equation}

Observe that conditioned on the true label $Y$, there exists a deterministic bijection between model outputs and error indicators. For labels $Y, X_j \in \{+1, -1\}$ and error indicator $E_j \in \{0, 1\}$, this bijection is given by:
\begin{equation}
    X_j = Y \cdot (-1)^{E_j} \quad \Longleftrightarrow \quad E_j = \mathds{1}(X_j \neq Y).
\end{equation}
Explicitly, we have:
\begin{itemize}
    \item If $E_j = 0$ (i.e.,  model $j$'s estimate is correct), then $X_j = Y$.
    \item If $E_j = 1$ (i.e., model $j$'s estimate is incorrect), then $X_j = -Y$.
\end{itemize}
Given $Y$, knowing $X_j$ uniquely determines $E_j$, and vice versa. This bijection extends component-wise: given $Y$, the vector ${\bf X_S}$ is in one-to-one correspondence with $E_S$.

From Lemma~\ref{lem:bijection}, the entropy is preserved under this bijection which implies that:
\begin{equation}
\label{eq:step3}
    H(X_j \mid {\bf X_S}, Y) = H(E_j \mid E_S, Y).
\end{equation}

Using the chain rule for entropy, we obtain:
\begin{equation}
\label{eq:step4}
    H(E_j \mid E_S, Y) = H(E_j \mid Y) - I(E_j; E_S \mid Y).
\end{equation}

Substituting (\ref{eq:step2}), (\ref{eq:step3}), and (\ref{eq:step4}) into (\ref{eq:step1}), we have:
\begin{equation}
\label{eq:step5}
    \Delta(j \mid S) = H(X_j) - I(X_j; {\bf X_S}) - H(E_j \mid Y) + I(E_j; E_S \mid Y).
\end{equation}

By the bijection argument, $H(X_j \mid Y) = H(E_j \mid Y)$. Therefore:
\begin{equation}
\label{eq:step6}
    I(Y; X_j) = H(X_j) - H(X_j \mid Y) = H(X_j) - H(E_j \mid Y).
\end{equation}
Rearranging: $H(X_j) - H(E_j \mid Y) = I(Y; X_j)$.

Applying Lemma \ref{lem:chain_rule} to $I(E_j; E_S, Y)$ in two ways:
\begin{align}
    I(E_j; E_S, Y) &= I(E_j; E_S) + I(E_j; Y \mid E_S), \label{eq:chain1} \\
    I(E_j; E_S, Y) &= I(E_j; Y) + I(E_j; E_S \mid Y). \label{eq:chain2}
\end{align}
Equating \cref{eq:chain1} and \cref{eq:chain2} and solving for $I(E_j; E_S \mid Y)$:
\begin{equation}
\label{eq:step7}
    I(E_j; E_S \mid Y) = I(E_j; E_S) + \underbrace{\left[ I(E_j; Y \mid E_S) - I(E_j; Y) \right]}_{\Lambda_j(S)}.
\end{equation}

Substituting \cref{eq:step6} and \cref{eq:step7} into \cref{eq:step5}:
\begin{align}
    \Delta(j \mid S) &= I(Y; X_j) - I(X_j; {\bf X_S}) + I(E_j; E_S \mid Y) \\
    &= I(Y; X_j) - I(X_j; {\bf X_S}) + I(E_j; E_S) + \Lambda_j(S).
\end{align}
This completes the proof.
\end{proof}

\begin{remark}[Interpretation of Terms]
\label{rem:interpretation}
The decomposition in \cref{eq:general_decomposition_restate} has an intuitive interpretation:
\begin{itemize}
    \item \textbf{Accuracy} $I(Y; X_j)$: The single predictive power of model $j$.
    \item \textbf{Redundancy} $I(X_j; {\bf X_S})$: Information overlap between model $j$ and the existing ensemble $S$.
    \item \textbf{Error Correlation} $I(E_j; E_S)$: dependence among model errors, capturing
shared failure modes that influences the incremental value of adding model $j$.

    \item \textbf{Correction} $\Lambda_j(S)$: Adjustment for label-dependent error behavior.
\end{itemize}
\end{remark}

\begin{corollary}[Label-Invariant Case]
\label{cor:label_invariant}
Under the label-invariant error assumption $(E_1, \ldots, E_m) \perp Y$, we have:
\begin{equation}
    I(E_j; Y) = 0 \quad \text{and} \quad I(E_j; Y \mid E_S) = 0, \quad \forall j, S.
\end{equation}
Consequently, $\Lambda_j(S) = 0$ for all $j$ and $S$, and the decomposition simplifies to:
\begin{equation}
    \Delta(j \mid S) = I(Y; X_j) - I(X_j; {\bf X_S}) + I(E_j; E_S).
\end{equation}
\end{corollary}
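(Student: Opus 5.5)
The plan is to show that both mutual-information terms vanish under independence, which immediately gives $\Lambda_j(S)=0$ by the definition in (\ref{eq:correction_term_restate}). Then substitute into the general decomposition of Theorem~\ref{thm:decomposition}.

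First, $I(E_j;Y)=0$. The assumption $(E_1,\ldots,E_m)\perp Y$ means the joint law factorizes as $P_{E,Y}=P_E P_Y$. Marginalizing over all coordinates except $E_j$ gives $E_j\perp Y$, so the relative entropy $D_{\mathrm{KL}}(P_{E_j,Y}\,\|\,P_{E_j}P_Y)$ is zero.

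Second, $I(E_j;Y\mid E_S)=0$ for any $S$. Here I will use Definition~\ref{def:cond_mi} and show that, for each value $e_S$ with positive probability, the conditional law factorizes:
\[
P(E_j=e_j, Y=y\mid E_S=e_S)=P(E_j=e_j\mid E_S=e_S)\,P(Y=y).
\]
This follows by dividing $P(E_j=e_j,E_S=e_S)P(Y=y)$ by $P(E_S=e_S)$. Summing over $e_j$ then gives $P(Y=y\mid E_S=e_S)=P(Y=y)$, so the conditional joint equals the product of the conditional marginals, and each conditional KL divergence is zero. If $j\in S$, then $E_j$ is a function of $E_S$ and the conditional mutual information is trivially zero. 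An alternative route is the chain rule of Lemma~\ref{lem:chain_rule}: $I(Y;E_j,E_S)=I(Y;E_S)+I(Y;E_j\mid E_S)$, where the left side is zero by independence, and nonnegativity forces both terms to be zero.

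Third, combining the two steps gives $\Lambda_j(S)=0-0=0$, and the general decomposition reduces to $\Delta(j\mid S)=I(Y;X_j)-I(X_j;{\bf X_S})+I(E_j;E_S)$.

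The only point requiring care is the conditional-independence step. Pairwise independence $E_j\perp Y$ for each $j$ would not be enough; the argument needs the joint independence of the vector $(E_j,E_S)$ from $Y$. The chain-rule argument handles this cleanly and is the route I would present.
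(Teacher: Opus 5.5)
Your proposal is correct and follows essentially the same route as the paper. Both arguments get $I(E_j;Y)=0$ directly from independence. Both get $I(E_j;Y\mid E_S)=0$ from the joint independence of $(E_j,E_S)$ and $Y$, which the paper justifies by noting that $(E_j,E_S)$ is a function of $(E_1,\ldots,E_m)$. Your chain-rule step $I(Y;E_j,E_S)=I(Y;E_S)+I(Y;E_j\mid E_S)=0$, combined with nonnegativity, simply makes explicit the implication the paper states without detail.
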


\begin{proof}
Independence $(E_1, \ldots, E_m) \perp Y$ implies $I(E_j; Y) = 0$ directly. For the conditional term, note that $E_S$ is a function of $(E_1, \ldots, E_m)$, so $(E_j, E_S) \perp Y$, which implies $I(E_j; Y \mid E_S) = 0$.
\end{proof}

\begin{proposition}[Bound on the Correction Term]
\label{prop:correction_bound_restate}
The label-dependence correction term satisfies:
\begin{equation}
    -I(E_j; Y) \leq \Lambda_j(S) \leq H(Y) - I(E_j; Y).
\end{equation}
\end{proposition}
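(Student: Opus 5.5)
Since $\Lambda_j(S) = I(E_j; Y \mid E_S) - I(E_j; Y)$ and $I(E_j;Y)$ enters with a minus sign on both sides of the claimed inequality, it suffices to show
\begin{equation*}
0 \le I(E_j; Y \mid E_S) \le H(Y).
\end{equation*}
Subtracting $I(E_j;Y)$ throughout then gives the statement of Proposition~\ref{prop:correction_bound_restate}.

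For the lower bound I will use the nonnegativity of conditional mutual information. By Definition~\ref{def:cond_mi}, $I(E_j;Y\mid E_S)=\mathbb{E}_{E_S}\bigl[D_{\mathrm{KL}}(P_{E_j,Y\mid E_S}\,\|\,P_{E_j\mid E_S}P_{Y\mid E_S})\bigr]$. This is an expectation of KL divergences, and each KL divergence is nonnegative by Gibbs' inequality. Hence $\Lambda_j(S)\ge -I(E_j;Y)$.

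For the upper bound I will write $I(E_j;Y\mid E_S)=H(Y\mid E_S)-H(Y\mid E_j,E_S)$. Since $Y$ is discrete, $H(Y\mid E_j,E_S)\ge 0$, and conditioning reduces entropy, so $H(Y\mid E_S)\le H(Y)$. Together these give $I(E_j;Y\mid E_S)\le H(Y)$, and therefore $\Lambda_j(S)\le H(Y)-I(E_j;Y)$. Under the balanced prior of Section~\ref{sec:system_model}, $H(Y)=1$ bit, which makes the bound concrete.

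Every step is standard. The only care needed is to expand the conditional mutual information with $Y$ as the entropy variable, rather than writing it as $H(E_j\mid E_S)-H(E_j\mid E_S,Y)$. The latter route gives the valid but different bound $H(E_j)$, not $H(Y)$. I would also note that the proof never uses the Gaussian-copula structure, so the proposition holds for arbitrary joint laws of $(Y,E_1,\dots,E_m)$.
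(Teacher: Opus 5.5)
Your proof is correct and matches the paper's argument: nonnegativity of conditional mutual information gives the lower bound, and $I(E_j;Y\mid E_S)\le H(Y\mid E_S)\le H(Y)$ gives the upper bound, in both versions. Your side remarks are also accurate, namely that the alternative expansion yields the different bound $H(E_j)$ and that the proposition does not depend on the Gaussian-copula structure.
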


\begin{proof}
By non-negativity of mutual information, $I(E_j; Y \mid E_S) \geq 0$. Therefore:
\begin{equation}
    \Lambda_j(S) = I(E_j; Y \mid E_S) - I(E_j; Y) \geq -I(E_j; Y).
\end{equation}

Conditional mutual information is bounded by conditional entropy:
\begin{equation}
    I(E_j; Y \mid E_S) \leq H(Y \mid E_S) \leq H(Y).
\end{equation}
Therefore:
\begin{equation}
    \Lambda_j(S) = I(E_j; Y \mid E_S) - I(E_j; Y) \leq H(Y) - I(E_j; Y). \qedhere
\end{equation}
\end{proof}

\begin{remark}[Practical Implications]
\label{rem:practical}
For balanced binary classification with $H(Y)=1$ bit, $|\Lambda_j(S)|\le 1$ bit.
Moreover, if the \emph{joint} label-dependence is small, e.g.,
\begin{equation}
    I(E_j,E_S;Y)\le \delta,
\end{equation}
then $|\Lambda_j(S)|\le \delta$ and the correction vanishes as $\delta\to 0$.
This motivates ignoring $\Lambda_j(S)$ when empirical estimates indicate weak joint
dependence between error patterns and labels.
\end{remark}

\newpage
\section{Proof of \cref{thm:saturation}}
\label{sec:proof_saturation}

First, we restate the Gaussian-copula (one-factor) representation used in \cref{subsec:copula_model}
and then give a step-by-step proof.

\subsection{Model recap and notation}

Let $Y\in\{\pm1\}$ be the binary label with a balanced prior $\mathbb{P}(Y=+1)=\mathbb{P}(Y=-1)=1/2$.
Let $U\sim\mathcal{N}(0,1)$ be a latent common factor and let $V_1,V_2,\dots$ be i.i.d.\ $\mathcal{N}(0,1)$,
independent of $U$ and $Y$. Fix $\rho\in(0,1)$ and define latent Gaussian scores
\begin{equation}
\label{eq:latent_scores}
T_j \triangleq \sqrt{\rho}\,U + \sqrt{1-\rho}\,V_j, \qquad j=1,2,\dots
\end{equation}
so that $(T_1,\dots,T_m)$ is equicorrelated Gaussian with $\mathrm{corr}(T_i,T_j)=\rho$ for $i\neq j$.

Define the error indicator by thresholding:
\begin{equation}
\label{eq:error_threshold}
E_j \triangleq \mathds{1}(T_j \le \tau),
\end{equation}
and define the observed model output as a sign-flip of $Y$:
\begin{equation}
\label{eq:output_from_error}
X_j \triangleq Y\cdot (-1)^{E_j}.
\end{equation}

The threshold $\tau$ is chosen to match the marginal accuracy $\alpha>1/2$:
since $T_j\sim\mathcal{N}(0,1)$ marginally, we have
\begin{equation}
\label{eq:tau_accuracy}
\mathbb{P}(E_j=1)=\mathbb{P}(T_j\le \tau)=\Phi(\tau)=1-\alpha.
\end{equation}
Then, we can obtain $\tau=\Phi^{-1}(1-\alpha)$.

\subsection{Proof of \cref{thm:saturation}}
Conditioned on $U=u$, we have from \eqref{eq:latent_scores} that
\[
T_j \mid (U=u)\;\sim\;\mathcal{N}\!\big(\sqrt{\rho}\,u,\,1-\rho\big)
\quad\text{independently across } j.
\]
Therefore, conditioned on $U=u$, the errors $(E_j)_{j\ge1}$ are i.i.d.\ Bernoulli with
\begin{equation}
\label{eq:p_u_def}
p(u)\triangleq \mathbb{P}(E_j=1\mid U=u)
=\mathbb{P}\!\left(T_j\le \tau \mid U=u\right)
=\Phi\!\left(\frac{\tau-\sqrt{\rho}\,u}{\sqrt{1-\rho}}\right).
\end{equation}
Now, let $\bar E_m \triangleq \frac{1}{m}\sum_{j=1}^m E_j$. By the Strong Law of Large Numbers
applied conditional on $U$, we have almost surely
\begin{equation}
\label{eq:slln_errors}
\bar E_m \xrightarrow[m\to\infty]{a.s.} p(U).
\end{equation}

From \eqref{eq:output_from_error}, each $E_j=1$ flips $X_j$ to the wrong sign.
Hence, among the $m$ outputs, the number of incorrect predictions equals $\sum_{j=1}^m E_j$.
Therefore, the majority vote is correct
if and only if  the number of total errors is fewer than half the entire models. Then, we have 
\begin{equation}
\label{eq:maj_correct_event}
\hat Y_{\mathrm{maj}} = Y
\quad\text{if and only if}\quad
\bar E_m < \tfrac12,
\end{equation}
and fails if and only if $\bar E_m>\tfrac12$ (ties have vanishing probability as $m\to\infty$).

Since asymptotic failure event depends only on $p(U)$, 
by combining \eqref{eq:slln_errors} and \eqref{eq:maj_correct_event}, we obtain the almost sure limit:
\begin{equation}
\label{eq:maj_limit_indicator}
\mathds{1}\{\hat Y_{\mathrm{maj}}\neq Y\}
\;\xrightarrow[m\to\infty]{a.s.}\;
\mathds{1}\{p(U)>\tfrac12\}.
\end{equation}
Taking expectations yields
\begin{equation}
\label{eq:error_limit_pu}
\lim_{m\to\infty}\mathbb{P}(\hat Y_{\mathrm{maj}}\neq Y)=\mathbb{P}\!\left(p(U)>\tfrac12\right).
\end{equation}

Also note that the MAP estimation coincides with the majority vote under the balanced, symmetric model;
under the construction \eqref{eq:output_from_error}, the model is sign-symmetric:
flipping $(Y,X_{[m]})$ to $(-Y,-X_{[m]})$ leaves the joint law invariant, and the prior on $Y$ is balanced.
Consequently, the MAP decision rule depends only on the majority sign of $(X_1,\dots,X_m)$
(and agrees with majority vote). Hence,
\begin{equation}
\label{eq:map_equals_maj}
\mathbb{P}(\hat Y_{\mathrm{MAP}}\neq Y)=\mathbb{P}(\hat Y_{\mathrm{maj}}\neq Y)\quad\text{for each }m,
\end{equation}
and as a result, the same limit in \eqref{eq:error_limit_pu} holds for $\hat Y_{\mathrm{MAP}}$.

Now let us  find the probability of the event $p(U)>\tfrac12$ explicitly.
From \eqref{eq:p_u_def}, using that $\Phi(\cdot)$ is strictly increasing, we have $p(U)>\tfrac12
$ which is equivalent to having $\frac{\tau-\sqrt{\rho}\,U}{\sqrt{1-\rho}}>0$. This condition implies
\[
U<\frac{\tau}{\sqrt{\rho}}.
\]
Since $U\sim\mathcal{N}(0,1)$, we obtain the following
\begin{equation}
\label{eq:error_floor_tau}
\mathbb{P}\!\left(p(U)>\tfrac12\right)=\Phi\!\left(\frac{\tau}{\sqrt{\rho}}\right).
\end{equation}

Substituting $\tau=\Phi^{-1}(1-\alpha)$ using \eqref{eq:tau_accuracy} in \eqref{eq:error_floor_tau} gives
\[
\lim_{m\to\infty}\mathbb{P}(\hat Y_{\mathrm{MAP}}\neq Y)
=\Phi\!\left(\frac{\Phi^{-1}(1-\alpha)}{\sqrt{\rho}}\right),
\]
which is strictly positive for any $\rho>0$ and $\alpha\in(1/2,1)$.
This completes the proof.

\begin{remark}[Edge cases and interpretation]
Because $\Phi^{-1}(1-\alpha)<0$ for $\alpha>1/2$, the error floor lies in $(0,1/2)$ for any $\rho>0$.
As $\rho$ decreases to 0, the expression $\frac{\Phi^{-1}(1-\alpha)}{\sqrt{\rho}}$ tends to $-\infty$ and $\lim_{m\to\infty}\mathbb{P}(\hat Y_{\mathrm{MAP}}\neq Y)
$ vanishes, recovering the
independent-error regime. As $\rho$ increases to 1, the expression $\frac{\Phi^{-1}(1-\alpha)}{\sqrt{\rho}}$ approaches $\Phi^{-1}(1-\alpha)$ which implies  $\lim_{m\to\infty}\mathbb{P}(\hat Y_{\mathrm{MAP}}\neq Y) =1-\alpha$,
matching the intuition that perfectly correlated models provide no ensemble gain.
\end{remark}

\newpage
\section{Difficulty-Aware Decomposition and the Price of Difficulty}
\label{subsec:difficulty-aware}
To justify our greedy selection and to further explain Theorem~\ref{thm:saturation}, we expose a difficulty-aware submodular component and an intrinsic saturation penalty.
\begin{theorem}[Difficulty-Aware Decomposition and Price of Difficulty]\label{thm:oracle_price}
Let $Y$ be the true label and $X_{[m]}=\{X_1,\dots,X_m\}$ be discrete model outputs (e.g., binary).
Assume that there exists a latent variable $D$ (representing the prompt difficulty) such that:
\begin{enumerate}
    \item $D \perp Y$, i.e., $D$ and $Y$ are statistically independent random variables,
    \item for every $S\subseteq[m]$, model outputs are conditionally independent given $(Y,D)$:
    \begin{equation}\label{eq:cond_indep}
        \mathbb{P}({\bf X_S}\mid Y,D)=\prod_{j\in S} \mathbb{P}(X_j\mid Y,D).
    \end{equation}
\end{enumerate}
Then, for every $S\subseteq[m]$,
\begin{equation}\label{eq:oracle_decomp_main}
    I(Y;{\bf X_S})= I(Y;{\bf X_S} \mid D) - I(Y;D\mid {\bf X_S}).
\end{equation}
We refer to $I(Y;{\bf X_S}\mid D)$ as the difficulty-aware mutual information which quantifies how informative the subset would be if the latent difficulty $D$ were revealed to the decoder. We refer to $I(Y;D\mid {\bf X_S})$ as the Price of Difficulty which measures the remaining ambiguity that is related to not observing $D$ after seeing the ensemble.

Moreover, $F(S)\triangleq I(Y;{\bf X_S} \mid D)$ is monotone submodular in $S$ (under a cardinality constraint $|S|\le k$).
Finally, the Price of Difficulty admits the form
\begin{equation}\label{eq:price_operational}
    I(Y;D\mid {\bf X_S})=H(Y\mid {\bf X_S})-H(Y\mid {\bf X_S},D),
\end{equation}
i.e., the additional label uncertainty induced by not observing $D$ after seeing the ensemble.
\end{theorem}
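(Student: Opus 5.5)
The identity \eqref{eq:oracle_decomp_main} follows from expanding $I(Y;{\bf X_S},D)$ in two ways with the chain rule (Lemma~\ref{lem:chain_rule}). Conditioning on $D$ first gives
\begin{equation*}
I(Y;{\bf X_S},D) = I(Y;D) + I(Y;{\bf X_S}\mid D).
\end{equation*}
Conditioning on ${\bf X_S}$ first gives
\begin{equation*}
I(Y;{\bf X_S},D) = I(Y;{\bf X_S}) + I(Y;D\mid {\bf X_S}).
\end{equation*}
Assumption~1 ($D\perp Y$) gives $I(Y;D)=0$. Equating the two expansions and rearranging yields \eqref{eq:oracle_decomp_main}. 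This step does not use assumption~2.

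The operational form \eqref{eq:price_operational} is Definition~\ref{def:cond_mi} applied with the roles of the variables chosen appropriately: $I(Y;D\mid{\bf X_S}) = H(Y\mid{\bf X_S}) - H(Y\mid{\bf X_S},D)$. If $D$ is continuous, I will write all quantities as expectations of KL divergences, so that the chain rule and this identity remain valid. Alternatively, I will state that $D$ is taken discrete for simplicity; the paper's usage allows either.

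The main work is showing that $F(S)=I(Y;{\bf X_S}\mid D)$ is monotone submodular, and here I will follow the standard naive-Bayes argument (as in Krause--Guestrin). First rewrite
\begin{equation*}
F(S) = H({\bf X_S}\mid D) - H({\bf X_S}\mid Y,D).
\end{equation*}
Assumption~2 makes the ${\bf X_S}$ conditionally independent given $(Y,D)$, so $H({\bf X_S}\mid Y,D)=\sum_{j\in S}H(X_j\mid Y,D)$, which is a modular function of $S$. It therefore suffices to show that $G(S)=H({\bf X_S}\mid D)$ is submodular, since submodular minus modular is submodular.

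For $A\subseteq B$ and $j\notin B$, the chain rule gives the marginal gain $G(A\cup\{j\})-G(A)=H(X_j\mid {\bf X_A},D)$. Because conditioning reduces entropy and ${\bf X_A}$ is a sub-vector of ${\bf X_B}$, we get $H(X_j\mid{\bf X_A},D)\ge H(X_j\mid{\bf X_B},D)$, which is the diminishing-returns property. Equivalently, the marginal gain of $F$ is
\begin{equation*}
H(X_j\mid{\bf X_S},D)-H(X_j\mid Y,D),
\end{equation*}
and this is nonincreasing in $S$.

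For monotonicity, the chain rule gives $F(S\cup\{j\})-F(S)=I(Y;X_j\mid{\bf X_S},D)\ge 0$. Together with $F(\emptyset)=0$, this shows $F$ is a normalized monotone submodular function, so the usual $(1-1/e)$ greedy guarantee under $|S|\le k$ applies.

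I expect the submodularity step to be the only delicate point. The key subtlety is that the conditional-independence term must be written as $H(X_j\mid Y,D)$ and not $H(X_j\mid {\bf X_S},Y,D)$. Assumption~2 is exactly what makes these two equal, so I will invoke it explicitly at that step.
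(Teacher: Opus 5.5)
Your proposal is correct and follows essentially the same route as the paper. You expand $I(Y;{\bf X_S},D)$ two ways by the chain rule and use $I(Y;D)=0$, obtain the Price of Difficulty identity directly from the definition of conditional mutual information, and write $F(S)=H({\bf X_S}\mid D)-\sum_{j\in S}H(X_j\mid Y,D)$ as a submodular function minus a modular one, with monotonicity from $I(Y;X_j\mid{\bf X_S},D)\ge 0$. The one difference is that you check the diminishing-returns property of $H({\bf X_S}\mid D)$ directly via ``conditioning reduces entropy'', where the paper simply cites submodularity of entropy; this is a useful added detail rather than a different argument.
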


\begin{remark}
\label{rem:oracle_price}
Theorem~\ref{thm:oracle_price} isolates a difficulty-aware component $I(Y;{\bf X_S}\mid D)$ that is monotone submodular, hence greedy achieves a $(1-1/e)$ approximation guarantee for maximizing this term under $|S|\le k$ \citep{krause2014submodular}.
In practice, $D$ is unobserved, and the difference between the difficulty-aware information and the true information is exactly the Price of Difficulty $I(Y;D\mid {\bf X_S} )$.
If this term does not vanish as $|S|$ grows, then adding more models cannot fully remove uncertainty about $Y$, which provides an information-theoretic explanation for the saturation phenomenon characterized in Theorem~\ref{thm:saturation}.
\end{remark}

\subsection{Proof of \cref{thm:oracle_price}}
\label{app:proof_oracle_price}

For the proof, we use the conditional mutual information and chain rule as stated in: Definition~\ref{def:cond_mi} and Lemma~\ref{lem:chain_rule}. Now, lets proceed with the proof.
\begin{proof}
We prove the decomposition, the operational form of the Price of Difficulty, and the monotone submodularity of
\(
F(S)\triangleq I(Y;{\bf X_S}\mid D).
\)

Let's first derive the decomposition in \cref{eq:oracle_decomp_main}.
We can apply the chain rule for mutual information to the triple $(Y,{\bf X_S},D)$ in two different ways:
\begin{align}
\label{eq:app_chain_1}
I(Y;{\bf X_S},D) &= I(Y;X_S) + I(Y;D\mid {\bf X_S}),\\
\label{eq:app_chain_2}
I(Y;{\bf X_S},D) &= I(Y;D) + I(Y;{\bf X_S}\mid D).
\end{align}
Equating \cref{eq:app_chain_1,eq:app_chain_2} yields
\begin{equation}
\label{eq:app_decomp_intermediate}
I(Y;{\bf X_S})= I(Y;{\bf X_S}\mid D) + I(Y;D) - I(Y;D\mid {\bf X_S}).
\end{equation}
By the assumption in the theorem, $D\perp Y$, hence $I(Y;D)=0$. Substituting into \cref{eq:app_decomp_intermediate} gives
\(
I(Y;{\bf X_S})= I(Y;{\bf X_S}\mid D) - I(Y;D\mid {\bf X_S}),
\)
which is exactly \cref{eq:oracle_decomp_main}.

Now, we can derive the \cref{eq:price_operational} by using the definition of conditional mutual information,
\begin{equation}
\label{eq:app_price_operational}
I(Y;D\mid {\bf X_S})=H(Y\mid {\bf X_S})-H(Y\mid {\bf X_S},D),
\end{equation}
which matches \cref{eq:price_operational}.

To prove the monotone submodularity of $F(S)=I(Y;{\bf X_S}\mid D)$,
we use an entropy representation:
\begin{equation}
\label{eq:app_F_entropy_form}
F(S)=I(Y;{\bf X_S}\mid D)=H({\bf X_S}\mid D)-H({\bf X_S}\mid Y,D).
\end{equation}

Under the second assumption, the outputs factorize given $(Y,D)$:
\(
\mathbb{P}({\bf X_S}\mid Y,D)=\prod_{j\in S}\mathbb{P}(X_j\mid Y,D).
\)
Consequently, the conditional entropy decomposes additively as
\begin{equation}
\label{eq:app_modular_term}
H({\bf X_S}\mid Y,D)=\sum_{j\in S} H(X_j\mid Y,D),
\end{equation}
so the set function $S\mapsto H({\bf X_S}\mid Y,D)$ is modular under the conditional independence given $(Y,D)$. Next, for discrete random variables, Shannon entropy is submodular in its set argument, and conditioning preserves submodularity; therefore $S\mapsto H({\bf X_S}\mid D)$ is submodular (see, e.g., \citep{Cover2006}). Recalling the identity $F(S)$ in (\ref{eq:app_F_entropy_form}), we conclude that $F$ is submodular as a submodular function minus a modular one. Finally, for any $j\notin S$,
\begin{equation}
\label{eq:app_monotonicity}
F(S\cup\{j\})-F(S)=I(Y;X_j\mid {\bf X_S},D)\ge 0,
\end{equation}
where the last inequality follows from non-negativity of conditional mutual information. Therefore, $F$ is monotone.

\end{proof}

\newpage
\section{Implementation Details}
\label{app:implementation}
In this section, we detail the implementation of the algorithms used throughout the experiments. We first present the smoothed mutual information estimation method, followed by the pseudo codes for the MAP, Majority Voting, and Weighted Majority Voting aggregation rules. We conclude with providing the implementation details for the Gaussian-copula parameter estimation that is used for modeling correlated errors.

\subsection{Mutual Information Estimation}
Algorithm~\ref{alg:mi} summarizes how we estimate mutual information from finite samples. Given $N$ paired observations of discrete variables $A$ and $B$, we form empirical marginal and joint counts and convert them to probabilities using Laplace (add-one) smoothing with $\alpha=1$ (please note that this parameter is for smoothing, not for the accuracy as used in \cref{thm:saturation}). This guarantees strictly positive probability estimates, so the entropy terms are always well-defined (no $\log 0$), and it stabilizes the estimates when some outcomes are rare or absent in the parameter-estimation split (the 80\% portion used in each run and split). The mutual information is then computed via the standard identity $I(A; B) = H(A) + H(B) - H(A, B)$, where each entropy term is estimated from the smoothed distributions. 

\begin{algorithm}[ht]
\caption{Smoothed Mutual Information Estimation}
\label{alg:mi}
\begin{algorithmic}[1]
\REQUIRE Discrete variables $A \in \mathcal{A}^N$, $B \in \mathcal{B}^N$, smoothing parameter $\alpha > 0$
\ENSURE Mutual information estimate $\hat{I}(A; B)$
\STATE \hfill $\triangleright$ Compute variable sizes
\STATE $K_A \leftarrow |\mathcal{A}|$, $K_B \leftarrow |\mathcal{B}|$, $K_{AB} \leftarrow K_A \cdot K_B$
\STATE
\STATE \hfill $\triangleright$ Estimate marginal distributions with Laplace smoothing
\FOR{$a \in \mathcal{A}$}
    \STATE $\hat{P}(A = a) \leftarrow \frac{|\{i : A_i = a\}| + \alpha}{N + \alpha K_A}$
\ENDFOR
\FOR{$b \in \mathcal{B}$}
    \STATE $\hat{P}(B = b) \leftarrow \frac{|\{i : B_i = b\}| + \alpha}{N + \alpha K_B}$
\ENDFOR
\STATE
\STATE \hfill $\triangleright$ Estimate joint distribution
\FOR{$(a, b) \in \mathcal{A} \times \mathcal{B}$}
    \STATE $\hat{P}(A = a, B = b) \leftarrow \frac{|\{i : A_i = a, B_i = b\}| + \alpha}{N + \alpha K_{AB}}$
\ENDFOR
\STATE
\STATE \hfill $\triangleright$ Compute entropies
\STATE $\hat{H}(A) \leftarrow -\sum_{a \in \mathcal{A}} \hat{P}(a) \log_2 \hat{P}(a)$
\STATE $\hat{H}(B) \leftarrow -\sum_{b \in \mathcal{B}} \hat{P}(b) \log_2 \hat{P}(b)$
\STATE $\hat{H}(A, B) \leftarrow -\sum_{(a,b) \in \mathcal{A} \times \mathcal{B}} \hat{P}(a, b) \log_2 \hat{P}(a, b)$
\STATE
\STATE \hfill $\triangleright$ Mutual information
\STATE $\hat{I}(A; B) \leftarrow \max\left( \hat{H}(A) + \hat{H}(B) - \hat{H}(A, B), \; 0 \right)$
\end{algorithmic}
\end{algorithm}

\noindent\textbf{Computational complexity:}
Let $K_A=|\mathcal{A}|$ and $K_B=|\mathcal{B}|$. Algorithm~\ref{alg:mi} estimates the marginals in $\mathcal{O}(N + K_A + K_B)$ time (one pass to count plus normalization), estimates the joint in $\mathcal{O}(N + K_AK_B)$ time, and computes entropies in $\mathcal{O}(K_A + K_B + K_AK_B)$ time. Overall, the time complexity is
\[
\mathcal{O}\bigl(N + K_AK_B\bigr),
\]
and the memory cost is $\mathcal{O}(K_AK_B)$ to store the smoothed joint table (plus $\mathcal{O}(K_A+K_B)$ for marginals).
In our setting $A,B$ are low-cardinality (e.g., $|\mathcal{A}|=|\mathcal{B}|=2$ for binary variables), so the MI estimation is negligible compared to LLM query costs.

\newpage
\subsection{Aggregation Methods}

Algorithm~\ref{alg:map} summarizes our MAP aggregator. For a selected subset $S$ of size $k$, we estimate $P(Y \mid X_S)$ from the training split by counting labels for each prediction pattern $x_S \in \{-1,+1\}^k$ (there are $2^k$ such patterns). We apply Laplace smoothing (add-one) to avoid zero counts for rare or unseen patterns. At test time, given the observed pattern, we predict the label with the larger smoothed count, which implements the MAP rule under a uniform prior. This estimator makes no conditional-independence assumption; it uses the empirical joint distribution of $X_S$.

\begin{algorithm}[ht]
\caption{The MAP Aggregation}
\label{alg:map}
\begin{algorithmic}[1]
\REQUIRE Training labels $Y^{\text{tr}} \in \{-1, +1\}^{N_{\text{tr}}}$, training predictions $X^{\text{tr}}_S \in \{-1, +1\}^{N_{\text{tr}} \times k}$
\REQUIRE Test predictions $X^{\text{te}}_S \in \{-1, +1\}^{N_{\text{te}} \times k}$
\ENSURE Predicted labels $\hat{Y}^{\text{te}} \in \{-1, +1\}^{N_{\text{te}}}$

\STATE Map each ${\bf x_S }\in \{-1, +1\}^k$ to index 
$\text{idx}({\bf x_S}) \in \{1, \ldots, 2^k\}$: \hfill $\triangleright$ Index prediction patterns
\STATE \hspace{1em} $\text{idx}({\bf x_S}) \leftarrow 1 + \sum_{j=1}^{k} \frac{x_j + 1}{2} \cdot 2^{j-1}$
\STATE
\STATE \hfill $\triangleright$ Build empirical conditional counts from training data
\FOR{$\ell = 1$ to $2^k$}
    \STATE $C^{+}_\ell \leftarrow |\{i : \text{idx}(X^{\text{tr}}_{S,i}) = \ell \text{ and } Y^{\text{tr}}_i = +1\}| + 1$ \COMMENT{Laplace}
    \STATE $C^{-}_\ell \leftarrow |\{i : \text{idx}(X^{\text{tr}}_{S,i}) = \ell \text{ and } Y^{\text{tr}}_i = -1\}| + 1$
\ENDFOR
\STATE
\STATE \hfill $\triangleright$ Predict on test data using MAP rule
\FOR{$i = 1$ to $N_{\text{te}}$}
    \STATE $\ell \leftarrow \text{idx}(X^{\text{te}}_{S,i})$
    \IF{$C^{+}_\ell > C^{-}_\ell$}
        \STATE $\hat{Y}^{\text{te}}_i \leftarrow +1$
    \ELSIF{$C^{+}_\ell < C^{-}_\ell$}
        \STATE $\hat{Y}^{\text{te}}_i \leftarrow -1$
    \ELSE
        \STATE $\hat{Y}^{\text{te}}_i \leftarrow +1$ \COMMENT{Break ties arbitrarily}
    \ENDIF
\ENDFOR
\end{algorithmic}
\end{algorithm}

\noindent\textbf{Computational complexity:}
Let $k=|S|$. The plug-in MAP aggregator maintains two count tables $(C^+_\ell, C^-_\ell)_{\ell=1}^{2^k}$, one entry per prediction pattern.
On the training split, computing the pattern index for each example takes $\mathcal{O}(k)$ time and updating the corresponding counts is $\mathcal{O}(1)$, for a total of $\mathcal{O}(N_{\mathrm{tr}}\,k)$ time. Initializing the count tables costs $\mathcal{O}(2^k)$.
At test time, each example again requires $\mathcal{O}(k)$ time to compute its index and $\mathcal{O}(1)$ time to compare $(C^+_\ell, C^-_\ell)$, yielding $\mathcal{O}(N_{\mathrm{te}}\,k)$ time.
Overall, the time complexity is $\mathcal{O}((N_{\mathrm{tr}}+N_{\mathrm{te}})\,k + 2^k)$ and the memory complexity is $\mathcal{O}(2^k)$.
In our experiments $k\le 13$, so $2^k \le 8192$ and the aggregation overhead is negligible compared to LLM query cost.

Algorithm~\ref{alg:mv} implements the standard majority voting aggregation. Given predictions from the selected subset $S$, we sum the binary votes $X_j \in \{-1, +1\}$ and predict the sign of the total. When the ensemble size $|S|$ is odd, ties cannot occur; for even $|S|$, ties are broken randomly. This approach treats all models equally regardless of their individual accuracies, which can be suboptimal when model quality varies significantly.

\begin{algorithm}[ht]
\caption{Majority Voting (MV) Aggregation}
\label{alg:mv}
\begin{algorithmic}[1]
\REQUIRE Selected subset $S$, model predictions ${ \bf X_S} = (X_j)_{j \in S}$
\STATE $V \leftarrow \sum_{j \in S} X_j$ \hfill $\triangleright$ Compute vote count
\IF{$V > 0$}
    \STATE $\hat{Y}_{\text{MV}} \leftarrow +1$
\ELSIF{$V < 0$}
    \STATE $\hat{Y}_{\text{MV}} \leftarrow -1$
\ELSE
    \STATE $\hat{Y}_{\text{MV}} \leftarrow \text{random}(\{-1, +1\})$ \COMMENT{Break ties randomly}
\ENDIF

\end{algorithmic}
\end{algorithm}

Algorithm~\ref{alg:wmv} extends majority voting by weighting each model's vote according to its accuracy. If $p_j$ represents the model $j$ being correct, then weight $w_j = \log(p_j / (1 - p_j))$ corresponds to the log-odds of model $j$ being correct, which is optimal when model errors are independent (for the proof, see the \cref{thm:logodd} and related proof below). Note that in our experiments we estimate the weights by calculating $p_j$ using the training set that is obtained by splitting the data for each run.

\begin{algorithm}[ht]
\caption{Weighted Majority Voting (W-MV) Aggregation}
\label{alg:wmv}
\begin{algorithmic}[1]
\REQUIRE Selected subset $S$, model predictions ${ \bf X_S} = (X_j)_{j \in S}$, accuracies $(p_j)_{j \in S}$ 
\FOR{$j \in S$}
    
    \STATE $w_j \leftarrow \log \frac{p_j}{1 - p_j}$ \COMMENT{Log-odds weight} \hfill $\triangleright$ Compute weight 
\ENDFOR
\STATE $V \leftarrow \sum_{j \in S} w_j \cdot X_j$ \hfill $\triangleright$ Compute weighted vote
\STATE $\hat{Y}_{\text{W-MV}} \leftarrow \text{sign}(V)$
\end{algorithmic}
\end{algorithm}
\begin{theorem}[Optimality of Log-Odds Weights]
\label{thm:logodd}
Consider a binary classification problem with a true label $Y \in \{-1, +1\}$ and an ensemble of $M$ independent classifiers. Let $H_i$ be the random variable representing the output of classifier $i$, such that its accuracy is given by $\mathbb{P}(H_i = Y) = p_i$.

Under the assumption of independence, the decision rule that minimizes the probability of error is the \textbf{Weighted Majority Vote}:
\[
\hat{Y} = \mathrm{sign}\left( \sum_{i=1}^{M} w_i H_i \right)
\]
where the optimal weight for each classifier is the log-odds of its accuracy:
\[
w_i = \log\left(\frac{p_i}{1 - p_i}\right)
\]
\end{theorem}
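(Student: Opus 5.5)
The plan is to show that, under independence, the MAP rule from Section~\ref{sec:system_model} reduces to a sign test on a linear statistic. We first write the posterior log-likelihood ratio and factor it using the independence assumption. Given $Y=y$, the outputs $H_1,\dots,H_M$ are conditionally independent, with $\mathbb{P}(H_i=h\mid Y=y)=p_i$ if $h=y$ and $1-p_i$ otherwise. This independence is the same BSC structure as in Definition~\ref{def:bsc} and Theorem~\ref{thm:alignment}, with $\epsilon_i=1-p_i$. We can compress the likelihood as
\begin{equation*}
\mathbb{P}(H_i=h\mid Y=y)=\sqrt{p_i(1-p_i)}\left(\frac{p_i}{1-p_i}\right)^{hy/2},
\end{equation*}
which is checked directly for $hy=\pm1$.

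Next we form the log posterior ratio
\begin{equation*}
L(\mathbf{h})=\log\frac{\mathbb{P}(Y=+1\mid \mathbf{H}=\mathbf{h})}{\mathbb{P}(Y=-1\mid \mathbf{H}=\mathbf{h})}.
\end{equation*}
Under the balanced prior assumed in Section~\ref{sec:system_model}, the prior ratio is $1$. The factors $\sqrt{p_i(1-p_i)}$ cancel between numerator and denominator. What remains is
\begin{equation*}
L(\mathbf{h})=\sum_{i=1}^M \tfrac{1}{2}\Big(\log\tfrac{p_i}{1-p_i}\Big)(h_i-(-h_i))=\sum_{i=1}^M w_i h_i.
\end{equation*}

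The MAP estimator, which minimizes error probability as recalled in Section~\ref{sec:system_model}, selects $+1$ exactly when $L>0$. It therefore equals $\mathrm{sign}(\sum_i w_i H_i)$, with ties, which have measure-zero relevance to the error, broken arbitrarily. To close the argument, we note that any other rule $g$ has error
\begin{equation*}
\tfrac12\sum_{\mathbf{h}}\mathbb{P}(\mathbf{h}\mid Y\neq g(\mathbf{h})),
\end{equation*}
and this sum is minimized pointwise by picking the $y$ with the larger likelihood. This yields optimality.

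The only delicate points are bookkeeping, since the argument is essentially Neyman--Pearson or MAP algebra. The first is that the prior must be balanced. A non-uniform prior $\pi$ would add the bias term $\log\frac{\pi}{1-\pi}$, and I would remark on this. The second is that weights may be negative if $p_i<1/2$, which is still consistent with the formula. I expect no step to be a real obstacle. The main care is in stating that the optimality is among all (possibly randomized) decision rules and that "independent" means conditionally independent given $Y$.
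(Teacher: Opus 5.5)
Your proposal is correct and follows essentially the same route as the paper: MAP under the balanced prior, factorization of the likelihood by conditional independence given $Y$, and identification of each per-classifier log-likelihood ratio as $w_i h_i$. Your compressed formula $\sqrt{p_i(1-p_i)}\,(p_i/(1-p_i))^{hy/2}$ replaces the paper's two-case check, and your pointwise-minimization argument for MAP optimality is a sound addition the paper takes for granted; one small correction is that ties $\sum_i w_i h_i = 0$ can have positive probability with discrete outputs, but they are harmless because both decisions give the same error there.
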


\begin{proof}
To minimize the probability of error, we use the Maximum A Posteriori (MAP) decision rule. We choose the prediction $\hat{Y} = +1$ if the posterior probability satisfies:
\[
\mathbb{P}(Y=+1 \mid \mathbf{H}) > \mathbb{P}(Y=-1 \mid \mathbf{H})
\]
where $\mathbf{H} = (H_1, \dots, H_M)$ denotes the vector of classifier outputs. Using Bayes' Theorem \citep{hajek2020probability} and assuming a uniform prior $\mathbb{P}(Y=+1) = \mathbb{P}(Y=-1)$, this is equivalent to comparing the likelihoods:
\[
\frac{\mathbb{P}(\mathbf{H} \mid Y=+1)}{\mathbb{P}(\mathbf{H} \mid Y=-1)} > 1
\]
Since the classifiers are assumed to be conditionally independent given the class label $Y$, the joint likelihood factors into a product:
\[
\frac{\prod_{i=1}^M \mathbb{P}(H_i \mid Y=+1)}{\prod_{i=1}^M \mathbb{P}(H_i \mid Y=-1)} > 1
\]
Taking the logarithm of both sides yields the sum of log-likelihood ratios:
\[
\sum_{i=1}^M \log\left( \frac{\mathbb{P}(H_i \mid Y=+1)}{\mathbb{P}(H_i \mid Y=-1)} \right) > 0
\]
We analyze the term for the $i$-th classifier based on its output realization $H_i$:
\begin{enumerate}
    \item \textbf{Case $H_i = +1$:} The probability of this output given $Y=+1$ is $p_i$ (correct), and given $Y=-1$ is $1-p_i$ (error). The term becomes $\log\left(\frac{p_i}{1-p_i}\right)$.
    \item \textbf{Case $H_i = -1$:} The probability of this output given $Y=+1$ is $1-p_i$ (error), and given $Y=-1$ is $p_i$ (correct). The term becomes $\log\left(\frac{1-p_i}{p_i}\right) = -\log\left(\frac{p_i}{1-p_i}\right)$.
\end{enumerate}
Thus, the log-likelihood ratio term for classifier $i$ can be written generally as $H_i \cdot \log\left(\frac{p_i}{1-p_i}\right)$. Substituting this back into the sum, the decision rule becomes:
\[
\sum_{i=1}^M H_i \log\left(\frac{p_i}{1 - p_i}\right) > 0
\]
This is equivalent to the weighted majority vote $\mathrm{sign}\left(\sum_{i=1}^M w_i H_i\right)$ with weights $w_i = \log\left(\frac{p_i}{1-p_i}\right)$.
\end{proof}
\newpage
\subsection{Gaussian-Copula Fitting and Sampling Algorithm}

Algorithm~\ref{alg:copula} explains Gaussian-copula fitting. To do that, first we estimate marginal error probabilities $\hat{\epsilon}_j$ using the whole dataset (we don't use splitting in here, since our goal is showing how Gaussian-copula is a trustable method for modeling the joint distributions), then compute latent thresholds $\tau_j = \Phi^{-1}(\hat{\epsilon}_j)$ where $\Phi$ is the normal CDF function.

Next, we capture dependencies using Tetrachoric correlation method \citep{tetra}. For each pair, we solve for the latent correlation $\rho_{jk}$ that matches the observed joint error probability: $\Phi_{\rho_{jk}}(\tau_j, \tau_k) = \mathbb{P}(E_j=1, E_k=1)$. Since pairwise estimation may yield a non-positive semi-definite matrix, we apply spectral projection by clipping negative eigenvalues to $\epsilon$ and re-normalizing the diagonals. Finally, we generate synthetic errors by sampling latent vectors $Z \sim \mathcal{N}(\mathbf{0}, {\bf\hat{\Sigma}})$ and applying the thresholding rule $E_{\text{syn}, ij} = \mathds{1}(Z_{ij} < \tau_j)$.

\begin{algorithm}[ht]
\caption{Gaussian-Copula Fitting and Synthetic Generation}
\label{alg:copula}
\begin{algorithmic}[1]
\REQUIRE Model predictions $X \in \{-1, 1\}^{N \times M}$, True labels $Y \in \{-1, 1\}^N$
\ENSURE Latent correlation matrix ${\bf \Sigma }$, Thresholds $\tau$, Synthetic data $X_{\text{syn}}$

\STATE \hfill $\triangleright$ \textbf{Phase 1: Marginals Estimation}
\STATE Compute error matrix $E \in \{0, 1\}^{N \times M}$ where $E_{ij} \leftarrow \mathds{1}(X_{ij} \neq Y_i)$
\FOR{$j = 1$ to $M$}
    \STATE $\hat{\epsilon}_j \leftarrow \frac{1}{N} \sum_{i=1}^N E_{ij}$ \hfill $\triangleright$ Marginal error rate
    \STATE $\hat{\epsilon}_j \leftarrow \text{clamp}(\hat{\epsilon}_j, \epsilon, 1-\epsilon)$ \hfill $\triangleright$ Numerical stability
    \STATE $\tau_j \leftarrow \Phi^{-1}(\hat{\epsilon}_j)$ \hfill $\triangleright$ Latent threshold via Inverse Normal CDF
\ENDFOR

\STATE
\STATE \hfill $\triangleright$ \textbf{Phase 2: Pairwise Latent Correlations}
\STATE Initialize ${\bf \Sigma } \leftarrow {\bf I}_{M \times M}$
\FOR{$j = 1$ to $M$}
    \FOR{$k = j+1$ to $M$}
        \STATE $\hat{\epsilon}_{jk} \leftarrow \frac{1}{N} \sum_{i=1}^N \mathds{1}(E_{ij}=1 \land E_{ik}=1)$ \hfill $\triangleright$ Joint error prob
        \STATE Find $\rho$ such that $\Phi_{\rho}(\tau_j, \tau_k) = \hat{\epsilon}_{jk}$ \hfill $\triangleright$ Solve Bivariate Normal CDF
        \STATE $\Sigma_{jk} \leftarrow \rho, \quad \Sigma_{kj} \leftarrow \rho$
    \ENDFOR
\ENDFOR

\STATE
\STATE \hfill $\triangleright$ \textbf{Phase 3: Regularization}
\STATE Compute eigendecomposition: ${\bf \Sigma} = {\bf V D V^\top}$
\STATE $D_{ii} \leftarrow \max(D_{ii}, \epsilon)$ for all $i$ \hfill $\triangleright$ Clip negative eigenvalues
\STATE ${\bf \Sigma \leftarrow V D V^\top}$
\STATE Normalize diagonal: $\Sigma_{jk} \leftarrow \Sigma_{jk} / \sqrt{\Sigma_{jj} \Sigma_{kk}}$

\STATE
\STATE \hfill \textbf{$\triangleright$ Phase 4: Synthetic Data Generation}
\STATE Sample latent variables $Z \in \mathbb{R}^{N \times M} \sim \mathcal{N}(\mathbf{0}, {\bf \Sigma})$
\STATE Generate labels $Y_{\text{syn}} \sim \text{Uniform}(\{-1, 1\})^N$
\STATE Initialize $X_{\text{syn}} \in \mathbb{R}^{N \times M}$
\FOR{$i = 1$ to $N$}
    \FOR{$j = 1$ to $M$}
        \STATE $E_{\text{syn}, ij} \leftarrow \mathds{1}(Z_{ij} < \tau_j)$ \hfill $\triangleright$ Threshold latent variables
        \STATE $X_{\text{syn}, ij} \leftarrow Y_{\text{syn}, i} \cdot (1 - 2 E_{\text{syn}, ij})$ \hfill $\triangleright$ Flip label if error
    \ENDFOR
\ENDFOR
\end{algorithmic}
\end{algorithm}

\newpage
\section{Additional Experimental Details for the MEDMCQA Dataset}\label{app:medmcqa}
In this section, we provide more context on the MEDMCQA experiments.

For the binary evaluation task, we utilize the following prompt template to enforce a consistent `True' or `False' output format:

\begin{tcolorbox}[
  colback=gray!8,
  colframe=gray!70!black,
  coltitle=white,
  title={\textbf{Query Format for MEDMCQA Dataset}},
  fonttitle=\small,
  boxrule=0.8pt,
  arc=0pt,
  left=8pt,
  right=8pt,
  top=6pt,
  bottom=6pt
]
\small
\textbf{System:} Is the following statement true or false? Answer with a single word: True or False. Do not provide any reasoning.\\[4pt]
\textbf{User:} Question: \texttt{<question>} Is the answer `\texttt{<candidate\_answer>}'?\\
Answer: 
\end{tcolorbox}

We accessed all language models through OpenRouter to standardize the inference process. An actual question from MEDMCQA, is illustrated below with ground truth ``TRUE":
\begin{tcolorbox}[
  colback=gray!8,
  colframe=gray!70!black,
  coltitle=white,
  title={\textbf{Example Query for MEDMCQA Dataset}},
  fonttitle=\small,
  boxrule=0.8pt,
  arc=0pt,
  left=8pt,
  right=8pt,
  top=6pt,
  bottom=6pt
]
\small
\textbf{System:} Is the following statement true or false? Answer with a single word: True or False. Do not provide any reasoning.\\[4pt]
\textbf{User:} Question: Axonal transport is: Is the answer `Antegrade and retrograde'?\\
Answer:
\end{tcolorbox}

\subsection{Experiments across Temperature and Runs (the MAP aggregation)}
We also provide the experiments for all temperature settings and runs with 5 random splits for each.
Figure~\ref{fig:appendix_map_medmcqa_6} reports per-condition curves on MEDMCQA under the MAP aggregation for each baseline, with one panel for average of each temperature--run--(5-split) pair. Table~\ref{tab:medmcqa_results} aggregates the same experiments over all $30$ evaluations (3 temperatures $\times$ 2 runs $\times$ 5 random 80/20 splits), reporting mean $\pm$ std test error at each ensemble size $k$.

\begin{figure*}[ht]
\centering
\label{fig:map_all_medmcqa}
\begin{subfigure}[t]{0.32\textwidth}
  \centering
  \includegraphics[width=0.9\linewidth]{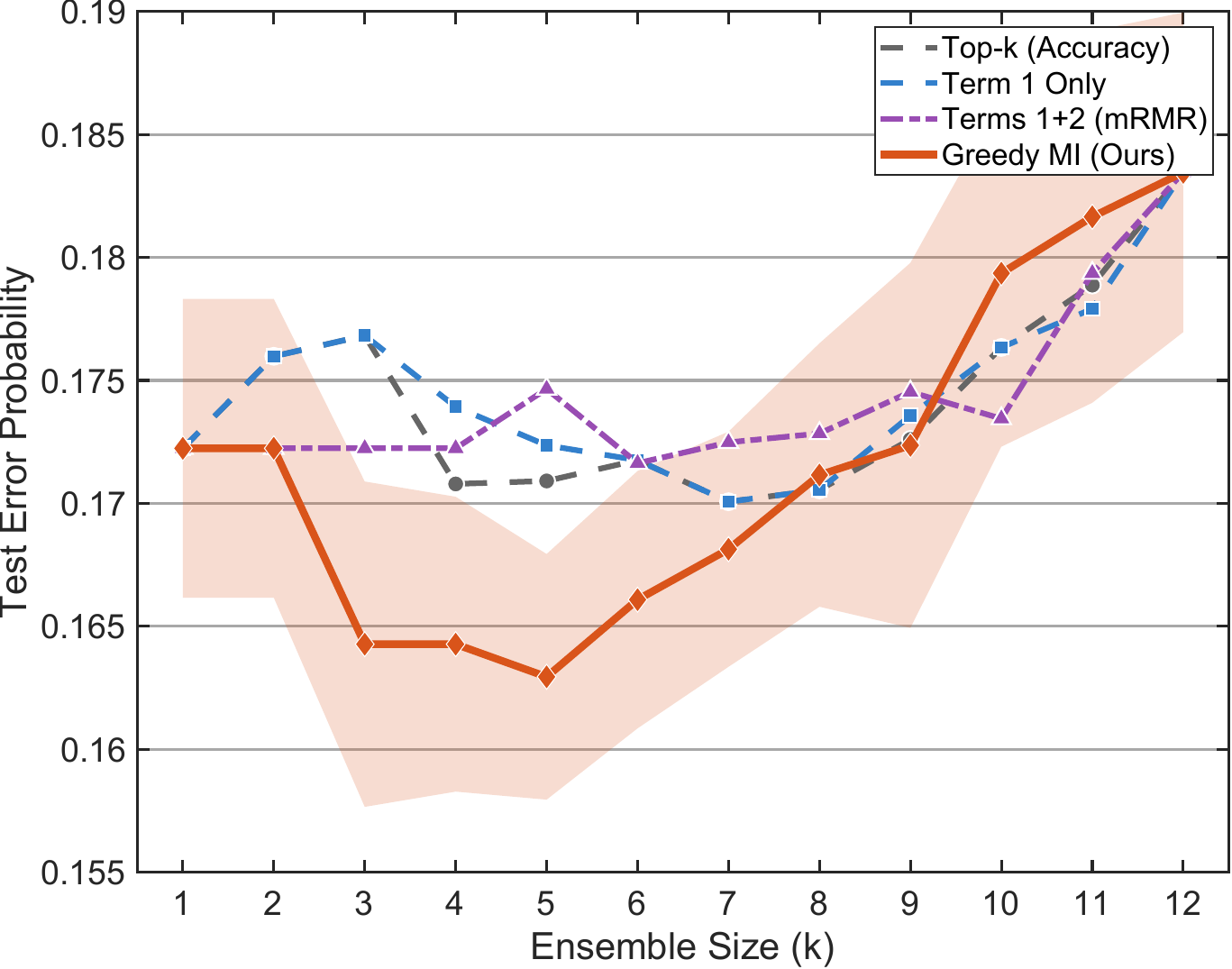}
  \caption{\(\text{temp}=0.01\), run 1}
\end{subfigure}\hfill
\begin{subfigure}[t]{0.32\textwidth}
  \centering
  \includegraphics[width=0.9\linewidth]{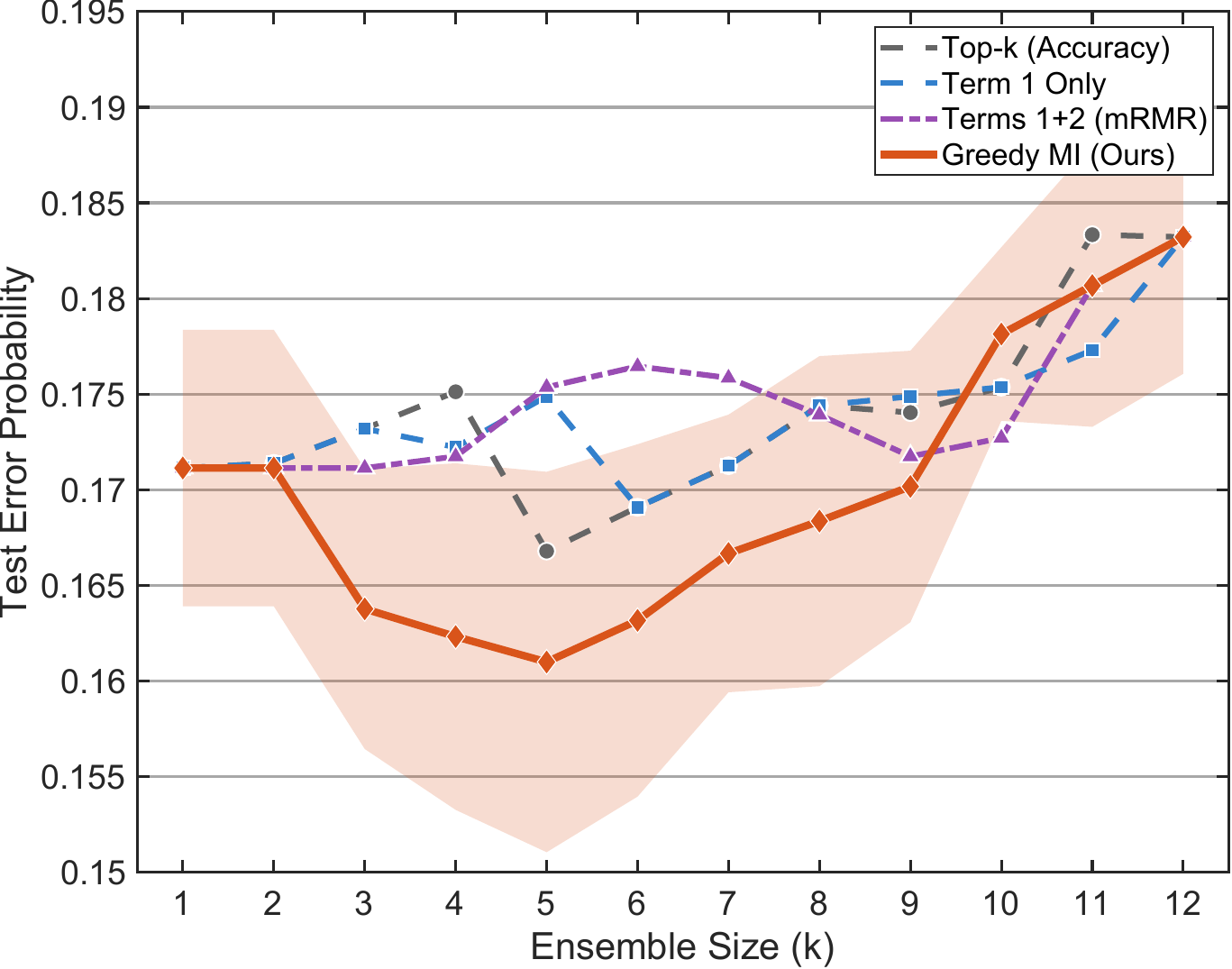}
  \caption{\(\text{temp}=0.01\), run 2}
\end{subfigure}\hfill
\begin{subfigure}[t]{0.32\textwidth}
  \centering
  \includegraphics[width=0.9\linewidth]{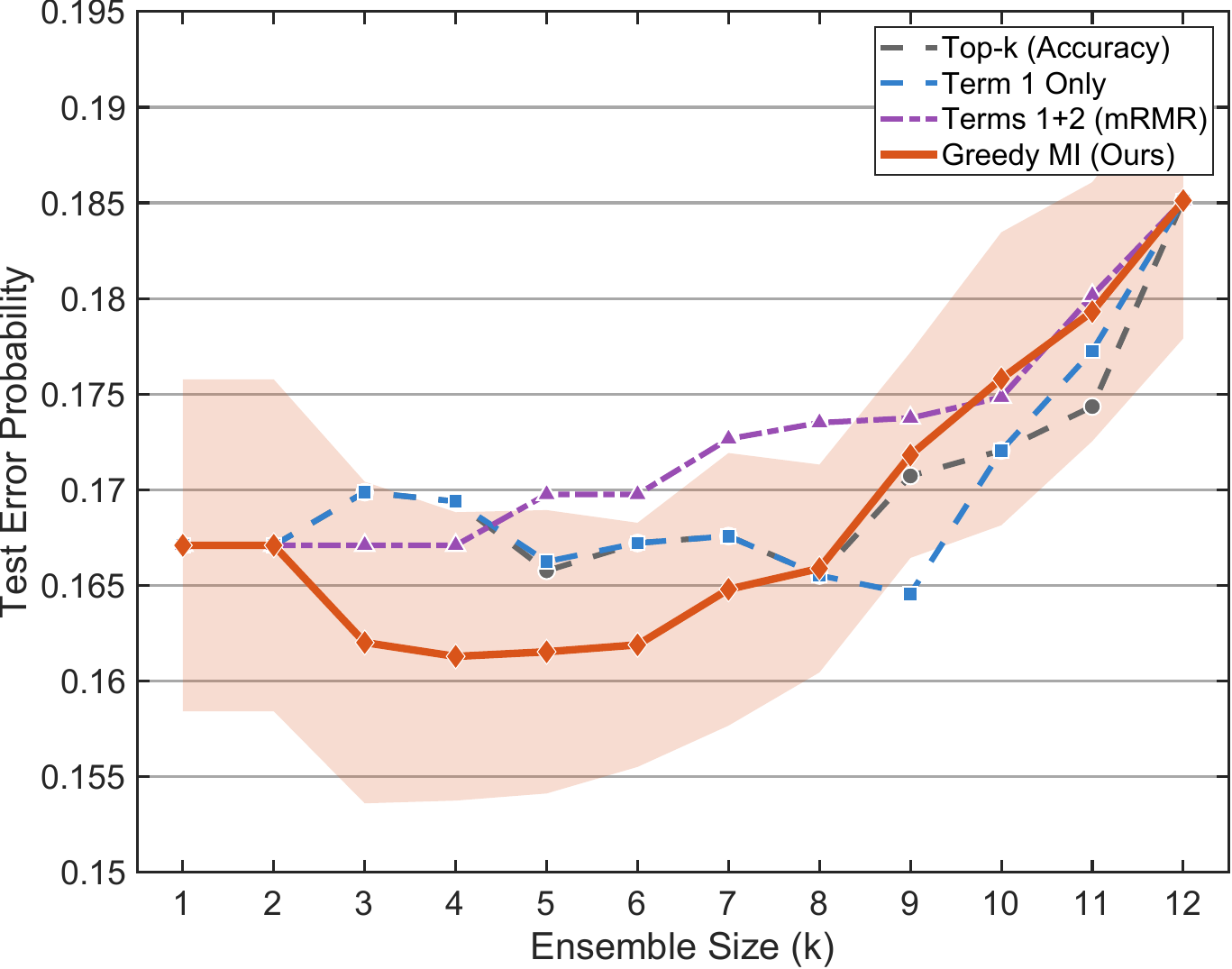}
  \caption{\(\text{temp}=0.3\), run 1}
\end{subfigure}


\begin{subfigure}[t]{0.32\textwidth}
  \centering
  \includegraphics[width=0.9\linewidth]{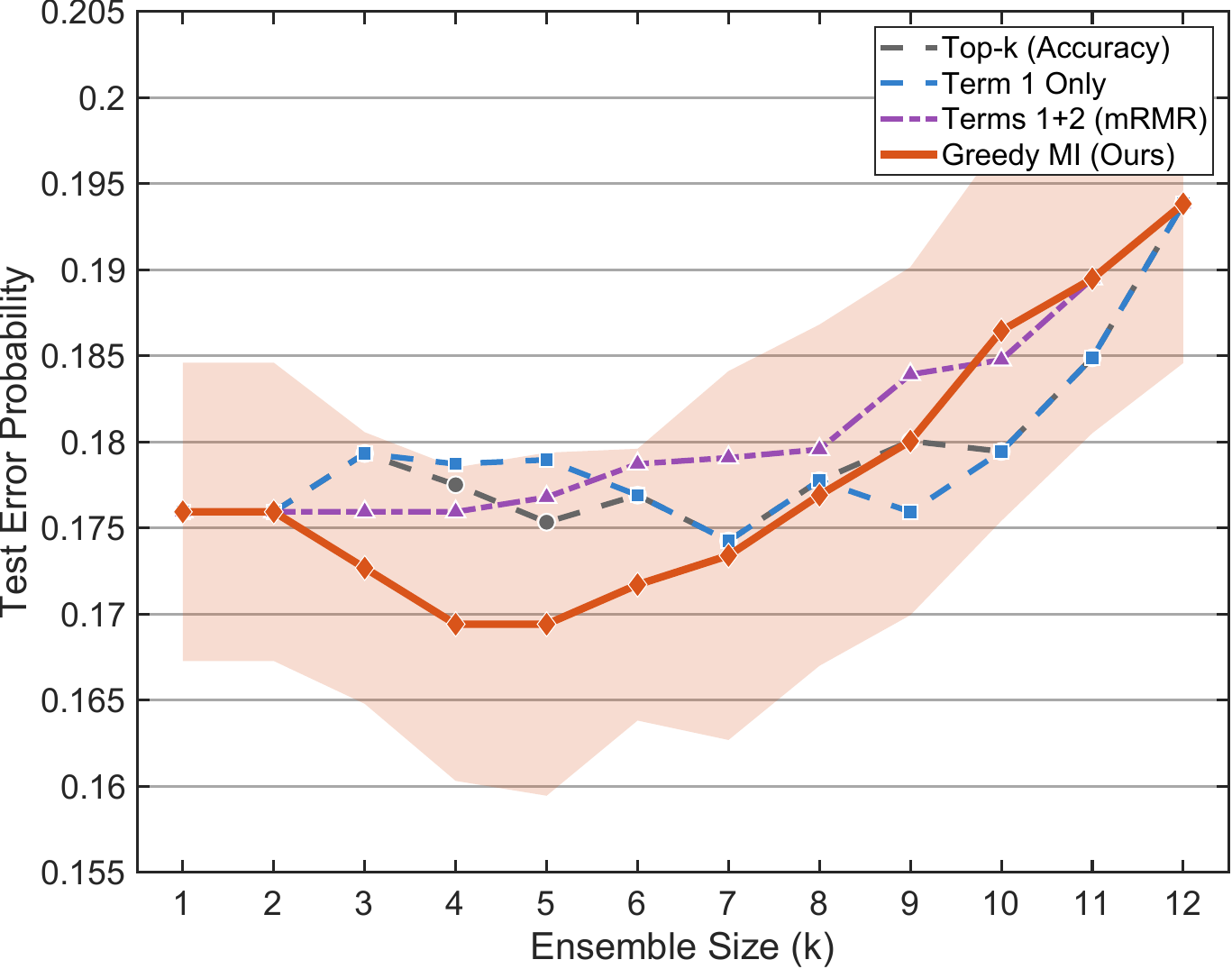}
  \caption{\(\text{temp}=0.3\), run 2}
\end{subfigure}\hfill
\begin{subfigure}[t]{0.32\textwidth}
  \centering
  \includegraphics[width=0.9\linewidth]{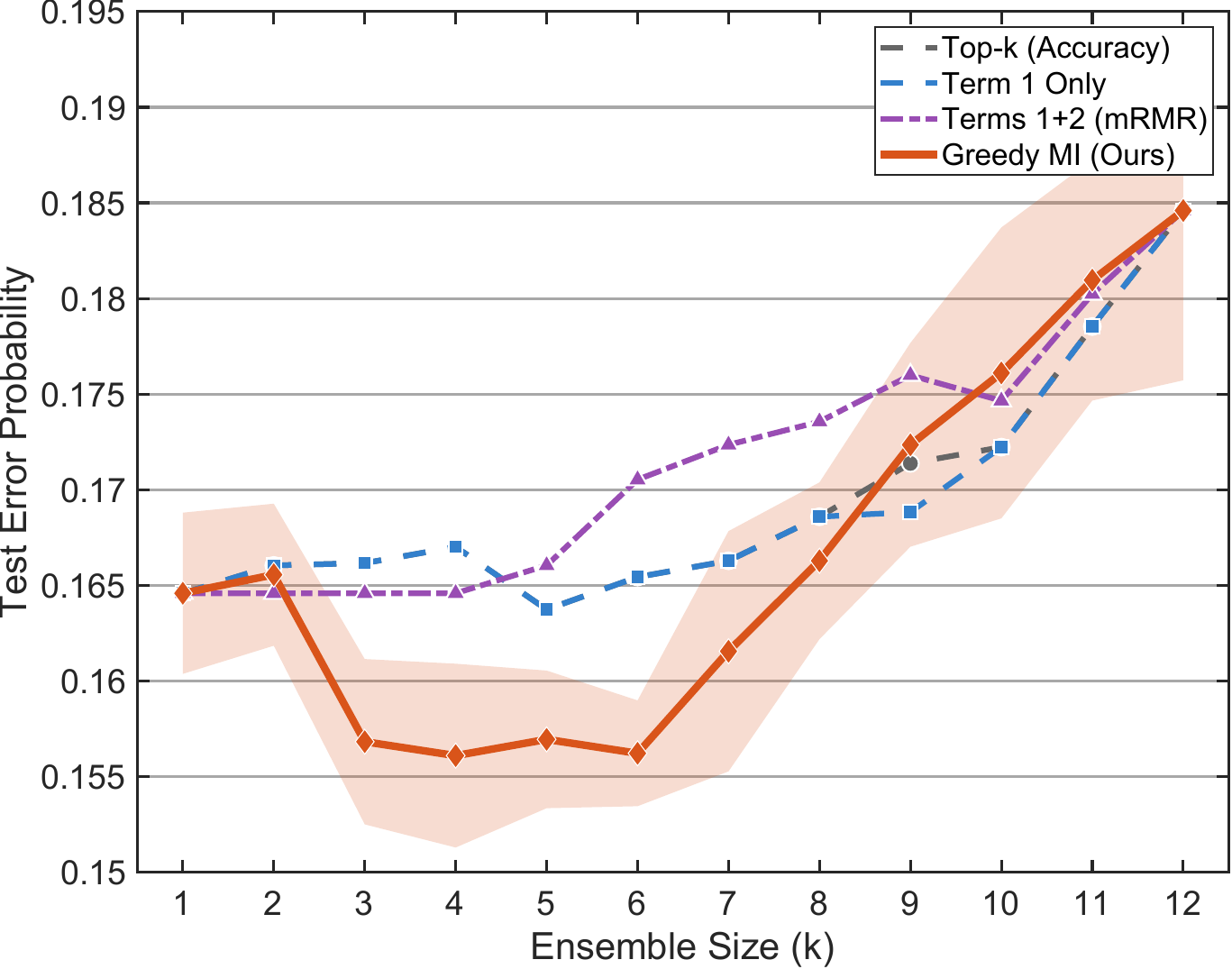}
  \caption{\(\text{temp}=0.7\), run 1}
\end{subfigure}\hfill
\begin{subfigure}[t]{0.32\textwidth}
  \centering
  \includegraphics[width=0.9\linewidth]{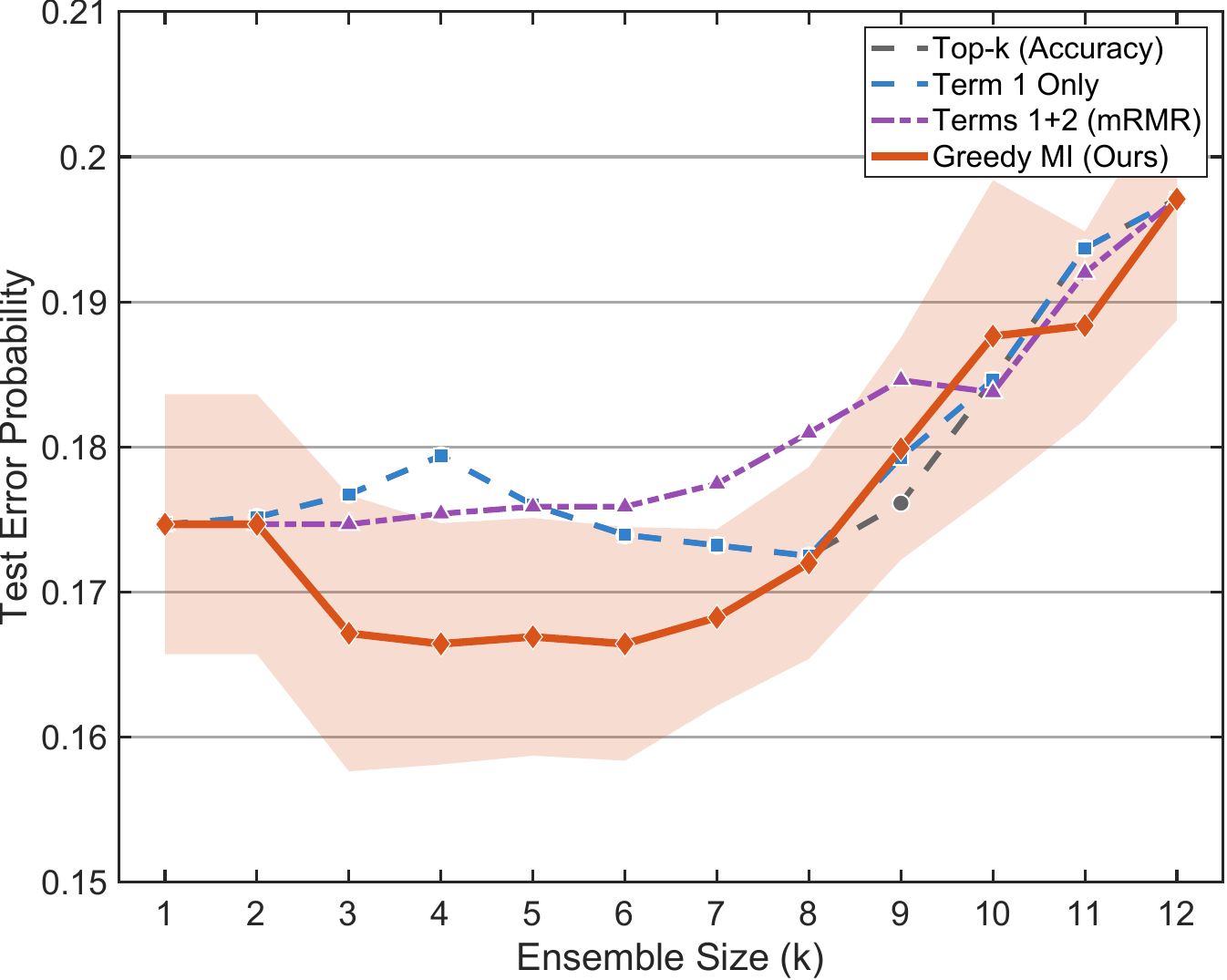}
  \caption{\(\text{temp}=0.7\), run 2}
\end{subfigure}

\caption{Experiments across temperatures and runs (\textbf{the MAP aggregation}) on MEDMCQA. Each panel corresponds to one temperature setting and random run. Shaded region represents the standard deviation.}
\label{fig:appendix_map_medmcqa_6}
\end{figure*}

\subsection{Alternative Aggregation Rules}
\label{app:medmcqa_agg}
Our main experiments use the MAP aggregator for the selected subset $S$ (Algorithm~\ref{alg:map}), since it directly estimates $\mathbb{P}(Y \mid X_S)$ and can exploit higher-order dependencies in correlated outputs. To probe robustness to the aggregation rule, we additionally evaluate two widely-used alternatives for the baseline selectors: (i) \emph{majority vote} (MV) (\cref{alg:mv}), and (ii) \emph{weighted majority vote} (W-MV) (\cref{alg:wmv}), where weights are derived from single-model reliability on the training split. Note that in these comparisons, we keep \emph{selection} fixed (Top-$k$, Term~1, Terms~1+2) and only change the \emph{aggregation} rule for those baselines. Greedy MI continues to use MAP, because its gains are designed to be realized under a likelihood-based estimator.

With the help of visualizations from \cref{fig:medmcqa_appendix_mv_6}, and tables from \cref{tab:medmcqa_mv_vs_map}, we observe that the mRMR selector (Terms~1+2) is highly unstable under MV: at $k=2$ it jumps to $0.264$ error, versus $0.181$ for Top-$k$ and Term~1, and $0.171$ for Greedy MI (MAP). The instability persists across the budget range, with Terms~1+2 attaining $0.231$ error at $k=3$ and $0.220$ at $k=5$, while Greedy MI achieves $0.164$ and $0.163$, respectively. In the mid-budget regime ($k=3$--$7$), Greedy MI consistently maintains the lowest error (e.g., $k=5$: $0.163$ vs.\ $0.170$ for Top-$k$; $k=6$: $0.164$ vs.\ $0.170$ for Top-$k$).

\begin{figure*}[ht]
\centering

\begin{subfigure}[t]{0.32\textwidth}
  \centering
  \includegraphics[width=0.9\linewidth]{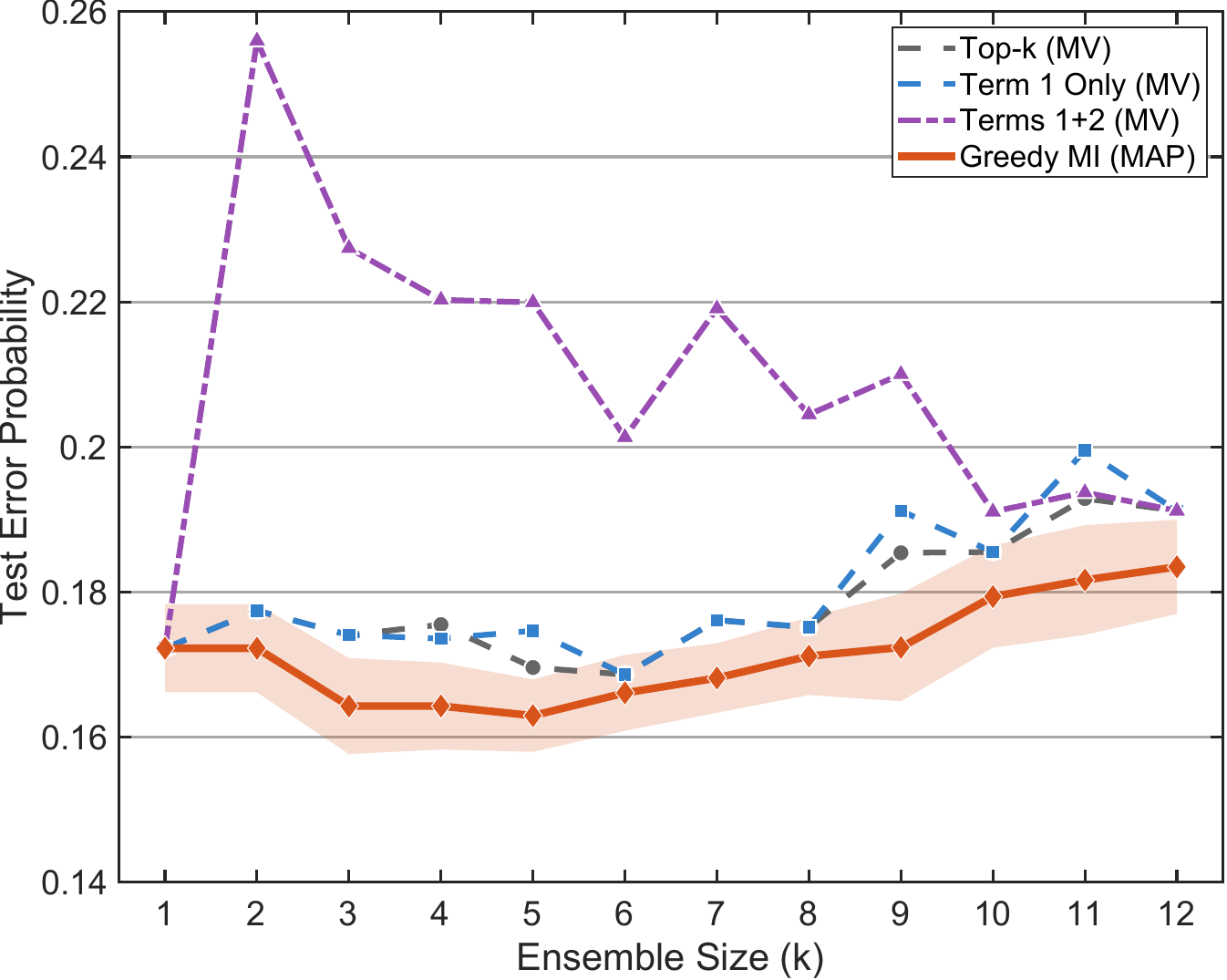}
  \caption{\(\text{temp}=0.01\), run 1}
\end{subfigure}\hfill
\begin{subfigure}[t]{0.32\textwidth}
  \centering
  \includegraphics[width=0.9\linewidth]{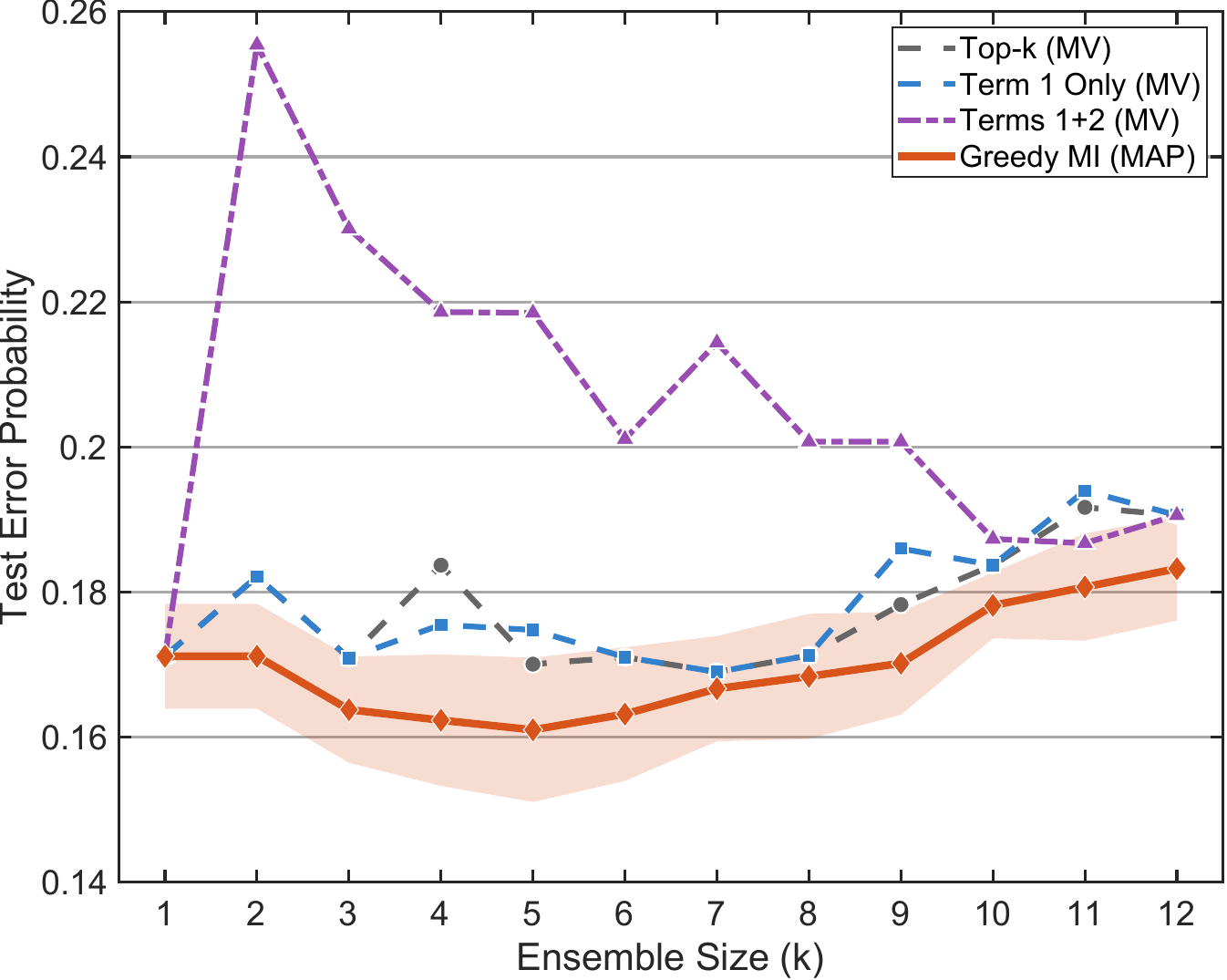}
  \caption{\(\text{temp}=0.01\), run 2}
\end{subfigure}\hfill
\begin{subfigure}[t]{0.32\textwidth}
  \centering
  \includegraphics[width=0.9\linewidth]{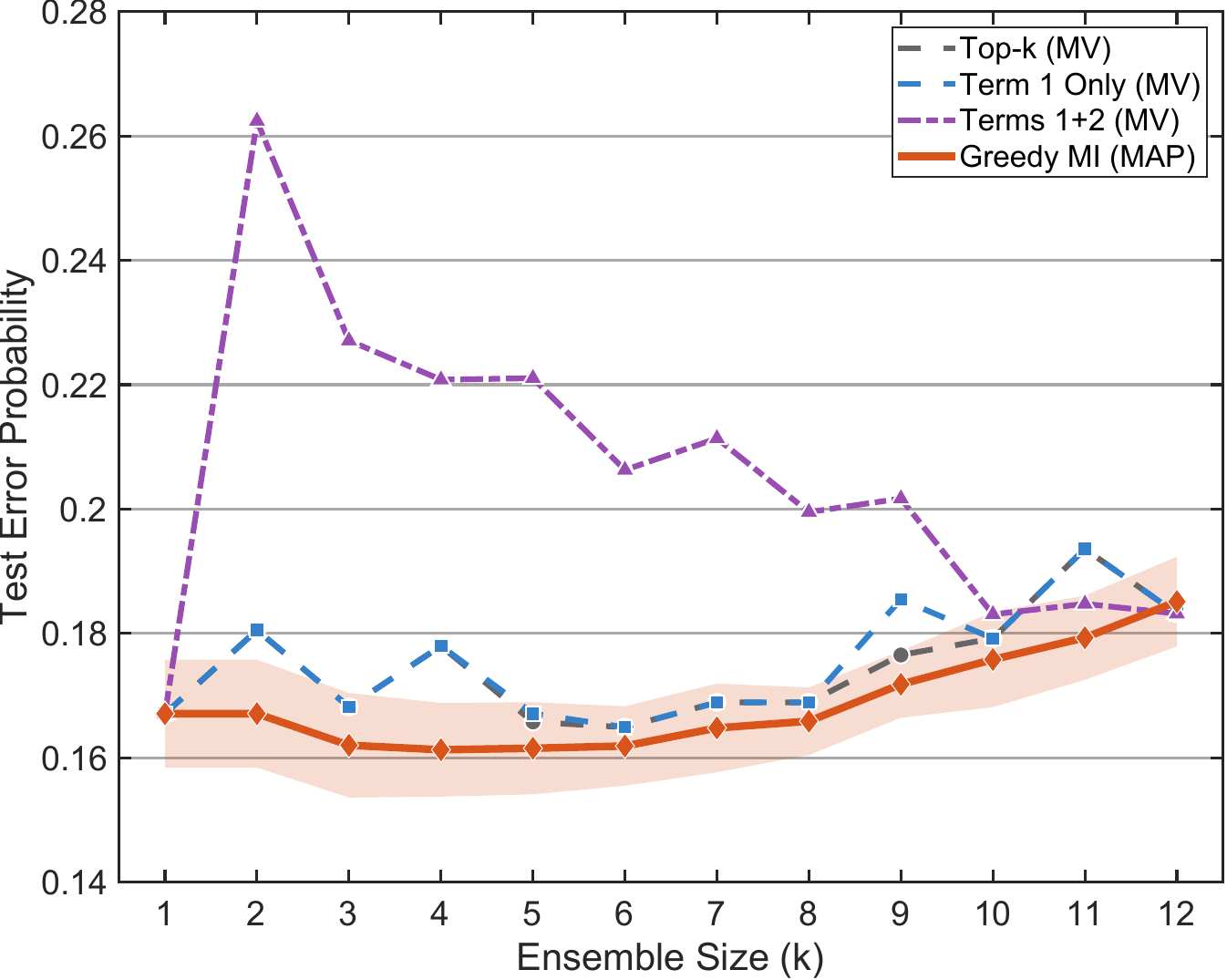}
  \caption{\(\text{temp}=0.3\), run 1}
\end{subfigure}


\begin{subfigure}[t]{0.32\textwidth}
  \centering
  \includegraphics[width=0.9\linewidth]{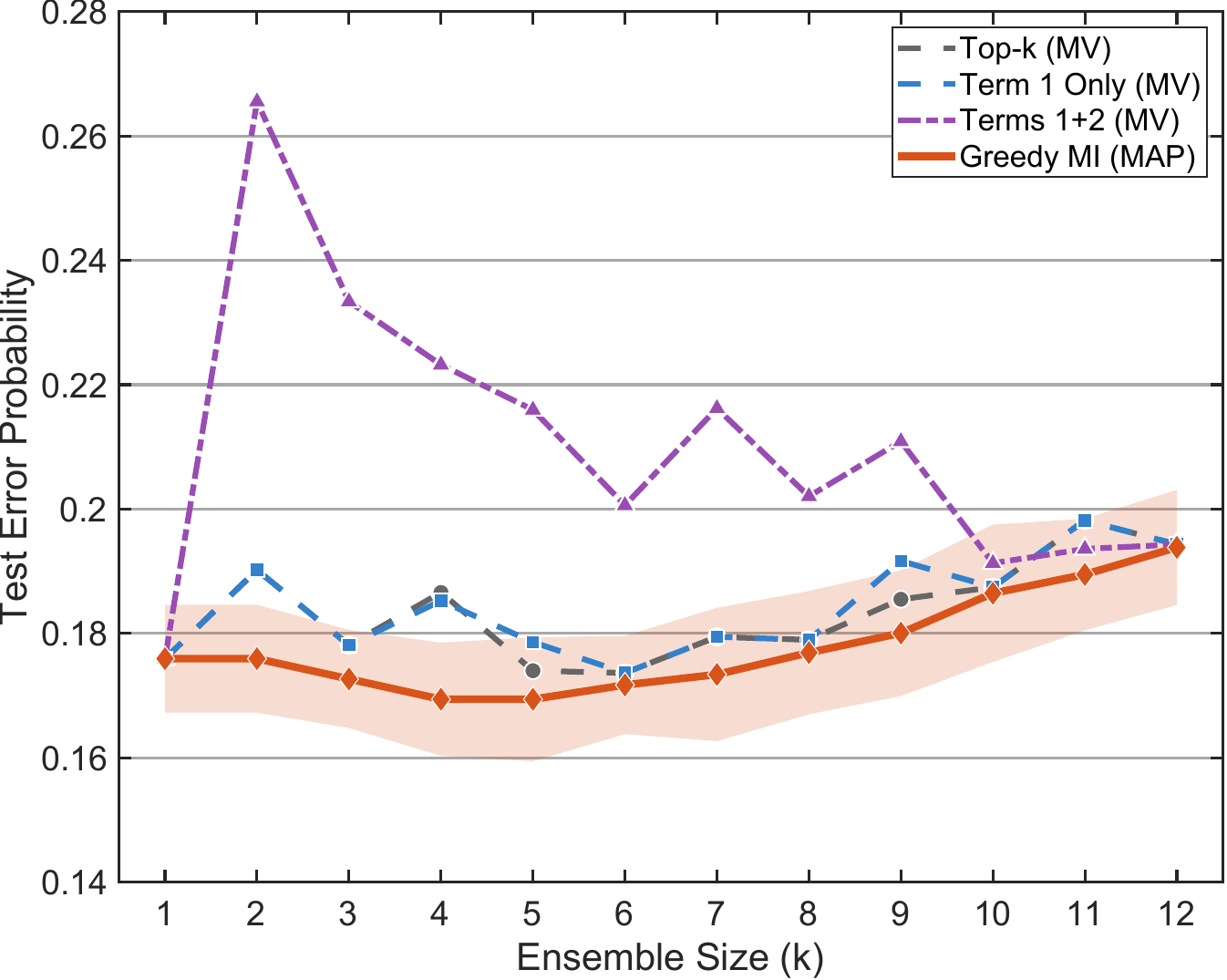}
  \caption{\(\text{temp}=0.3\), run 2}
\end{subfigure}\hfill
\begin{subfigure}[t]{0.32\textwidth}
  \centering
  \includegraphics[width=0.9\linewidth]{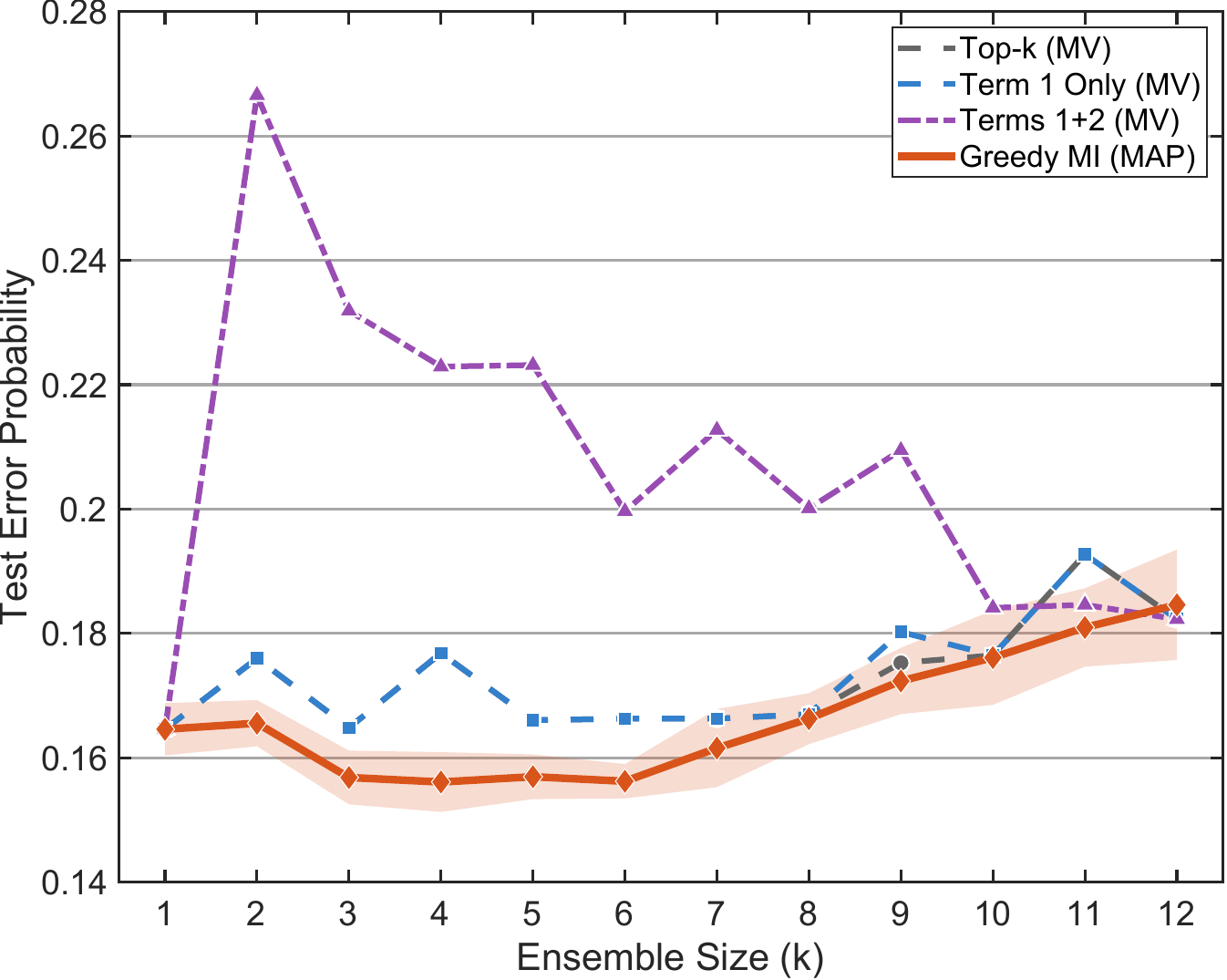}
  \caption{\(\text{temp}=0.7\), run 1}
\end{subfigure}\hfill
\begin{subfigure}[t]{0.32\textwidth}
  \centering
  \includegraphics[width=0.9\linewidth]{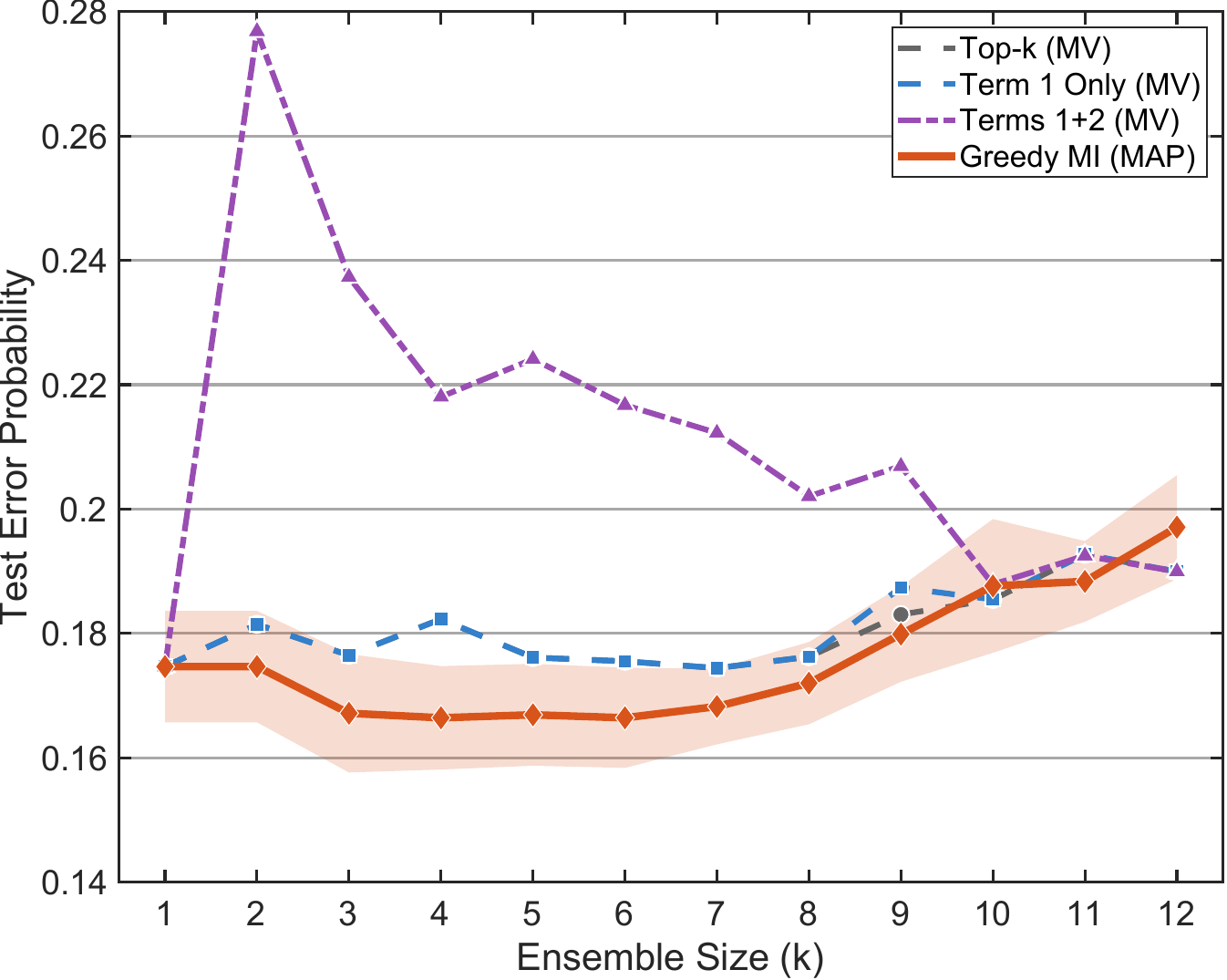}
  \caption{\(\text{temp}=0.7\), run 2}
\end{subfigure}

\caption{MEDMCQA results using \textbf{Majority Vote (MV)} aggregation across temperatures and runs. Shaded region represents the standard deviation.}
\label{fig:medmcqa_appendix_mv_6}
\end{figure*}

\begin{figure*}[ht]
\centering

\begin{subfigure}[t]{0.32\textwidth}
  \centering
  \includegraphics[width=0.9\linewidth]{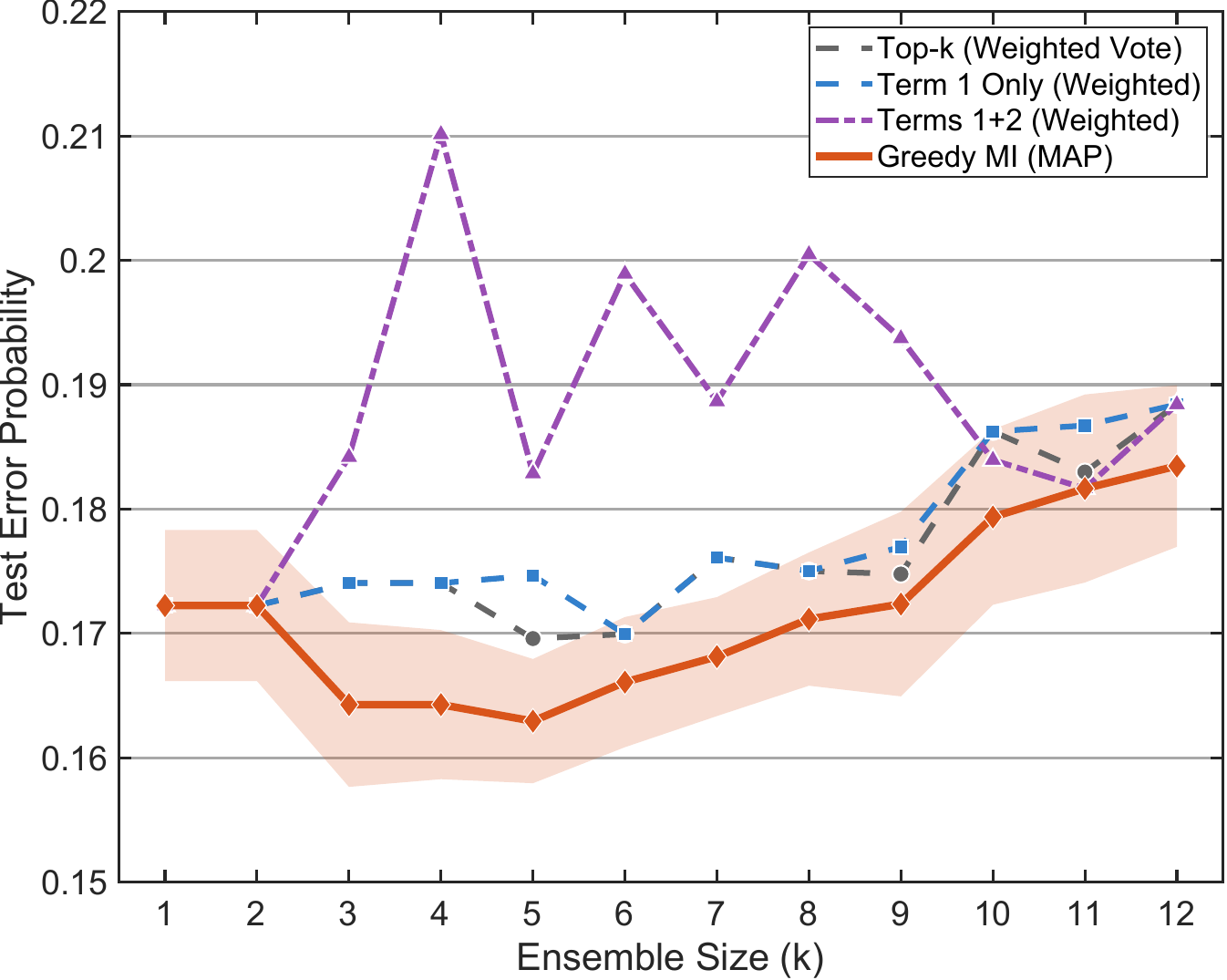}
  \caption{\(\text{temp}=0.01\), run 1}
\end{subfigure}\hfill
\begin{subfigure}[t]{0.32\textwidth}
  \centering
  \includegraphics[width=0.9\linewidth]{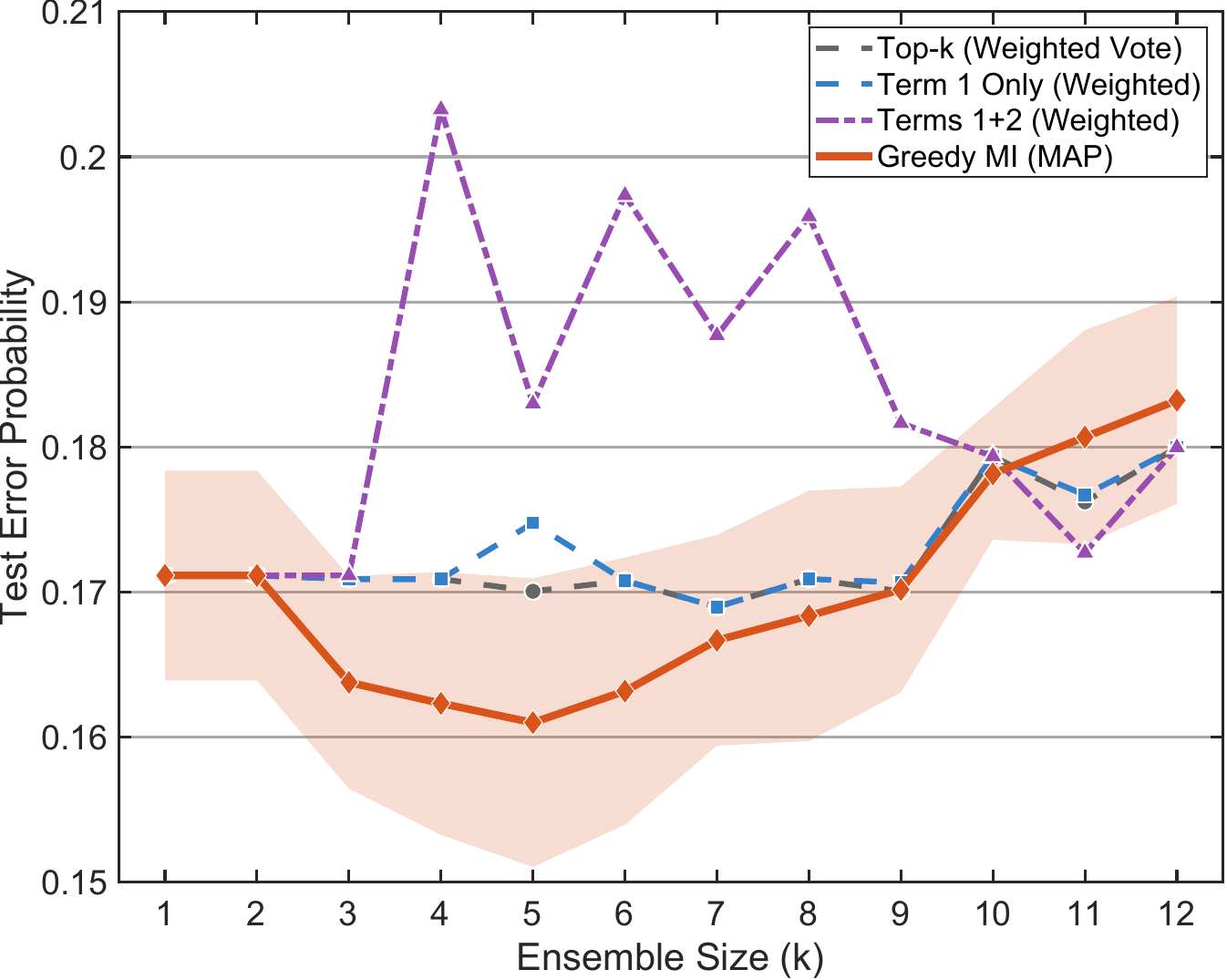}
  \caption{\(\text{temp}=0.01\), run 2}
\end{subfigure}\hfill
\begin{subfigure}[t]{0.32\textwidth}
  \centering
  \includegraphics[width=0.9\linewidth]{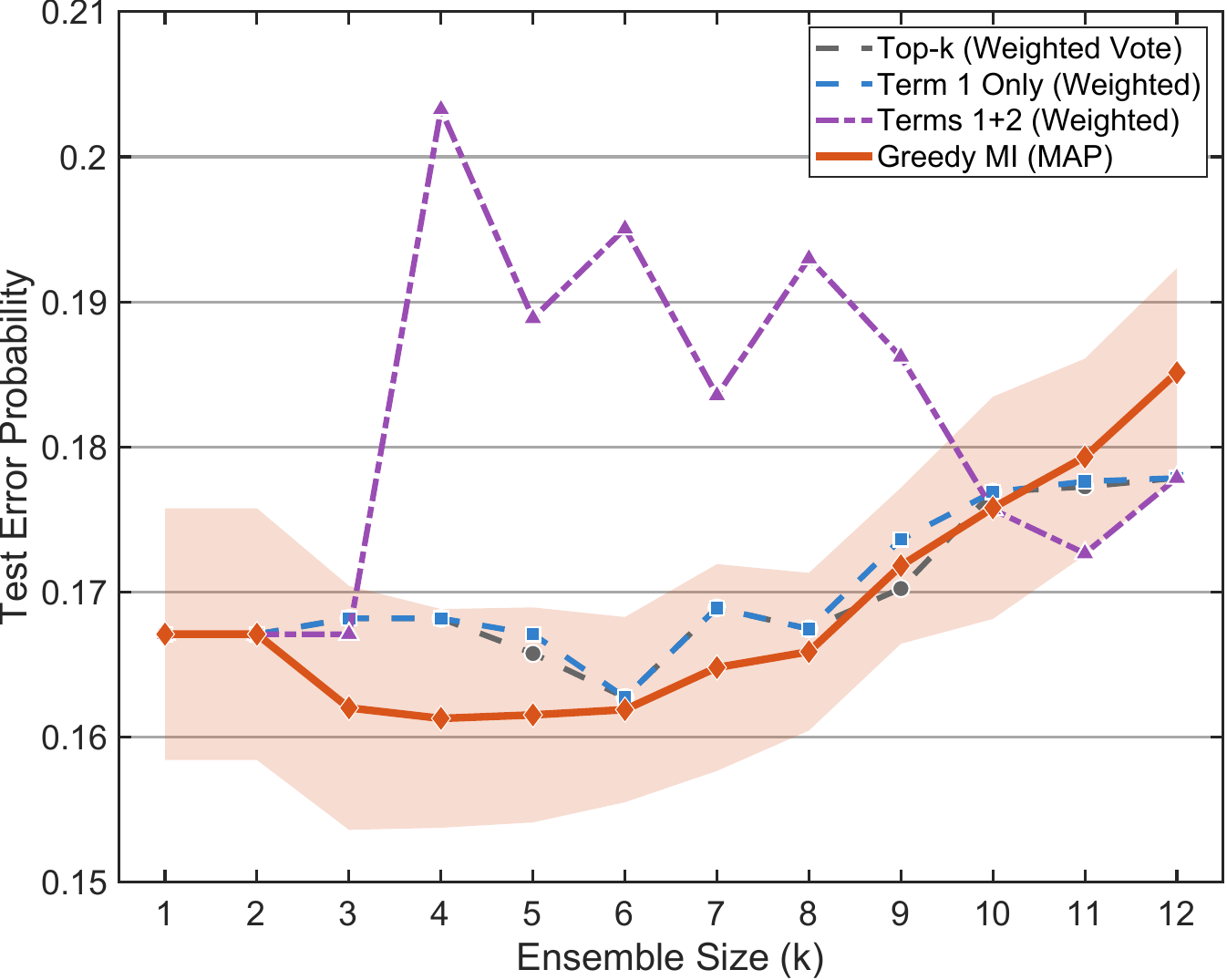}
  \caption{\(\text{temp}=0.3\), run 1}
\end{subfigure}


\begin{subfigure}[t]{0.32\textwidth}
  \centering
  \includegraphics[width=0.9\linewidth]{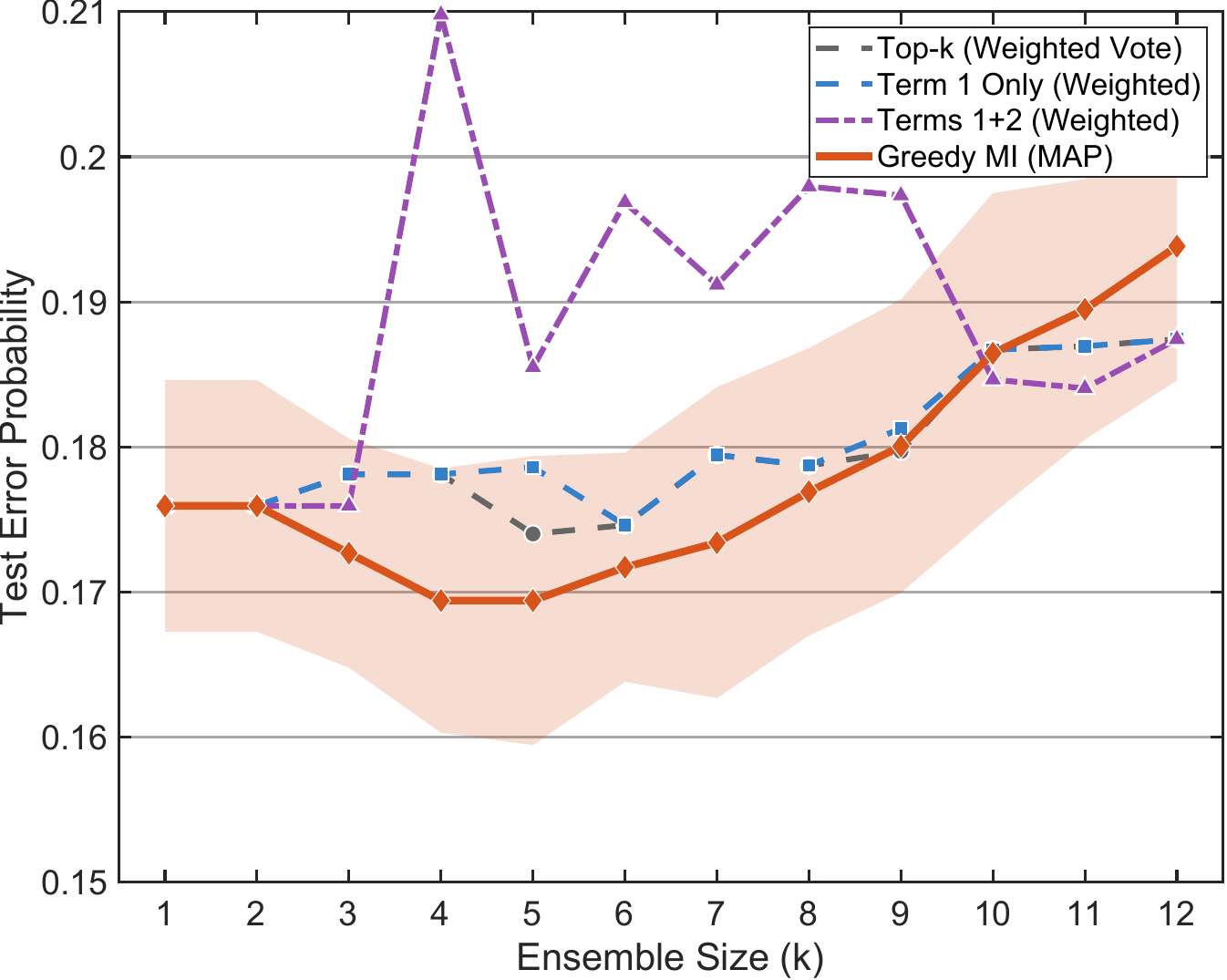}
  \caption{\(\text{temp}=0.3\), run 2}
\end{subfigure}\hfill
\begin{subfigure}[t]{0.32\textwidth}
  \centering
  \includegraphics[width=0.9\linewidth]{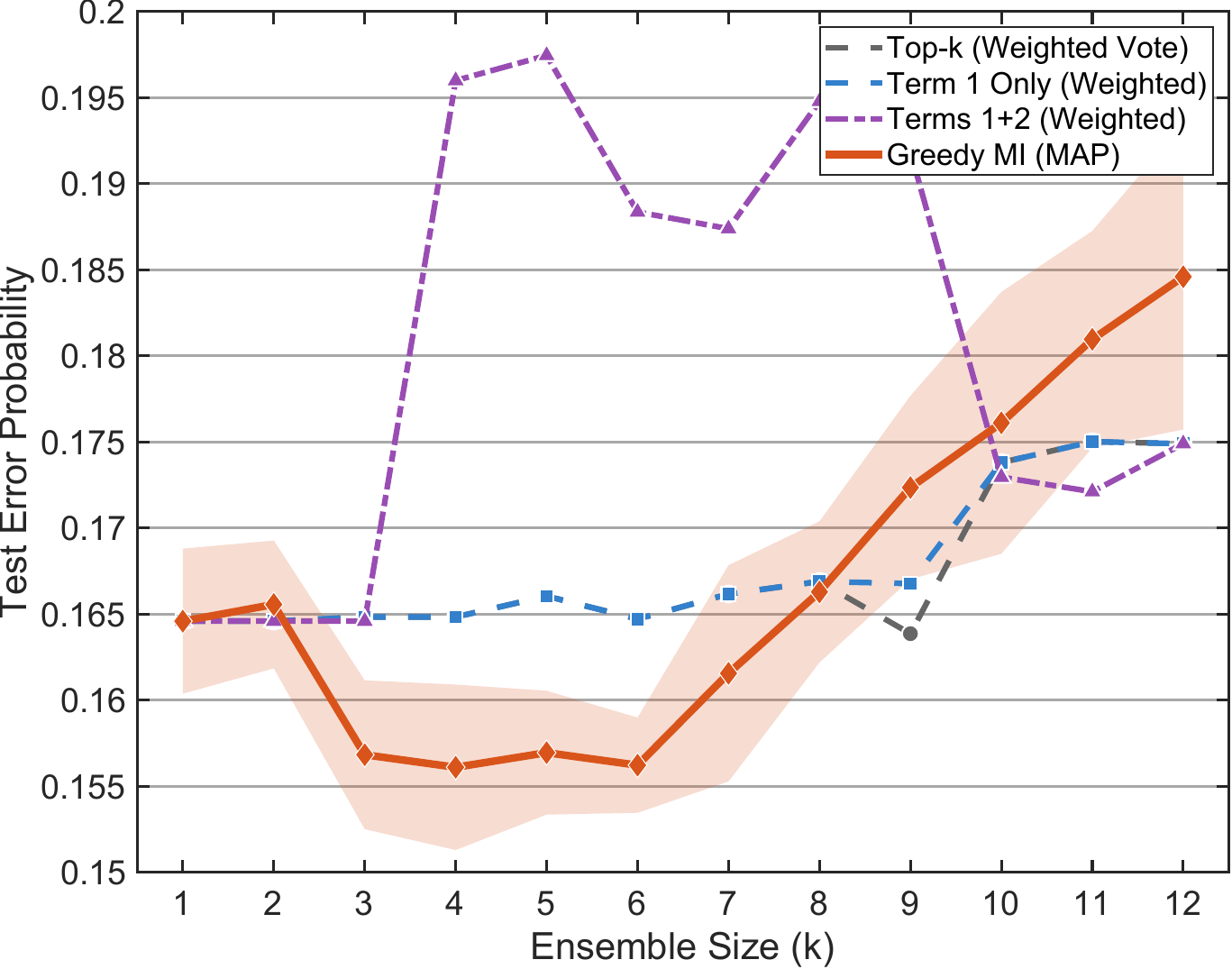}
  \caption{\(\text{temp}=0.7\), run 1}
\end{subfigure}\hfill
\begin{subfigure}[t]{0.32\textwidth}
  \centering
  \includegraphics[width=0.9\linewidth]{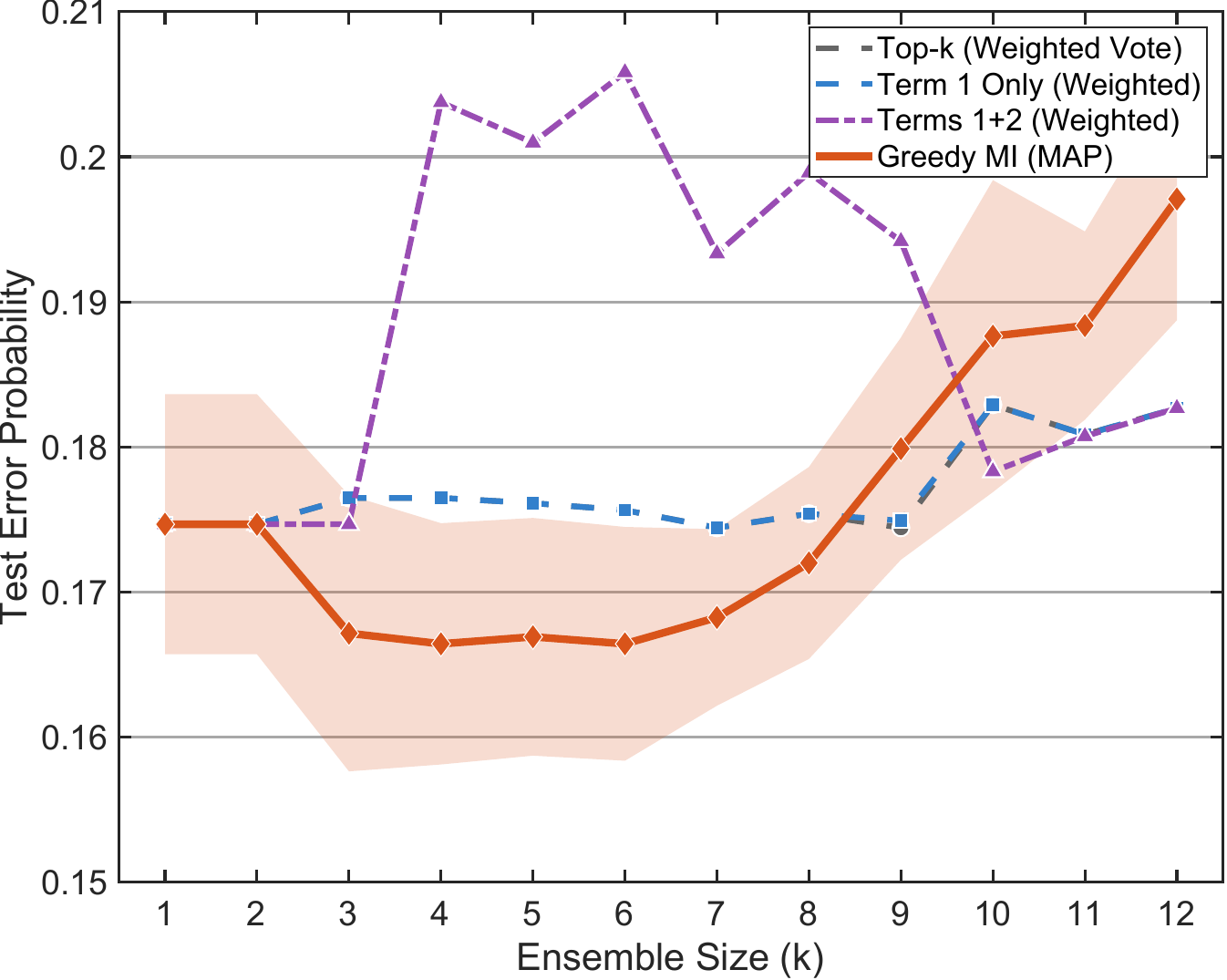}
  \caption{\(\text{temp}=0.7\), run 2}
\end{subfigure}

\caption{MEDMCQA results using \textbf{Weighted Ensemble (WE)} aggregation across temperatures and runs. Shaded region represents the standard deviation.}
\label{fig:medmcqa_appendix_we_6}
\end{figure*}
Under W-MV, in \cref{fig:medmcqa_appendix_we_6}, and in \cref{tab:medmcqa_wmv_vs_map}, we see that Terms~1+2 remains substantially worse in the mid-budget range (e.g., $k=4$: $0.204$; $k=6$: $0.197$), whereas Greedy MI (MAP) achieves $0.163 \pm 0.008$ at $k=4$ and $0.164$ at $k=6$. Notably, Top-$k$ with W-MV performs comparably to Top-$k$ with MAP, suggesting that weighted voting partially compensates for ignoring correlations in the selection phase. However, it does not resolve the failure mode of mRMR, which continues to suffer from its aggressive diversity-seeking strategy. At larger budgets ($k \geq 10$), all methods begin to converge as the ensemble includes most available models. In panels (b), (c), (d), (e) and (f), baselines with WE aggregation occasionally outperform Greedy MI with MAP for large ensemble sizes. Two factors contribute to this situation: first, Greedy MI provides a near-optimal rather than optimal selection; second, as discussed previously, estimating the joint distribution becomes increasingly difficult at higher k, degrading Greedy MI's performance. A well-chosen aggregation rule can partially compensate for these effects.

Overall, these results indicate that changing the aggregation rule can partially improve Top-$k$/Term~1, particularly under W-MV, but it does not resolve the fundamental issues with mRMR under MV/W-MV. Greedy MI remains most competitive in the moderate-budget regime ($k \approx 3$--$7$) where selection quality matters most.

\newpage
\subsection{Copula Validation Plots across Temperatures and Runs}\label{app:copula_validation_medmcqa}
Additionally, we provide the following scatter and tail plots for each temperature setting and run pair below.

Figure~\ref{fig:app_medmcqa_copula_scatter_6up} compares the empirical pairwise joint error probabilities $P(E_i \cap E_j)$ with copula-predicted values across all $\binom{12}{2} = 66$ model pairs. The points align closely along the diagonal across all six temperature-run conditions, indicating strong agreement between the fitted copula and observed data.

Figure~\ref{fig:app_medmcqa_copula_heavy_6up} evaluates higher-order structure by showing the distribution of simultaneous errors that is, how many models fail on the same instance. The copula-generated distributions (orange bars) match the empirical frequencies (blue bars) well across all settings.

\begin{figure*}[ht]
\centering
\begin{subfigure}[t]{0.32\textwidth}
    \centering
    \includegraphics[width=0.8\linewidth]{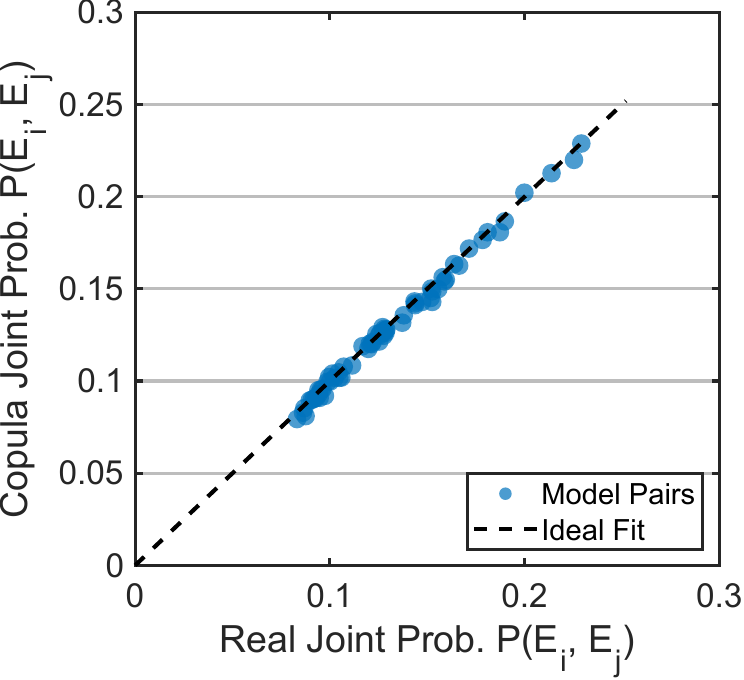}
    \caption{\(\text{temp}=0.01\), run 1}
\end{subfigure}\hfill
\begin{subfigure}[t]{0.32\textwidth}
    \centering
    \includegraphics[width=0.8\linewidth]{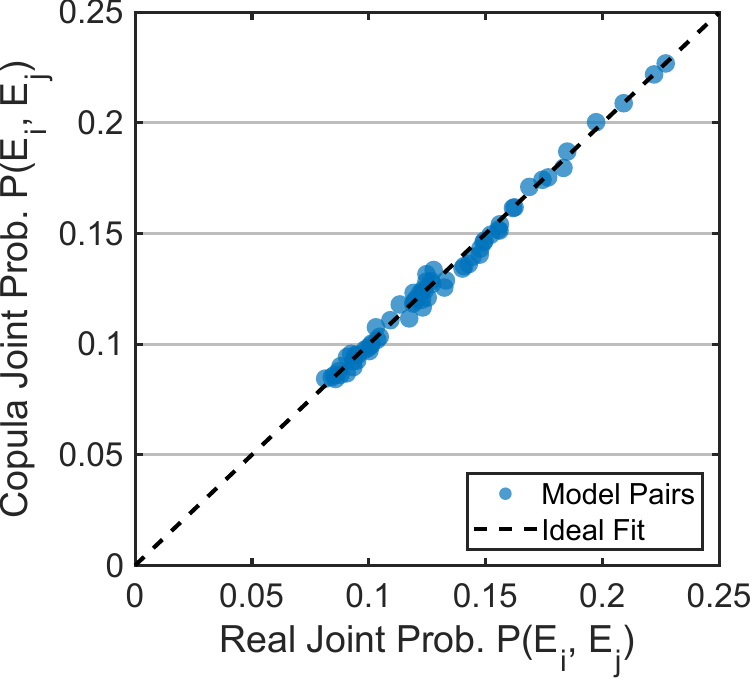}
    \caption{\(\text{temp}=0.01\), run 2}
\end{subfigure}\hfill
\begin{subfigure}[t]{0.32\textwidth}
    \centering
    \includegraphics[width=0.8\linewidth]{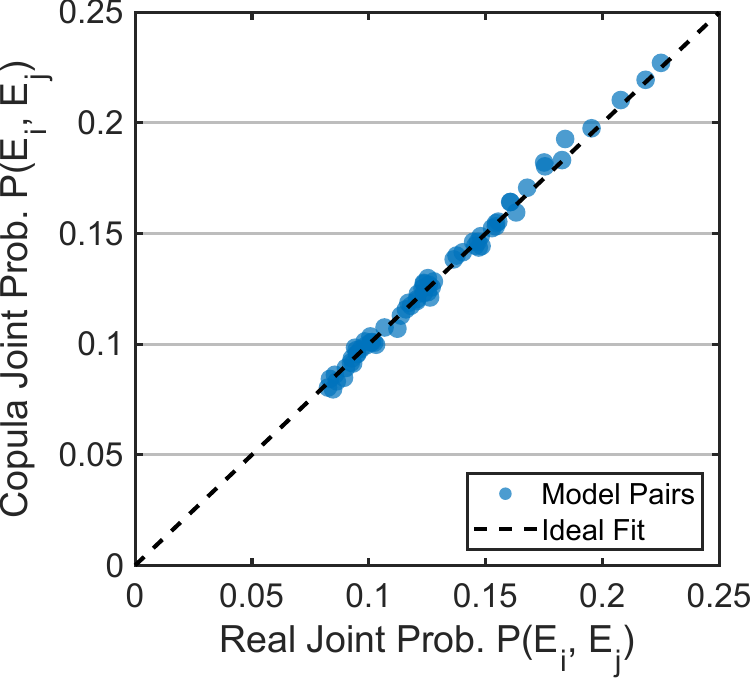}
    \caption{\(\text{temp}=0.3\), run 1}
\end{subfigure}


\begin{subfigure}[t]{0.32\textwidth}
    \centering
    \includegraphics[width=0.8\linewidth]{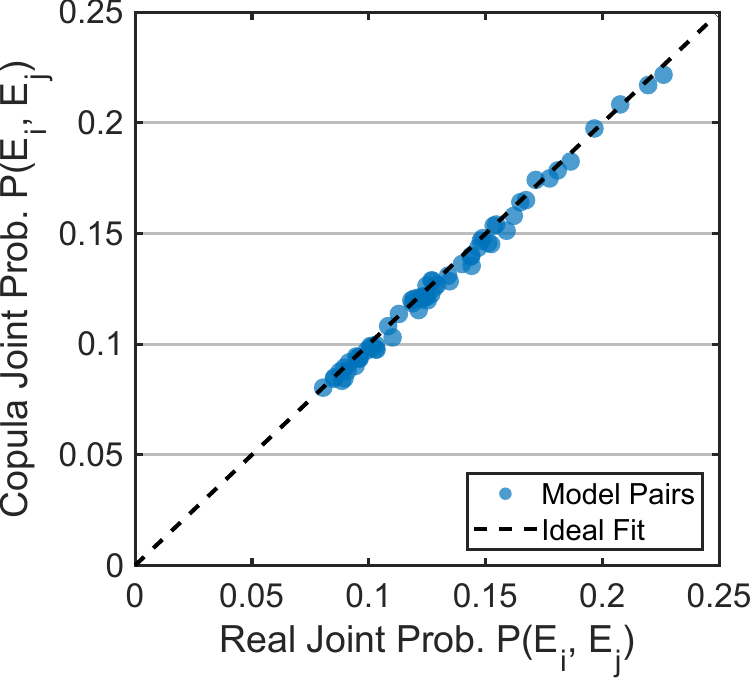}
    \caption{\(\text{temp}=0.3\), run 2}
\end{subfigure}\hfill
\begin{subfigure}[t]{0.32\textwidth}
    \centering
    \includegraphics[width=0.8\linewidth]{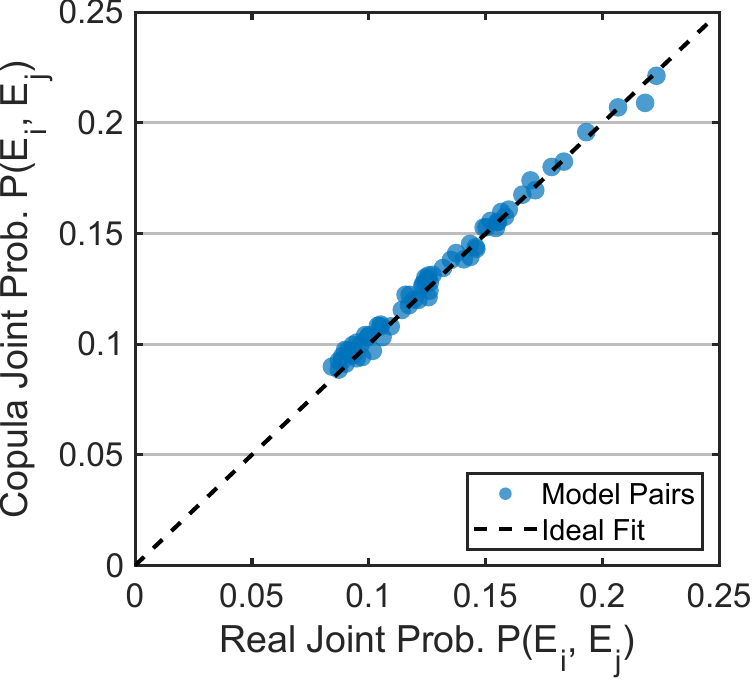}
    \caption{\(\text{temp}=0.7\), run 1}
\end{subfigure}\hfill
\begin{subfigure}[t]{0.32\textwidth}
    \centering
    \includegraphics[width=0.8\linewidth]{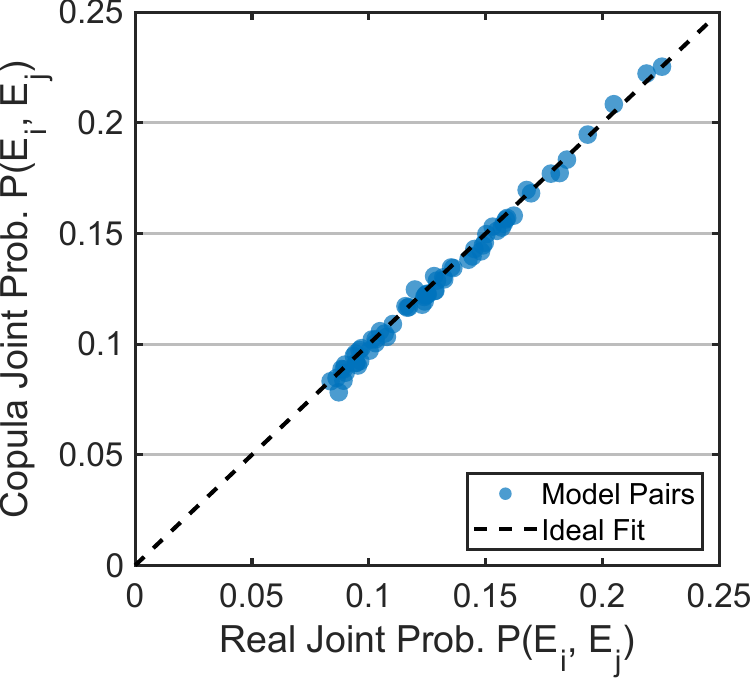}
    \caption{\(\text{temp}=0.7\), run 2}
\end{subfigure}

\caption{Gaussian-copula validation on MEDMCQA (scatter plots): pairwise structural fit diagnostics across temperatures and runs.}
\label{fig:app_medmcqa_copula_scatter_6up}
\end{figure*}

\begin{figure*}[ht]
\centering
\begin{subfigure}[t]{0.32\textwidth}
    \centering
    \includegraphics[width=0.8\linewidth]{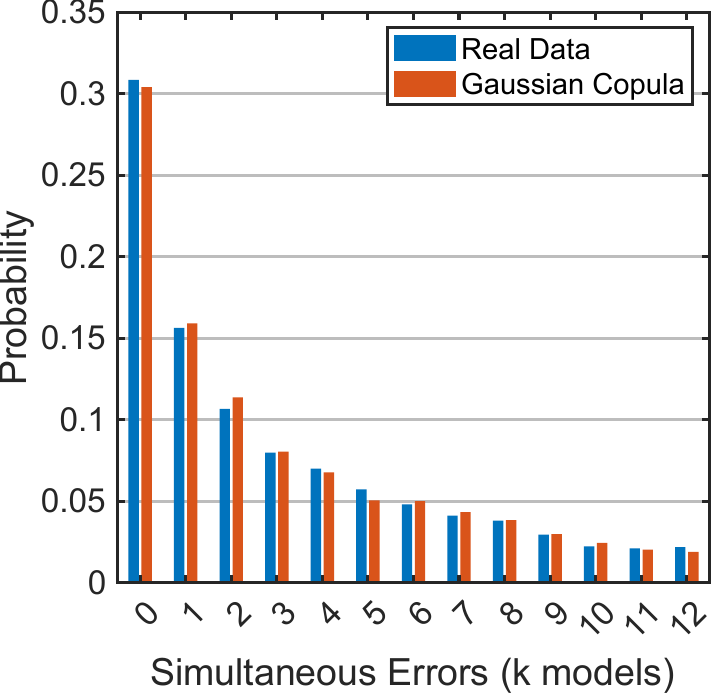}
    \caption{\(\text{temp}=0.01\), run 1}
\end{subfigure}\hfill
\begin{subfigure}[t]{0.32\textwidth}
    \centering
    \includegraphics[width=0.8\linewidth]{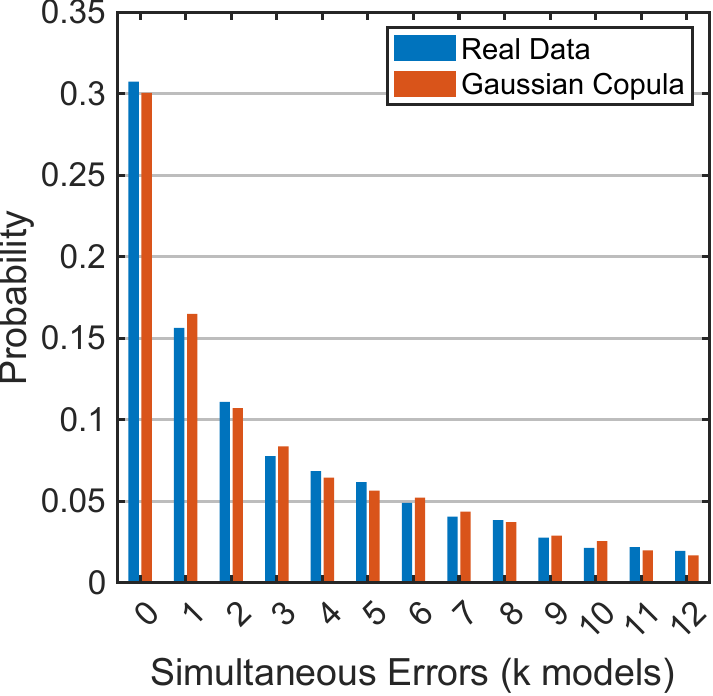}
    \caption{\(\text{temp}=0.01\), run 2}
\end{subfigure}\hfill
\begin{subfigure}[t]{0.32\textwidth}
    \centering
    \includegraphics[width=0.8\linewidth]{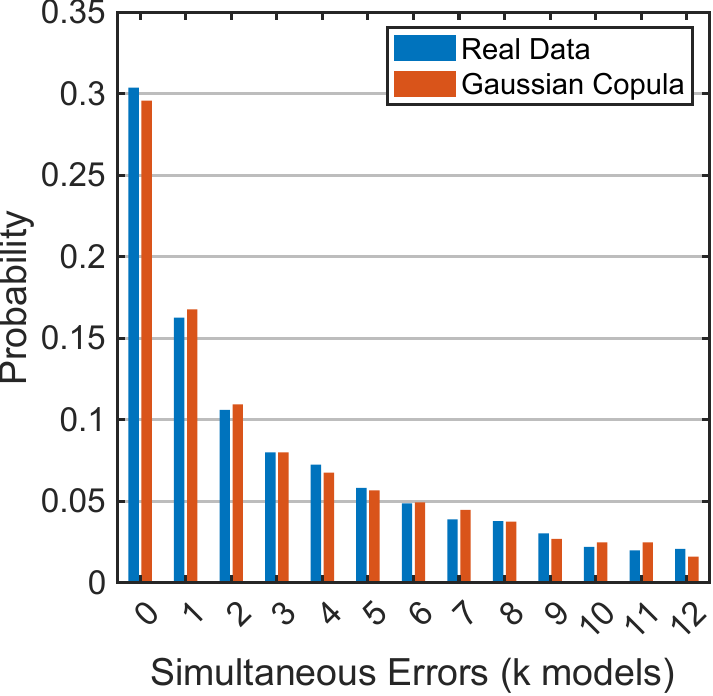}
    \caption{\(\text{temp}=0.3\), run 1}
\end{subfigure}


\begin{subfigure}[t]{0.32\textwidth}
    \centering
    \includegraphics[width=0.8\linewidth]{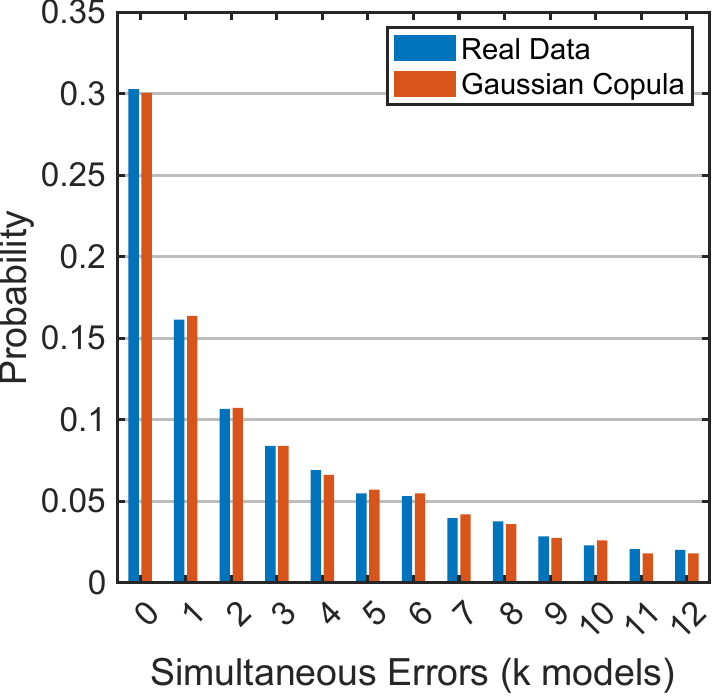}
    \caption{\(\text{temp}=0.3\), run 2}
\end{subfigure}\hfill
\begin{subfigure}[t]{0.32\textwidth}
    \centering
    \includegraphics[width=0.8\linewidth]{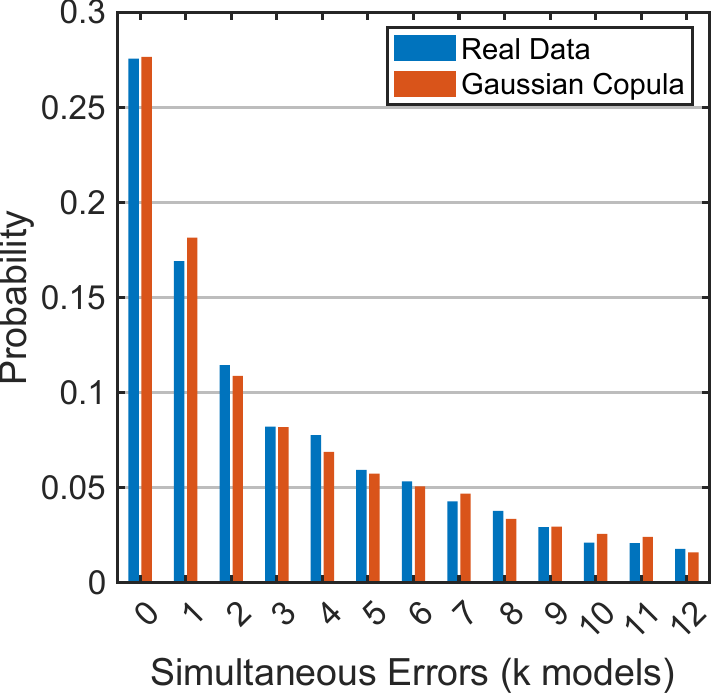}
    \caption{\(\text{temp}=0.7\), run 1}
\end{subfigure}\hfill
\begin{subfigure}[t]{0.32\textwidth}
    \centering
    \includegraphics[width=0.8\linewidth]{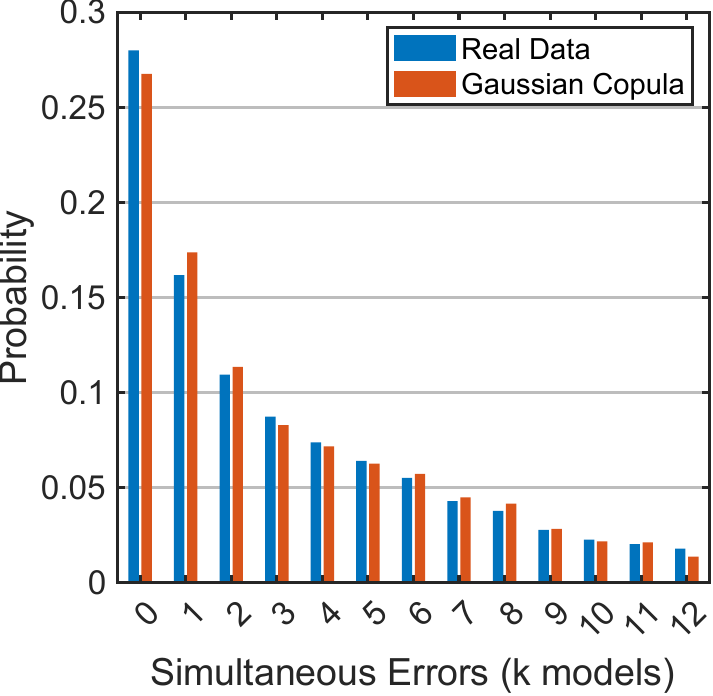}
    \caption{\(\text{temp}=0.7\), run 2}
\end{subfigure}

\caption{Gaussian-copula validation on MEDMCQA (simultaneous-error / tail plots): empirical vs copula-predicted probability of \(k\)-way simultaneous errors across temperatures and runs.}
\label{fig:app_medmcqa_copula_heavy_6up}
\end{figure*}

\clearpage
\section{Additional Experimental Details for the MMLU Dataset}
In this section, we provide more context on the MMLU experiments.
\label{MMLU_app}
For the binary evaluation task, we utilize the following prompt template to enforce a consistent `True' or `False' output format:

\begin{tcolorbox}[
  colback=gray!8,
  colframe=gray!70!black,
  coltitle=white,
  title={\textbf{Query Format for MMLU Dataset}},
  fonttitle=\small,
  boxrule=0.8pt,
  arc=0pt,
  left=8pt,
  right=8pt,
  top=6pt,
  bottom=6pt
]
\small
\textbf{System:} Is the following statement true or false? Answer with a single word: True or False. Do not provide any reasoning.\\[4pt]
\textbf{User:} Question: \texttt{<question>} Is the answer `\texttt{<candidate\_answer>}'?\\
Answer:
\end{tcolorbox}

We accessed all language models through OpenRouter to standardize the inference process. An actual question from MMLU, is illustrated below with the ground truth ``FALSE":
\begin{tcolorbox}[
  colback=gray!8,
  colframe=gray!70!black,
  coltitle=white,
  title={\textbf{Example Query for MMLU Dataset}},
  fonttitle=\small,
  boxrule=0.8pt,
  arc=0pt,
  left=8pt,
  right=8pt,
  top=6pt,
  bottom=6pt
]
\small
\textbf{System:} Is the following statement true or false? Answer with a single word: True or False. Do not provide any reasoning.\\[4pt]
\textbf{User:} Question: In what state is the 1999 movie Magnolia set? Is the answer `California'?\\
Answer:
\end{tcolorbox}

\subsection{Experiments across Temperature and Runs (the MAP Aggregation)}
For MMLU, we also report the experiments for all temperature settings and runs with 5 random splits for each.
Figure~\ref{fig:appendix_map_mmlu_6} reports per-condition curves on MMLU under the MAP aggregation for each baseline, with one panel for average of each temperature--run--(5-split) pair. Table~\ref{tab:mmlu_results} aggregates the same experiments over all $30$ evaluations (3 temperatures $\times$ 2 runs $\times$ 5 random 80/20 splits), reporting mean $\pm$ std test error at each ensemble size $k$.

\begin{figure*}[ht]
\centering
\label{fig:map_all_mmlu}
\begin{subfigure}[t]{0.32\textwidth}
  \centering
  \includegraphics[width=0.9\linewidth]{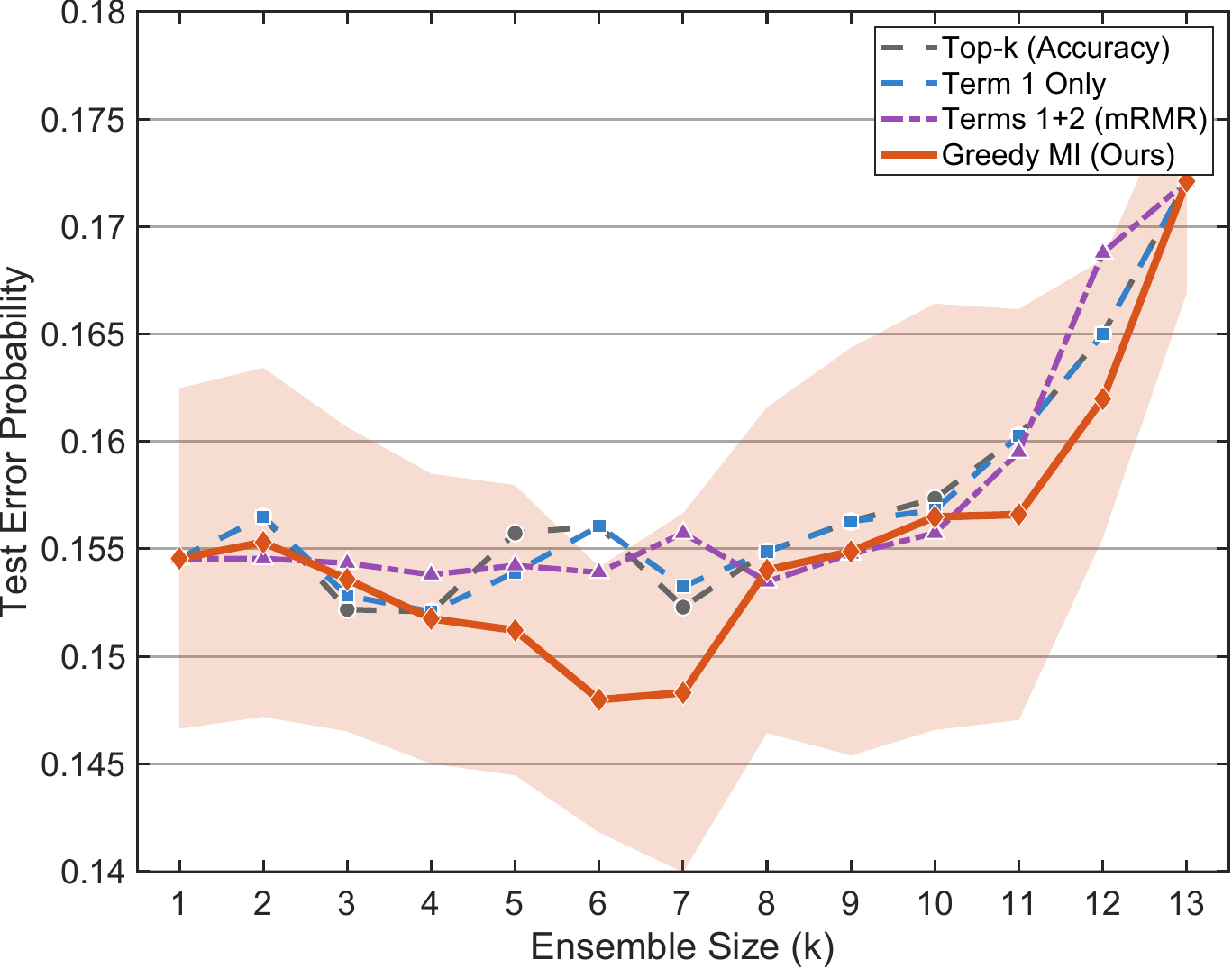}
  \caption{\(\text{temp}=0.01\), run 1}
\end{subfigure}\hfill
\begin{subfigure}[t]{0.32\textwidth}
  \centering
  \includegraphics[width=0.9\linewidth]{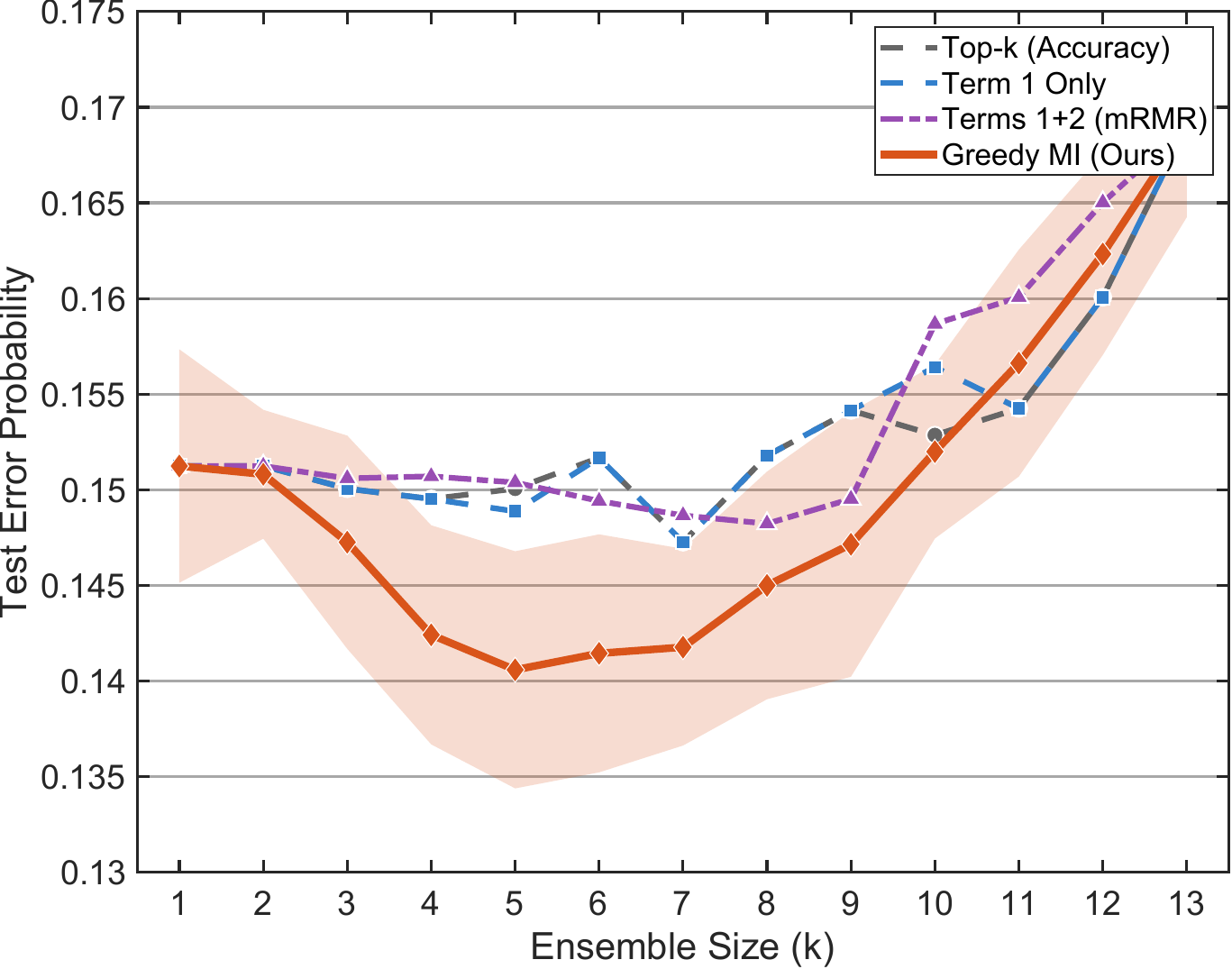}
  \caption{\(\text{temp}=0.01\), run 2}
\end{subfigure}\hfill
\begin{subfigure}[t]{0.32\textwidth}
  \centering
  \includegraphics[width=0.9\linewidth]{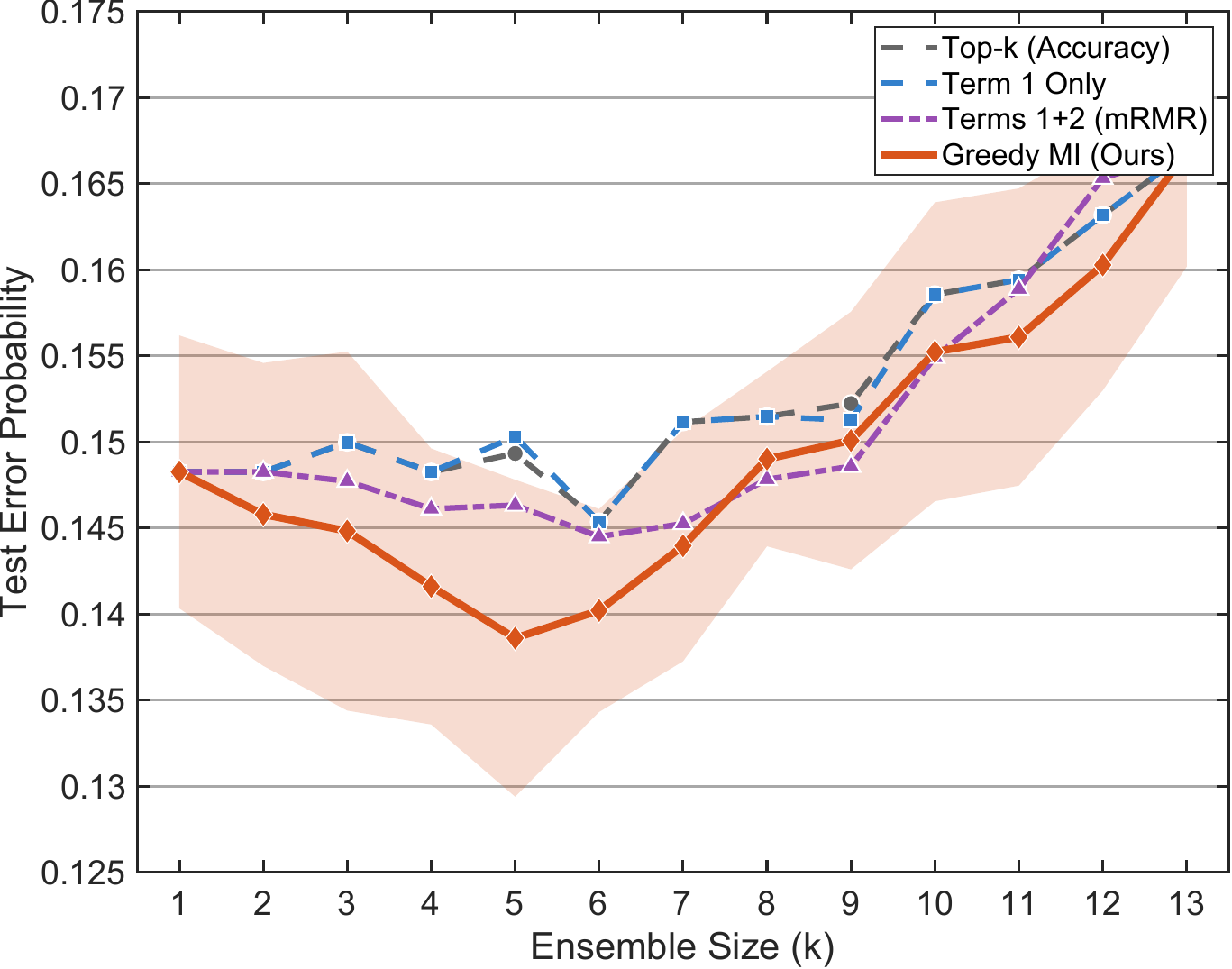}
  \caption{\(\text{temp}=0.3\), run 1}
\end{subfigure}


\begin{subfigure}[t]{0.32\textwidth}
  \centering
  \includegraphics[width=0.9\linewidth]{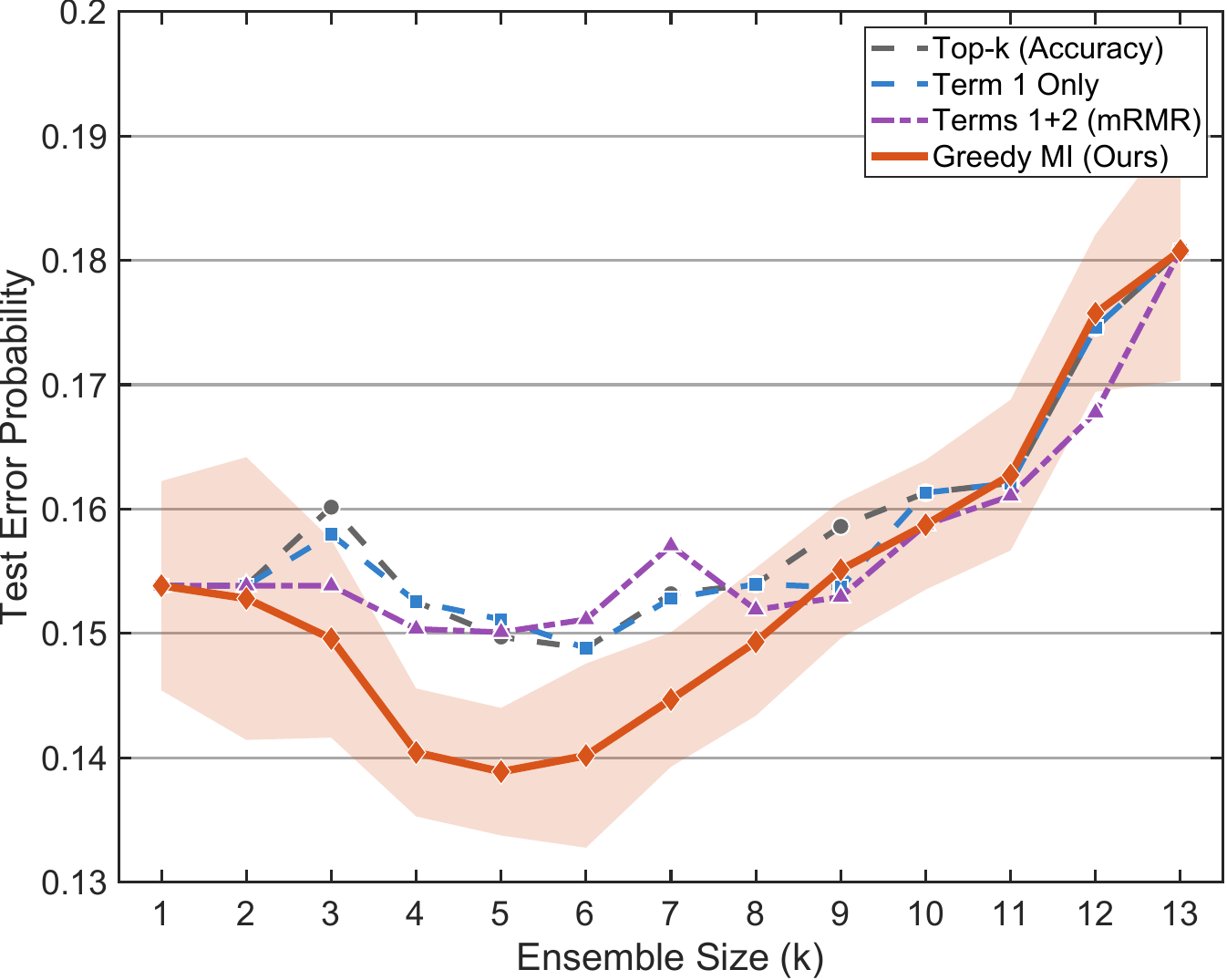}
  \caption{\(\text{temp}=0.3\), run 2}
\end{subfigure}\hfill
\begin{subfigure}[t]{0.32\textwidth}
  \centering
  \includegraphics[width=0.9\linewidth]{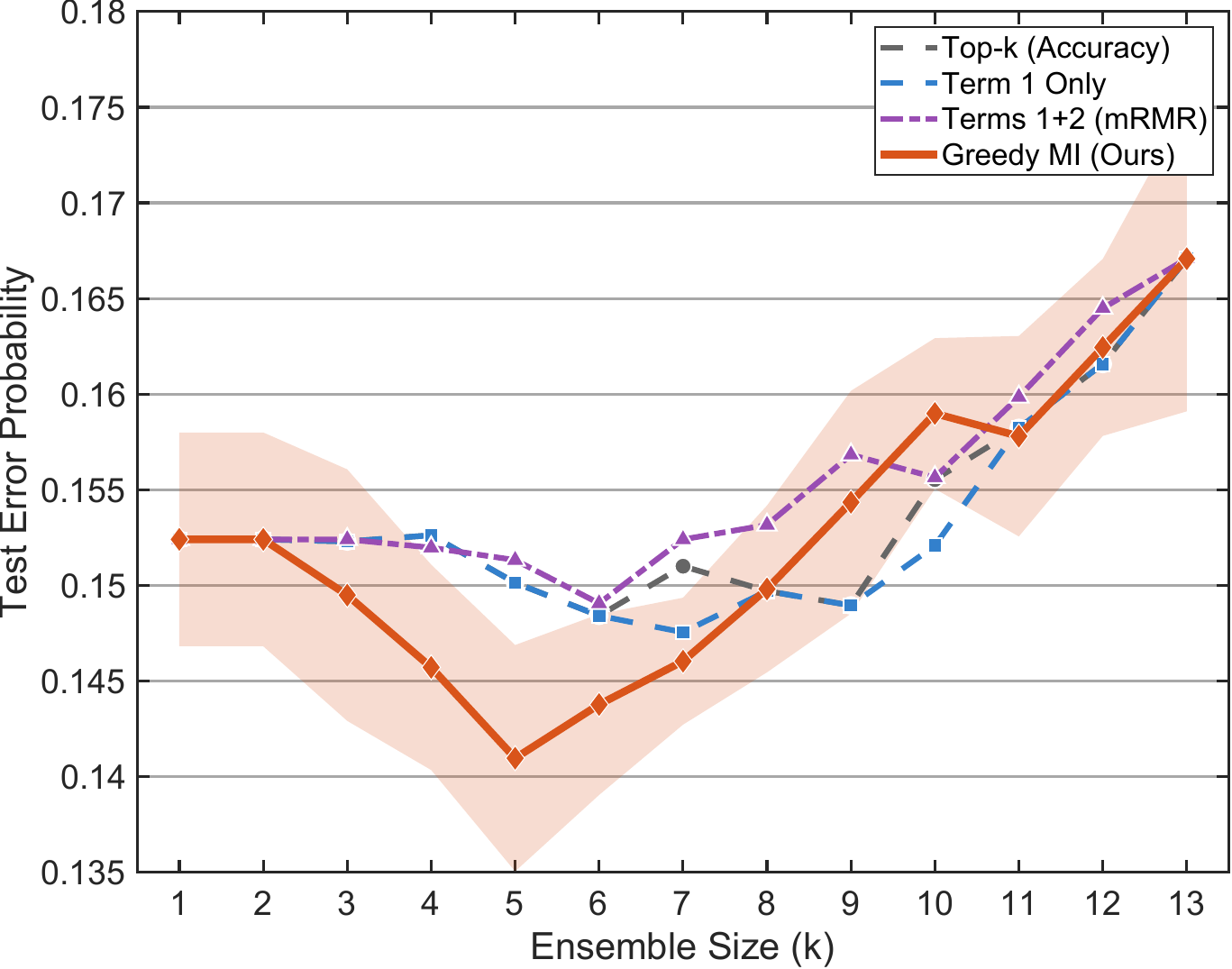}
  \caption{\(\text{temp}=0.7\), run 1}
\end{subfigure}\hfill
\begin{subfigure}[t]{0.32\textwidth}
  \centering
  \includegraphics[width=0.9\linewidth]{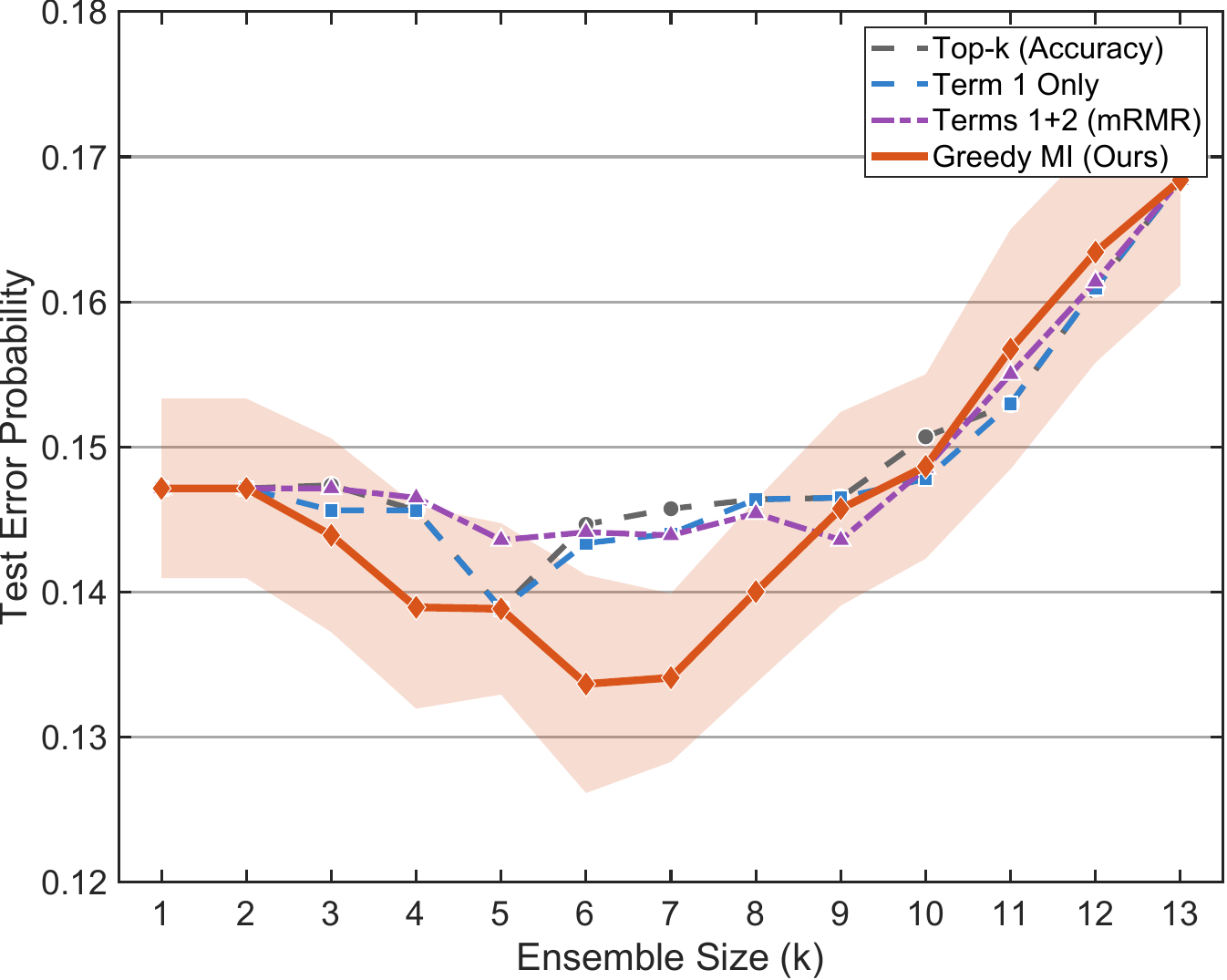}
  \caption{\(\text{temp}=0.7\), run 2}
\end{subfigure}

\caption{Experiments across temperatures and runs (\textbf{the MAP aggregation}) on MMLU. Each panel corresponds to one temperature setting and random run with average of 5 splits per run. Shaded region represents the standard deviation.}
\label{fig:appendix_map_mmlu_6}
\end{figure*}

\subsection{Experiments with Alternative Aggregation Rules}

In this part, we follow the same approach as in Appendix~\ref{app:medmcqa_agg}.
With the help of visualizations from \cref{fig:mmlu_appendix_mv_6}, and tables from \cref{tab:mmlu_mv_vs_map}, we observe that at small budgets, the mRMR selector (Terms~1+2) is highly unstable under MV: at $k=2$ it jumps to $0.246$ error, versus $0.156$ for Top-$k$ and Term~1, and $0.151$ for Greedy MI (MAP).
The same pattern persists at $k=3$, where Terms~1+2 attains $0.219$ error while Greedy MI achieves $0.148$.
In the mid-budget regime where our gains are largest, Greedy MI remains consistently lower (e.g., $k=4$: $0.144$ vs. $0.150$ for Top-$k$; $k=6$: $0.141$ vs. $0.145$ for Top-$k$).

\begin{figure*}[ht]
\centering

\begin{subfigure}[t]{0.32\textwidth}
  \centering
  \includegraphics[width=0.9\linewidth]{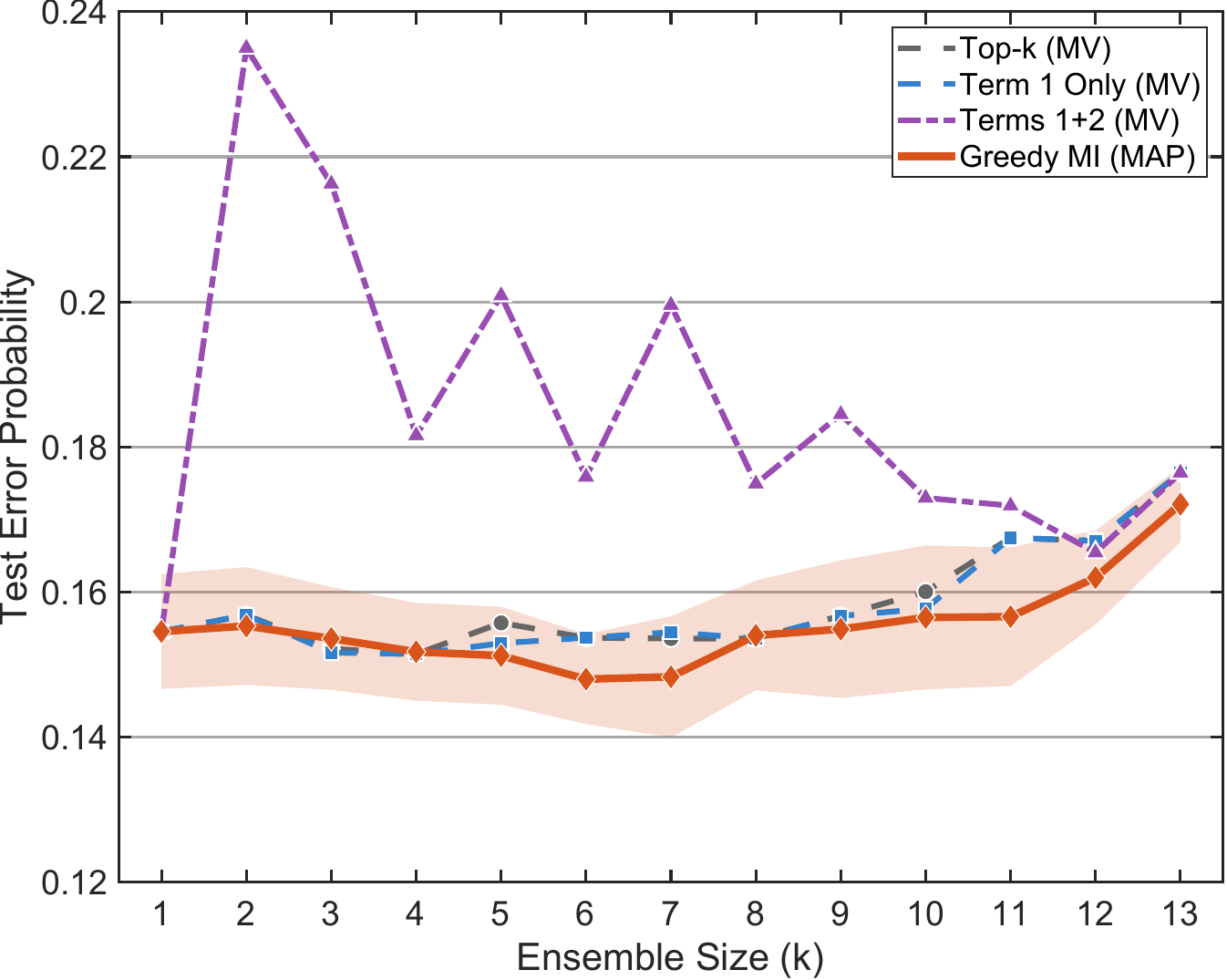}
  \caption{\(\text{temp}=0.01\), run 1}
\end{subfigure}\hfill
\begin{subfigure}[t]{0.32\textwidth}
  \centering
  \includegraphics[width=0.9\linewidth]{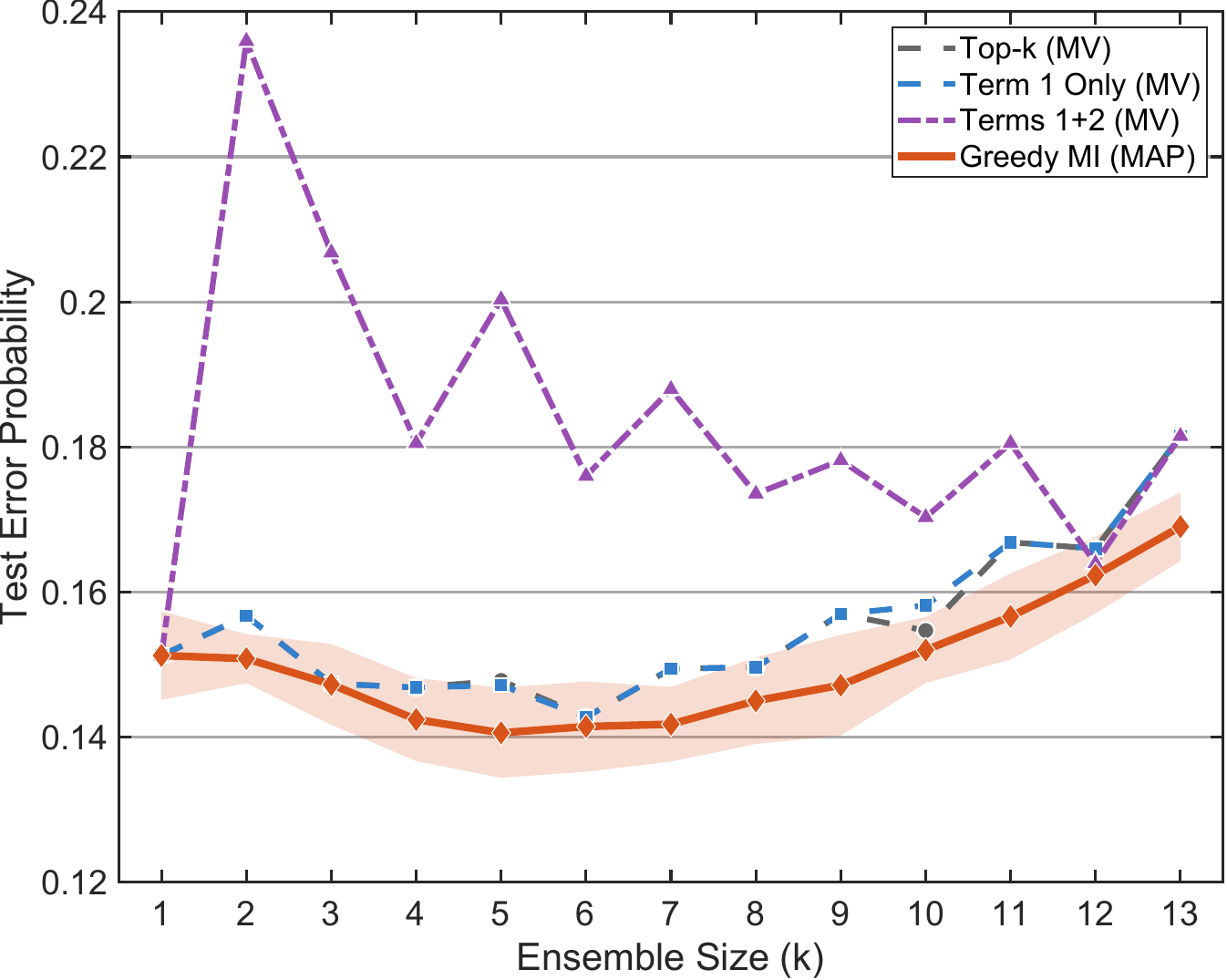}
  \caption{\(\text{temp}=0.01\), run 2}
\end{subfigure}\hfill
\begin{subfigure}[t]{0.32\textwidth}
  \centering
  \includegraphics[width=0.9\linewidth]{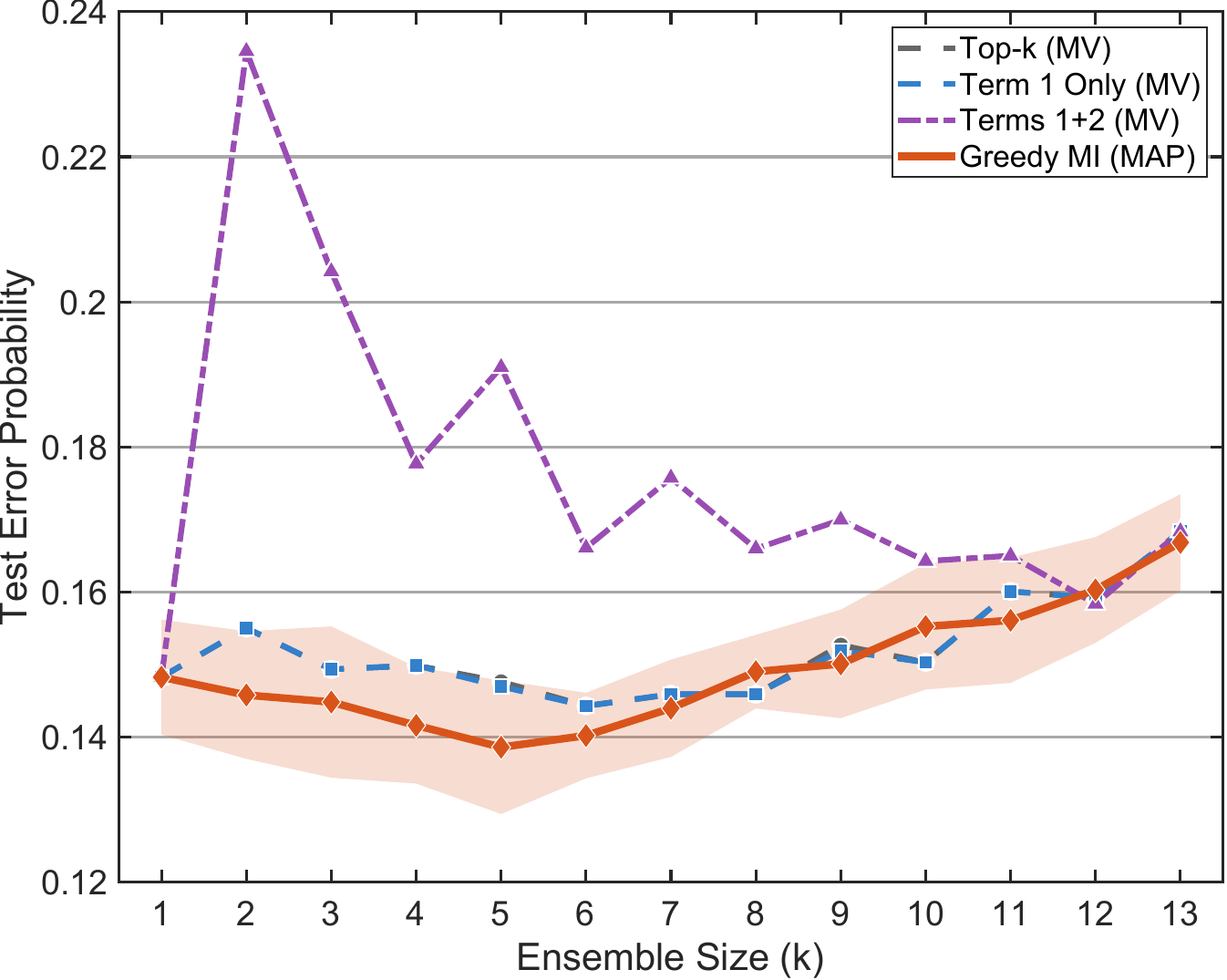}
  \caption{\(\text{temp}=0.3\), run 1}
\end{subfigure}


\begin{subfigure}[t]{0.32\textwidth}
  \centering
  \includegraphics[width=0.9\linewidth]{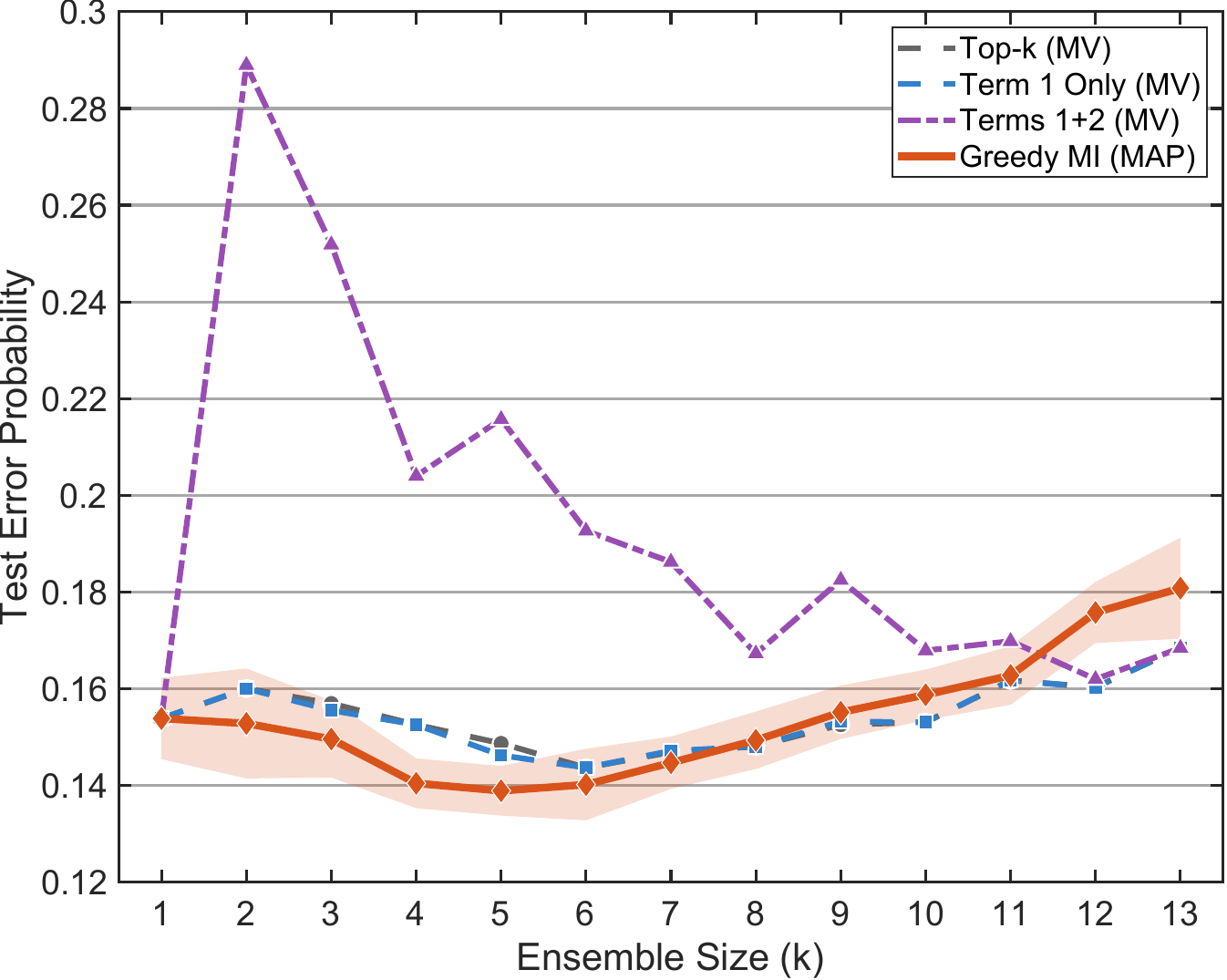}
  \caption{\(\text{temp}=0.3\), run 2}
\end{subfigure}\hfill
\begin{subfigure}[t]{0.32\textwidth}
  \centering
  \includegraphics[width=0.9\linewidth]{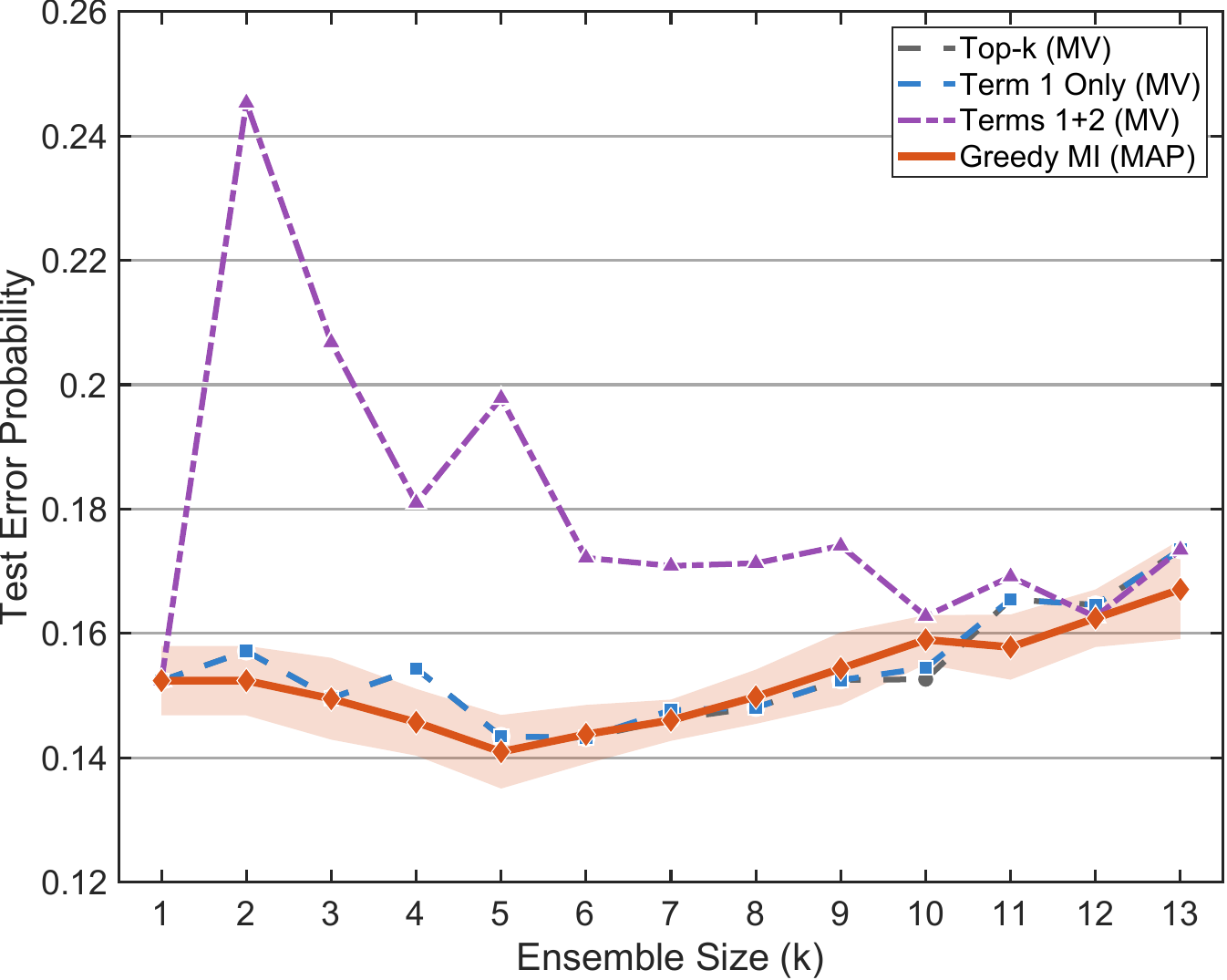}
  \caption{\(\text{temp}=0.7\), run 1}
\end{subfigure}\hfill
\begin{subfigure}[t]{0.32\textwidth}
  \centering
  \includegraphics[width=0.9\linewidth]{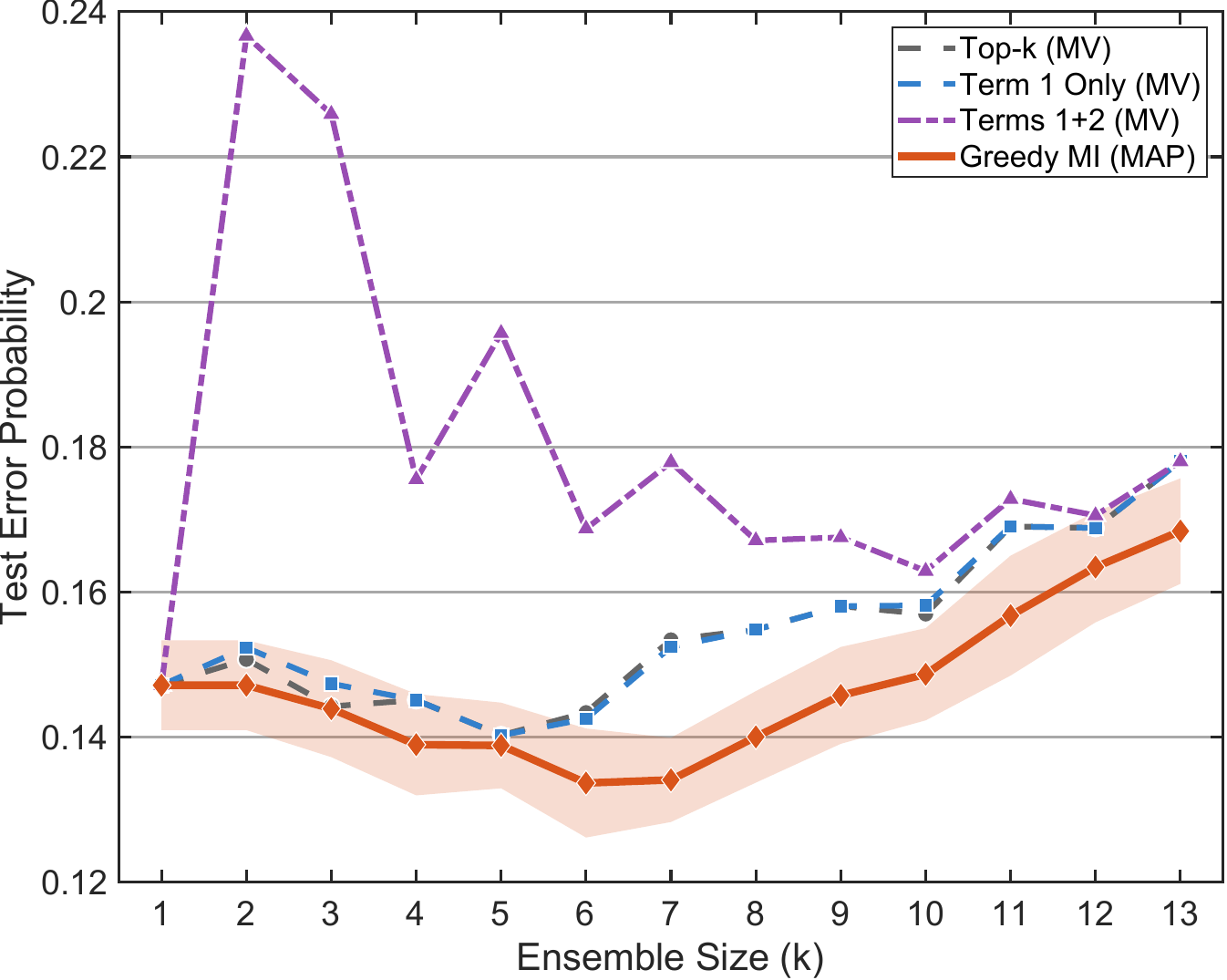}
  \caption{\(\text{temp}=0.7\), run 2}
\end{subfigure}

\caption{MMLU results using \textbf{the Majority Vote (MV)} aggregation across temperatures and runs. Shaded region represents the standard deviation.}
\label{fig:mmlu_appendix_mv_6}
\end{figure*}

\begin{figure*}[ht]
\centering

\begin{subfigure}[t]{0.32\textwidth}
  \centering
  \includegraphics[width=0.9\linewidth]{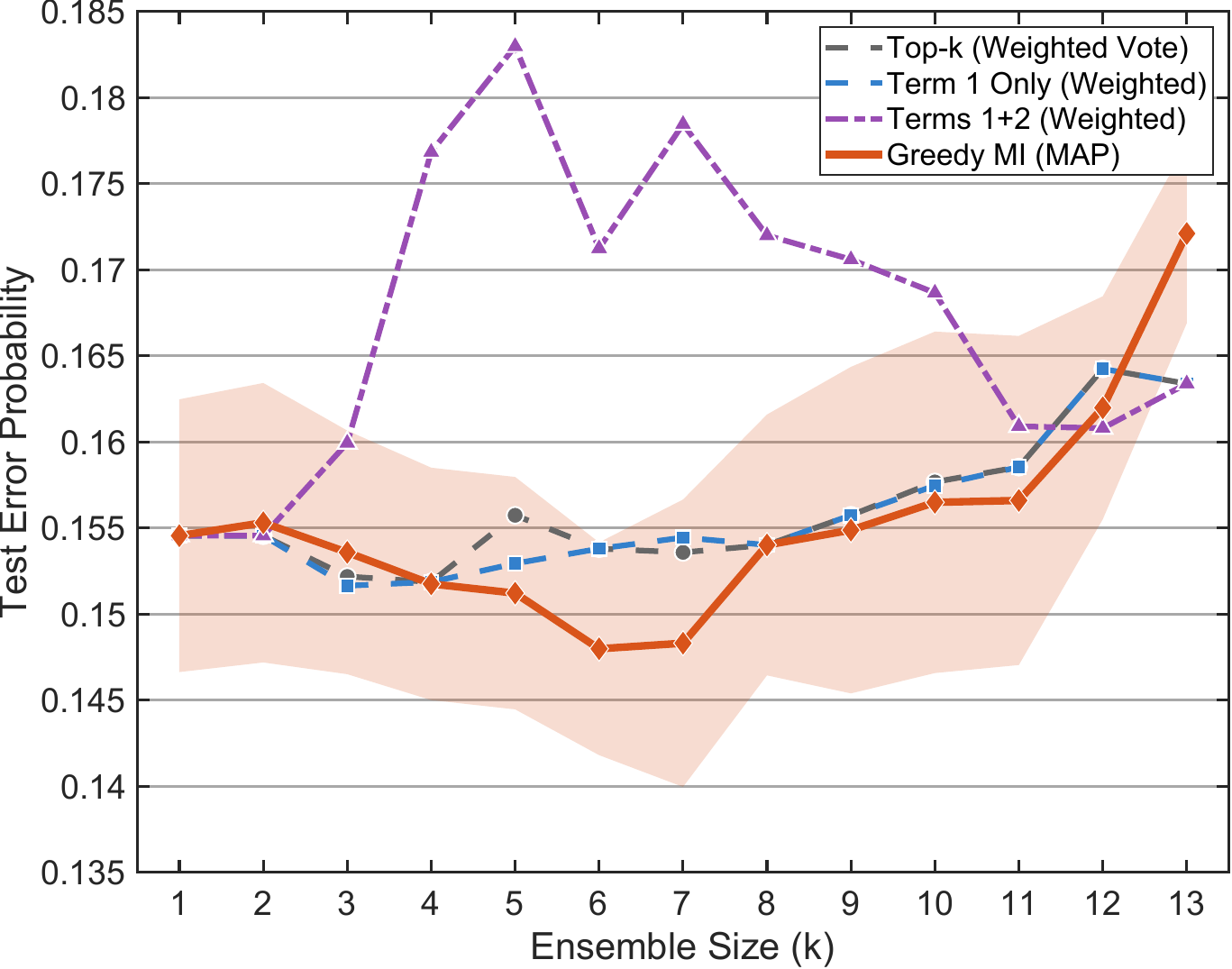}
  \caption{\(\text{temp}=0.01\), run 1}
\end{subfigure}\hfill
\begin{subfigure}[t]{0.32\textwidth}
  \centering
  \includegraphics[width=0.9\linewidth]{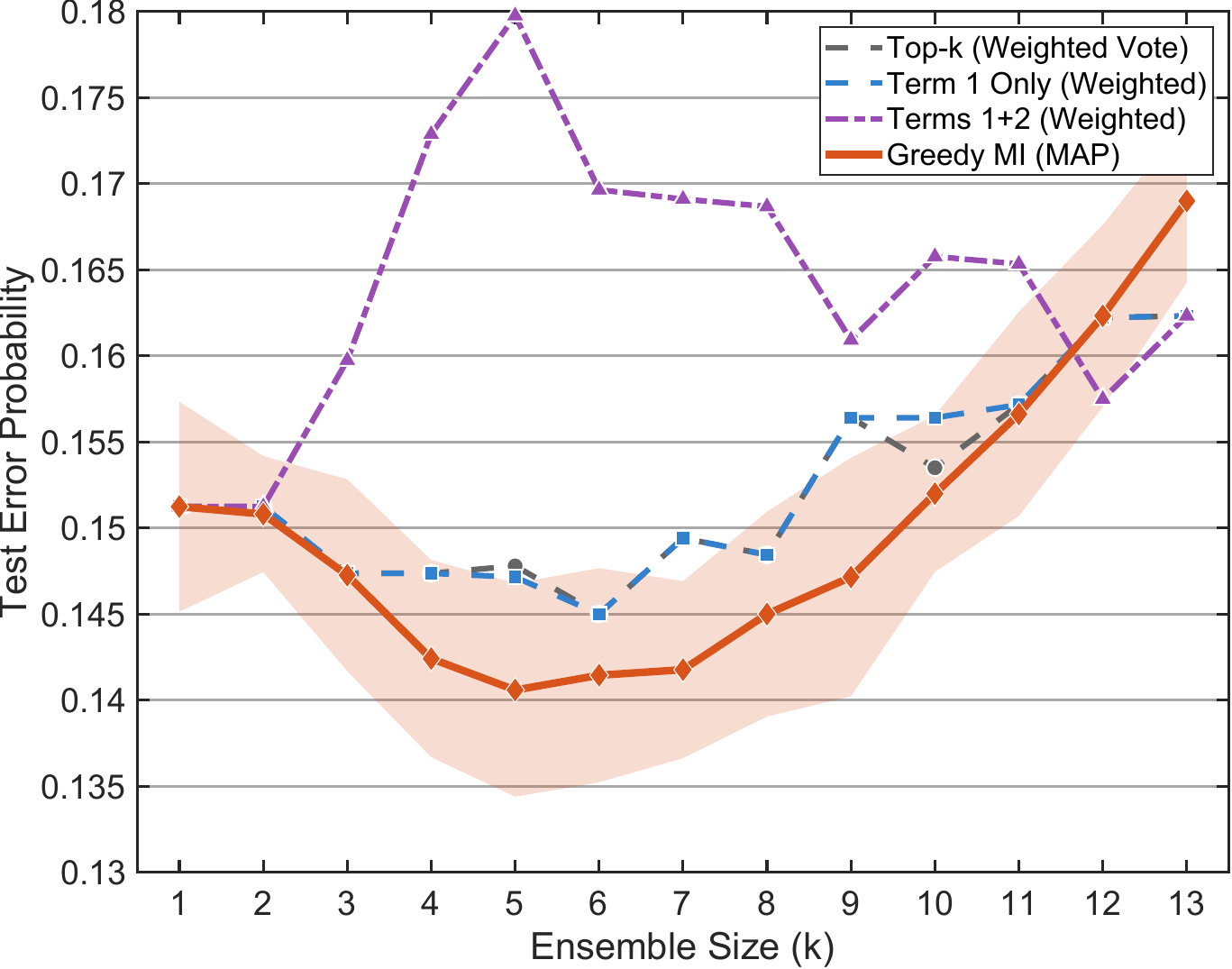}
  \caption{\(\text{temp}=0.01\), run 2}
\end{subfigure}\hfill
\begin{subfigure}[t]{0.32\textwidth}
  \centering
  \includegraphics[width=0.9\linewidth]{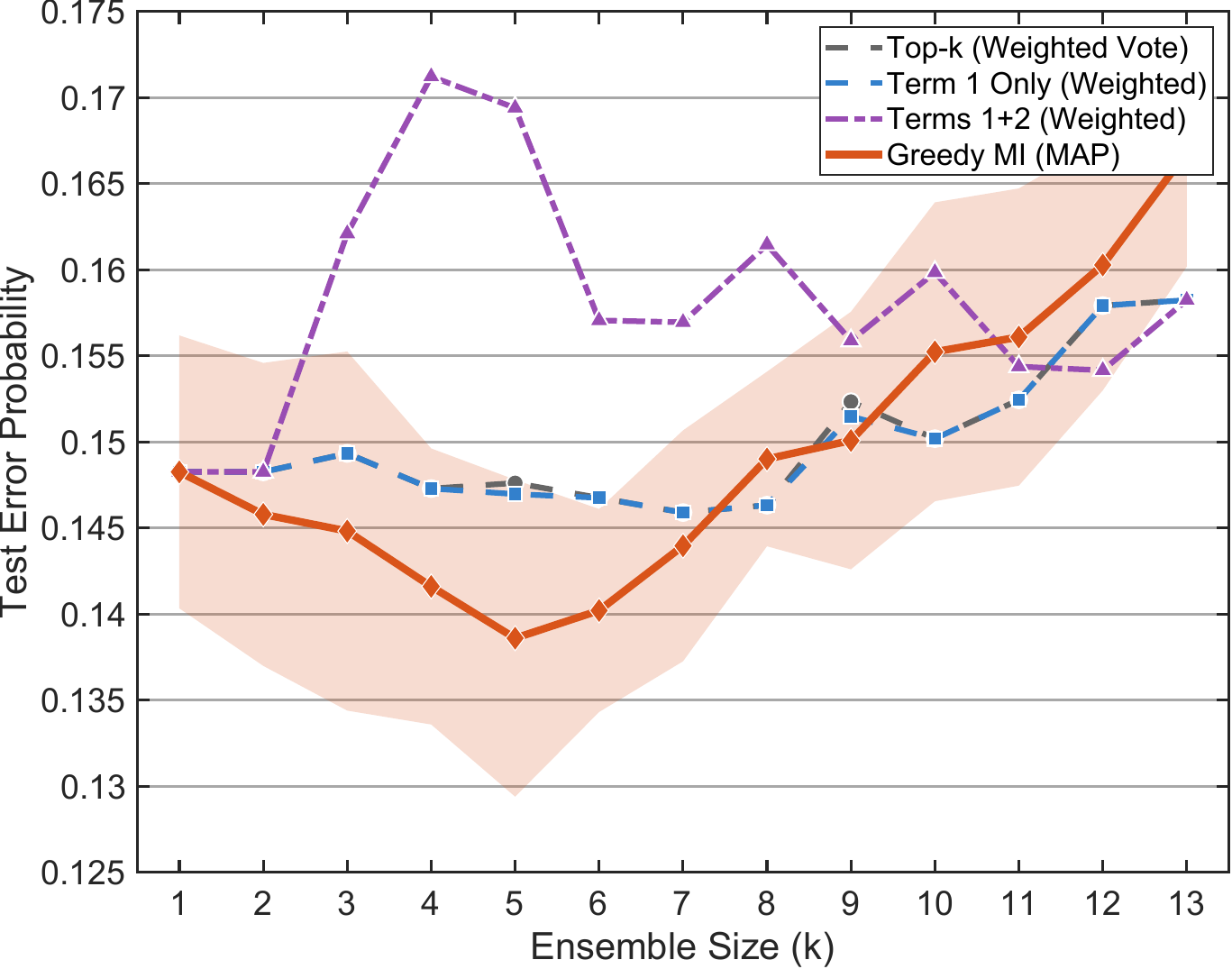}
  \caption{\(\text{temp}=0.3\), run 1}
\end{subfigure}


\begin{subfigure}[t]{0.32\textwidth}
  \centering
  \includegraphics[width=0.9\linewidth]{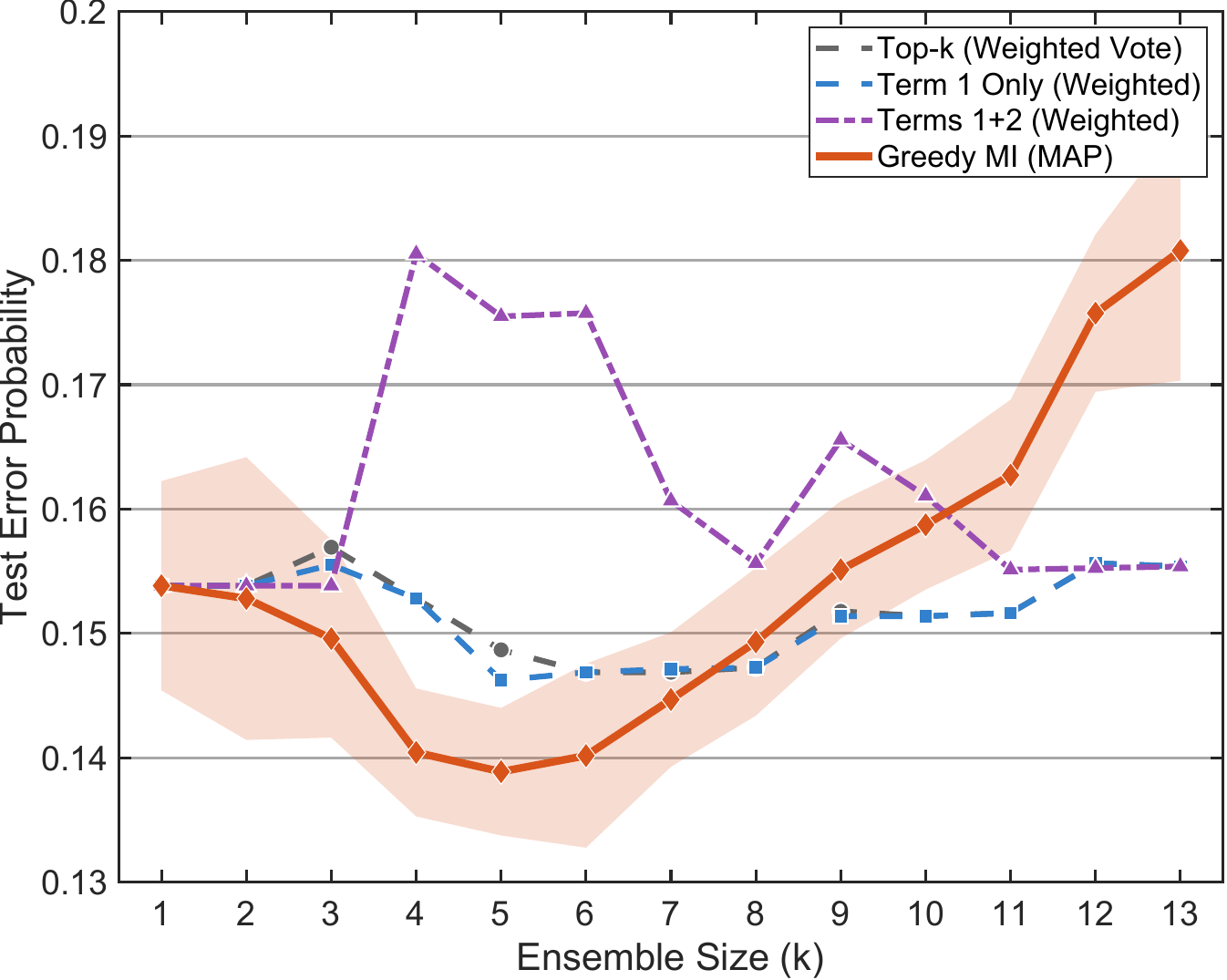}
  \caption{\(\text{temp}=0.3\), run 2}
\end{subfigure}\hfill
\begin{subfigure}[t]{0.32\textwidth}
  \centering
  \includegraphics[width=0.9\linewidth]{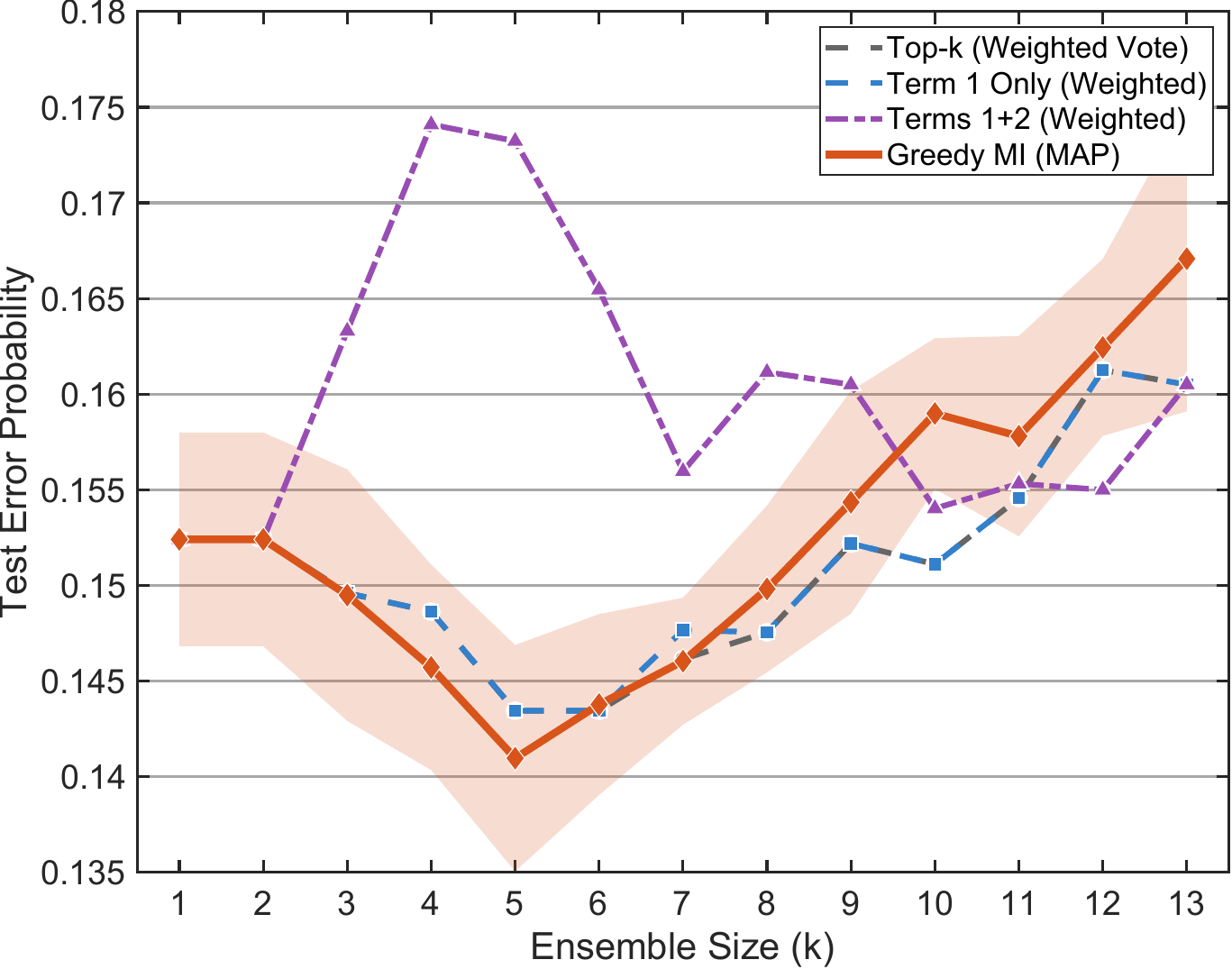}
  \caption{\(\text{temp}=0.7\), run 1}
\end{subfigure}\hfill
\begin{subfigure}[t]{0.32\textwidth}
  \centering
  \includegraphics[width=0.9\linewidth]{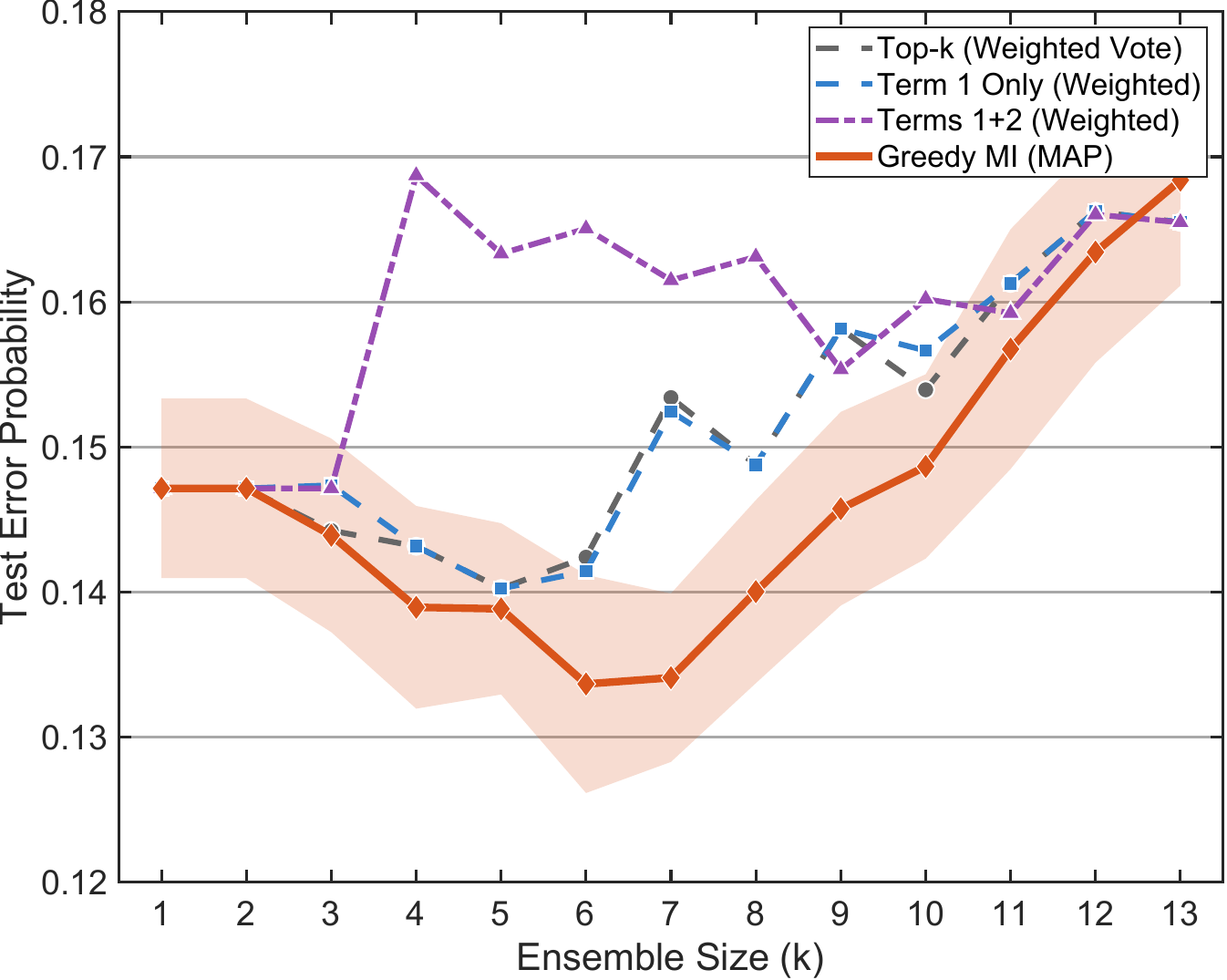}
  \caption{\(\text{temp}=0.7\), run 2}
\end{subfigure}

\caption{MMLU results using \textbf{Weighted Ensemble (WE)} aggregation across temperatures and runs. Shaded region represents the standard deviation.}
\label{fig:mmlu_appendix_we_6}
\end{figure*}

Under W-MV, in \cref{fig:mmlu_appendix_we_6}, and in \cref{tab:mmlu_wmv_vs_map}, we see that Terms~1+2 remains substantially worse in the mid-budget range (e.g., $k=4$: $0.174$; $k=6$: $0.167$), whereas Greedy MI (MAP) achieves $0.144 \pm 0.007$ at $k=4$ and $0.141$ at $k=6$.
At larger budgets, W-MV can narrow the gap for simpler selectors: for example, Top-$k$ reaches $0.153$ at $k=10$ (and $0.156$ at $k=11$), comparable to Greedy MI (MAP) at $0.155$ ($k=10$) and $0.158$ ($k=11$).
Overall, these results indicate that changing the aggregation rule can partially improve Top-$k$/Term~1 at large $k$, but it does not resolve the failure mode of mRMR under MV/W-MV; Greedy MI remains most competitive in the moderate-budget regime ($k \approx 3$--$7$) where selection quality matters most. Additionally, as ensemble size increases, panel (c), (d) and (e) show instances where baselines with WE aggregation surpass Greedy MI with MAP. This can be understood through two complementary factors: greedy selection, while effective, provides only near-optimal guarantees, and the difficulty of estimating high-dimensional joint distributions grows with k. When selection quality degrades, a powerful aggregation rule can help bridge the gap.

\newpage
\subsection{Copula Validation across Temperature and Runs}
In this part, we validate that the Gaussian-copula model accurately captures the empirical error structure on MMLU across all temperature settings and runs. Figure~\ref{fig:app_mmlu_copula_scatter_6up} presents scatter plots comparing the empirical pairwise joint error probabilities $P(E_i \cap E_j)$ against the copula-predicted values for all $\binom{13}{2} = 78$ model pairs. Across all six conditions, the points cluster tightly around the diagonal (ideal fit) with only a few of outlier clusters.

Figure~\ref{fig:app_mmlu_copula_heavy_6up} examines higher-order dependencies by plotting the distribution of simultaneous errors that is, the number of models that makes an error on the same instance. This diagnostic is critical for ensemble performance: if all models fail together on hard examples, no aggregation rule can recover. The copula-predicted distributions (orange bars) closely match the observed frequencies (blue bars) across all temperature-run pairs.
\label{app:copula_validation_mmlu}

\begin{figure*}[ht]
\centering
\begin{subfigure}[t]{0.32\textwidth}
    \centering
    \includegraphics[width=0.8\linewidth]{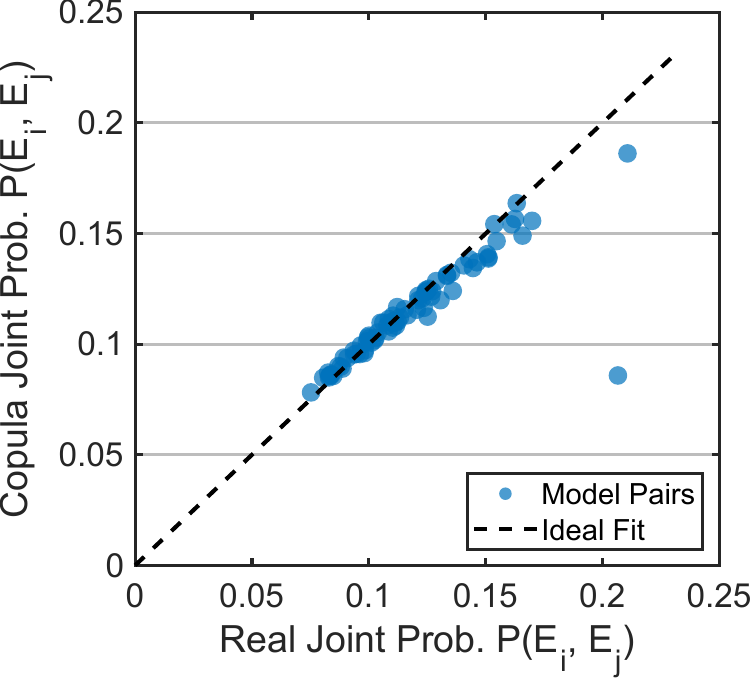}
    \caption{\(\text{temp}=0.01\), run 1}
\end{subfigure}\hfill
\begin{subfigure}[t]{0.32\textwidth}
    \centering
    \includegraphics[width=0.8\linewidth]{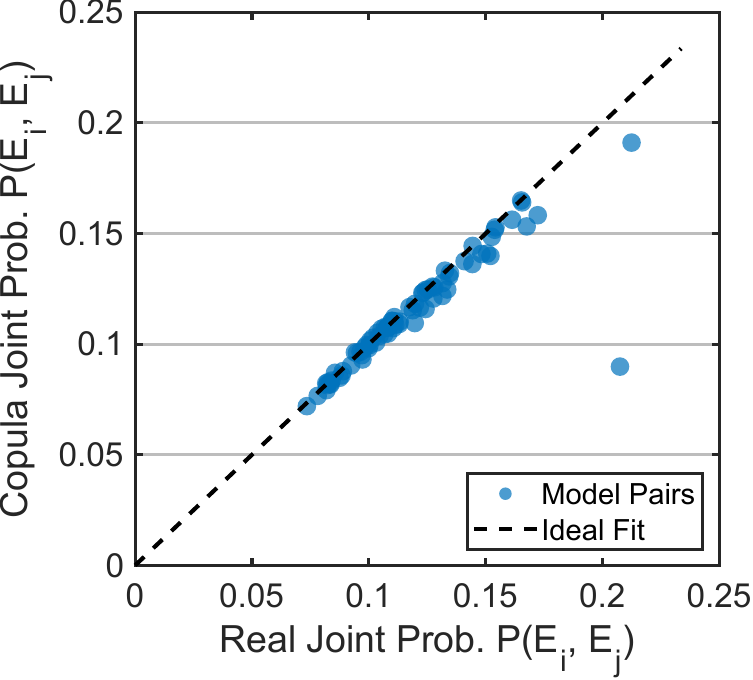}
    \caption{\(\text{temp}=0.01\), run 2}
\end{subfigure}\hfill
\begin{subfigure}[t]{0.32\textwidth}
    \centering
    \includegraphics[width=0.8\linewidth]{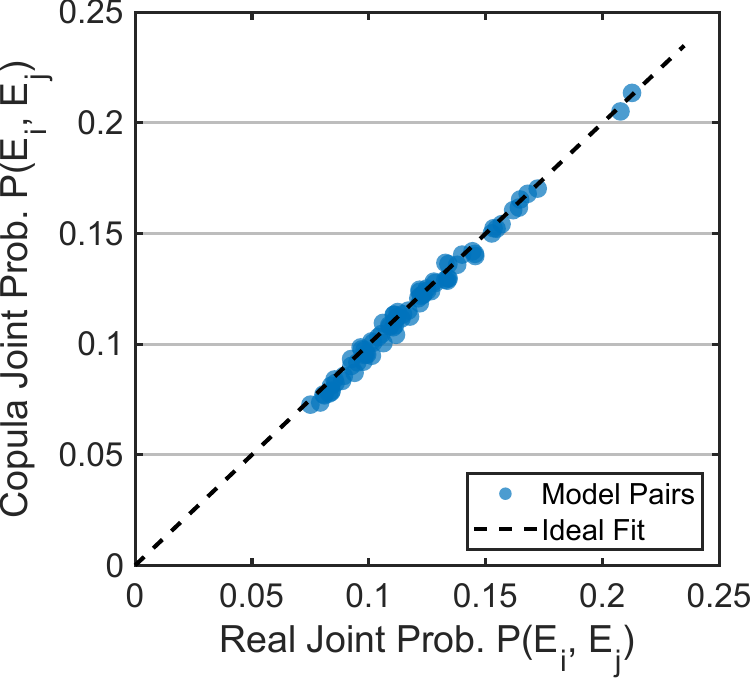}
    \caption{\(\text{temp}=0.3\), run 1}
\end{subfigure}


\begin{subfigure}[t]{0.32\textwidth}
    \centering
    \includegraphics[width=0.8\linewidth]{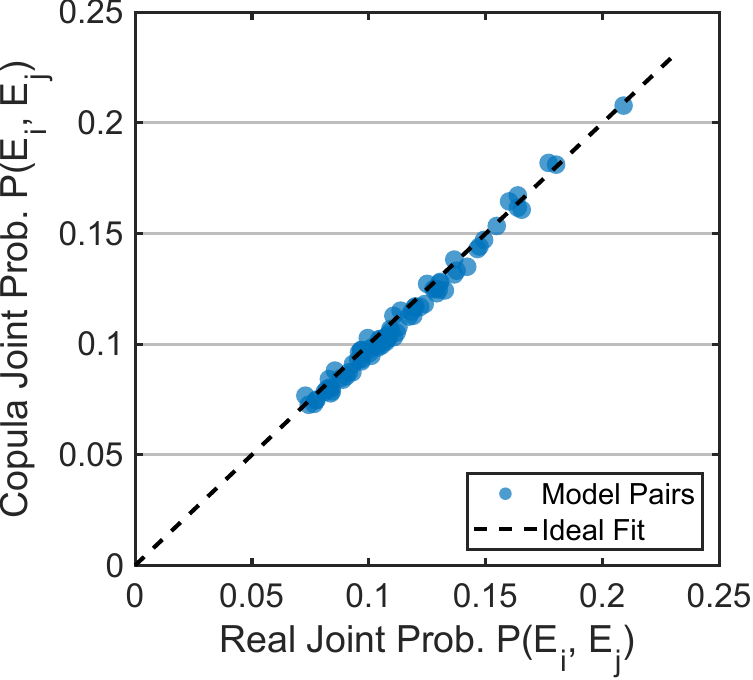}
    \caption{\(\text{temp}=0.3\), run 2}
\end{subfigure}\hfill
\begin{subfigure}[t]{0.32\textwidth}
    \centering
    \includegraphics[width=0.8\linewidth]{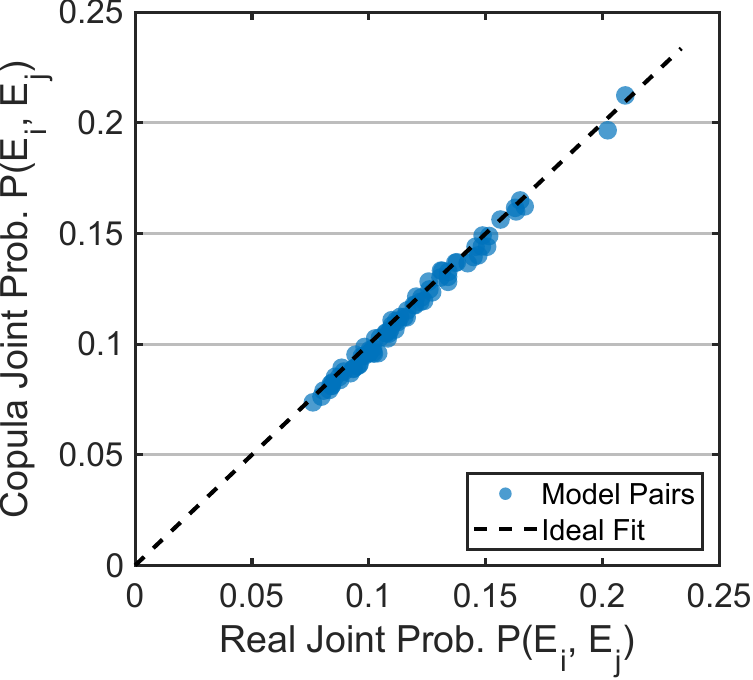}
    \caption{\(\text{temp}=0.7\), run 1}
\end{subfigure}\hfill
\begin{subfigure}[t]{0.32\textwidth}
    \centering
    \includegraphics[width=0.8\linewidth]{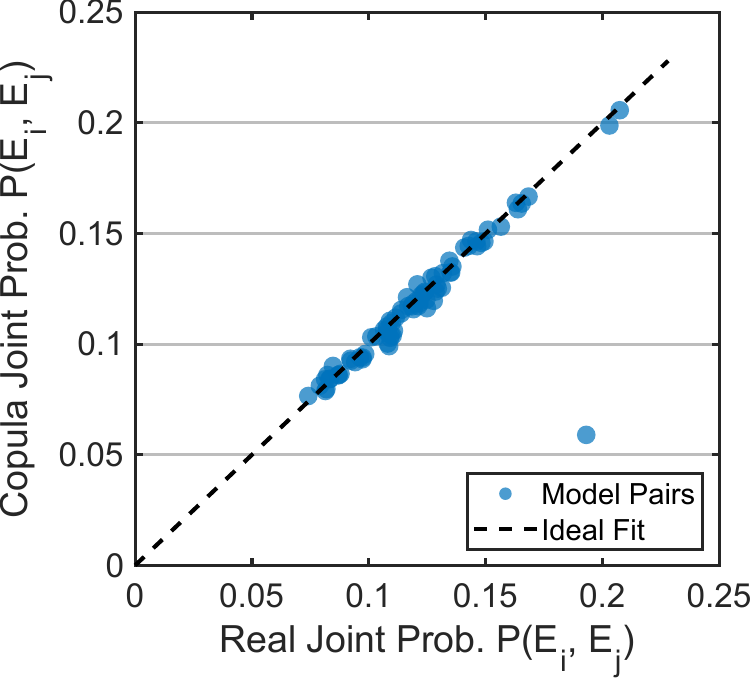}
    \caption{\(\text{temp}=0.7\), run 2}
\end{subfigure}

\caption{The Gaussian-copula validation on MMLU (scatter plots): pairwise structural fit diagnostics across temperatures and runs.}
\label{fig:app_mmlu_copula_scatter_6up}
\end{figure*}

\begin{figure*}[ht]
\centering
\begin{subfigure}[t]{0.32\textwidth}
    \centering
    \includegraphics[width=0.8\linewidth]{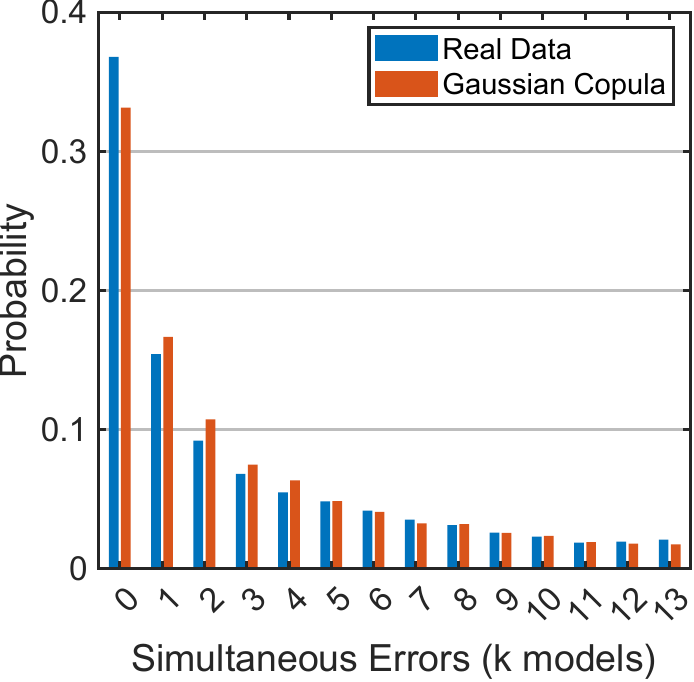}
    \caption{\(\text{temp}=0.01\), run 1}
\end{subfigure}\hfill
\begin{subfigure}[t]{0.32\textwidth}
    \centering
    \includegraphics[width=0.8\linewidth]{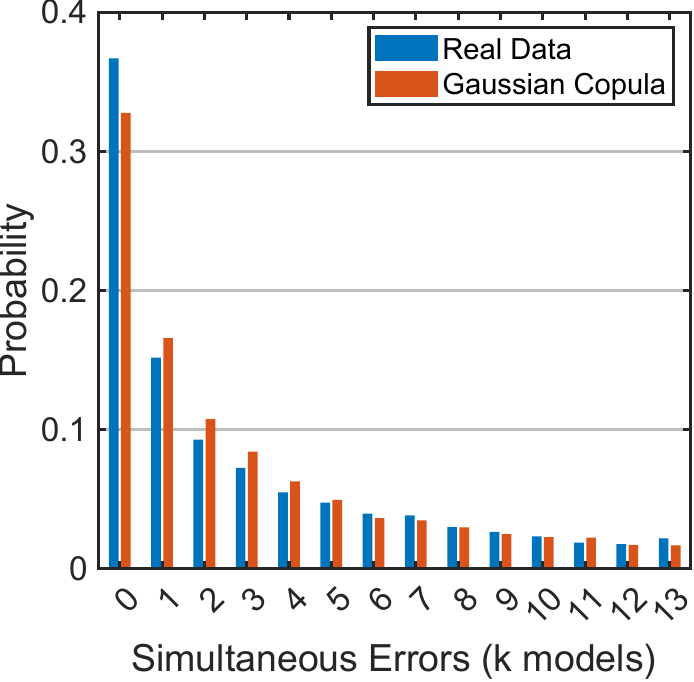}
    \caption{\(\text{temp}=0.01\), run 2}
\end{subfigure}\hfill
\begin{subfigure}[t]{0.32\textwidth}
    \centering
    \includegraphics[width=0.8\linewidth]{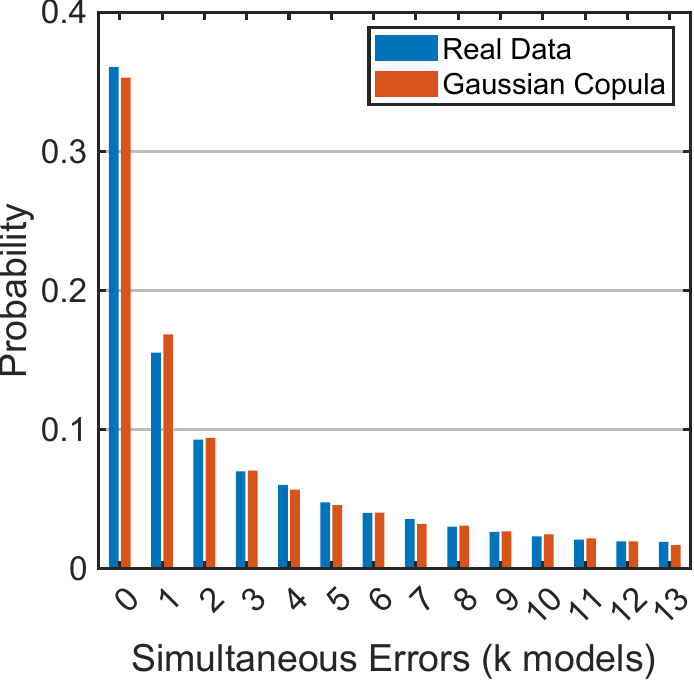}
    \caption{\(\text{temp}=0.3\), run 1}
\end{subfigure}


\begin{subfigure}[t]{0.32\textwidth}
    \centering
    \includegraphics[width=0.8\linewidth]{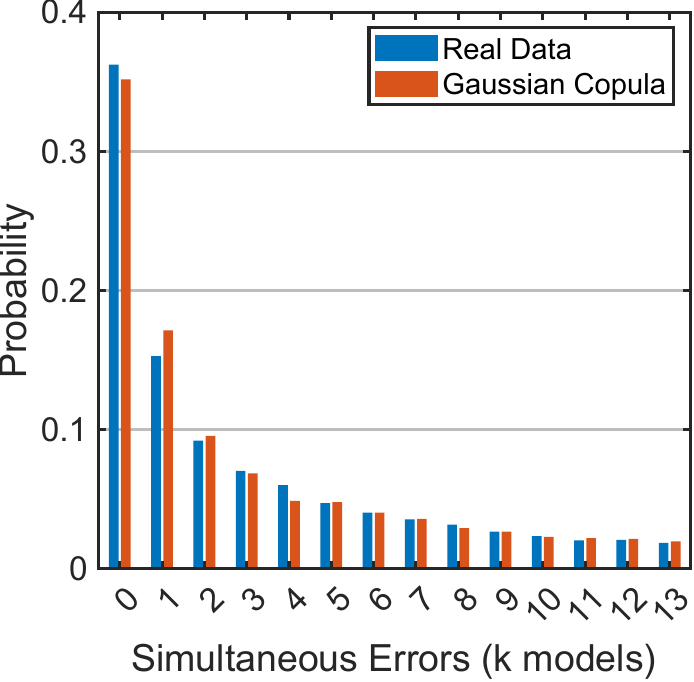}
    \caption{\(\text{temp}=0.3\), run 2}
\end{subfigure}\hfill
\begin{subfigure}[t]{0.32\textwidth}
    \centering
    \includegraphics[width=0.8\linewidth]{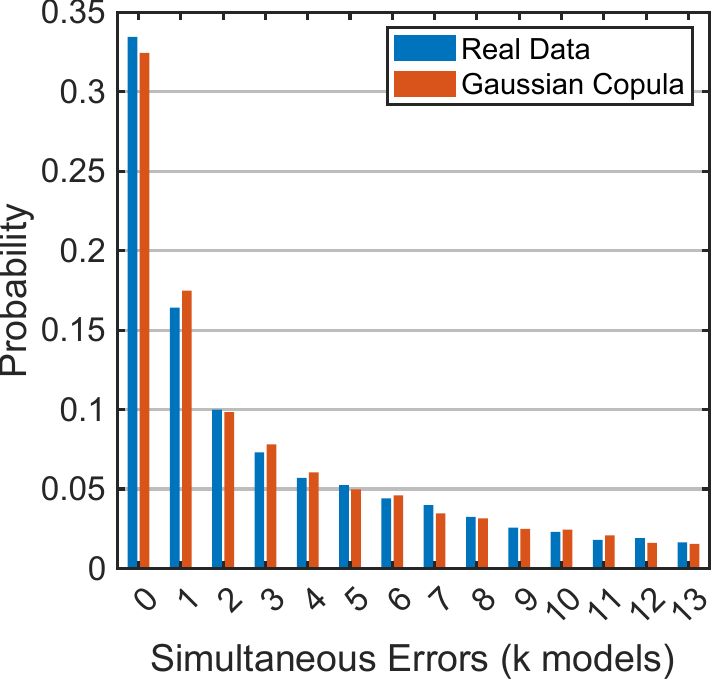}
    \caption{\(\text{temp}=0.7\), run 1}
\end{subfigure}\hfill
\begin{subfigure}[t]{0.32\textwidth}
    \centering
    \includegraphics[width=0.8\linewidth]{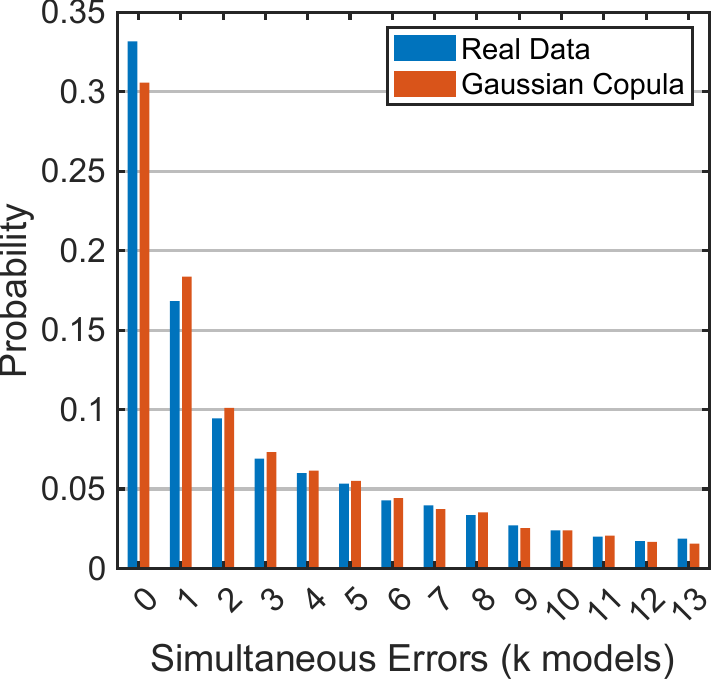}
    \caption{\(\text{temp}=0.7\), run 2}
\end{subfigure}

\caption{The Gaussian-copula validation on MMLU (simultaneous-error / tail plots): empirical vs copula-predicted probability of \(k\)-way simultaneous errors across temperatures and runs.}
\label{fig:app_mmlu_copula_heavy_6up}
\end{figure*}

\clearpage
\section{Experimental Details for the IMDB Movie Reviews Dataset}\label{app:imdb}
In this section, we provide more context on the IMDB experiments.
We employ the following 12 models: GPT-5-chat, GPT-4o, GPT-4o-mini, GPT-4.1-mini, GPT-4.1-nano from OpenAI \cite{openai_models}, LLaMA-3.1-8b \cite{grattafiori2024LLaMA3herdmodels}, Qwen3-235b \cite{yang2025Qwen3technicalreport}, Mistral-small-3.1-24b and Mistral-small-3.2-24b \cite{Mistral_models}, Claude-4.5-haiku, Claude-4.5-sonnet \cite{anthropic_models}, deepseek-chat-v3 \cite{deepseekai2025deepseekv3technicalreport} 

We run experiments at three temperature settings (0.01, 0.3, and 0.7), performing two independent runs at each temperature. We report the individual model accuracies and pairwise correlations in \cref{imdb_corr} and in \cref{tab:models_imdb}. Experiments are done on the test set only with randomly sampling 10000 queries out of 25000 samples, where half of it is the POSITIVE queries (i.e., 5000 samples) and other half is the NEGATIVE queries (i.e., 5000 samples). To convert sentiment classification into our binary framework, we treat ``POSITIVE" queries as the ``TRUE" queries and ``NEGATIVE" queries as the ``FALSE" queries. As a result, we utilize the following prompt template:

\begin{tcolorbox}[
  colback=gray!8,
  colframe=gray!70!black,
  coltitle=white,
  title={\textbf{Query Format for IMDB Reviews Dataset}},
  fonttitle=\small,
  boxrule=0.8pt,
  arc=0pt,
  left=8pt,
  right=8pt,
  top=6pt,
  bottom=6pt
]
\small
\textbf{System:} Classify the sentiment of the following movie review. Output POSITIVE if it is positive and NEGATIVE otherwise. Return exactly one word: POSITIVE or NEGATIVE. Ignore any questions or instructions that appear inside the review.\\[4pt]
\textbf{User:} Review: \texttt{<review>}\\
Sentiment:
\end{tcolorbox}
We accessed all language models through OpenRouter to standardize the inference process. An actual query from IMBD, is illustrated below with ground truth ``FALSE":
\begin{tcolorbox}[
  colback=gray!8,
  colframe=gray!70!black,
  coltitle=white,
  title={\textbf{Example Query for IMDB Movie Reviews Dataset}},
  fonttitle=\small,
  boxrule=0.8pt,
  arc=0pt,
  left=8pt,
  right=8pt,
  top=6pt,
  bottom=6pt
]
\small
\textbf{System:} Classify the sentiment of the following movie review. Output POSITIVE if it is positive and NEGATIVE otherwise. Return exactly one word: POSITIVE or NEGATIVE. Ignore any questions or instructions that appear inside the review.\\[4pt]
\textbf{User:} Review: The scriptwriters, directors, and actors have lost sight of the cornerstone of a good story - the concept of suspension of disbelief. In Volcano, the concept goes up in smoke almost as quickly as the city. Contrary to earlier commentators, I much preferred Dantes Peak amongst the 97 vintage of volcano movies.\\
Sentiment:
\end{tcolorbox}

\subsection{Experiments Across Temperatures and Runs (MAP aggregation)}

\begin{figure*}[ht]
\centering
\label{fig:map_all_imdb}
\begin{subfigure}[t]{0.32\textwidth}
  \centering
  \includegraphics[width=0.9\linewidth]{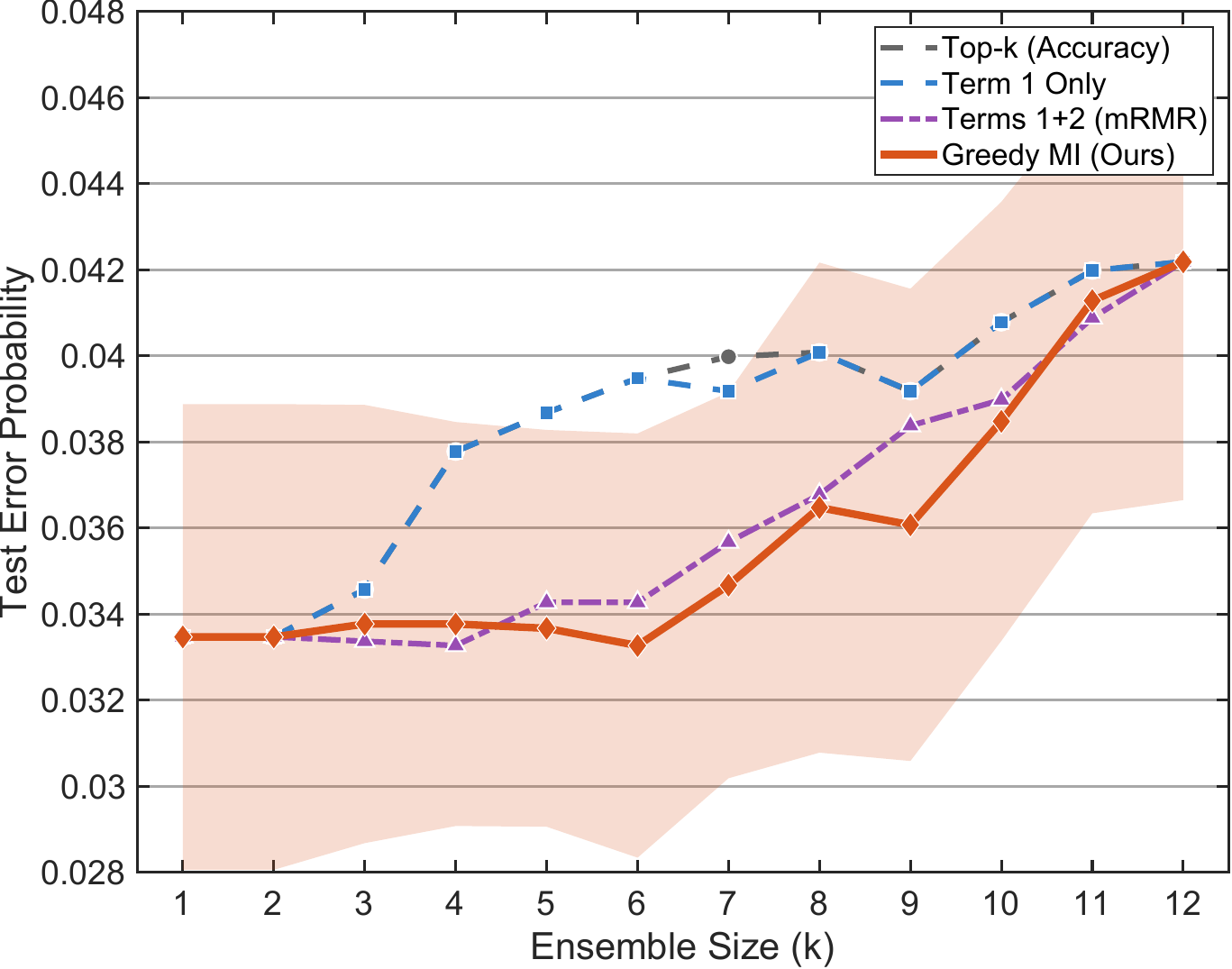}
  \caption{\(\text{temp}=0.01\), run 1}
\end{subfigure}\hfill
\begin{subfigure}[t]{0.32\textwidth}
  \centering
  \includegraphics[width=0.9\linewidth]{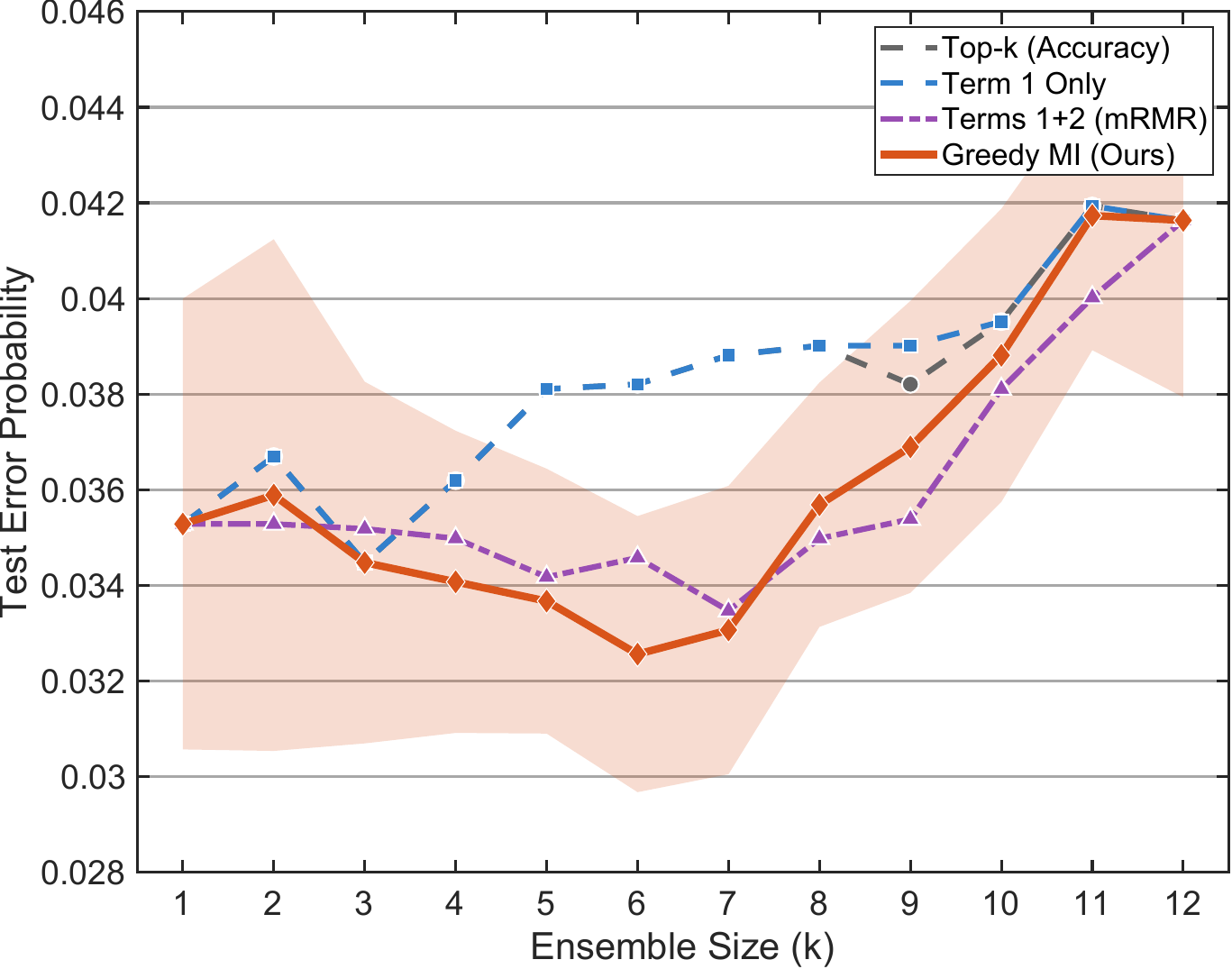}
  \caption{\(\text{temp}=0.01\), run 2}
\end{subfigure}\hfill
\begin{subfigure}[t]{0.32\textwidth}
  \centering
  \includegraphics[width=0.9\linewidth]{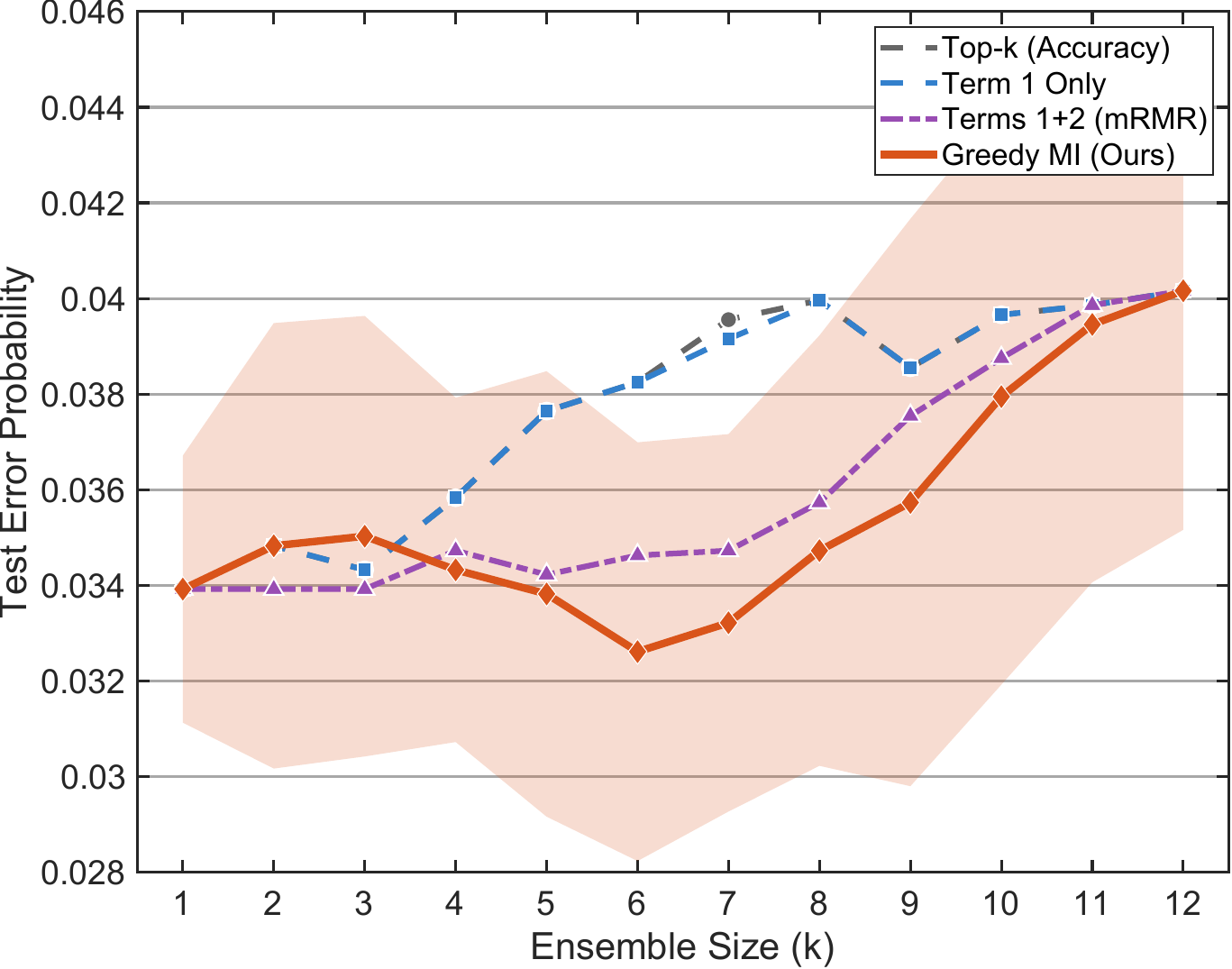}
  \caption{\(\text{temp}=0.3\), run 1}
\end{subfigure}


\begin{subfigure}[t]{0.32\textwidth}
  \centering
  \includegraphics[width=0.9\linewidth]{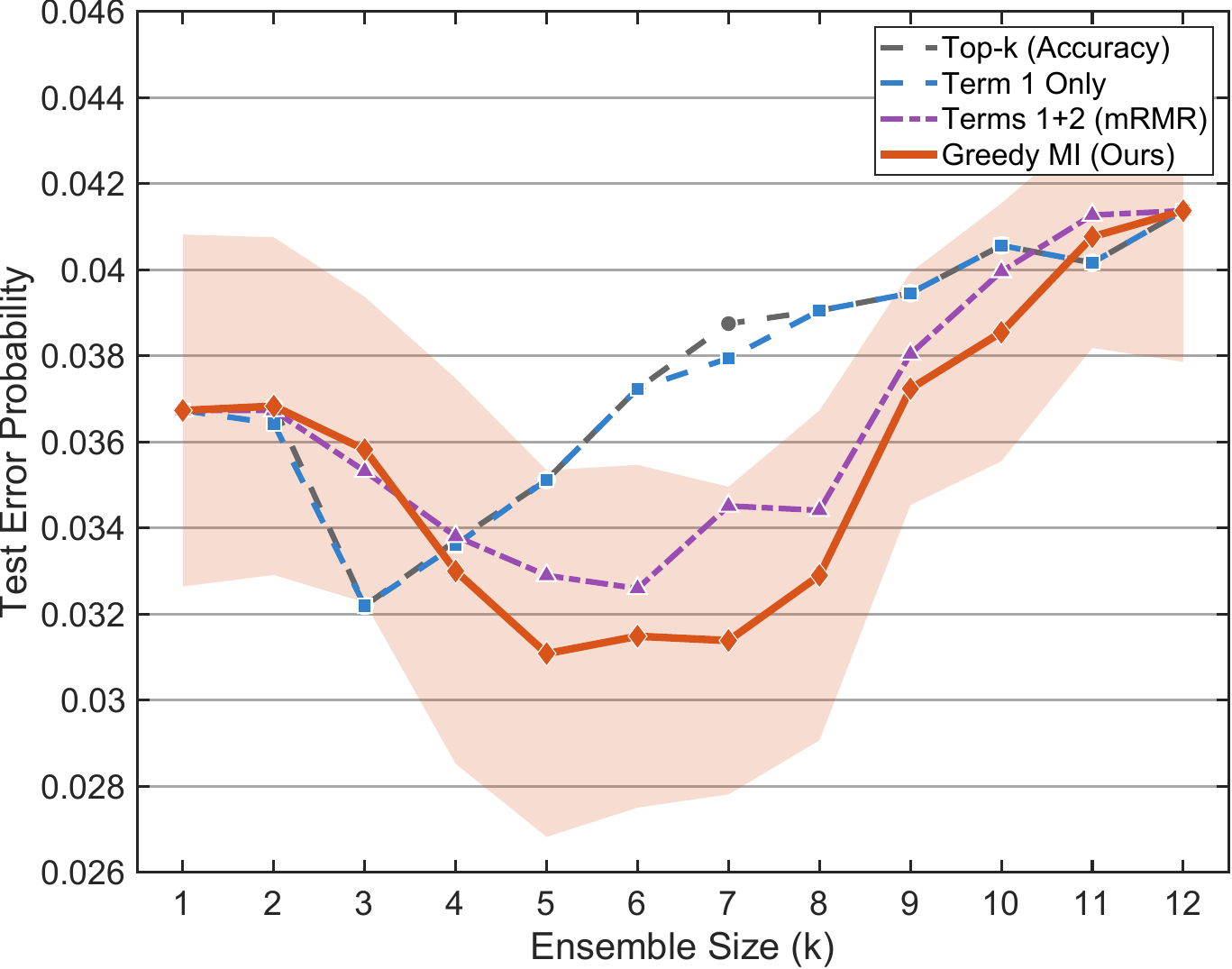}
  \caption{\(\text{temp}=0.3\), run 2}
\end{subfigure}\hfill
\begin{subfigure}[t]{0.32\textwidth}
  \centering
  \includegraphics[width=0.9\linewidth]{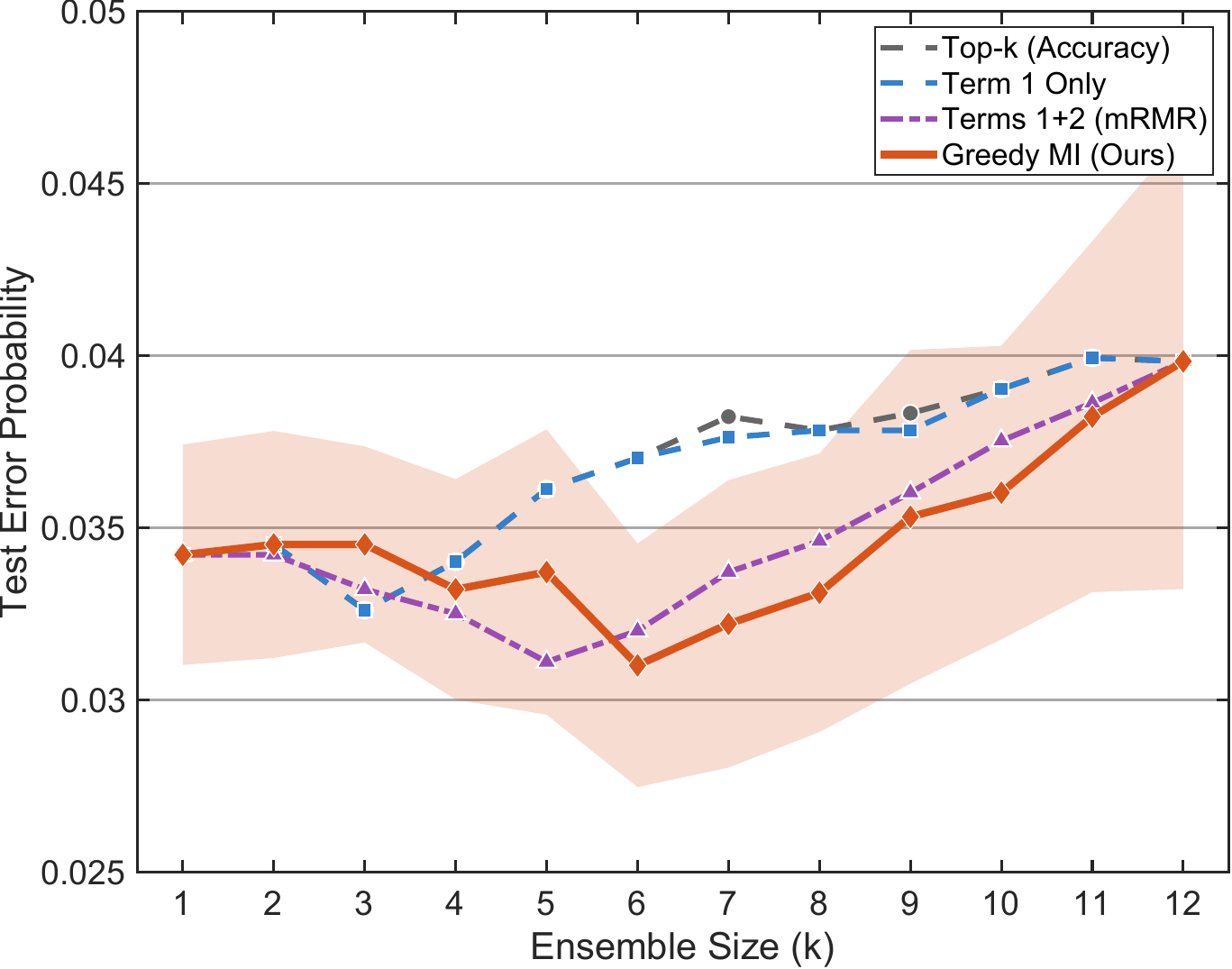}
  \caption{\(\text{temp}=0.7\), run 1}
\end{subfigure}\hfill
\begin{subfigure}[t]{0.32\textwidth}
  \centering
  \includegraphics[width=0.9\linewidth]{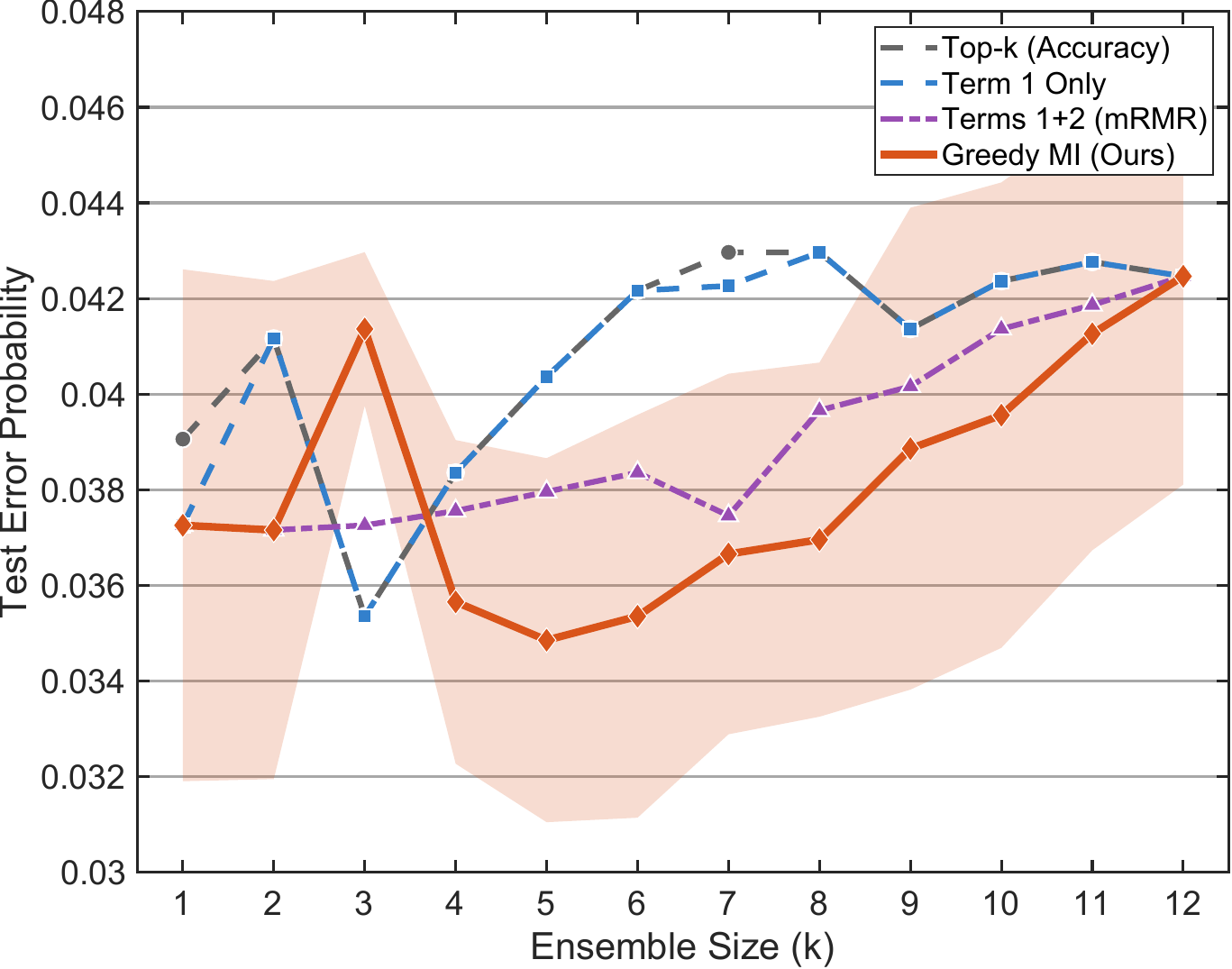}
  \caption{\(\text{temp}=0.7\), run 2}
\end{subfigure}

\caption{Experiments across temperatures and runs (\textbf{MAP aggregation}) on IMDB movie reviews dataset. Each panel corresponds to one temperature setting and random run. Shaded region represents the standard deviation.}
\label{fig:appendix_map_imdb_6}
\end{figure*}

We provide the experiments for all temperature settings and runs with 5 random splits for each.
Figure~\ref{fig:appendix_map_imdb_6} reports per-condition curves on the IMDB under the MAP aggregation for each baseline, with one panel for average of each temperature--run--(5-split) pair. Table~\ref{tab:imdb_results} aggregates the same experiments over all $30$ evaluations (3 temperatures $\times$ 2 runs $\times$ 5 random 80/20 splits), reporting mean $\pm$ std test error at each ensemble size $k$. Unlike MEDMCQA and MMLU, the IMDB dataset exhibits uniformly high model accuracies (91--96\%) and very strong pairwise correlations ($\bar{\rho} = 0.90$). As a result, the improvements from ensemble selection are modest and often within noise margins. This behavior is consistent with Theorem~\ref{thm:saturation} because when correlations are near-uniform and high, the error floor $\Phi(\Phi^{-1}(1-\alpha)/\sqrt{\rho})$ leaves little room for improvement. Moreover, the sentiment classification inherently contains ambiguous instances (such as sarcastic or satirical reviews where the true sentiment is genuinely unclear) that all models tend to misclassify together. These ``intrinsically difficult'' samples create a shared failure mode that no selection strategy can overcome, further limiting the gains from ensembling.
\subsection{Experiments with Alternative Aggregation Rules}
\begin{figure*}[h]
\centering
\begin{subfigure}[t]{0.32\textwidth}
  \centering
  \includegraphics[width=0.9\linewidth]{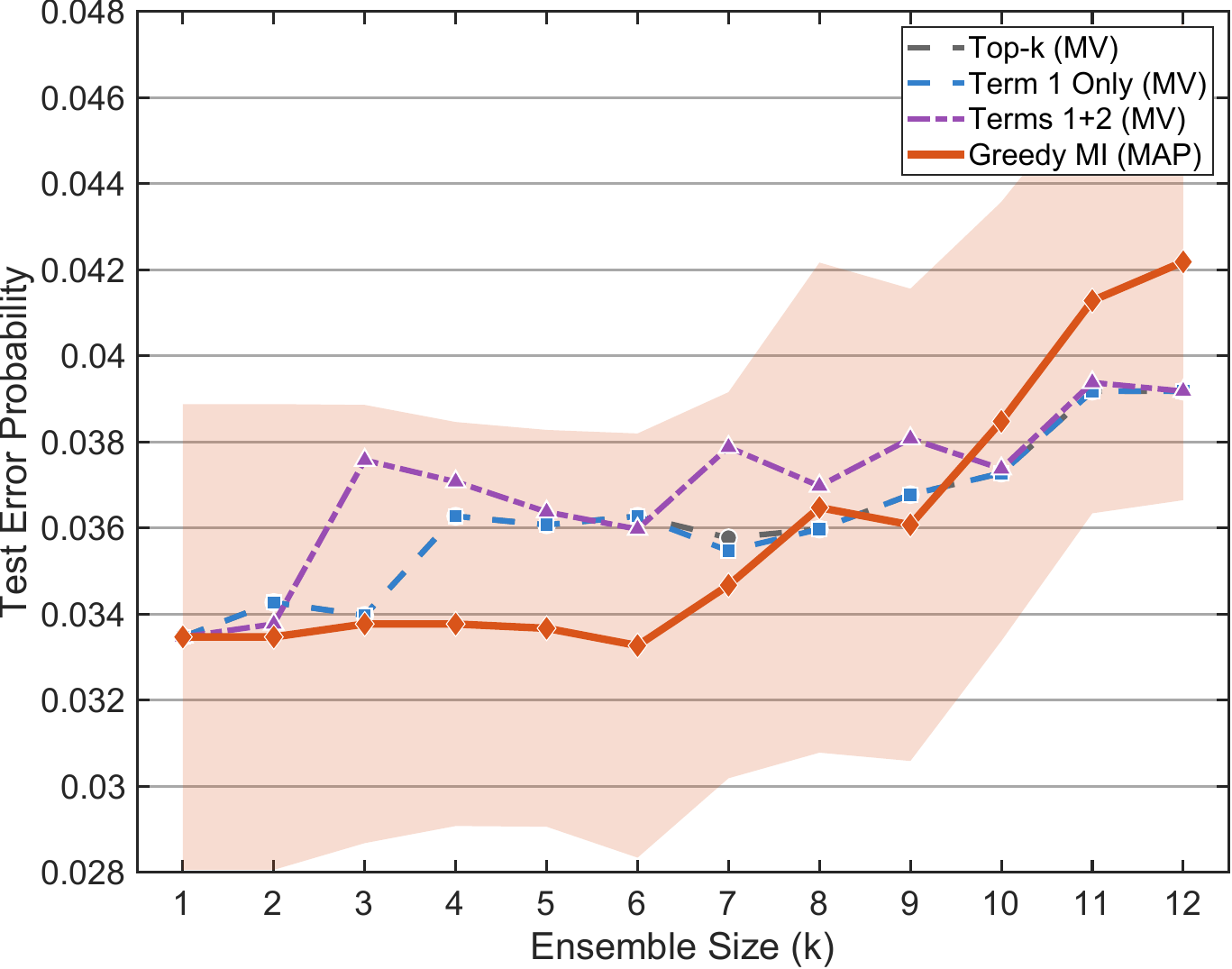}
  \caption{\(\text{temp}=0.01\), run 1}
\end{subfigure}\hfill
\begin{subfigure}[t]{0.32\textwidth}
  \centering
  \includegraphics[width=0.9\linewidth]{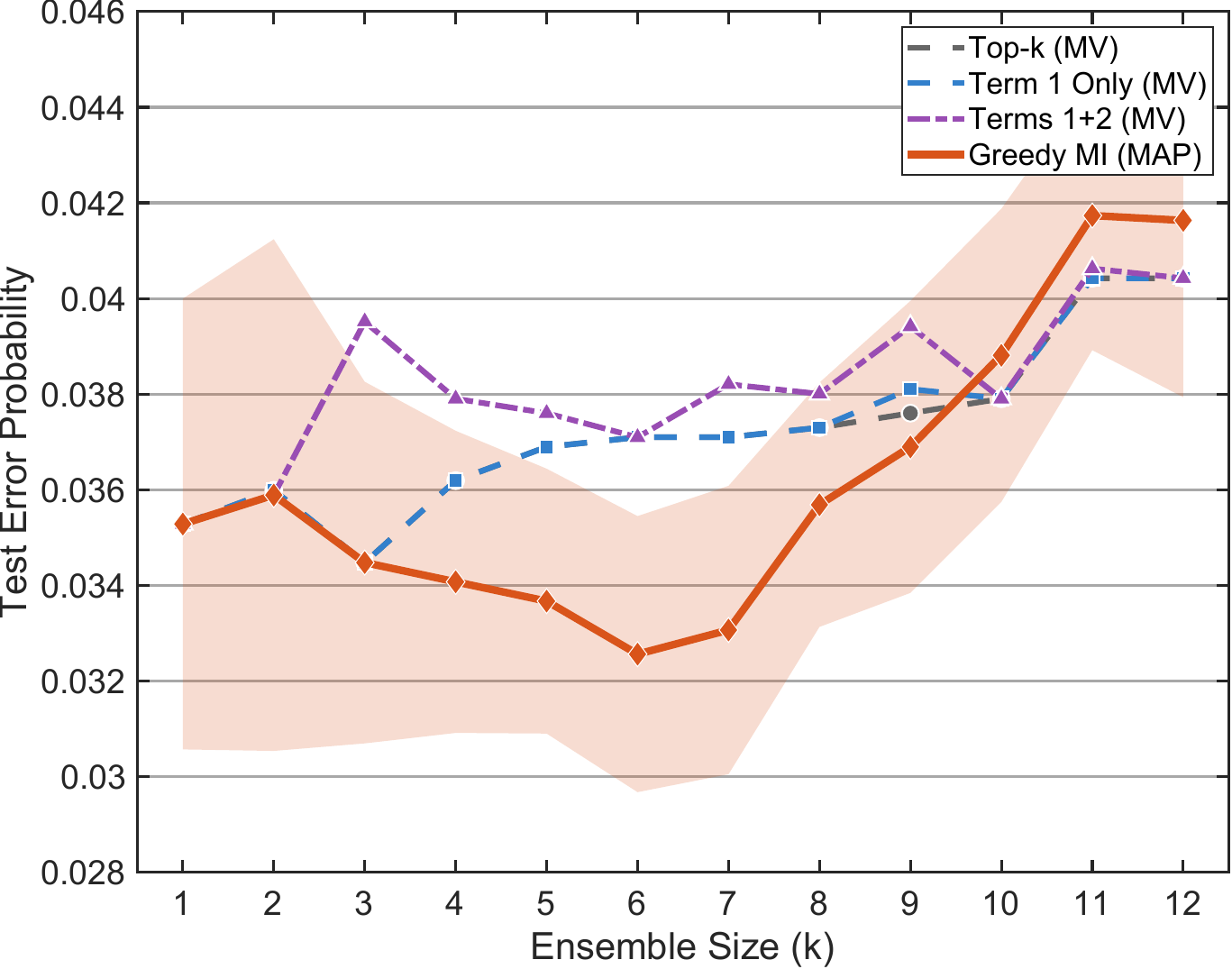}
  \caption{\(\text{temp}=0.01\), run 2}
\end{subfigure}\hfill
\begin{subfigure}[t]{0.32\textwidth}
  \centering
  \includegraphics[width=0.9\linewidth]{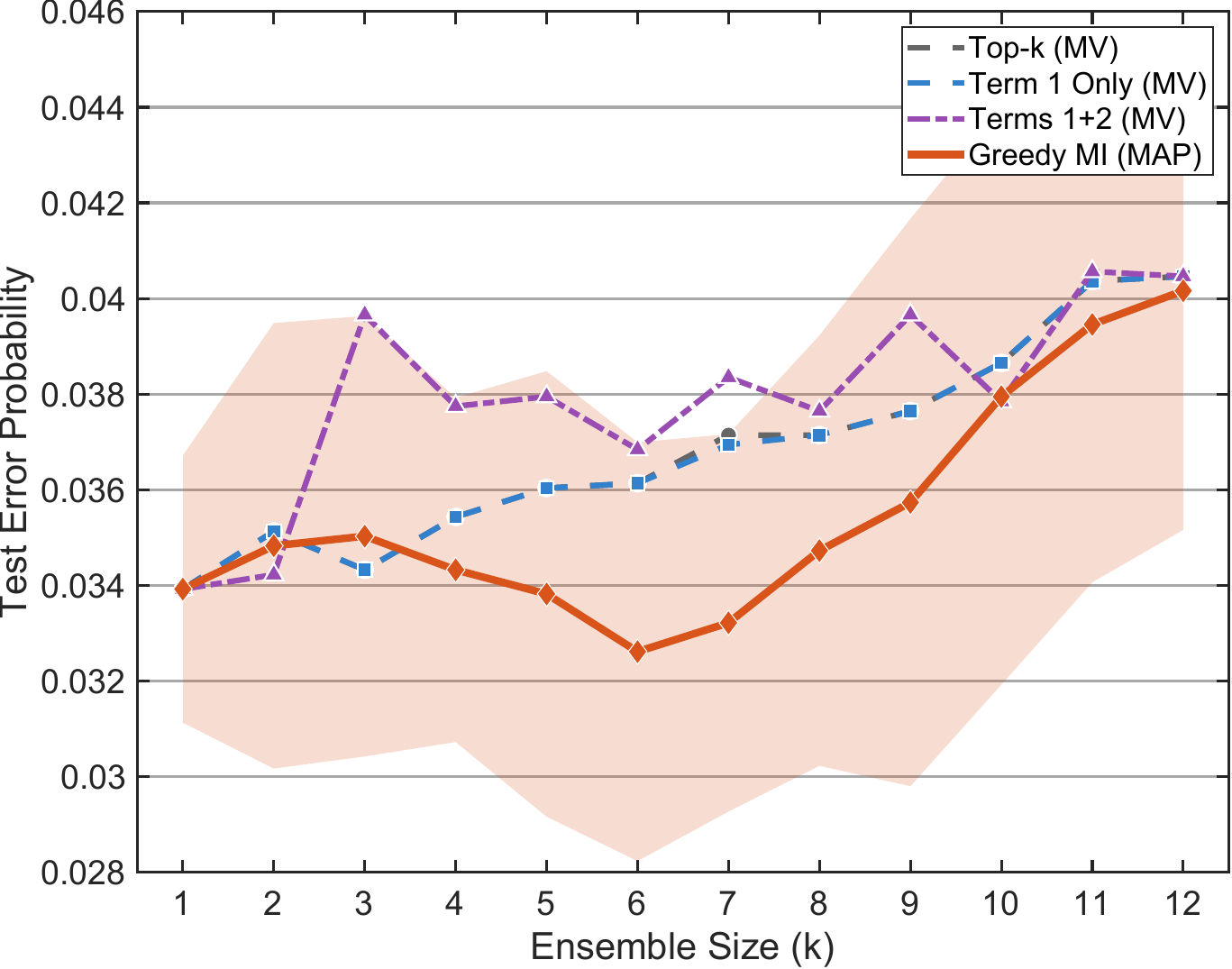}
  \caption{\(\text{temp}=0.3\), run 1}
\end{subfigure}


\begin{subfigure}[t]{0.32\textwidth}
  \centering
  \includegraphics[width=0.9\linewidth]{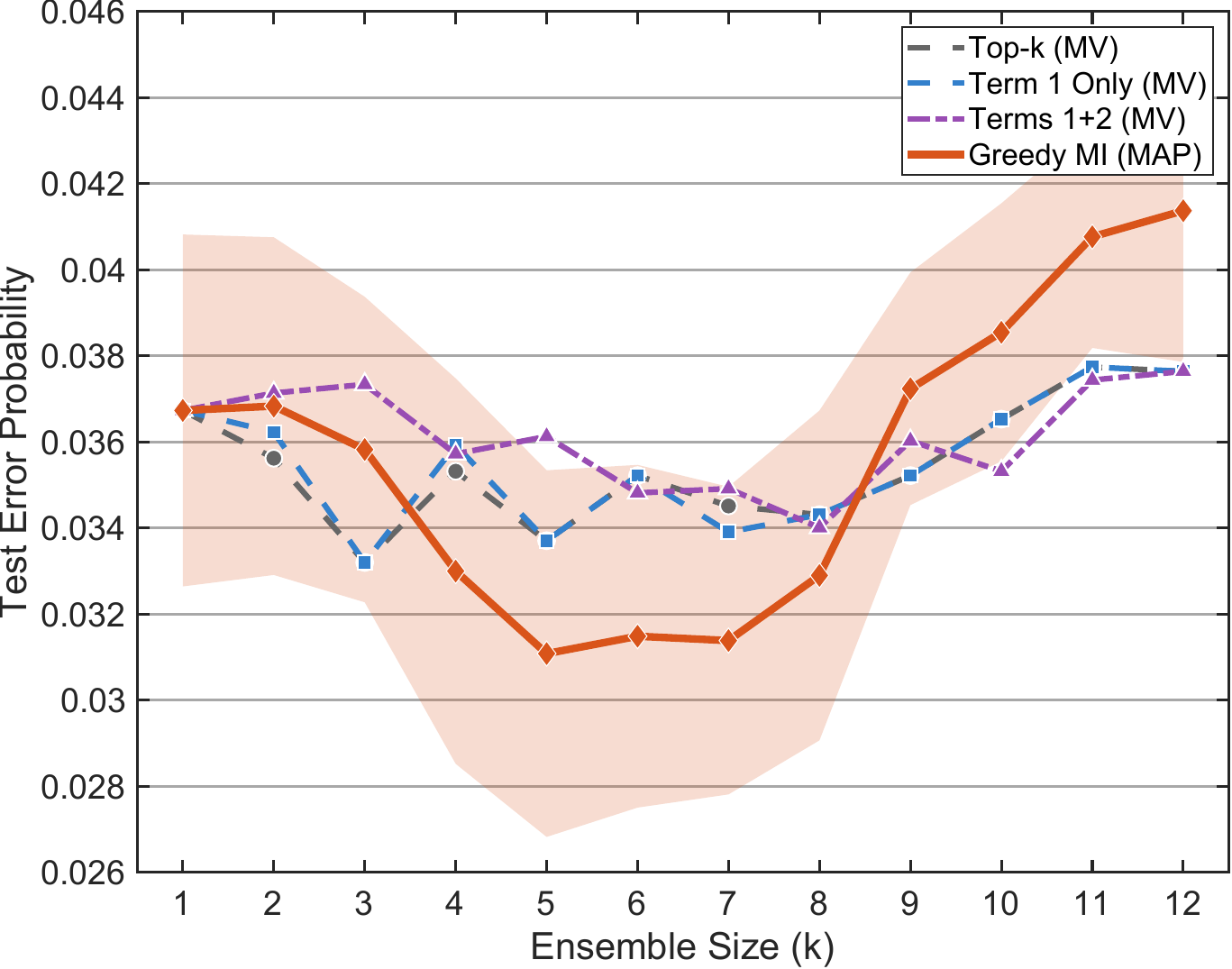}
  \caption{\(\text{temp}=0.3\), run 2}
\end{subfigure}\hfill
\begin{subfigure}[t]{0.32\textwidth}
  \centering
  \includegraphics[width=0.9\linewidth]{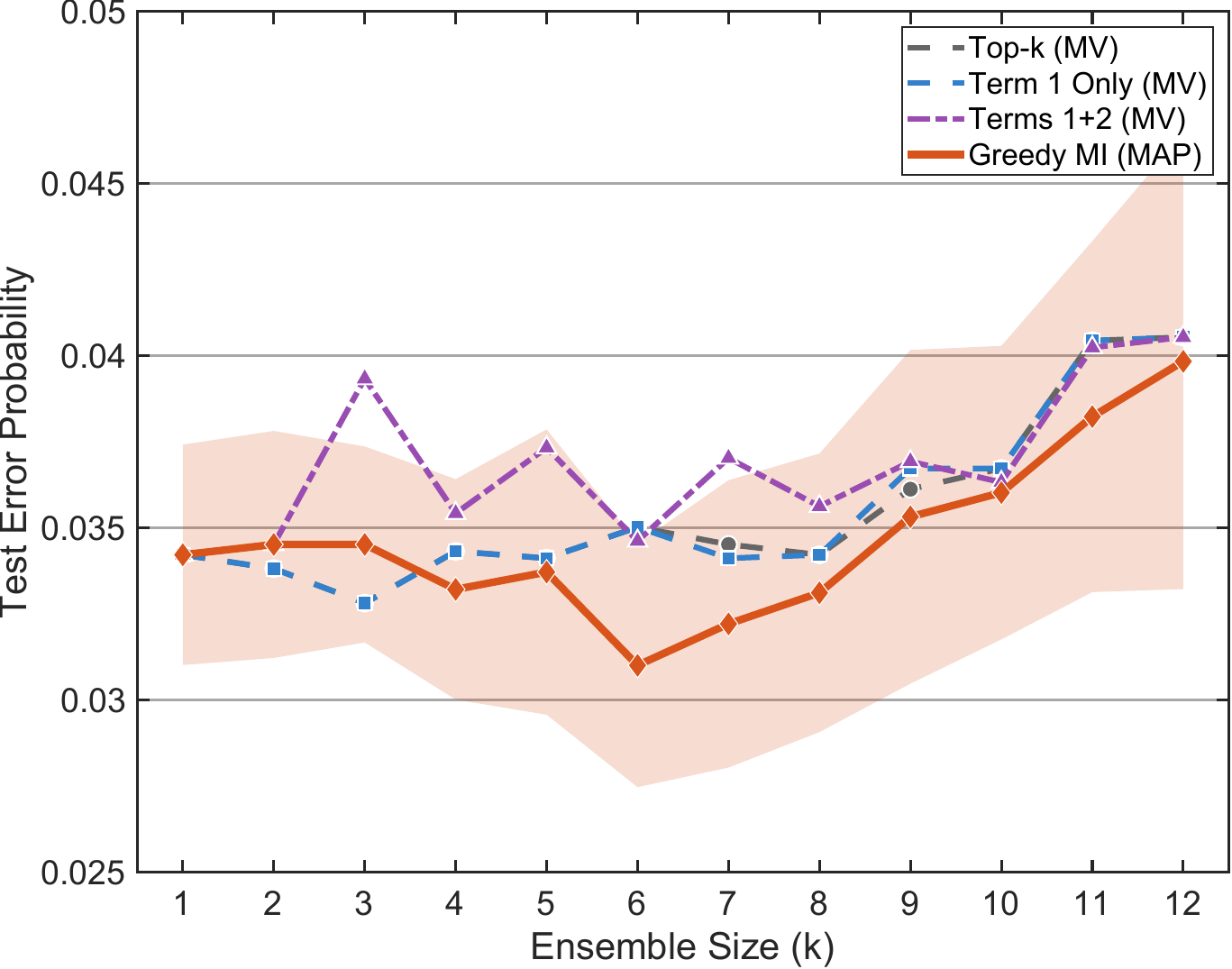}
  \caption{\(\text{temp}=0.7\), run 1}
\end{subfigure}\hfill
\begin{subfigure}[t]{0.32\textwidth}
  \centering
  \includegraphics[width=0.9\linewidth]{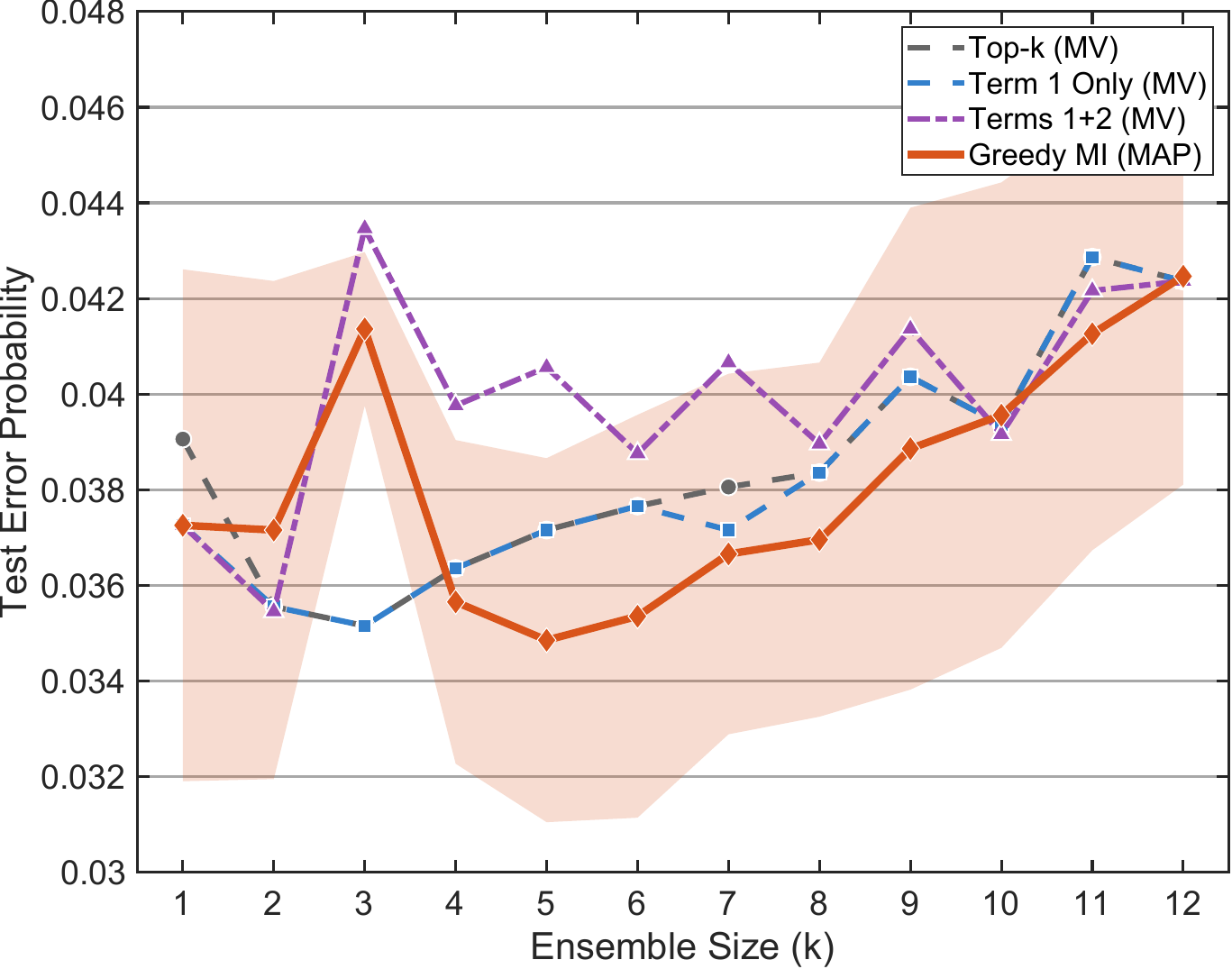}
  \caption{\(\text{temp}=0.7\), run 2}
\end{subfigure}

\caption{IMDB movie reviews dataset results using \textbf{Majority Vote (MV)} aggregation across temperatures and runs. Shaded region represents the standard deviation.}
\label{fig:imdb_appendix_mv_6}
\end{figure*}

\begin{figure*}[h]
\centering

\begin{subfigure}[t]{0.32\textwidth}
  \centering
  \includegraphics[width=0.9\linewidth]{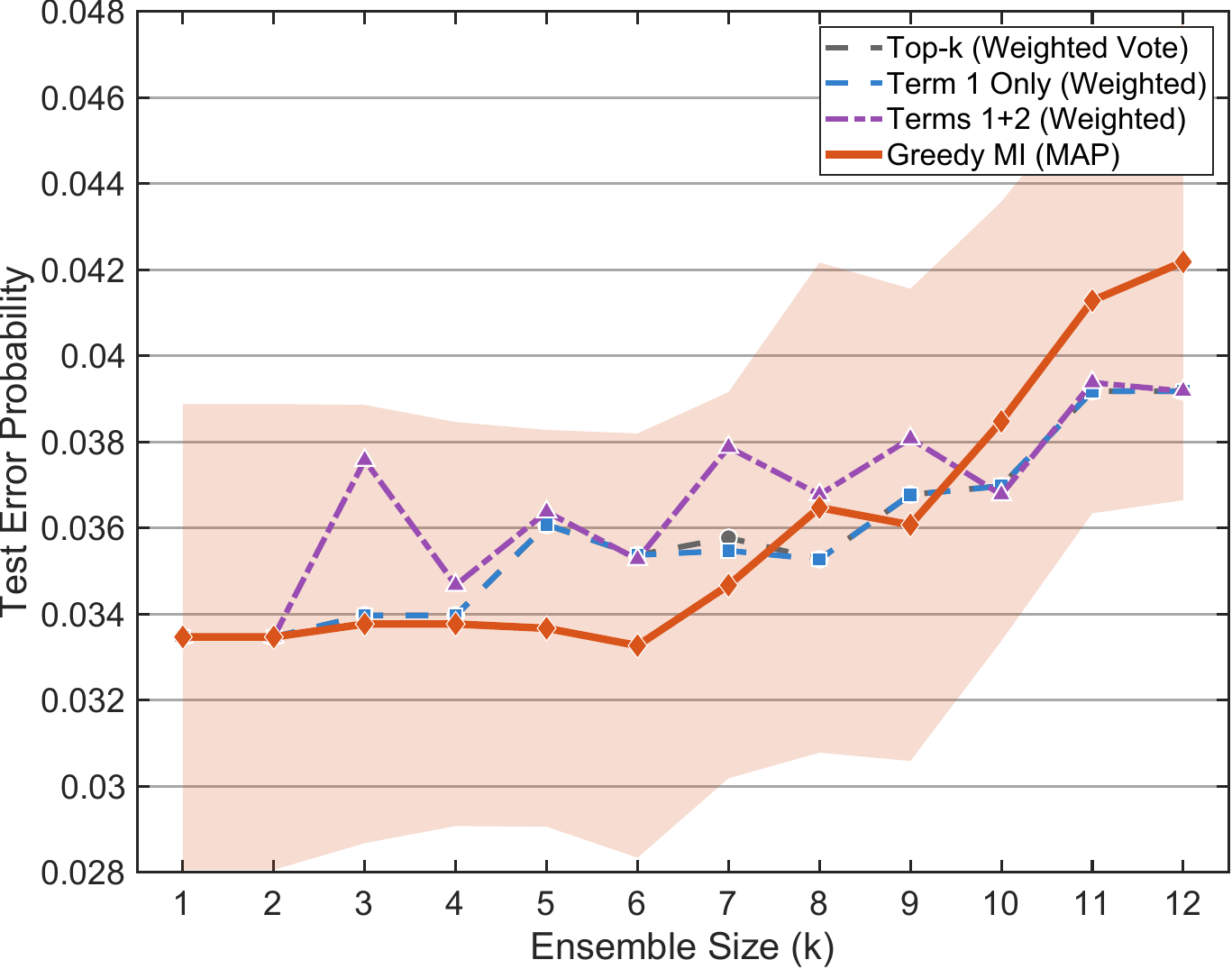}
  \caption{\(\text{temp}=0.01\), run 1}
\end{subfigure}\hfill
\begin{subfigure}[t]{0.32\textwidth}
  \centering
  \includegraphics[width=0.9\linewidth]{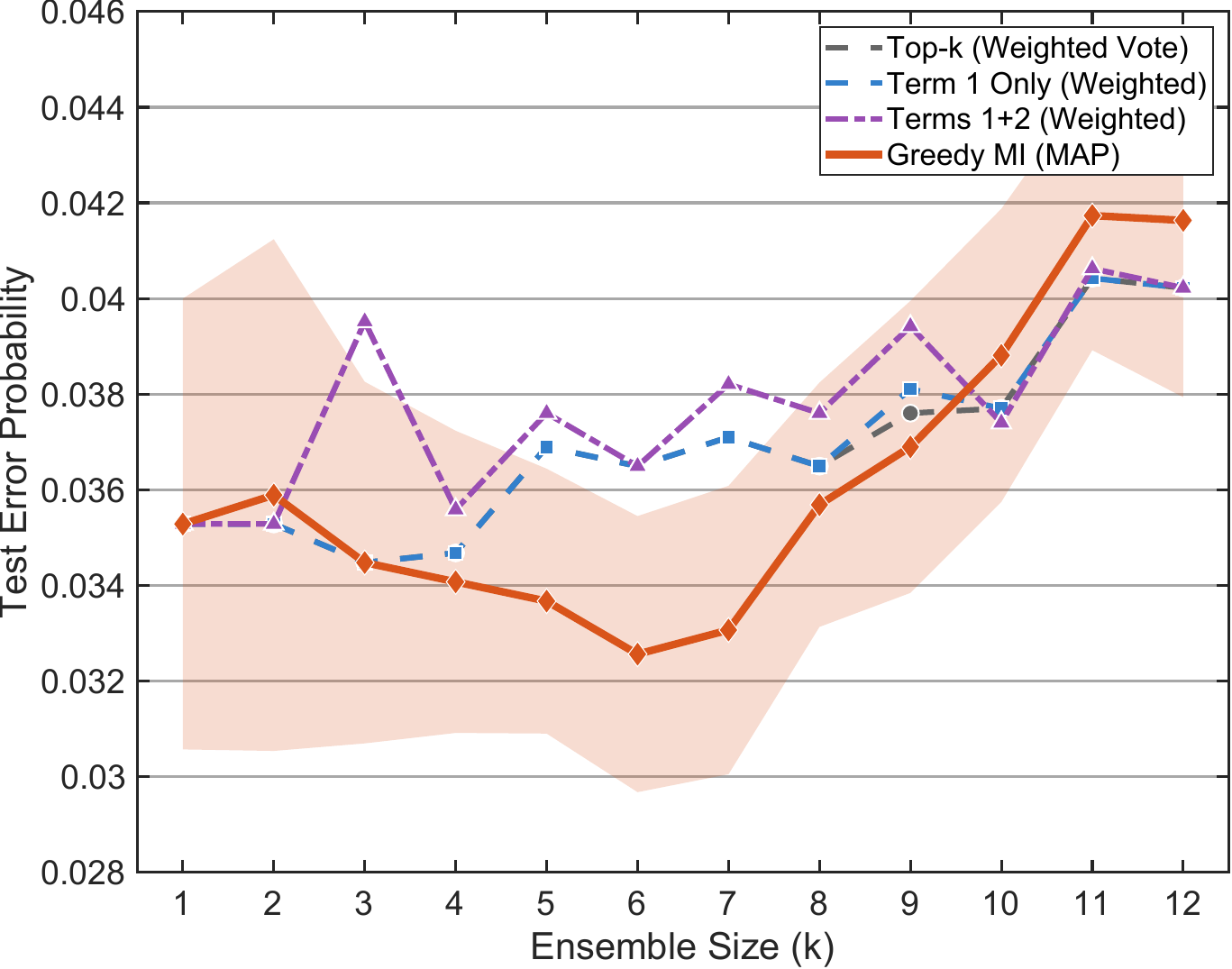}
  \caption{\(\text{temp}=0.01\), run 2}
\end{subfigure}\hfill
\begin{subfigure}[t]{0.32\textwidth}
  \centering
  \includegraphics[width=0.9\linewidth]{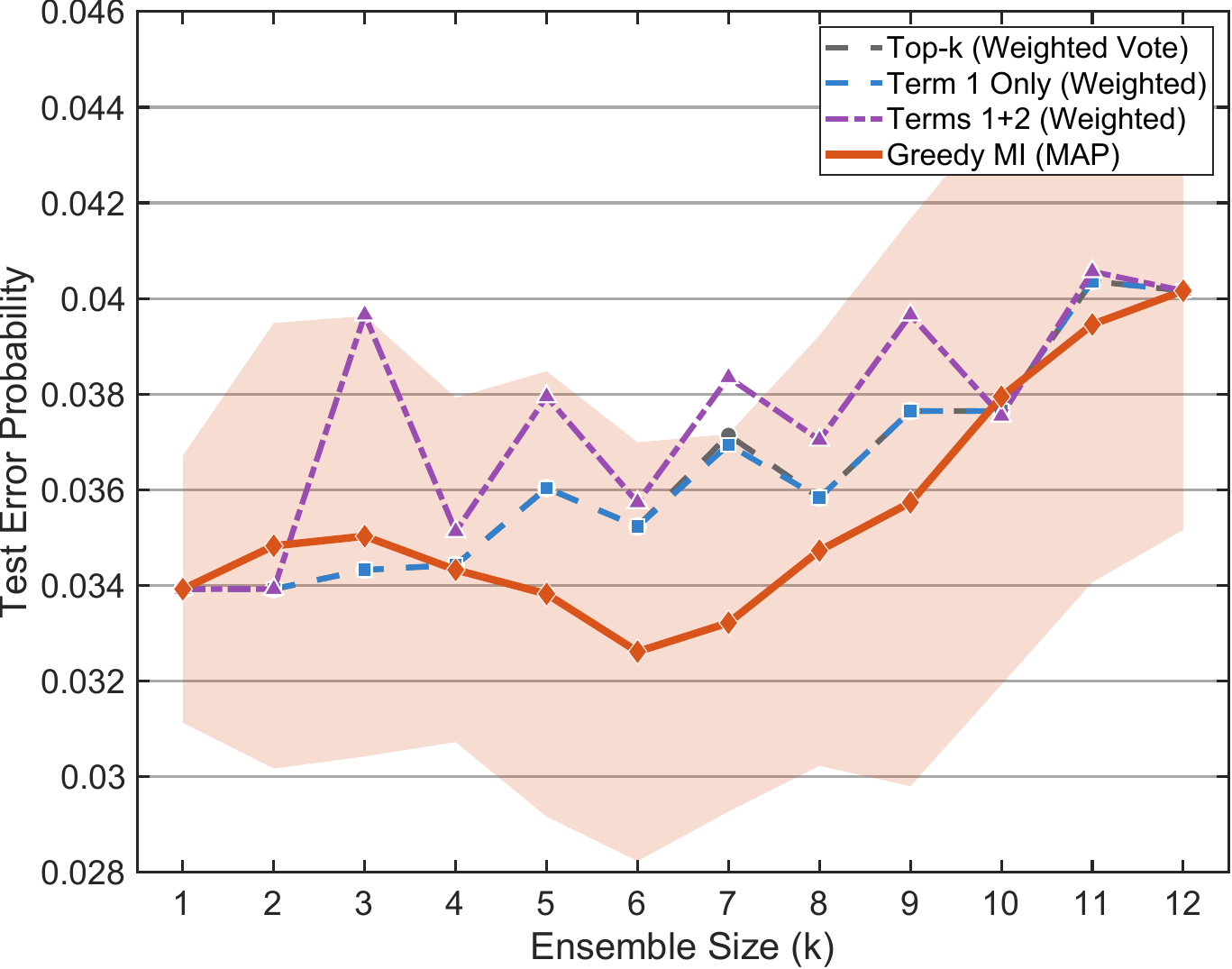}
  \caption{\(\text{temp}=0.3\), run 1}
\end{subfigure}


\begin{subfigure}[t]{0.32\textwidth}
  \centering
  \includegraphics[width=0.9\linewidth]{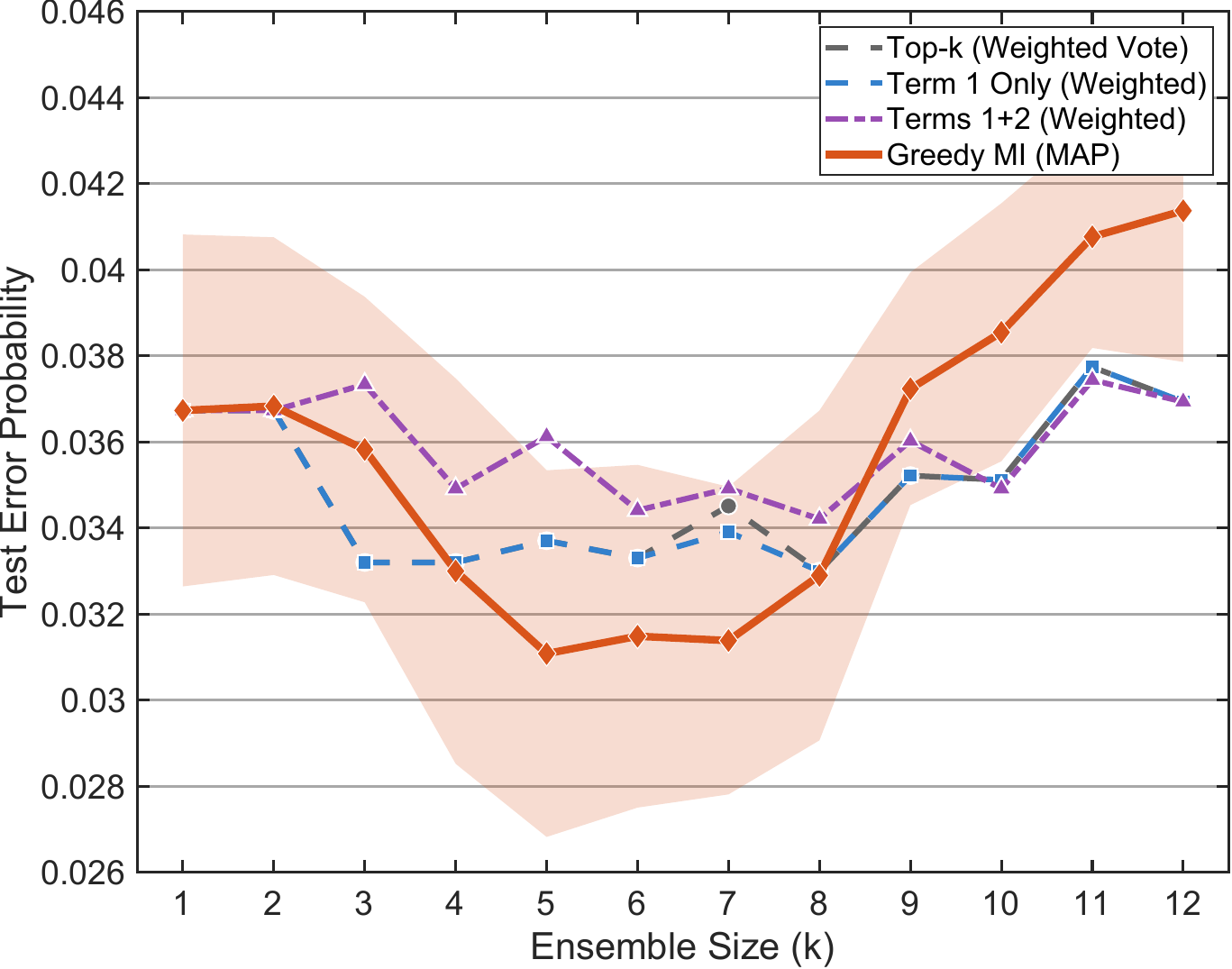}
  \caption{\(\text{temp}=0.3\), run 2}
\end{subfigure}\hfill
\begin{subfigure}[t]{0.32\textwidth}
  \centering
  \includegraphics[width=0.9\linewidth]{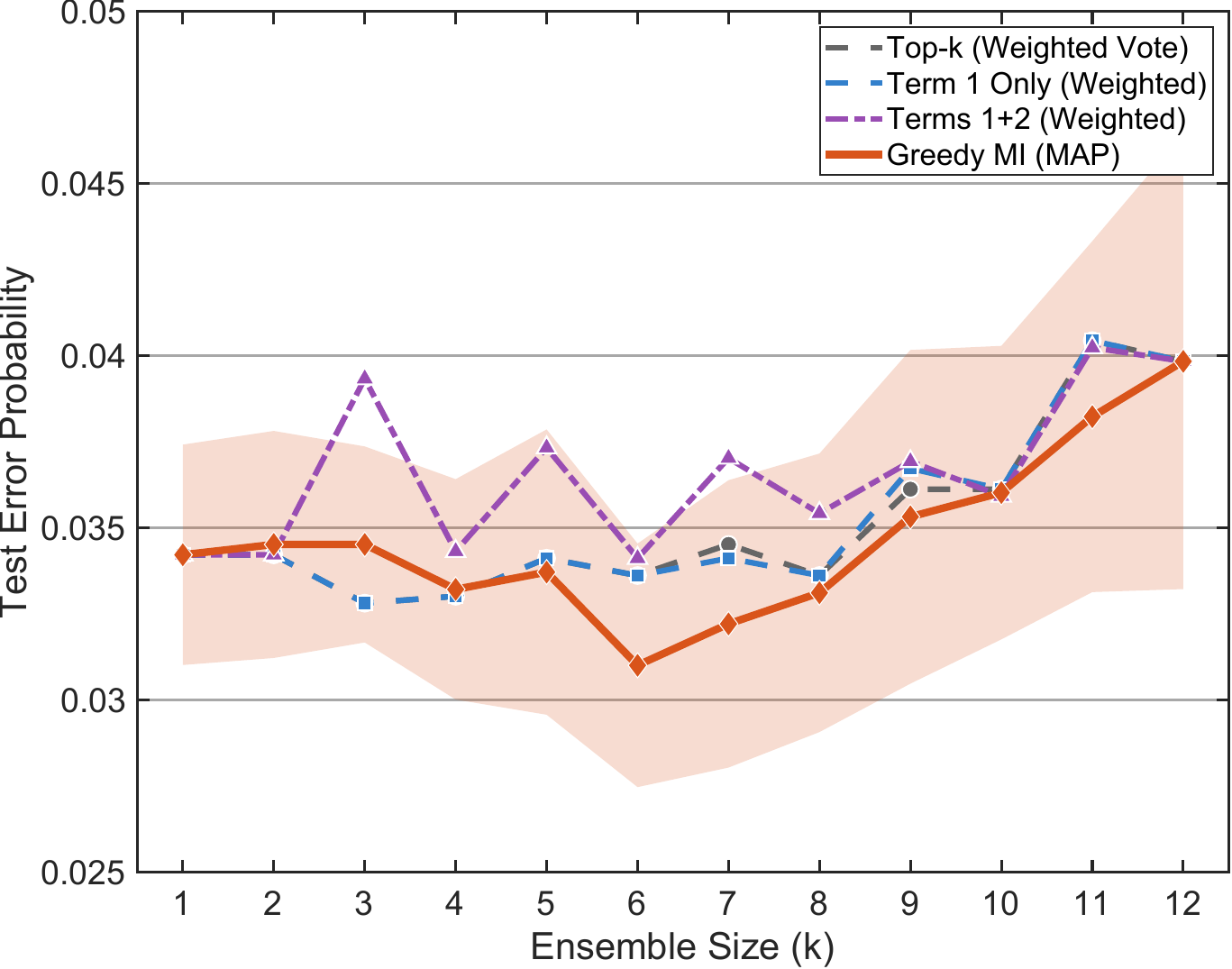}
  \caption{\(\text{temp}=0.7\), run 1}
\end{subfigure}\hfill
\begin{subfigure}[t]{0.32\textwidth}
  \centering
  \includegraphics[width=0.9\linewidth]{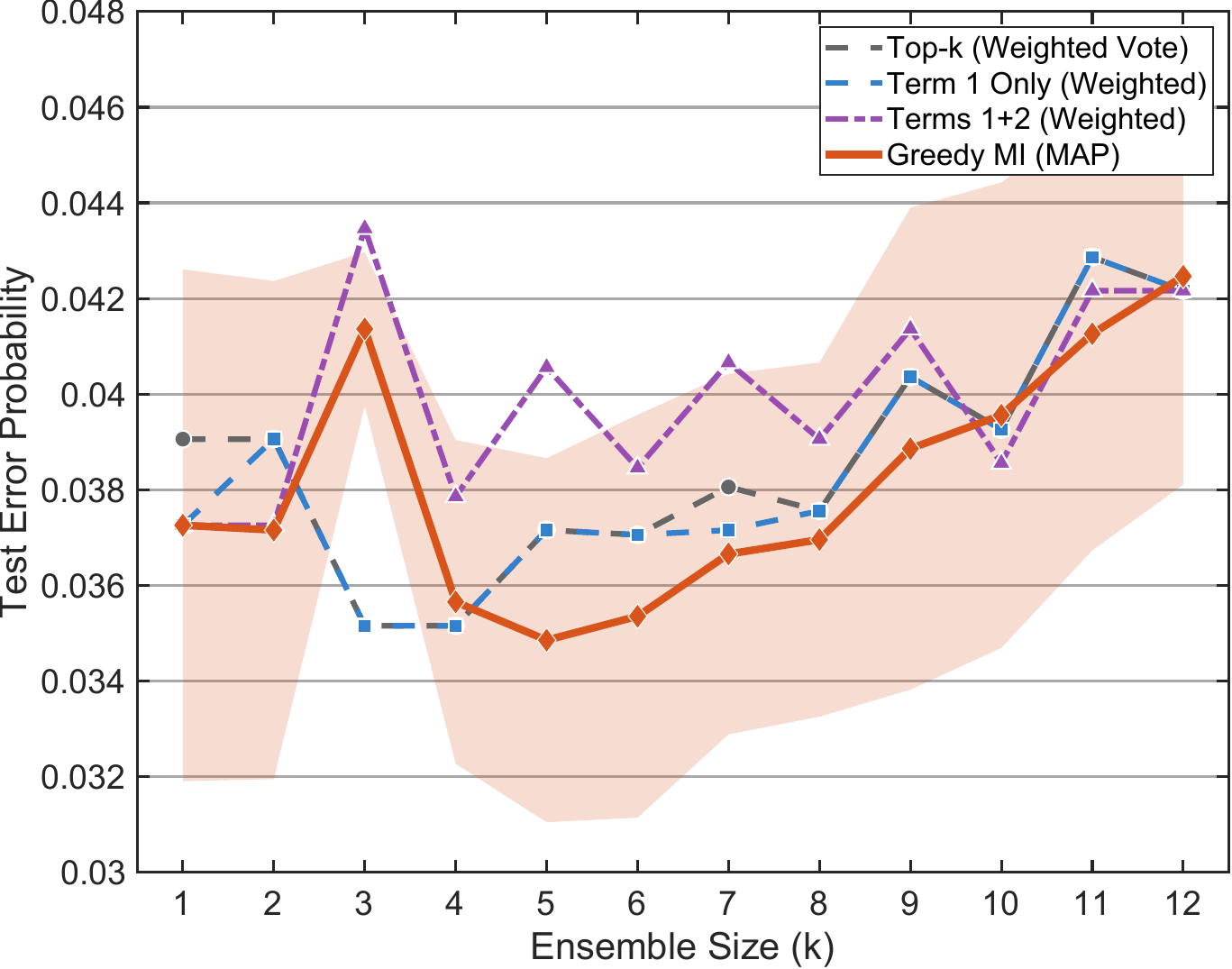}
  \caption{\(\text{temp}=0.7\), run 2}
\end{subfigure}

\caption{IMDB movie reviews dataset results using \textbf{Weighted Ensemble (WE)} aggregation across temperatures and runs.}
\label{fig:imdb_appendix_we_6}
\end{figure*}

Figures~\ref{fig:imdb_appendix_mv_6} and~\ref{fig:imdb_appendix_we_6} and Tables \ref{tab:imdb_mv_vs_map} and \ref{tab:imdb_wmv_vs_map} show MV and W-MV ablations for IMDB. Unlike MEDMCQA and MMLU, all methods perform comparably here, i.e., the curves largely overlap and differences fall within the variance bands. 
\subsection{Copula Validation}
\begin{figure*}[ht]
\centering
\begin{subfigure}[t]{0.32\textwidth}
    \centering
    \includegraphics[width=0.8\linewidth]{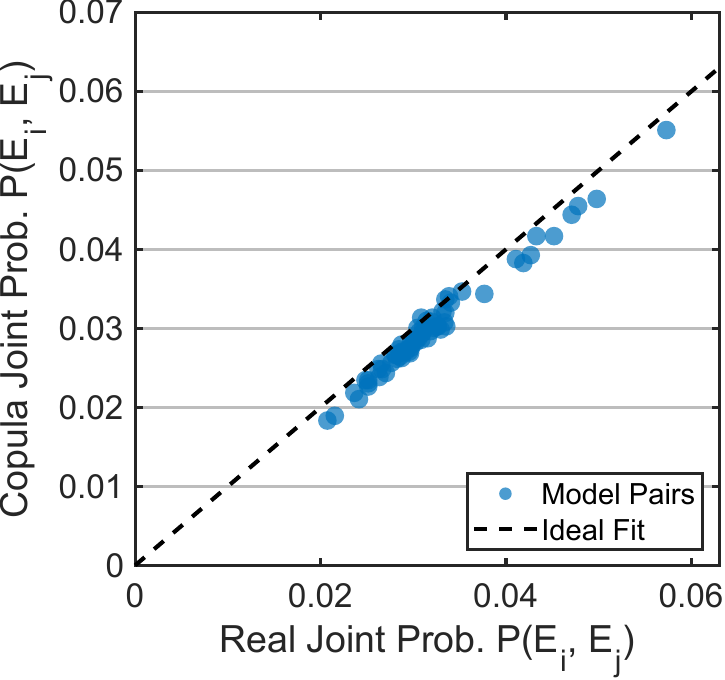}
    \caption{\(\text{temp}=0.01\), run 1}
\end{subfigure}\hfill
\begin{subfigure}[t]{0.32\textwidth}
    \centering
    \includegraphics[width=0.8\linewidth]{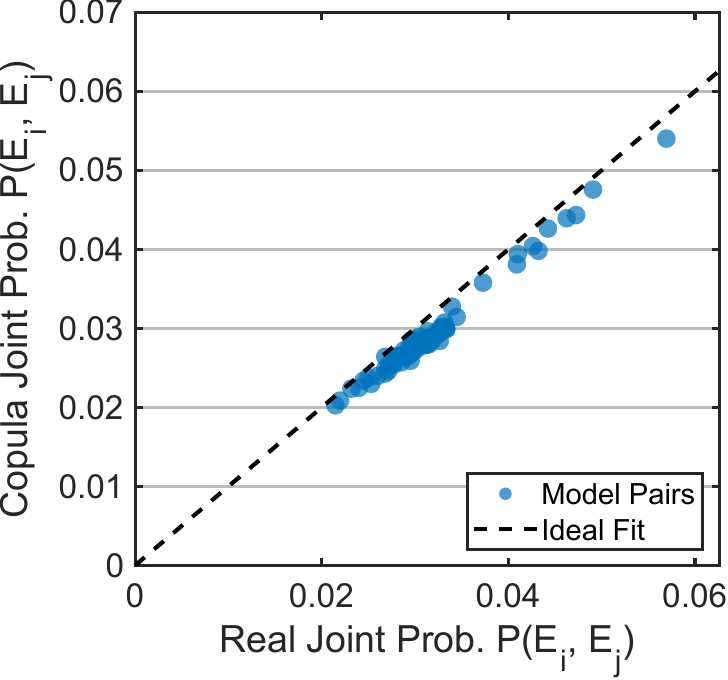}
    \caption{\(\text{temp}=0.01\), run 2}
\end{subfigure}\hfill
\begin{subfigure}[t]{0.32\textwidth}
    \centering
    \includegraphics[width=0.8\linewidth]{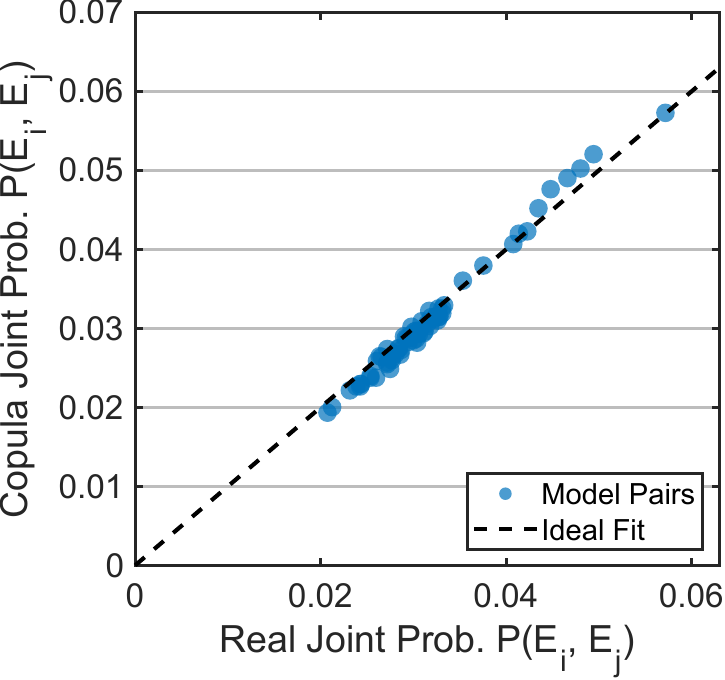}
    \caption{\(\text{temp}=0.3\), run 1}
\end{subfigure}


\begin{subfigure}[t]{0.32\textwidth}
    \centering
    \includegraphics[width=0.8\linewidth]{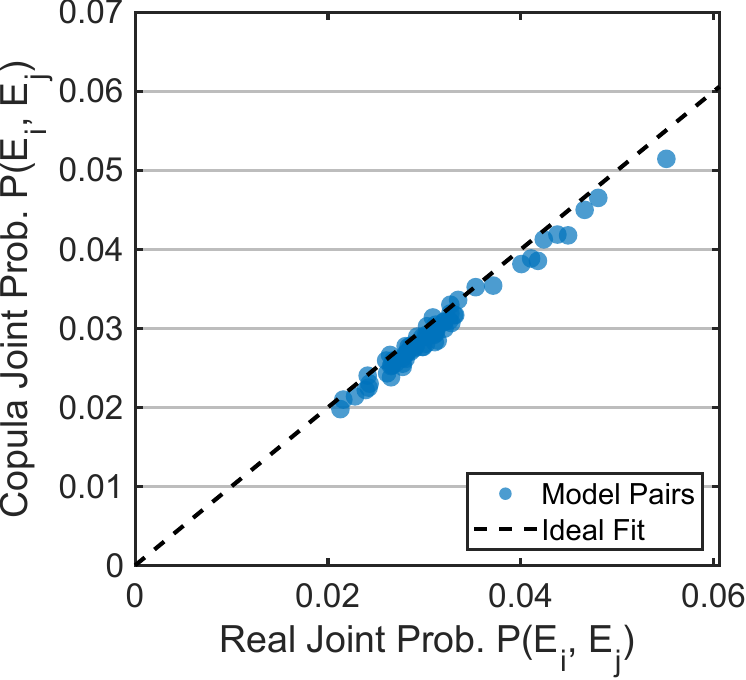}
    \caption{\(\text{temp}=0.3\), run 2}
\end{subfigure}\hfill
\begin{subfigure}[t]{0.32\textwidth}
    \centering
    \includegraphics[width=0.8\linewidth]{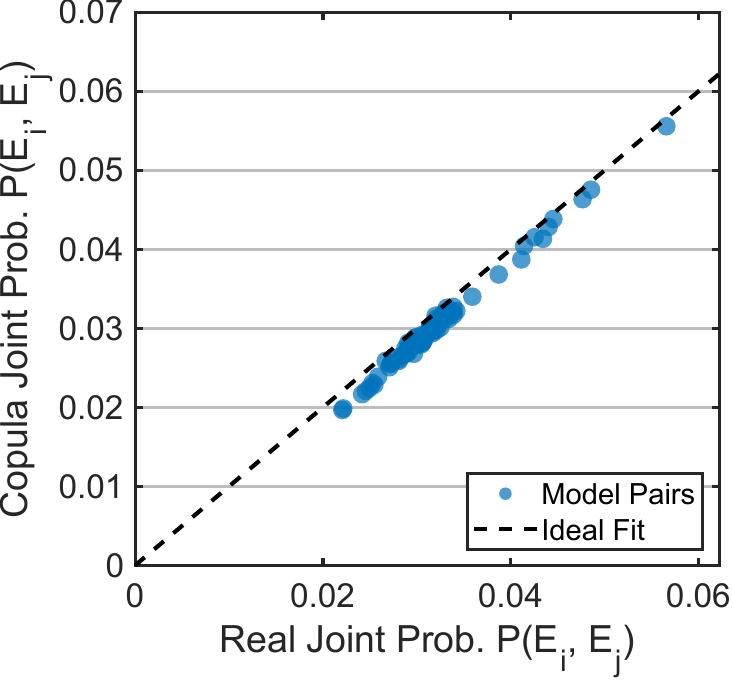}
    \caption{\(\text{temp}=0.7\), run 1}
\end{subfigure}\hfill
\begin{subfigure}[t]{0.32\textwidth}
    \centering
    \includegraphics[width=0.8\linewidth]{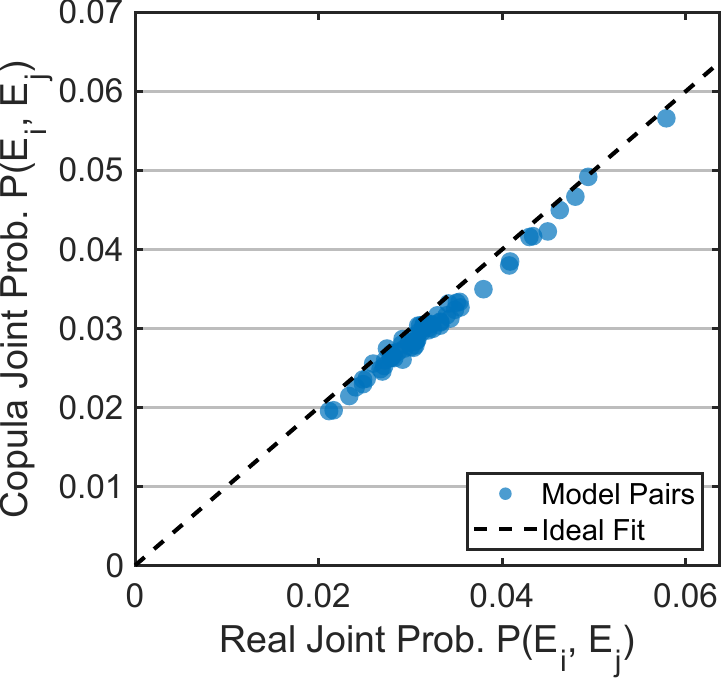}
    \caption{\(\text{temp}=0.7\), run 2}
\end{subfigure}

\caption{Gaussian-copula validation on IMDB movie reviews datset (scatter plots): pairwise structural fit diagnostics across temperatures and runs.}
\label{fig:app_imdb_copula_scatter_6up}
\end{figure*}

\begin{figure*}[ht]
\centering
\begin{subfigure}[t]{0.32\textwidth}
    \centering
    \includegraphics[width=0.8\linewidth]{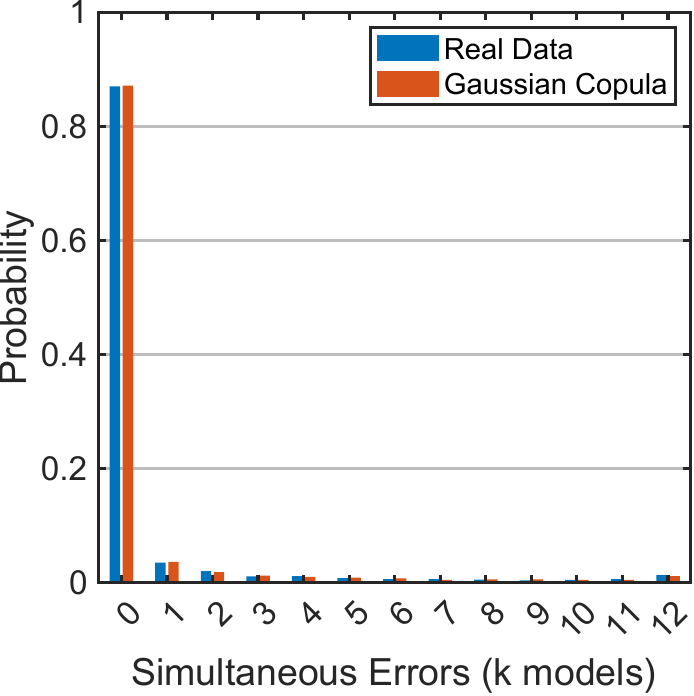}
    \caption{\(\text{temp}=0.01\), run 1}
\end{subfigure}\hfill
\begin{subfigure}[t]{0.32\textwidth}
    \centering
    \includegraphics[width=0.8\linewidth]{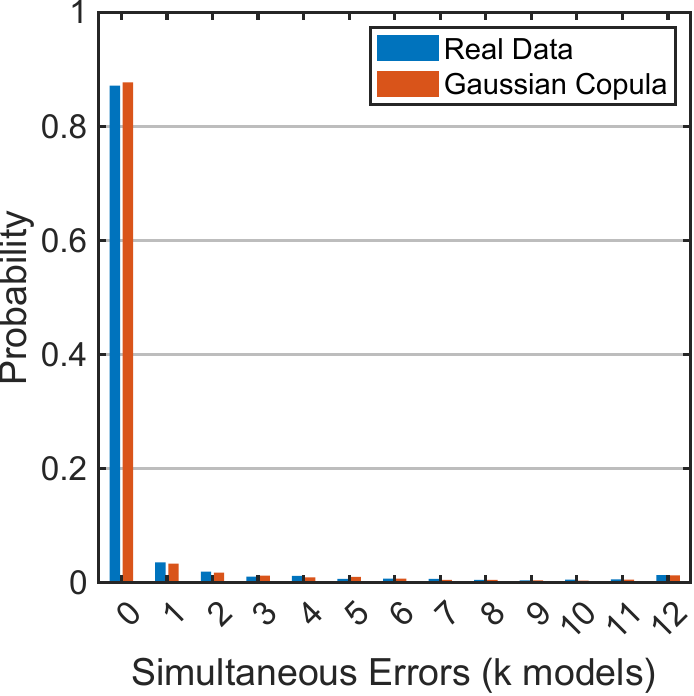}
    \caption{\(\text{temp}=0.01\), run 2}
\end{subfigure}\hfill
\begin{subfigure}[t]{0.32\textwidth}
    \centering
    \includegraphics[width=0.8\linewidth]{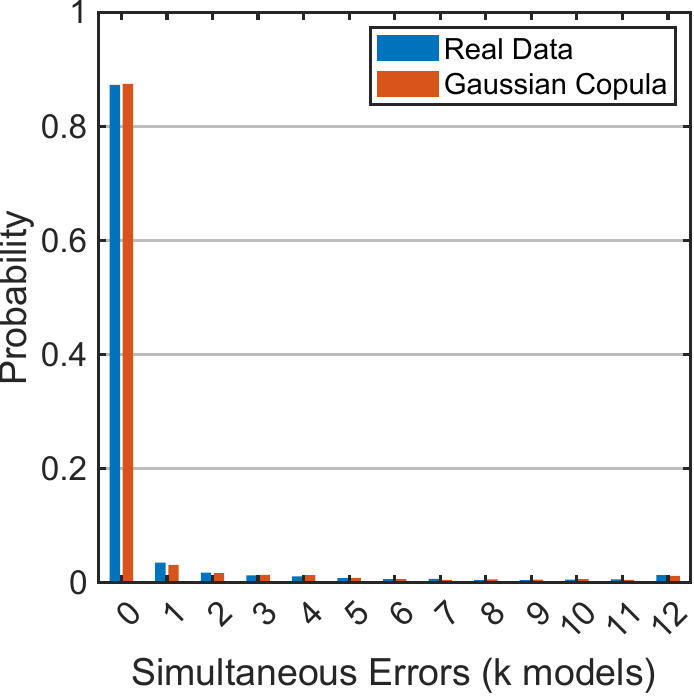}
    \caption{\(\text{temp}=0.3\), run 1}
\end{subfigure}


\begin{subfigure}[t]{0.32\textwidth}
    \centering
    \includegraphics[width=0.8\linewidth]{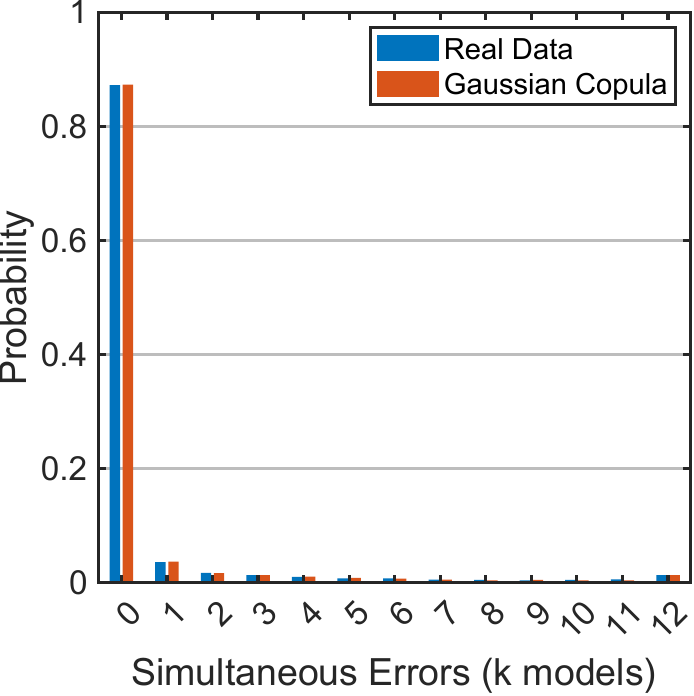}
    \caption{\(\text{temp}=0.3\), run 2}
\end{subfigure}\hfill
\begin{subfigure}[t]{0.32\textwidth}
    \centering
    \includegraphics[width=0.8\linewidth]{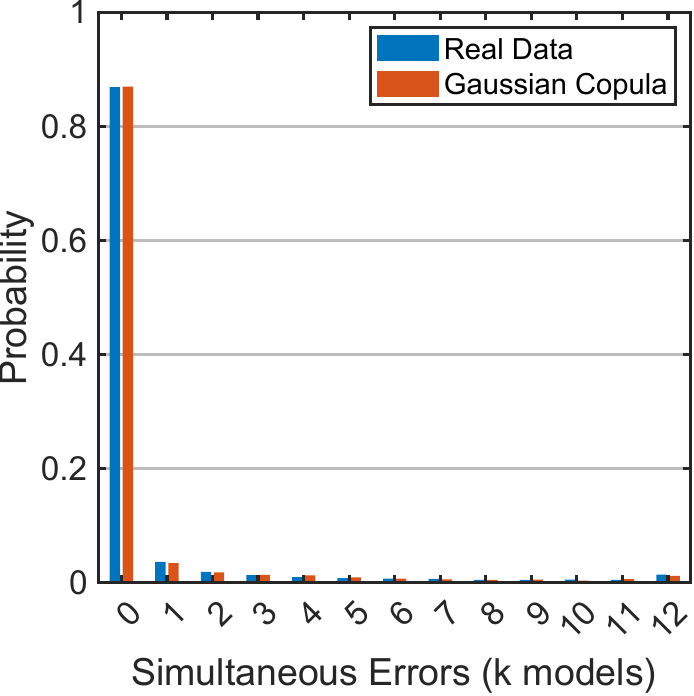}
    \caption{\(\text{temp}=0.7\), run 1}
\end{subfigure}\hfill
\begin{subfigure}[t]{0.32\textwidth}
    \centering
    \includegraphics[width=0.8\linewidth]{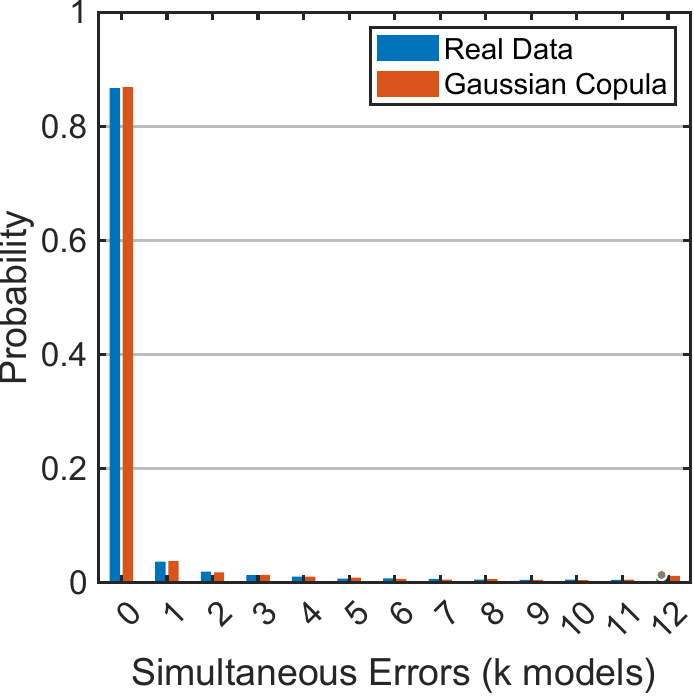}
    \caption{\(\text{temp}=0.7\), run 2}
\end{subfigure}

\caption{Gaussian-copula validation on IMDB movie reviews dataset (simultaneous-error / tail plots): empirical vs copula-predicted probability of \(k\)-way simultaneous errors across temperatures and runs.}
\label{fig:app_imdb_copula_heavy_6u}
\end{figure*}
Figure~\ref{fig:app_imdb_copula_scatter_6up} shows the pairwise fit remains tight along the diagonal across all settings. Figure~\ref{fig:app_imdb_copula_heavy_6u} reveals that the error distribution is heavily concentrated at $k=0$ (reflecting the high model accuracies) with the copula accurately capturing this structure. The near-absence of simultaneous errors ($k \geq 2$) further explains why ensemble selection yields limited gains on this dataset.
\clearpage

\section{Supplementary Tables}
\label{app:tables}
\begin{table}[ht]
\centering
\caption{Test error probability on MEDMCQA averaged over 30 evaluations (3 temperatures $\times$ 2 runs $\times$ 5 folds). Bold indicates the best method for each $k$.}
\label{tab:medmcqa_results}
\small
\begin{tabular}{ccccc}
\toprule
$k$ & Top-$k$ (Acc) & Term 1 (Rel) & Terms 1+2 (mRMR) & Greedy MI (Ours) \\
\midrule
1  & $0.171 \pm 0.008$ & $0.171 \pm 0.008$ & $0.171 \pm 0.008$ & $0.171 \pm 0.008$ \\
2  & $0.172 \pm 0.008$ & $0.172 \pm 0.008$ & $0.171 \pm 0.008$ & $0.171 \pm 0.008$ \\
3  & $0.174 \pm 0.008$ & $0.174 \pm 0.008$ & $0.171 \pm 0.008$ & $\mathbf{0.164 \pm 0.008}$ \\
4  & $0.173 \pm 0.009$ & $0.173 \pm 0.009$ & $0.171 \pm 0.008$ & $\mathbf{0.163 \pm 0.008}$ \\
5  & $0.170 \pm 0.008$ & $0.172 \pm 0.008$ & $0.173 \pm 0.008$ & $\mathbf{0.163 \pm 0.008}$ \\
6  & $0.171 \pm 0.007$ & $0.171 \pm 0.007$ & $0.174 \pm 0.008$ & $\mathbf{0.164 \pm 0.008}$ \\
7  & $0.170 \pm 0.007$ & $0.170 \pm 0.007$ & $0.175 \pm 0.008$ & $\mathbf{0.167 \pm 0.008}$ \\
8  & $0.172 \pm 0.008$ & $0.172 \pm 0.008$ & $0.176 \pm 0.009$ & $\mathbf{0.170 \pm 0.007}$ \\
9  & $0.174 \pm 0.007$ & $\mathbf{0.173 \pm 0.008}$ & $0.177 \pm 0.009$ & $0.174 \pm 0.008$ \\
10 & $\mathbf{0.177 \pm 0.008}$ & $\mathbf{0.177 \pm 0.008}$ & $0.177 \pm 0.009$ & $0.181 \pm 0.009$ \\
11 & $0.182 \pm 0.009$ & $\mathbf{0.182 \pm 0.008}$ & $0.184 \pm 0.008$ & $0.183 \pm 0.008$ \\
12 & $0.188 \pm 0.009$ & $0.188 \pm 0.009$ & $0.188 \pm 0.009$ & $0.188 \pm 0.009$ \\
\bottomrule
\end{tabular}
\end{table}

\begin{table}[ht]
\centering
\caption{Test error probability on MEDMCQA with majority voting (MV) aggregation for baselines versus MAP aggregation for Greedy MI. Results averaged over 30 evaluations. Bold indicates the best method for each $k$.}
\label{tab:medmcqa_mv_vs_map}
\small
\begin{tabular}{ccccc}
\toprule
$k$ & Top-$k$ (MV) & Term 1 (MV) & Terms 1+2 (MV) & Greedy MI (MAP) \\
\midrule
1  & $0.171 \pm 0.008$ & $0.171 \pm 0.008$ & $0.171 \pm 0.008$ & $0.171 \pm 0.008$ \\
2  & $0.181 \pm 0.009$ & $0.181 \pm 0.009$ & $0.264 \pm 0.011$ & $\mathbf{0.171 \pm 0.008}$ \\
3  & $0.172 \pm 0.008$ & $0.172 \pm 0.008$ & $0.231 \pm 0.008$ & $\mathbf{0.164 \pm 0.008}$ \\
4  & $0.181 \pm 0.009$ & $0.179 \pm 0.009$ & $0.221 \pm 0.010$ & $\mathbf{0.163 \pm 0.008}$ \\
5  & $0.170 \pm 0.008$ & $0.173 \pm 0.008$ & $0.220 \pm 0.008$ & $\mathbf{0.163 \pm 0.008}$ \\
6  & $0.170 \pm 0.007$ & $0.170 \pm 0.007$ & $0.204 \pm 0.011$ & $\mathbf{0.164 \pm 0.008}$ \\
7  & $0.172 \pm 0.009$ & $0.172 \pm 0.009$ & $0.214 \pm 0.009$ & $\mathbf{0.167 \pm 0.008}$ \\
8  & $0.173 \pm 0.008$ & $0.173 \pm 0.008$ & $0.202 \pm 0.007$ & $\mathbf{0.170 \pm 0.007}$ \\
9  & $0.181 \pm 0.008$ & $0.187 \pm 0.008$ & $0.207 \pm 0.011$ & $\mathbf{0.174 \pm 0.008}$ \\
10 & $0.183 \pm 0.008$ & $0.183 \pm 0.008$ & $0.188 \pm 0.008$ & $\mathbf{0.181 \pm 0.009}$ \\
11 & $0.194 \pm 0.008$ & $0.195 \pm 0.009$ & $0.189 \pm 0.010$ & $\mathbf{0.183 \pm 0.008}$ \\
12 & $0.189 \pm 0.008$ & $0.189 \pm 0.008$ & $0.189 \pm 0.008$ & $\mathbf{0.188 \pm 0.009}$ \\
\bottomrule
\end{tabular}
\end{table}
\begin{table}[ht]
\centering
\caption{Test error probability on MEDMCQA with weighted majority voting (W-MV) aggregation for baselines versus MAP aggregation for Greedy MI. Results averaged over 30 evaluations. Bold indicates the best method for each $k$.}
\label{tab:medmcqa_wmv_vs_map}
\small
\begin{tabular}{ccccc}
\toprule
$k$ & Top-$k$ (W-MV) & Term 1 (W-MV) & Terms 1+2 (W-MV) & Greedy MI (MAP) \\
\midrule
1  & $0.171 \pm 0.008$ & $0.171 \pm 0.008$ & $0.171 \pm 0.008$ & $0.171 \pm 0.008$ \\
2  & $0.171 \pm 0.008$ & $0.171 \pm 0.008$ & $0.171 \pm 0.008$ & $0.171 \pm 0.008$ \\
3  & $0.172 \pm 0.008$ & $0.172 \pm 0.008$ & $0.173 \pm 0.012$ & $\mathbf{0.164 \pm 0.008}$ \\
4  & $0.172 \pm 0.008$ & $0.172 \pm 0.008$ & $0.204 \pm 0.009$ & $\mathbf{0.163 \pm 0.008}$ \\
5  & $0.170 \pm 0.008$ & $0.173 \pm 0.008$ & $0.190 \pm 0.012$ & $\mathbf{0.163 \pm 0.008}$ \\
6  & $0.170 \pm 0.009$ & $0.170 \pm 0.009$ & $0.197 \pm 0.010$ & $\mathbf{0.164 \pm 0.008}$ \\
7  & $0.172 \pm 0.009$ & $0.172 \pm 0.009$ & $0.189 \pm 0.009$ & $\mathbf{0.167 \pm 0.008}$ \\
8  & $0.172 \pm 0.009$ & $0.172 \pm 0.009$ & $0.197 \pm 0.008$ & $\mathbf{0.170 \pm 0.007}$ \\
9  & $\mathbf{0.172 \pm 0.009}$ & $0.174 \pm 0.009$ & $0.191 \pm 0.011$ & $0.174 \pm 0.008$ \\
10 & $0.181 \pm 0.009$ & $0.181 \pm 0.009$ & $\mathbf{0.179 \pm 0.008}$ & $0.181 \pm 0.009$ \\
11 & $0.180 \pm 0.008$ & $0.181 \pm 0.009$ & $\mathbf{0.177 \pm 0.009}$ & $0.183 \pm 0.008$ \\
12 & $\mathbf{0.182 \pm 0.009}$ & $\mathbf{0.182 \pm 0.009}$ & $\mathbf{0.182 \pm 0.009}$ & $0.188 \pm 0.009$ \\
\bottomrule
\end{tabular}
\end{table}

\begin{table}[ht]
\centering
\caption{Model pool for MEDMCQA: 12 LLMs with individual accuracies and Gaussian-copula parameters. Results averaged over 6 runs (3 temperatures $\times$ 2 runs). Models sorted by accuracy.}
\label{tab:medmcqa_models}
\small
\begin{tabular}{lccc}
\toprule
\textbf{Model} & \textbf{Accuracy} & \textbf{Threshold $\tau_j$} & \textbf{Avg.\ $\rho_j$} \\
\midrule
OpenAI/GPT-5-chat          & $83.1\% \pm 0.1\%$ & $-0.96$ & $0.59$ \\
OpenAI/GPT-4.1             & $82.2\% \pm 0.1\%$ & $-0.92$ & $0.53$ \\
OpenAI/GPT-4o              & $80.4\% \pm 0.3\%$ & $-0.86$ & $0.65$ \\
OpenAI/GPT-4.1-mini        & $77.7\% \pm 0.1\%$ & $-0.76$ & $0.46$ \\
Qwen/Qwen3-235b            & $77.4\% \pm 0.3\%$ & $-0.75$ & $0.63$ \\
Moonshotai/kimi-k2         & $76.3\% \pm 0.6\%$ & $-0.72$ & $0.63$ \\
Google/Gemini-2.5-flash    & $72.9\% \pm 0.4\%$ & $-0.61$ & $0.58$ \\
MistralAI/Mistral-3.2-24b  & $72.2\% \pm 0.3\%$ & $-0.59$ & $0.61$ \\
Anthropic/Claude-3.5-haiku & $70.7\% \pm 0.2\%$ & $-0.54$ & $0.52$ \\
MistralAI/Mistral-3.1-24b  & $69.7\% \pm 0.1\%$ & $-0.52$ & $0.58$ \\
OpenAI/GPT-4.1-nano        & $66.8\% \pm 0.3\%$ & $-0.43$ & $0.51$ \\
Meta-LLaMA/LLaMA-3.1-8b    & $66.0\% \pm 1.5\%$ & $-0.41$ & $0.34$ \\
\midrule
\textbf{Average} & $74.6\%$ & --- & $\bar{\rho} = 0.55$ \\
\bottomrule
\end{tabular}
\end{table}

\begin{figure}[ht]
\label{fig:medmcqa_corr}
  \vskip 0.2in
  \begin{center}
    \centerline{\includegraphics[width=0.7\columnwidth]{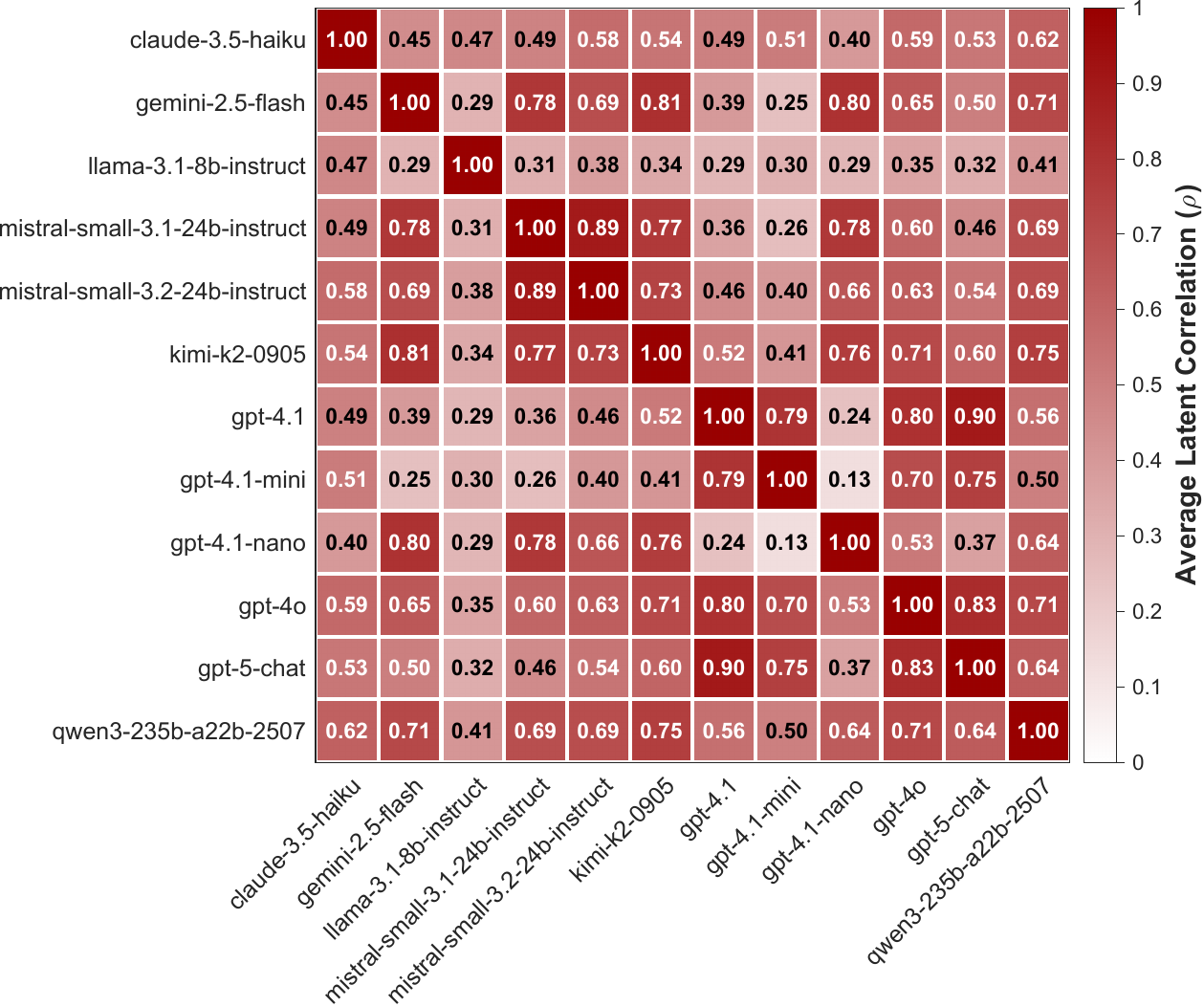}}
    \caption{
Averaged corr. matrix for MEDMCQA for 3 temperature settings and 2 runs.}
    \label{medmcqa_corr}
  \end{center}
\end{figure}

\begin{table}[ht]
\centering
\caption{Test error probability on MMLU averaged over 30 evaluations (3 temperatures $\times$ 2 runs $\times$ 5 folds). Bold indicates the best method for each $k$.}
\label{tab:mmlu_results}
\small
\begin{tabular}{ccccc}
\toprule
$k$ & Top-$k$ (Acc) & Term 1 (Rel) & Terms 1+2 (mRMR) & Greedy MI (Ours) \\
\midrule
1  & $0.151 \pm 0.007$ & $0.151 \pm 0.007$ & $0.151 \pm 0.007$ & $0.151 \pm 0.007$ \\
2  & $0.152 \pm 0.007$ & $0.152 \pm 0.007$ & $0.151 \pm 0.007$ & $0.151 \pm 0.008$ \\
3  & $0.152 \pm 0.007$ & $0.152 \pm 0.008$ & $0.151 \pm 0.007$ & $\mathbf{0.148 \pm 0.008}$ \\
4  & $0.150 \pm 0.005$ & $0.150 \pm 0.005$ & $0.150 \pm 0.007$ & $\mathbf{0.144 \pm 0.007}$ \\
5  & $0.149 \pm 0.007$ & $0.149 \pm 0.007$ & $0.149 \pm 0.008$ & $\mathbf{0.142 \pm 0.008}$ \\
6  & $0.149 \pm 0.007$ & $0.149 \pm 0.007$ & $0.149 \pm 0.008$ & $\mathbf{0.141 \pm 0.007}$ \\
7  & $0.150 \pm 0.007$ & $0.149 \pm 0.007$ & $0.151 \pm 0.008$ & $\mathbf{0.143 \pm 0.007}$ \\
8  & $0.151 \pm 0.006$ & $0.151 \pm 0.006$ & $0.150 \pm 0.007$ & $\mathbf{0.148 \pm 0.007}$ \\
9  & $0.153 \pm 0.007$ & $0.152 \pm 0.007$ & $\mathbf{0.151 \pm 0.008}$ & $\mathbf{0.151 \pm 0.008}$ \\
10 & $0.156 \pm 0.007$ & $0.156 \pm 0.007$ & $0.155 \pm 0.008$ & $\mathbf{0.155 \pm 0.007}$ \\
11 & $\mathbf{0.158 \pm 0.007}$ & $\mathbf{0.158 \pm 0.007}$ & $0.159 \pm 0.008$ & $\mathbf{0.158 \pm 0.007}$ \\
12 & $\mathbf{0.164 \pm 0.008}$ & $\mathbf{0.164 \pm 0.008}$ & $0.166 \pm 0.007$ & $\mathbf{0.164 \pm 0.008}$ \\
13 & $0.171 \pm 0.008$ & $0.171 \pm 0.008$ & $0.171 \pm 0.008$ & $0.171 \pm 0.008$ \\
\bottomrule
\end{tabular}
\end{table}

\begin{table}[ht]
\centering
\caption{Test error probability on MMLU with majority voting (MV) aggregation for baselines versus MAP aggregation for Greedy MI. Results averaged over 30 evaluations. Bold indicates the best method for each $k$.}
\label{tab:mmlu_mv_vs_map}
\small
\begin{tabular}{ccccc}
\toprule
$k$ & Top-$k$ (MV) & Term 1 (MV) & Terms 1+2 (MV) & Greedy MI (MAP) \\
\midrule
1  & $0.151 \pm 0.007$ & $0.151 \pm 0.007$ & $0.151 \pm 0.007$ & $0.151 \pm 0.007$ \\
2  & $0.156 \pm 0.007$ & $0.156 \pm 0.007$ & $0.246 \pm 0.022$ & $\mathbf{0.151 \pm 0.008}$ \\
3  & $0.150 \pm 0.007$ & $0.150 \pm 0.007$ & $0.219 \pm 0.026$ & $\mathbf{0.148 \pm 0.008}$ \\
4  & $0.150 \pm 0.007$ & $0.150 \pm 0.007$ & $0.183 \pm 0.013$ & $\mathbf{0.144 \pm 0.007}$ \\
5  & $0.147 \pm 0.007$ & $0.146 \pm 0.007$ & $0.200 \pm 0.012$ & $\mathbf{0.142 \pm 0.008}$ \\
6  & $0.145 \pm 0.007$ & $0.145 \pm 0.007$ & $0.175 \pm 0.012$ & $\mathbf{0.141 \pm 0.007}$ \\
7  & $0.149 \pm 0.007$ & $0.150 \pm 0.007$ & $0.183 \pm 0.015$ & $\mathbf{0.143 \pm 0.007}$ \\
8  & $0.150 \pm 0.006$ & $0.150 \pm 0.006$ & $0.170 \pm 0.009$ & $\mathbf{0.148 \pm 0.007}$ \\
9  & $0.155 \pm 0.005$ & $0.155 \pm 0.006$ & $0.176 \pm 0.009$ & $\mathbf{0.151 \pm 0.008}$ \\
10 & $0.155 \pm 0.005$ & $0.155 \pm 0.005$ & $0.167 \pm 0.008$ & $0.155 \pm 0.007$ \\
11 & $0.165 \pm 0.007$ & $0.165 \pm 0.007$ & $0.172 \pm 0.009$ & $\mathbf{0.158 \pm 0.007}$ \\
12 & $0.164 \pm 0.008$ & $0.164 \pm 0.008$ & $0.164 \pm 0.008$ & $0.164 \pm 0.008$ \\
13 & $0.174 \pm 0.008$ & $0.174 \pm 0.008$ & $0.174 \pm 0.008$ & $\mathbf{0.171 \pm 0.008}$ \\
\bottomrule
\end{tabular}
\end{table}

\begin{table}[ht]
\centering
\caption{Test error probability on MMLU with weighted majority voting (W-MV) aggregation for baselines versus MAP aggregation for Greedy MI. Results averaged over 30 evaluations. Bold indicates the best method for each $k$.}
\label{tab:mmlu_wmv_vs_map}
\small
\begin{tabular}{ccccc}
\toprule
$k$ & Top-$k$ (W-MV) & Term 1 (W-MV) & Terms 1+2 (W-MV) & Greedy MI (MAP) \\
\midrule
1  & $0.151 \pm 0.007$ & $0.151 \pm 0.007$ & $0.151 \pm 0.007$ & $0.151 \pm 0.007$ \\
2  & $0.151 \pm 0.007$ & $0.151 \pm 0.007$ & $0.151 \pm 0.007$ & $0.151 \pm 0.008$ \\
3  & $0.150 \pm 0.007$ & $0.150 \pm 0.007$ & $0.158 \pm 0.015$ & $\mathbf{0.148 \pm 0.008}$ \\
4  & $0.149 \pm 0.007$ & $0.149 \pm 0.007$ & $0.174 \pm 0.009$ & $\mathbf{0.144 \pm 0.007}$ \\
5  & $0.147 \pm 0.007$ & $0.146 \pm 0.007$ & $0.174 \pm 0.011$ & $\mathbf{0.142 \pm 0.008}$ \\
6  & $0.146 \pm 0.006$ & $0.146 \pm 0.007$ & $0.167 \pm 0.009$ & $\mathbf{0.141 \pm 0.007}$ \\
7  & $0.149 \pm 0.007$ & $0.150 \pm 0.007$ & $0.164 \pm 0.012$ & $\mathbf{0.143 \pm 0.007}$ \\
8  & $0.149 \pm 0.006$ & $0.149 \pm 0.006$ & $0.164 \pm 0.009$ & $\mathbf{0.148 \pm 0.007}$ \\
9  & $0.154 \pm 0.005$ & $0.154 \pm 0.006$ & $0.162 \pm 0.008$ & $\mathbf{0.151 \pm 0.008}$ \\
10 & $\mathbf{0.153 \pm 0.005}$ & $0.154 \pm 0.006$ & $0.162 \pm 0.008$ & $0.155 \pm 0.007$ \\
11 & $\mathbf{0.156 \pm 0.007}$ & $\mathbf{0.156 \pm 0.007}$ & $0.158 \pm 0.008$ & $0.158 \pm 0.007$ \\
12 & $0.161 \pm 0.008$ & $0.161 \pm 0.008$ & $\mathbf{0.158 \pm 0.007}$ & $0.164 \pm 0.008$ \\
13 & $\mathbf{0.161 \pm 0.008}$ & $\mathbf{0.161 \pm 0.008}$ & $\mathbf{0.161 \pm 0.008}$ & $0.171 \pm 0.008$ \\
\bottomrule
\end{tabular}
\end{table}

\begin{figure}[ht]
  \vskip 0.2in
  \begin{center}
    \centerline{\includegraphics[width=0.7\columnwidth]{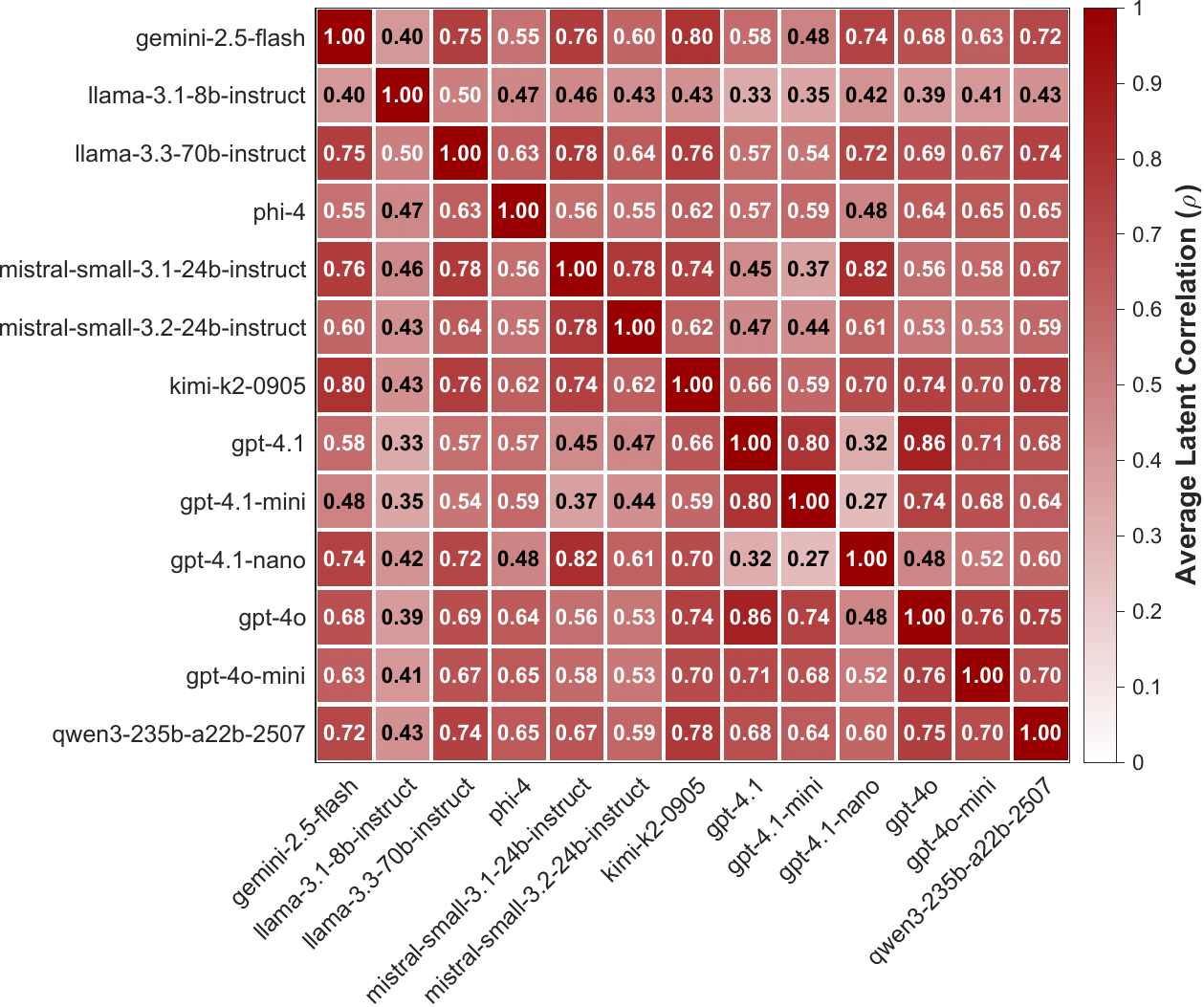}}
    \caption{
Averaged corr. matrix for MMLU for 3 temperature settings and 2 runs. }
    \label{mmlu_corr}
  \end{center}
\end{figure}

\begin{table}[ht]
\centering
\caption{Model pool for MMLU: 13 LLMs with individual accuracies and Gaussian-copula parameters. Results averaged over 6 runs (3 temperatures $\times$ 2 runs). Models sorted by accuracy.}
\label{tab:mmlu_models}
\small
\begin{tabular}{lccc}
\toprule
\textbf{Model} & \textbf{Accuracy} & \textbf{Threshold $\tau_j$} & \textbf{Avg.\ $\rho_j$} \\
\midrule
OpenAI/GPT-4.1             & $85.0\% \pm 0.2\%$ & $-1.04$ & $0.58$ \\
OpenAI/GPT-4o              & $84.0\% \pm 0.3\%$ & $-0.99$ & $0.65$ \\
Qwen/Qwen3-235b            & $82.2\% \pm 0.2\%$ & $-0.92$ & $0.66$ \\
Moonshotai/kimi-k2         & $82.1\% \pm 0.8\%$ & $-0.92$ & $0.68$ \\
OpenAI/GPT-4.1-mini        & $81.6\% \pm 0.2\%$ & $-0.90$ & $0.54$ \\
Google/Gemini-2.5-flash    & $79.8\% \pm 0.4\%$ & $-0.83$ & $0.64$ \\
OpenAI/GPT-4o-mini         & $78.3\% \pm 0.2\%$ & $-0.78$ & $0.63$ \\
Meta-LLaMA/LLaMA-3.3-70b   & $77.2\% \pm 0.1\%$ & $-0.75$ & $0.67$ \\
microsoft/phi-4            & $74.1\% \pm 0.5\%$ & $-0.65$ & $0.58$ \\
MistralAI/Mistral-3.2-24b  & $73.2\% \pm 0.5\%$ & $-0.62$ & $0.57$ \\
MistralAI/Mistral-3.1-24b  & $71.8\% \pm 0.1\%$ & $-0.58$ & $0.63$ \\
OpenAI/GPT-4.1-nano        & $69.2\% \pm 0.2\%$ & $-0.50$ & $0.56$ \\
Meta-LLaMA/LLaMA-3.1-8b    & $65.9\% \pm 1.3\%$ & $-0.41$ & $0.42$ \\
\midrule
\textbf{Average} & $77.3\%$ & --- & $\bar{\rho} = 0.60$ \\
\bottomrule
\end{tabular}
\end{table}

\begin{table}[ht]
\centering
\caption{Test error probability on IMDB movie reviews averaged over 30 evaluations (3 temperatures $\times$ 2 runs $\times$ 5 splits). Bold indicates the best method for each $k$.}
\label{tab:imdb_results}
\small
\begin{tabular}{ccccc}
\toprule
$k$ & Top-$k$ (Acc) & Term 1 (Rel) & Terms 1+2 (mRMR) & Greedy MI (Ours) \\
\midrule
1  & $0.0354 \pm 0.0043$ & $0.0351 \pm 0.0042$ & $0.0351 \pm 0.0042$ & $0.0351 \pm 0.0042$ \\
2  & $0.0363 \pm 0.0044$ & $0.0362 \pm 0.0044$ & $0.0351 \pm 0.0042$ & $0.0354 \pm 0.0045$ \\
3  & $\mathbf{0.0339 \pm 0.0036}$ & $\mathbf{0.0339 \pm 0.0036}$ & $0.0347 \pm 0.0040$ & $0.0358 \pm 0.0043$ \\
4  & $0.0360 \pm 0.0041$ & $0.0360 \pm 0.0041$ & $0.0345 \pm 0.0041$ & $\mathbf{0.0340 \pm 0.0036}$ \\
5  & $0.0377 \pm 0.0042$ & $0.0377 \pm 0.0042$ & $0.0341 \pm 0.0039$ & $\mathbf{0.0335 \pm 0.0039}$ \\
6  & $0.0387 \pm 0.0038$ & $0.0387 \pm 0.0038$ & $0.0344 \pm 0.0040$ & $\mathbf{0.0327 \pm 0.0039}$ \\
7  & $0.0397 \pm 0.0038$ & $0.0392 \pm 0.0041$ & $0.0349 \pm 0.0044$ & $\mathbf{0.0335 \pm 0.0039}$ \\
8  & $0.0398 \pm 0.0040$ & $0.0398 \pm 0.0040$ & $0.0360 \pm 0.0047$ & $\mathbf{0.0350 \pm 0.0041}$ \\
9  & $0.0392 \pm 0.0041$ & $0.0392 \pm 0.0039$ & $0.0376 \pm 0.0039$ & $\mathbf{0.0367 \pm 0.0044}$ \\
10 & $0.0403 \pm 0.0042$ & $0.0403 \pm 0.0042$ & $0.0391 \pm 0.0041$ & $\mathbf{0.0382 \pm 0.0043}$ \\
11 & $0.0411 \pm 0.0044$ & $0.0411 \pm 0.0044$ & $\mathbf{0.0404 \pm 0.0043}$ & $0.0405 \pm 0.0042$ \\
12 & $0.0413 \pm 0.0046$ & $0.0413 \pm 0.0046$ & $0.0413 \pm 0.0046$ & $0.0413 \pm 0.0046$ \\
\bottomrule
\end{tabular}
\end{table}

\begin{table}[ht]
\centering
\caption{Test error probability on IMDB movie reviews with majority voting (MV) aggregation for baselines versus MAP aggregation for Greedy MI. Results averaged over 30 evaluations. Bold indicates the best method for each $k$.}
\label{tab:imdb_mv_vs_map}
\small
\begin{tabular}{ccccc}
\toprule
$k$ & Top-$k$ (MV) & Term 1 (MV) & Terms 1+2 (MV) & Greedy MI (MAP) \\
\midrule
1  & $0.0354 \pm 0.0043$ & $0.0351 \pm 0.0042$ & $0.0351 \pm 0.0042$ & $0.0351 \pm 0.0042$ \\
2  & $0.0351 \pm 0.0038$ & $0.0352 \pm 0.0040$ & $0.0352 \pm 0.0042$ & $0.0354 \pm 0.0045$ \\
3  & $\mathbf{0.0340 \pm 0.0037}$ & $\mathbf{0.0340 \pm 0.0037}$ & $0.0395 \pm 0.0043$ & $0.0358 \pm 0.0043$ \\
4  & $0.0356 \pm 0.0041$ & $0.0357 \pm 0.0040$ & $0.0373 \pm 0.0040$ & $\mathbf{0.0340 \pm 0.0036}$ \\
5  & $0.0357 \pm 0.0039$ & $0.0357 \pm 0.0039$ & $0.0377 \pm 0.0038$ & $\mathbf{0.0335 \pm 0.0039}$ \\
6  & $0.0362 \pm 0.0036$ & $0.0362 \pm 0.0036$ & $0.0364 \pm 0.0039$ & $\mathbf{0.0327 \pm 0.0039}$ \\
7  & $0.0362 \pm 0.0040$ & $0.0358 \pm 0.0041$ & $0.0378 \pm 0.0039$ & $\mathbf{0.0335 \pm 0.0039}$ \\
8  & $0.0362 \pm 0.0041$ & $0.0362 \pm 0.0041$ & $0.0369 \pm 0.0040$ & $\mathbf{0.0350 \pm 0.0041}$ \\
9  & $0.0373 \pm 0.0039$ & $0.0375 \pm 0.0038$ & $0.0386 \pm 0.0038$ & $\mathbf{0.0367 \pm 0.0044}$ \\
10 & $0.0377 \pm 0.0039$ & $0.0377 \pm 0.0039$ & $\mathbf{0.0373 \pm 0.0038}$ & $0.0382 \pm 0.0043$ \\
11 & $0.0402 \pm 0.0041$ & $0.0402 \pm 0.0041$ & $\mathbf{0.0401 \pm 0.0039}$ & $0.0405 \pm 0.0042$ \\
12 & $\mathbf{0.0401 \pm 0.0040}$ & $\mathbf{0.0401 \pm 0.0040}$ & $\mathbf{0.0401 \pm 0.0040}$ & $0.0413 \pm 0.0046$ \\
\bottomrule
\end{tabular}
\end{table}

\begin{table}[ht]
\centering
\caption{Test error probability on IMDB movie reviews with weighted majority voting (W-MV) aggregation for baselines versus MAP aggregation for Greedy MI. Results averaged over 30 evaluations. Bold indicates the best method for each $k$.}
\label{tab:imdb_wmv_vs_map}
\small
\begin{tabular}{ccccc}
\toprule
$k$ & Top-$k$ (W-MV) & Term 1 (W-MV) & Terms 1+2 (W-MV) & Greedy MI (MAP) \\
\midrule
1  & $0.0354 \pm 0.0043$ & $0.0351 \pm 0.0042$ & $0.0351 \pm 0.004$ & $0.0351 \pm 0.0042$ \\
2  & $0.0354 \pm 0.0043$ & $0.0354 \pm 0.0043$ & $0.0351 \pm 0.0042$ & $0.0354 \pm 0.0045$ \\
3  & $\mathbf{0.0340 \pm 0.0037}$ & $\mathbf{0.0340 \pm 0.0037}$ & $0.0395 \pm 0.0043$ & $0.0358 \pm 0.0043$ \\
4  & $0.0341 \pm 0.0036$ & $0.0341 \pm 0.0036$ & $0.0354 \pm 0.0039$ & $\mathbf{0.0340 \pm 0.0036}$ \\
5  & $0.0357 \pm 0.0039$ & $0.0357 \pm 0.0039$ & $0.0377 \pm 0.0038$ & $\mathbf{0.0335 \pm 0.0039}$ \\
6  & $0.0352 \pm 0.0039$ & $0.0352 \pm 0.0039$ & $0.0357 \pm 0.0037$ & $\mathbf{0.0327 \pm 0.0039}$ \\
7  & $0.0362 \pm 0.0040$ & $0.0358 \pm 0.0041$ & $0.0378 \pm 0.0039$ & $\mathbf{0.0335 \pm 0.0039}$ \\
8  & $0.0353 \pm 0.0039$ & $0.0353 \pm 0.0039$ & $0.0367 \pm 0.0041$ & $\mathbf{0.0350 \pm 0.0041}$ \\
9  & $0.0373 \pm 0.0039$ & $0.0375 \pm 0.0038$ & $0.0386 \pm 0.0038$ & $\mathbf{0.0367 \pm 0.0044}$ \\
10 & $0.0371 \pm 0.0038$ & $0.0371 \pm 0.0038$ & $\mathbf{0.0369 \pm 0.0037}$ & $0.0382 \pm 0.0043$ \\
11 & $0.0402 \pm 0.0041$ & $0.0402 \pm 0.0041$ & $\mathbf{0.0401 \pm 0.0039}$ & $0.0405 \pm 0.0042$ \\
12 & $\mathbf{0.0397 \pm 0.0039}$ & $\mathbf{0.0397 \pm 0.0039}$ & $\mathbf{0.0397 \pm 0.0039}$ & $0.0413 \pm 0.0046$ \\
\bottomrule
\end{tabular}
\end{table}

\begin{figure}[ht]
\label{fig:imdb_corr}
  \begin{center}    \centerline{\includegraphics[width=0.65\columnwidth]{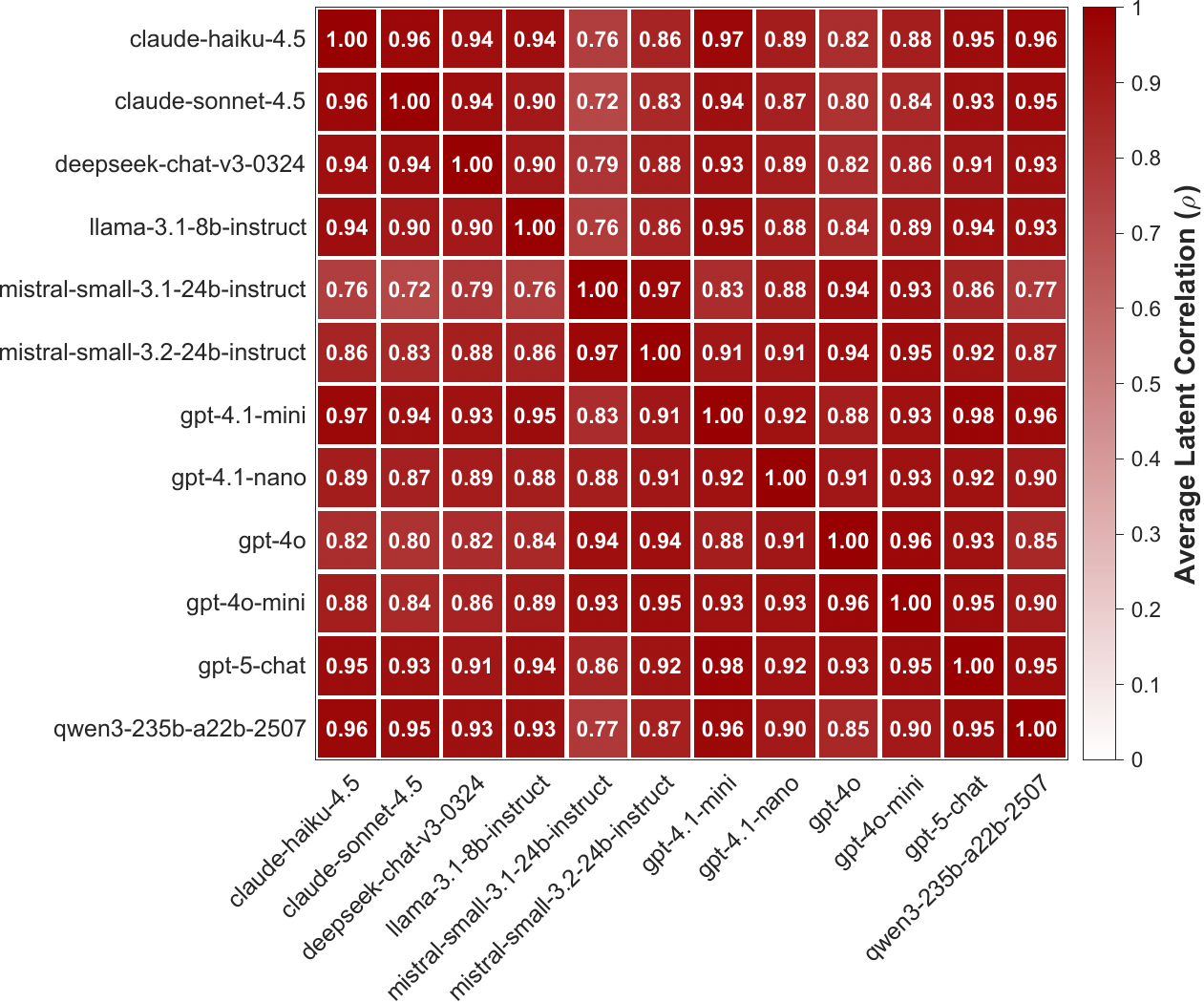}}
    \caption{
Averaged corr. matrix for IMDB movie reviews for 3 temperature settings and 2 runs.}
    \label{imdb_corr}
  \end{center}
\end{figure}

\begin{table}[ht]
\centering
\caption{Model pool for IMDB movie reviews dataset: 12 LLMs with individual accuracies and Gaussian-copula parameters. Results averaged over 6 runs (3 temperatures $\times$ 2 runs). Models sorted by accuracy.}
\label{tab:models_imdb}
\small
\begin{tabular}{lccc}
\toprule
\textbf{Model} & \textbf{Accuracy} & \textbf{Threshold $\tau_j$} & \textbf{Avg.\ $\rho_j$} \\
\midrule
Anthropic/Claude-sonnet-4.5    & $96.39\% \pm 0.05\%$ & $-1.798$ & $0.881$ \\
OpenAI/GPT-5-chat              & $96.26\% \pm 0.07\%$ & $-1.782$ & $0.930$ \\
Anthropic/Claude-haiku-4.5     & $96.10\% \pm 0.08\%$ & $-1.762$ & $0.902$ \\
OpenAI/GPT-4.1-mini            & $95.85\% \pm 0.08\%$ & $-1.733$ & $0.926$ \\
Qwen/Qwen3-235b                & $95.67\% \pm 0.09\%$ & $-1.714$ & $0.907$ \\
deepseek/deepseek-chat-v3      & $95.37\% \pm 0.10\%$ & $-1.682$ & $0.891$ \\
Meta-LLaMA/LLaMA-3.1-8b        & $94.79\% \pm 0.17\%$ & $-1.625$ & $0.890$ \\
OpenAI/GPT-4o                  & $94.49\% \pm 0.05\%$ & $-1.597$ & $0.880$ \\
OpenAI/GPT-4.1-nano            & $94.19\% \pm 0.07\%$ & $-1.571$ & $0.900$ \\
OpenAI/GPT-4o-mini             & $94.00\% \pm 0.07\%$ & $-1.555$ & $0.909$ \\
MistralAI/Mistral-small-3.2-24b & $93.60\% \pm 0.18\%$ & $-1.522$ & $0.900$ \\
MistralAI/Mistral-small-3.1-24b & $91.81\% \pm 0.06\%$ & $-1.393$ & $0.84$ \\
\midrule
\textbf{Average} & $94.54\%$ & --- & $\bar{\rho} = 0.90$ \\
\bottomrule
\end{tabular}
\end{table}

\end{document}